\journal{Artificial Intelligence}
\newcommand{\mt}[1]{\textcolor{red}{MT: #1}}
\newcommand{\ks}[1]{\textcolor{violet}{KS: #1}}
\theoremstyle{definition}
\newtheorem{definition}{Definition}
\newtheorem{example}{Example}
\newtheorem{proposition}{Proposition}
\newtheorem{lemma}[proposition]{Lemma}
\newcommand{\co}{\ensuremath{\mathsf{co}}}
\newcommand{\gr}{\ensuremath{\mathsf{gr}}}
\newcommand{\pr}{\ensuremath{\mathsf{pr}}}
\newcommand{\st}{\ensuremath{\mathsf{st}}}
\newcommand{\cf}{\ensuremath{\mathsf{cf}}}
\newcommand{\ad}{\ensuremath{\mathsf{ad}}}
\newcommand{\sst}{\ensuremath{\mathsf{sst}}}
\newcommand{\maxpl}{\ensuremath{\mathtt{max}}}
\newcommand{\LD}{\ensuremath{\mathsf{LD}}}
\newcommand{\CF}{\ensuremath{\mathsf{Conflicts}}}
\newcommand{\DN}{\ensuremath{\mathsf{Condef}}}		
\newcommand{\UD}{\ensuremath{\mathsf{Undef}}}	
\newcommand{\UA}{\ensuremath{\mathsf{Unatt}}}
\newcommand{\Min}{\ensuremath{\mathsf{mini}}}
\newcommand{\Max}{\ensuremath{\mathsf{maxi}}}
\newcommand{\nonatt}{\ensuremath{\mathsf{nonatt}}}
\newcommand{\strdef}{\ensuremath{\mathsf{strdef}}}
\newcommand{\rAd}{\ensuremath{\mathsf{r\text{-}ad}}}
\newcommand{\rCo}{\ensuremath{\mathsf{r\text{-}co}}}
\newcommand{\rGr}{\ensuremath{\mathsf{r\text{-}gr}}}
\newcommand{\rPr}{\ensuremath{\mathsf{r\text{-}pr}}}
\newcommand{\rSst}{\ensuremath{\mathsf{r\text{-}sst}}}
\newcommand{\rCoPr}{\ensuremath{\mathsf{r\text{-}co\text{-}pr}}}
\newcommand{\cope}{\ensuremath{\mathsf{cope}}}
\newcommand{\lex}{\ensuremath{\mathsf{lex}}}
\newcommand{\OBE}{\ensuremath{\mathsf{OBE}}}
\newcommand{\INMAX}{In-Maximality}
\begin{document}
\begin{frontmatter}

\title{Extension-ranking Semantics for Abstract Argumentation \\ Preprint}

\author[ha]{Kenneth Skiba}
\author[tr]{Tjitze Rienstra}
\author[ha]{Matthias Thimm}
\author[jh,sa]{Jesse Heyninck}
\author[gki]{Gabriele Kern-Isberner}

 \address[ha]{Artificial Intelligence Group, University of Hagen, Hagen, Germany}
\address[tr]{Department of Advanced Computing Sciences, Maastricht University, The Netherlands}
\address[jh]{Department of Computer Science, Open Universiteit, The Netherlands}
\address[sa]{University of Cape Town, South Africa}
\address[gki]{Department of Computer Science, TU Dortmund, Dortmund, Germany}

\begin{abstract}
In this paper, we present a general framework for ranking sets of arguments in abstract argumentation based on their plausibility of acceptance.
We present a generalisation of Dung's extension semantics as \emph{extension-ranking semantics}, which induce a preorder over the power set of all arguments, allowing us to state that one set is ``closer'' to being acceptable than another. To evaluate the extension-ranking semantics, we introduce a number of principles that a well-behaved extension-ranking semantics should satisfy.
We consider several simple base relations, each of which models a single central aspect of argumentative reasoning. The combination of these base relations provides us with a family of extension-ranking semantics.
We also adapt a number of approaches from the literature for ranking extensions to be usable in the context of extension-ranking semantics, and evaluate their behaviour. 
\end{abstract}


\begin{keyword}
Abstract Argumentation 
\sep Ranking Sets of Objects
\sep Extension-ranking semantics
\end{keyword}

\end{frontmatter}


\section{Introduction}
Formal argumentation \cite{DBLP:journals/aim/AtkinsonBGHPRST17} is concerned with models of rational decision-making based on representations of arguments and their relations. A particularly important approach is that of abstract argumentation frameworks (AF) \cite{DBLP:journals/ai/Dung95}, which represent argumentative scenarios as directed graphs. Here, \emph{arguments} are identified by vertices, and an \emph{attack} from one argument to another is represented as a directed edge.
Reasoning is usually performed in abstract argumentation by considering \emph{extensions}, i.\,e., sets of arguments that are jointly acceptable, given some formal account of ``acceptability''. One way to determine the acceptability of a set is to look at the internal conflicts of this set. A set of arguments is considered ``acceptable'' if it is internally consistent, so that no two arguments within a set attack each other. Another concept to establish the acceptability of a set of arguments is \emph{admissibility}, which states that a set is only acceptable if it can defend itself against any threat. Acceptability concepts such as admissibility give rise to \emph{extension semantics}.
Extension semantics distinguishes between ``acceptable'' sets of arguments and not ``acceptable'' sets of arguments.

The binary classification of extension semantics can be too limiting, especially if we want to compare sets of arguments based on their acceptability; extension semantics consider a set to be either fully accepted or not, there is no in-between. Two sets with the same classification cannot be distinguished.
Let us illustrate this behaviour in the following example (for the formal definitions see Section~\ref{sec:preliminaries}). 
\begin{example}\label{ex:intro_legal}
    Suppose we have a murder case in which Alex is the prime suspect. However, Alex has an alibi for the time of the murder, but a witness claims to have seen Alex at the scene. Later it turns out that the witness was unreliable and their testimony should be questioned. 
    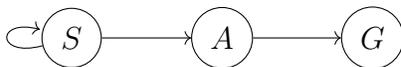
\begin{figure}
    \centering
 \scalebox{1}{
\begin{tikzpicture}

\node (g) at (4,0) [circle, draw,minimum size= 0.65cm] {$G$};
\node (a) at (2,0) [circle, draw,minimum size= 0.65cm] {$A$};
\node (s) at (0,0) [circle, draw,minimum size= 0.65cm] {$S$};

\path[->] (a) edge  (g);
\path[->] (s) edge  (a);
\path[->] (s) edge  [loop left] node {} ();

\end{tikzpicture}}
   \caption{Abstract argumentation framework $F_1$ from Example \ref{ex:intro_legal}.}
    \label{tikz:intro_legal}
\end{figure}

    We can model this case as an argumentation framework $F_1= (\{G, A, S\}, \\\{(A,G), (S,A), (S,S)\})$, where $G$ stands for \emph{guilty}, $A$ for \emph{has alibi}, and $S$ for \emph{was seen at the crime scene}. $G$ is attacked by $A$, and $A$ is attacked by $S$, but $S$ is unreliable and therefore attacks itself (as depicted in Figure \ref{tikz:intro_legal}).
    The judge must decide which position is the most plausible one. Alex is either \emph{guilty} or not, so $G$ is either part of an acceptable set or not. When the judge compares the two positions of $\{A\}$ and $\{G,S\}$, they conclude that neither of these sets is acceptable with respect to admissibility. However, while the set $\{A\}$ does not defend itself against the attack of $S$, at least this set has no internal conflicts, while $\{G,S\}$ has an internal conflict. So $\{A\}$ is ``closer'' to being acceptable than $\{G,S\}$. 
\end{example}

The example above shows that in the area of abstract argumentation, a notion is needed to compare two sets of arguments on the basis of their plausibility of acceptance. 

Approaches to determining the relative degree of plausibility of acceptance of an individual argument have already been developed. These approaches are called \emph{argument-ranking}, \emph{graded} or \emph{gradual} semantics and associate each argumentation framework with an ordering of arguments according to their relative degrees of justification or plausibility~\cite{DBLP:conf/sum/AmgoudB13a,DBLP:conf/kr/AmgoudBDV16,DBLP:conf/aaai/BonzonDKM16,DBLP:conf/comma/YunVCB18,DBLP:journals/ai/GrossiM19}.  
These semantics provide a finer-grained interpretation of the acceptability of each argument than the binary classification of an extension semantics. However, they cannot be used to directly compare two sets of arguments based on their plausibility of being accepted, since with these approaches it is not possible to represent a set of arguments as being ``jointly'' acceptable. In particular, two arguments with a high degree of plausibility of acceptance may not be allowed to be accepted at the same time because they are in conflict with each other. Therefore, the plausibility of acceptance of each argument cannot be used directly to infer the plausibility of acceptance for a set of arguments.

We can use comparisons of sets of arguments to rank these sets on the basis of their plausibility of acceptance. Later, we will introduce the notion of \emph{extension rankings} for this kind of ordering of sets of arguments, but first let us further motivate the use of such a ranking.
\begin{example}\label{ex:travel}
   Alice and Bob are planning their holiday together for next year. They start in Germany. Alice suggests going to Berlin, but Bob visited Berlin last year on a business trip and does not want to go back. Bob suggests flying to Argentina, but as Alice is afraid of flying and flights to Argentina are expensive, she does not like the idea. Although Alice and Bob cannot agree on a destination, they are sure that they want to travel next year. Berlin and Argentina are their only options and they do not have the time or the money to visit both places.
    
    To help Alice and Bob with their decision, we construct their problem as an argumentation framework. The two destinations \textbf{A}rgentina and \textbf{B}erlin are arguments that are attacked by the reasons for their rejection, i.e. the arguments \textbf{E}xpensive and \textbf{F}ear  are attacking \textbf{A}rgentina and \textbf{V}isited is attacking \textbf{B}erlin (as depicted as $F_2$ in Figure \ref{tikz:travel}).
    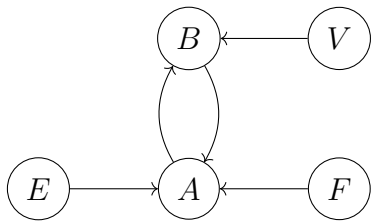
\begin{figure}
    \centering
 
 \scalebox{1}{
\begin{tikzpicture}

\node (A) at (0,0) [circle, draw,minimum size= 0.65cm] {$A$};
\node (B) at (0,2) [circle, draw,minimum size= 0.65cm] {$B$};
\node (F) at (2,0) [circle, draw,minimum size= 0.65cm] {$F$};
\node (E) at (-2,0) [circle, draw,minimum size= 0.65cm] {$E$};
\node (V) at (2,2) [circle, draw,minimum size= 0.65cm] {$V$};

\path[->, bend left] (A) edge (B);
\path[->, bend left] (B) edge (A); 

\path[->] (F) edge  (A);
\path[->] (E) edge  (A);
\path[->] (V) edge  (B);
\end{tikzpicture}}
   \caption{Argumentation framework $F_2$ for Example \ref{ex:travel}}
    \label{tikz:travel}
\end{figure}
    
    For any extension semantics defined by Dung \cite{DBLP:journals/ai/Dung95} arguments, \textbf{A} and \textbf{B} are never jointly part of an acceptable set. However, Alice and Bob have the constraint that they want to travel to either \textbf{A} or \textbf{B}, so the question is which is the most plausible set that contains either \textbf{A} or \textbf{B}? For any set containing \textbf{A}, the attacks from \textbf{E} and \textbf{F} have to be disregarded, while for a set containing \textbf{B}, only the attack from \textbf{V} has to be disregarded, so at first glance \textbf{B} should be easier to accept. If we start with $\{\textbf{B}\}$, then arguments \textbf{V}, \textbf{E} and \textbf{F} are not attacked, so there is no reason to reject them. However, if we add \textbf{V}, the resulting set has an internal conflict. Thus, \textbf{V} should not be added. For \textbf{E} and \textbf{F} we find no reason to reject them, so these arguments should be added, resulting in $\{\textbf{B,E,F}\}$ as our final set. This set is the most plausible set containing either \textbf{A} or \textbf{B}, since no additional argument can be added and this set is more plausible to be accepted than any of its subsets containing \textbf{B}. 
\end{example}
In the example above, we were able to find a conflict-free set of arguments, however
sometimes constraints or requirements are non-sensical and we have to accept conflicting arguments to satisfy them.
\begin{example}\label{ex:burger}
    Suppose a burger restaurant receives the following order: \emph{``I want a vegan burger with meat, but please no pork or beef, and please add hot sauce, but do not make the burger spicy.''} Note that the restaurant only serves pork, beef or vegan patties. We can write these requests as an AF 
    \begin{align*}
        F_3= &(\{V, M, \Bar{P}, \Bar{B}, S, \Bar{S}\}, \{(V,M), (M,V), (M, \Bar{P}), (\Bar{P}, M), (M, \Bar{B}), (\Bar{B}, M),\\ &(H,\Bar{S}), (\Bar{S}, H)\})    
    \end{align*}
    where $V$ stands for \emph{vegan patty}, $M$ for \emph{extra meat}, $\Bar{P}$ for \emph{no pork}, $\Bar{B}$ for \emph{no beef}, $H$ for \emph{hot sauce}, and $\Bar{S}$ for \emph{not spicy}, as depicted in Figure \ref{tikz:burger}. 
    \begin{figure}
    \centering
 
 \scalebox{1}{
\begin{tikzpicture}

\node (V) at (0,0) [circle, draw,minimum size= 0.65cm] {$V$};
\node (EM) at (0,2) [circle, draw,minimum size= 0.65cm] {$M$};
\node (P) at (-1.2,4)[circle, draw,minimum size= 0.65cm] {$\Bar{P}$};
\node (B) at (1.2,4) [circle, draw,minimum size= 0.65cm] {$\Bar{B}$};
\node (HS) at (2,0) [circle, draw,minimum size= 0.65cm] {$H$};
\node (S) at (2,2) [circle, draw,minimum size= 0.65cm] {$\Bar{S}$};

\path[->, bend left] (V) edge  (EM);
\path[->, bend left] (EM) edge (V); 

\path[->, bend left] (P) edge  (EM);
\path[->, bend left] (EM) edge (P); 

\path[->, bend left] (B) edge  (EM);
\path[->, bend left] (EM) edge (B); 

\path[->, bend left] (S) edge  (HS);
\path[->, bend left] (HS) edge (S); 
\end{tikzpicture}}
   \caption{AF $F_3$ for Example \ref{ex:burger}}
    \label{tikz:burger}
\end{figure}
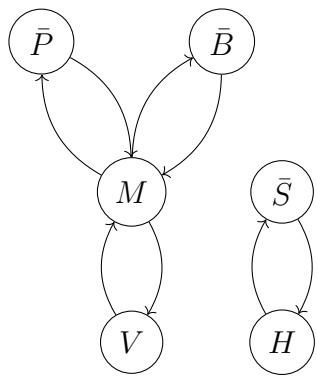
We can see that all the requirements from the order above cannot be met at the same time without introducing conflicts. If the restaurant adds hot sauce to the burger, it will be spicy, and if a meat patty is used, the burger will no longer be vegan. In addition, the restaurant only offers two types of meat patties (pork and beef), but the customer dislikes both of these choices. So the only sensible option would be to deny the whole order; but, let us assume the restaurant does not want to do this. Then some constraints have to be broken. Since adding a meat patty would break two constraints (vegan and either no pork or no beef), the restaurant decides to use a vegan patty for the burger. For the hot sauce, the restaurant can choose either option, as they both break one constraint. To be on the safe side, the restaurant does not add any hot sauce to the burger. So the customer gets a burger with a vegan patty and no hot sauce, as this is the closed option that satisfies the costumer's constraints. 
\end{example}

The examples above show that extension rankings are useful for decision-making under constraints, where we need solutions (typically represented by extensions of an argumentation framework) that satisfy constraints that may not be satisfiable by the set of arguments under an extension semantics. In such a situation, we can select the most plausible set of arguments among those that satisfy the constraints. 
Another application is belief dynamics ~\cite{DBLP:conf/sum/BoothKRT13,DBLP:conf/kr/Coste-MarquisKMM14,DBLP:journals/ijar/DillerHLRW18}, where rankings over sets of arguments can be used as a relation of epistemic entrenchment. 
In the \emph{extension enforcement} problem \cite{DBLP:journals/flap/BaumannDMW21}, an agent modifies a given AF to accept selected extensions that were previously rejected. An extension ranking can be used to streamline this process. It may be easier to enforce the acceptance of a set of arguments that is closer to being accepted, i.\,e., ranked higher, than any weaker ranked set. A ranking over sets of arguments helps us to estimate an upper bound on the number of changes that need to be made to accept a set of arguments. 
Other problems where the ability to compare sets of arguments is useful are belief merging and judgement aggregation \cite{DBLP:conf/kr/DelobelleHKMRW16,DBLP:journals/aamas/CaminadaP11}.
In judgement aggregation, the goal is to find a common judgement or decision among different opinions. Some of these opinions may be incompatible. 
By modelling these decisions using an argumentation framework and ranking over sets of arguments, this problem can be solved.
Furthermore, in belief merging, incompatible knowledge bases are combined into a single new knowledge base. An extension ranking can be helpful, since such a ranking gives us insight into the most plausible sets from which we to start modifying the knowledge base to resolve the incompatibility. 

We contribute to the research question of ranking sets of arguments by generalising extension semantics, which allows us to state whether one set is, e.\,g., ``more admissible'' than another. 
More precisely, the main contributions of this paper are as follows.
\begin{itemize}
    \item We present \emph{extension-ranking semantics} $\tau$, which maps an argumentation framework $AF$ to a preorder $\sqsupseteq_{AF}^\tau$ over all sets of arguments. Intuitively, for two sets $E$ and $E'$ of $AF$ with $E \sqsupseteq^\tau_{AF} E'$ we say that $E$ is at least as plausible to be accepted as $E'$ wrt.\ $\tau$. We will show later that any extension semantics can be defined in terms of an extension-ranking semantics, however, the general framework of extension-ranking semantics is more expressive than that. For instance, an extension-ranking semantics can be used to compare sets that are not, e.\,g., admissible. Moreover, unlike argument-ranking semantics, we properly generalise extension semantics, in the sense that the set of most plausible sets is the set of extensions under a traditional semantics.
    \item We define a set of principles for evaluating and developing new approaches to extension-ranking semantics. These principles are all modelling aspects that an intuitively well-behaved extension-ranking semantics should satisfy. 
    \item We develop very general approaches to rank sets of arguments by splitting central aspects of argumentative reasoning into conceptually simple base relations and aggregating these relations to establish the plausibility of acceptance of each set, in turn defining generalisations of the classical Dung \emph{extension semantics}.
    \item Additionally, we discuss approaches to rank sets of arguments based on the relative degree of plausibility of acceptance of the contained arguments. We determine the plausibility degree of each argument and aggregate these degrees to estimate the plausibility of a set being accepted.
\end{itemize}

The remainder of this paper is structured as follows. In Section~\ref{sec:preliminaries} necessary background information about abstract argumentation is recalled. Extension-ranking semantics are presented and put into context with extension semantics in Section~\ref{sec:extension-ranking}. In Section~\ref{sec:principles} principles for extension-ranking semantics are defined.
Simple base relations modelling central notions of argumentative reasoning are introduced in Section~\ref{sec:Base functions} and these base relations are combined to extension-ranking semantics in Section~\ref{sec:combination}. 
Additional extension-ranking semantics are presented in Section~\ref{sec:additional combinations}.
 Related work will be discussed in Section~\ref{sec:related work} and Section~\ref{sec:conclusion} concludes this work. 

This paper is a significantly extended version of a paper published in the proceedings of the Thirtieth International Joint Conference on Artificial Intelligence, {IJCAI} 2021 \cite{DBLP:conf/ijcai/SkibaRTHK21}. This version contains, in addition to the proofs, additional extension-ranking semantics and their analyses discussed in Sections \ref{subsubsec:cardinality_ext_ranking}, \ref{sec:voting}, and  \ref{sec:additional combinations}.

\section{Preliminaries} \label{sec:preliminaries}
In this section we recall all necessary notations in the area of formal argumentation focusing on abstract argumentation frameworks and the two reasoning approaches extension semantics (Subsection \ref{subsec:extension_semantics}) and argument-ranking semantics (Subsection \ref{subsec:argument-ranking_semantics}). 

\emph{Abstract argumentation frameworks} \cite{DBLP:journals/ai/Dung95} are a formalism that allows the representation of conflicts between pieces of information using arguments and attacks between arguments.
\begin{definition}
An \emph{abstract argumentation framework} ($AF$) is a directed graph $F=(A,R)$ where $A$ is a finite set of \emph{arguments} and $R$ is an \emph{attack relation} $R \subseteq A \times A$. 
\end{definition}
For an AF $F=(A,R)$,
an argument $a$ is said to \emph{attack} an argument $b$ if $(a,b) \in R $. We say that, a set $E \subseteq A$ \emph{defends} an argument $a$ if every argument $b \in A$ that attacks $a$ is attacked by some $c \in E$. For $a\in A$ we define
\begin{align*}
    a^{-}_{F} & =\{b\mid (b,a) \in R \} \quad\text{and}\quad a^{+}_{F}=\{b\mid (a, b) \in R\}.
\end{align*}
In other words, $a^{-}_{F}$ is the set of attackers of $a$ and $a^{+}_{F}$ is the set of arguments attacked by $a$. For a set of arguments $E \subseteq A$ we extend these definitions to $E^{+}_{F}$ and $E^{-}_{F}$ via $E^{+}_{F} = \bigcup_{a \in E} a^{+}_{F}$ and $E^{-}_{F} = \bigcup_{a \in E} a^{-}_{F}$, respectively. If the AF is clear from context, we sometimes omit the index. Additionally, we denote with $(a,E) \in R$ an attack from argument $a$ to an argument $b \in E$, extending this notation to $(E,E')$ to state that one argument $a \in E$ attacks an argument $b \in E'$. 
For two AFs $F = (A, R)$ and $F'= (A', R')$, we define $F \cup F'= (A \cup A', R \cup R')$.

\subsection{Extension Semantics}\label{subsec:extension_semantics}

To reason with AFs a number of different semantical notions have been developed, like the \emph{extension-based} or the \emph{labelling-based} approaches, for an overview see \cite{baroni2018handbook}. Both these approaches are handling sets of arguments, which can be considered jointly acceptable. The extension semantics are relying on two basic concepts: \emph{conflict-freeness} and \emph{admissibility}.
\begin{definition}\label{def:adm}
Given $F = (A, R)$, a set $E \subseteq A$ is
\begin{itemize}
    \item a \emph{conflict-free} set iff $\forall a,b \in E$, $(a,b) \not \in R$;
    \item an \emph{admissible} set iff it is \emph{conflict-free} and it defends its elements, i.\,e., $\forall a \in E$, $\forall b \in A$ s.t. $(b,a) \in R$, $\exists c \in E$ with $(c,b) \in R$. 
\end{itemize}
\end{definition}
We use $\cf(F)$ and $\ad(F)$ for denoting the sets of conflict-free and admissible sets of an argumentation framework $F$, respectively. 
A central characterisation of admissibility is given through the  \emph{Fundamental Lemma} \cite{DBLP:journals/ai/Dung95}.
\begin{lemma}[\cite{DBLP:journals/ai/Dung95}]
    Let $F=(A,R)$ be an AF and $E \in \ad(F)$. If $a \in A$ is defended by $E$, then $E \cup \{a\} \in \ad(F)$. 
\end{lemma}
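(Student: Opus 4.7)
The plan is to verify the two defining properties of admissibility for the set $E \cup \{a\}$: conflict-freeness and self-defence, using the hypotheses that $E$ is already admissible and that $a$ is defended by $E$.

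First I would handle the defence condition, because it is essentially immediate. Every element of $E$ is already defended by $E$ (since $E \in \ad(F)$), and a fortiori by the superset $E \cup \{a\}$. The new element $a$ is defended by $E$ by hypothesis, hence also by $E \cup \{a\}$. So the work reduces entirely to showing $E \cup \{a\}$ is conflict-free.

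Since $E$ is conflict-free, I only need to rule out three kinds of attack involving $a$: (i) some $b \in E$ attacks $a$; (ii) $a$ attacks some $b \in E$; (iii) $a$ attacks itself. For (i): if $b \in E$ with $(b,a) \in R$, then $b$ is an attacker of $a$, so by the defence hypothesis there exists $c \in E$ with $(c,b) \in R$, contradicting conflict-freeness of $E$. Cases (ii) and (iii) require one extra step: if $(a,b) \in R$ with $b \in E$ (or $b = a$), then $a$ is an attacker of $b$ (respectively of $a$), so admissibility of $E$ (respectively the defence hypothesis for $a$) yields some $c \in E$ with $(c,a) \in R$. This $c$ is now an attacker of $a$, so applying the defence hypothesis again gives some $d \in E$ with $(d,c) \in R$, again contradicting conflict-freeness of $E$.

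The only subtle step is the two-level chase in cases (ii) and (iii), where one must invoke the defence of $a$ after having used admissibility of $E$ to produce an attacker of $a$ inside $E$; this is the place where keeping the roles of ``attacker'' and ``defender'' straight is the main obstacle, but it is a small one. Combining the conflict-freeness argument with the defence argument from the first paragraph yields $E \cup \{a\} \in \ad(F)$.
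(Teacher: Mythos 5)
Your proof is correct and follows the standard argument for Dung's Fundamental Lemma: the defence condition is monotone in the defending set, so everything reduces to conflict-freeness of $E \cup \{a\}$, which you establish by chasing attackers back into $E$ until two elements of $E$ attack each other. The paper states this lemma only as a recalled result from Dung (1995) without reproducing a proof, and your argument is essentially the one given there, so there is nothing to flag.
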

Now extension semantics can be defined by making use of the \emph{characteristic function} defined as follows.
\begin{definition}\label{def:semantics}
For $F = (A,R)$, the characteristic function $\mathcal{F}_{F}: 2^A \rightarrow 2^A$ is defined for $E \subseteq A$ via
$$\mathcal{F}_{F}(E)=\{a \in A \mid E \text{ defends } a\}$$
An admissible set $E \subseteq A$ is 
 \begin{itemize}
 	\item 
 	a \emph{complete} extension (\co) iff $E= \mathcal{F}_{F}(E)$
 	\item 
 	a \emph{preferred} extension (\pr) iff it is a $\subseteq$-maximal admissible extension;
 	\item 
 	a \emph{grounded} extension (\gr) iff it is a $\subseteq$-minimal complete extension;
 	\item 
 	a \emph{stable} extension (\st) iff $E^+_{F} = A \setminus E$. 
 \end{itemize}
\end{definition}
Note that the grounded extension is unique, and for every AF there always exists an extension for every semantics, except possibly the stable semantics \cite{DBLP:journals/ai/Dung95}.
 Caminada et al.~\cite{DBLP:journals/logcom/CaminadaCD12} discussed the non-existence of stable extensions in some AFs. Their conclusion was to define a new extension semantics called \emph{semi-stable} semantics, a semantics that maximises the attacked arguments and coincides with the stable extensions if it exists.  

\begin{definition}
For $F=(A,R)$, a set $E \subseteq A$ is a \emph{semi-stable} extension if and only if
	$E$ is a complete extension where $E \cup E^{+}_F$ is maximal wrt.\ set inclusion.
\end{definition}

The sets of extensions of an argumentation framework $F$, for the six semantics from above, are
denoted (respectively) $\co(F)$, $\pr(F)$, $\gr(F)$,
$\st(F)$, and $\sst(F)$.

\begin{example} \label{ex:af_example}
Consider the abstract argumentation framework $F_4$ depicted as a directed graph in Figure \ref{tikz:af1}. $F_4$ has four complete extensions 
    $E_1= \{a\}$, 
    $E_2= \{a, g\}$,
    $E_3= \{a, c, g\}$, and
    $E_4= \{a, d, g\}$.
$E_1$ is the grounded extension, while $E_3$ and $E_4$ are both preferred extensions, but only $E_3$ is a stable extension and therefore also semi-stable extension.

\begin{figure}
    \centering
 
 \scalebox{1}{
\begin{tikzpicture}

\node (a1) at (0,0) [circle, draw,minimum size= 0.65cm] {$a$};
\node (a2) at (2,0) [circle, draw,minimum size= 0.65cm] {$b$};
\node (a3) at (4,0) [circle, draw,minimum size= 0.65cm] {$c$};
\node (a4) at (6,0) [circle, draw,minimum size= 0.65cm] {$d$};
\node (a5) at (3,-2) [circle, draw,minimum size= 0.65cm] {$e$};
\node (a6) at (5,-2) [circle, draw,minimum size= 0.65cm] {$f$};
\node (a7) at (7,-2) [circle, draw,minimum size= 0.65cm] {$g$};

\path[<-] (a2) edge  (a1);
\path[<-] (a3) edge  (a2);
\path[->, bend left] (a3) edge  (a4);
\path[->, bend left] (a4) edge (a3); 

\path[->] (a3) edge (a5);
\path[->] (a3) edge (a6);
\path[->] (a4) edge (a6);
\path[->, bend left] (a6) edge  (a7);
\path[->, bend left] (a7) edge (a6); 

\path[->] (a5) edge  [loop left] node {} ();

\end{tikzpicture}}
   \caption{Abstract argumentation framework $F_4$ from Example \ref{ex:af_example}.}
    \label{tikz:af1}
\end{figure}
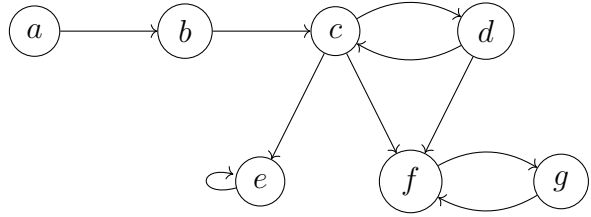
\end{example}

For more details about these semantics (and other ones), we refer the interested reader to \cite{DBLP:journals/ai/Dung95,baroni2018handbook}.
Furthermore an argument $a$ is \emph{credulously accepted} with respect to extension semantics $\sigma$ if there is a set $a \in E$ such that $E$ satisfies $\sigma$, argument $a$ is \emph{skeptically accepted} with respect to extension semantics $\sigma$ if $a$ is part of every $\sigma$-extension.   

\subsection{Argument-ranking Semantics}\label{subsec:argument-ranking_semantics}
\emph{Argument-ranking semantics} (also known as \emph{ranking-based semantics}) \cite{DBLP:conf/sum/AmgoudB13a} were introduced to focus on the strength of a single argument with respect to the other arguments. These approaches allow us to state whether an argument is ``stronger'' than another argument. 
Note that the order returned by an argument-ranking semantics is not necessarily total, i.\,e. not every pair of arguments is comparable.

\begin{definition}
An \emph{argument-ranking semantics} $\rho$ is a function, which maps an AF $F=(A,R)$ to a preorder\footnote{A preorder is a (binary) relation that is \emph{reflexive} and \emph{transitive}.} $\succeq_{F}^{\rho}$ on $A$.   
\end{definition}
Intuitively, $a \succeq_{F}^{\rho} b$ means, that $a$ is at least as strong as $b$ in $F$. 
We define the usual abbreviations as follows;
\begin{itemize}
    \item $a \succ^{\rho}_{F} b$ denotes \emph{strictly stronger}, i.\,e. $a \succeq^{\rho}_{F} b$ and $b \not\succeq^{\rho}_{F} a$;
    \item $a \simeq^{\rho}_{F} b$ denotes \emph{equally strong}, i.\,e. $a \succeq^{\rho}_{F} b$ and $b \succeq^{\rho}_{F} a$;
    \item $a \bowtie^\rho_F b$ denotes \emph{incomparability} so neither $a \succeq^{\rho}_{F} b$ nor $b \succeq^{\rho}_{F} a$.
\end{itemize}

A number of different argument-ranking semantics can be found in the literature (see \cite{DBLP:conf/aaai/BonzonDKM16} for an overview). One example of such argument-ranking semantics is the \emph{h-categoriser argument-ranking semantics} \cite{besnardh01}, which takes into account the direct attackers of an argument to calculate its strength value.
\begin{definition}[\cite{besnardh01}]
Let $F=(A,R)$. The \emph{h-categoriser function} $Cat: A \rightarrow (0,1]$ is defined as:
\begin{align*}
Cat(a)=\frac{1}{1+ \sum_{b \in a^{-}_{F}} Cat(b)} 
\end{align*}
The \emph{h-categoriser argument-ranking semantics} defines a ranking $\succeq^{Cat}_{F}$ on $A$ s.t. for $a,b \in A$, $a\succeq^{Cat}_{F} b$ iff $Cat(a) \geq Cat(b)$.
\end{definition}
Pu et al. \cite{PuLZL14} have shown, that the h-categoriser argument-ranking semantics is well defined, i.\,e., an h-categoriser function exists and is unique for every AF. 
\begin{example}\label{ex:h-cat}
    Consider $F_4$ from Example \ref{ex:af_example}. We want to rank the arguments of $F_4$ with the h-categoriser argument-ranking semantics. Since argument $a$ is unattacked we know that $Cat(a) = 1$. The remaining values can be calculated based on $Cat(a)$. The resulting ranking wrt.\ the h-categoriser argument-ranking semantics is: 
    \begin{align*}
        a \succ^{Cat}_{F_4} g \succ^{Cat}_{F_4} d \succ^{Cat}_{F_4} e \succ^{Cat}_{F_4} b \succ^{Cat}_{F_4} c \succ^{Cat}_{F_4} f
    \end{align*}
\end{example}
In the example above, we can identify some flaws of the h-categoriser argument-ranking semantics. Argument $c$ is ranked low even though it is part of the stable extension $\{a,c,g\}$, while argument $e$ is ranked high even though it attacks itself. Thus, the h-categoriser argument-ranking semantics does not necessarily capture the acceptance in terms of classical extension semantics very well. In general, argument-ranking semantics usually lack the expressiveness of a classical extension semantics, in particular because they cannot represent multiple acceptable extensions. But the possibility of multiple extensions is essential for many extension semantics and argumentation would be trivial without it in many cases.

In order to evaluate and compare argument-ranking semantics a number of principles were proposed. We recall some basic ones, which we will need later. Note that this list is not complete and more principles can be found in the literature, see \cite{DBLP:conf/sum/AmgoudB13a, DBLP:conf/aaai/BonzonDKM16}. 
Before we introduce the principles, we need a few more notations. Let $F = (A,R)$ be an AF.
A \emph{path} of length $l_P=n$ between two arguments $a,b \in A$ is a sequence of arguments $P(a,b)= (a_0,a_1,...,a_{n})$ with $(a_i,a_{i+1})\in R$ for all $i$ with $a_0=a$ and $a_n = b$. The \emph{connected components} $cc(F)$ of $F$ are the maximal subgraphs $F'=(A',R')$, where for every pair of arguments $a,b \in A'$ there exists an undirected path, i.\,e., there is $P_u(a,b)= (a=a_0,a_1,...,a_{n-1},a_n =b)$ s.t. for every $i$ there is either $(a_i,a_{i+1})\in R$ or $(a_{i+1},a_i) \in R$.
An \emph{isomorphism} $\gamma$ between two argumentation frameworks $F= (A, R)$ and $F'=(A', R')$ is a bijective function $\gamma: A \rightarrow A'$ such that $(a, b) \in R$ iff $(\gamma(a), \gamma(b)) \in R'$ for all $a,b \in A$.

\begin{definition}
    \label{def:ranking_principles}
    An argument-ranking semantics $\rho$ satisfies the respective principle iff for all AFs $F=(A,R)$ and any $a,b \in A$:
    \begin{description}
 \item[Abstraction (\emph{Abs}).] Names of arguments should not influence the ranking. \\
        For a pair of AFs $F=(A,R)$ and $F'=(A',R')$ and every isomorphism $\gamma:F \rightarrow F'$, we have $a \succeq^\rho_F b$ iff $\gamma(a) \succeq^\rho_{F'} \gamma(b)$.
        \item[Independence (\emph{In}).] Unconnected arguments should not influence a ranking. \\
        For every $F'=(A',R') \in cc(F)$ and for all $a,b \in A'$: $a \succeq^\rho_F b$ iff $a \succeq^\rho_{F'} b$.  
        \item[Void Precedence (\emph{VP}).] Unattacked arguments should be stronger then attacked ones.\\
        If $a^-_F = \emptyset$ and $b^-_F \neq \emptyset$ then $a \succ^\rho_F b$.
         \item[Non-attacked Equivalence (\emph{NaE)}] Two unattacked arguments should be equally strong. \\
        If $a^-_F = b^-_F = \emptyset$ then $a \simeq^\rho_F b$.
    \end{description}
\end{definition}


\section{Extension-Ranking Semantics}\label{sec:extension-ranking}

An extension semantics $\sigma$ provides a simple way to assess whether a given set of arguments $E$ is acceptable: either $E$ is a $\sigma$-extension or it is not. Such a binary classification can be seen as a drawback of an extension semantics. Let us illustrate this shortcoming with the following example.
\begin{example}
    Consider again $F_4$ from Example \ref{ex:af_example}. As shown before $E_1= \{a\}$, 
    $E_2= \{a, g\}$,
    $E_3= \{a, c, g\}$, and
    $E_4= \{a, d, g\}$ are the complete extensions, but all these sets can be considered equal in a reasoning process based on complete semantics. Only by using a different extension semantics, such as preferred or stable semantics, can these sets be distinguished in terms of their plausibility of acceptance. 

    If we look at sets that do not satisfy the complete extension semantics, such as $\{d\}$ or $\{a,b\}$, we see that these sets can be considered equally with respect to the complete semantics. However, $\{d\}$ is an admissible set, while $\{a,b\}$ is not even conflict-free. So it is reasonable to say that $\{d\}$ is a more plausible to be accepted than $\{a,b\}$.
\end{example}

In order to define a new kind of semantics with more expressiveness than binary classification, we will take a more general perspective on this issue by considering preorders over sets of arguments. The resulting preorder will allow us to rank sets based on their plausibility of acceptance, and to determine the plausibility of acceptance of each set with respect to other sets. In general, not every pair of sets of arguments must be necessarily comparable; if we have two disjoint sets of arguments, then it is not always reasonable to force a decision about the relationship between these two sets. 
Therefore, we consider only preorders and do not necessarily impose the totality of the relations. 
These preorders will allow us to state whether one set is closer to being acceptable than another. 
\begin{definition}
Let $F=(A,R)$ be an AF.
    An \emph{extension ranking} on $F$ is a preorder $\sqsupseteq$ over the power set $2^A$ of all arguments. An \emph{extension-ranking semantics} $\tau$ is a function that maps $F$ to an extension ranking $\sqsupseteq^{\tau}_{F}$ on $F$.  
\end{definition}
For an AF $F=(A,R)$, an extension-ranking semantics $\tau$, an extension ranking $\sqsupseteq^{\tau}_{F}$, $E,E'\subseteq A$, and for $E \sqsupseteq^{\tau}_{F} E'$ we say that $E$ is \emph{at least as plausible to be accepted as} $E'$ with respect to $\tau$ in $F$. We introduce the usual abbreviations: 
\begin{itemize}
    \item $E$ is \emph{strictly more plausible to be accepted than $E'$}, denoted $E \sqsupset^{\tau}_{F} E'$, if $E \sqsupseteq^{\tau}_{F} E'$ but not $ E' \sqsupseteq^{\tau}_{F} E$;
    \item $E$ and $E'$ are \emph{equally plausible to be accepted}, denoted $E \equiv^{\tau}_{F} E'$, if $E \sqsupseteq^{\tau}_{F} E'$ and $E' \sqsupseteq^{\tau}_{F} E$;
    \item   $E$ and $E'$ are \emph{incomparable}, denoted $E \asymp^{\tau}_{F} E'$, if neither $E \sqsupseteq^{\tau}_{F} E'$ nor $E' \sqsupseteq^{\tau}_{F} E$. 
\end{itemize}

In order to relate extension-ranking semantics to extension semantics later, the most plausible sets of an extension-ranking, will be of interest.
\begin{definition}
Let $F= (A,R)$ be an AF. 
    We denote by $\maxpl_{\tau}(F)$ the maximal (or \emph{most plausible}) elements of the extension ranking $\sqsupseteq^{\tau}_{F}$, i.\,e., $\maxpl_{\tau}(F) = \{E \subseteq A \mid \nexists E' \subseteq A \text{ with } E' \sqsupset^{\tau}_{F} E\}$.
\end{definition}

Extension-ranking semantics provide an expressive semantical framework for ranking sets of arguments. In fact, extension semantics can be used directly to define a very naive instance of extension-ranking semantics, as follows:
\begin{definition}
Let $F=(A,R)$ be an AF.
    Given an extension semantics $\sigma$, we define the \emph{least-discriminating extension-ranking semantics} wrt. $\sigma$, denoted $\mathsf{LD}^{\sigma}$ by: 
    \begin{itemize}
        \item $E \sqsupset^{\LD^{\sigma}}_{F} E'$ if $E \in \sigma(F)$ and $E' \notin \sigma(F)$;
        \item and $E \equiv^{\LD^{\sigma}}_{F} E'$, if $E, E' \in \sigma(F)$ or $E, E' \notin \sigma(F)$.
    \end{itemize}
\end{definition}

\begin{example}
We continue Example \ref{ex:af_example} and consider the least-discriminating extension-ranking semantics $\LD^{\co}$ wrt.\ the complete semantics $\co$. Then we have 
\begin{align*}
    \maxpl_{\LD^\co}(F_4)= \{\{a\}, \{a, g\}, \{a, c, g\}, \{a, d, g\}\}
\end{align*}
and every other set is ranked below these sets (so the complete extensions are the most plausible sets).
If we consider the least-discriminating extension-ranking semantics $\LD^{\pr}$ wrt.\ the preferred semantics $\pr$ we get
\begin{align*}
    \maxpl_{\LD^\pr}(F_4)= \{\{a, c, g\}, \{a, d, g\}\}
\end{align*}
\end{example}

We see that the least-discriminating extension-ranking semantics always gives a binary classification. 
Our goal for the remainder of this paper is to provide a finer distinction between sets of arguments than the least-discriminating extension-ranking semantics. 


\section{Principles for Extension-Ranking Semantics} \label{sec:principles}
We follow a \emph{principle-based approach} \cite{DBLP:journals/flap/TorreV17} to develop and analyse extension-ranking semantics. For this purpose, we define a number of general principles to describe various aspects of an intuitively well-behaved extension-ranking semantics and verify whether an extension-ranking semantics satisfies a principle or not. These principles also help us to compare different extension-ranking semantics with each other. Some principles are adapted from argument-ranking semantics literature \cite{DBLP:conf/aaai/BonzonDKM16}, others are defined specifically for extension-ranking semantics. 

Classical extension semantics already feature many desirable properties for an argumentative evaluation. The formalisation of extension-ranking semantics should therefore be compatible with these classical extension semantics, insofar as the constraints of valid and invalid extensions should be reflected in the extension-rankings. There are two aspects to this. First, for a given semantics $\sigma$, we expect that only $\sigma$-extensions are allowed to be the most plausible sets. Second, since different $\sigma$-extensions are indistinguishable by $\sigma$, all $\sigma$-extensions should be most plausible sets. We formalise the first demand via a principle called \emph{$\sigma$-soundness} and the second via a principle called \emph{$\sigma$-completeness}. Taking both demands together, we obtain the principle of \emph{$\sigma$-generalisation}.

\begin{definition}
Let $\sigma$ be an extension semantics and $\tau$ an extension-ranking semantics. $\tau$ satisfies
\begin{itemize}
\item $\sigma$\emph{-soundness} iff for all AFs $F$: $\maxpl_{\tau}(F) \subseteq \sigma(F)$.
\item $\sigma$\emph{-completeness} iff for all AFs $F$: $\maxpl_{\tau}(F) \supseteq \sigma(F)$.
\item $\sigma$\emph{-generalisation} iff $\tau$ satisfies both $\sigma$-soundness and $\sigma$-completeness.
\end{itemize}
\end{definition}
Thus, for an extension-ranking semantics that satisfies $\sigma$-generalisation, we have that its best ranked sets coincides with the set of all $\sigma$-extensions of the given AF. 

\begin{example}
    Consider $F_4$ from Example \ref{ex:af_example} and let $\tau$ be an extension-ranking semantics. If $\tau$ satisfies $\co$-generalisation then we have $\maxpl_\tau(F_4)=\{\{a\},\\\{a,g\},\{a,c,g\},\{a,d,g\}\}$, while if $\tau$ satisfies $\pr$-generalisation we have $$\maxpl_\tau(F_4)=\{\{a,c,g\},\{a,d,g\}\}$$
\end{example}

The plausibility of acceptance of a set of arguments should not depend on arguments not connected with this set. 
This means that adding or removing unconnected arguments does not change anything in the relationship of any of the other arguments like the principles \emph{Crash-resistance} and \emph{Non-interference} by van der Torre and Vesic \cite{DBLP:journals/flap/TorreV17} suggest. Extension-ranking semantics should also exhibit this behaviour. 
For any two unconnected AFs, the relationship between sets after combining these two AFs into one AF should coincide with the relationship in the two smaller AFs, i.e. if a set $E$ is ranked better than another one $E'$ before combining these two AFs, then $E$ should still be ranked better than $E'$ in the combined AF. This demand is formalised via the \emph{composition} principle. Furthermore, by splitting an AF into its unconnected components the relationships between sets should not change. Thus, if a set is at least as plausible to be accepted as another set in an AF, then removing unconnected parts of the AF should not change the relationship between those two sets. The principle \emph{decomposition} models this demand.
 
\begin{definition}
    Let $\tau$ be an extension-ranking semantics.
    \begin{itemize}
        \item $\tau$ satisfies \emph{composition} if for every AF $F$ s.t. $F= F_1 \cup F_2 = (A_1,R_1) \cup (A_2,R_2)$ with $A_1 \cap A_2 = \emptyset$ and $E, E' \subseteq A_1 \cup A_2$ it holds that
        $$\mbox{if }\left\{
 \begin{array}{c}
 	E \cap A_1 \sqsupseteq_{F_1}^{\tau} E' \cap A_1\\
	E \cap A_2 \sqsupseteq_{F_2}^{\tau} E' \cap A_2
  \end{array}
  \right\}\mbox{ then }E \sqsupseteq^{\tau}_{F} E'.$$
  
    \item $\tau$ satisfies \emph{decomposition} if for every AF $F$ s.t. $F= F_1 \cup F_2 = (A_1,R_1) \cup (A_2,R_2)$ with $A_1 \cap A_2 = \emptyset$ and $E, E' \subseteq A_1 \cup A_2$ it holds that
     $$\mbox{if } E \sqsupseteq^{\tau}_{F} E' \mbox{ then }\left\{ 
 \begin{array}{c}
 	E \cap A_1 \sqsupseteq_{F_1}^{\tau} E' \cap A_1\\
	E \cap A_2 \sqsupseteq_{F_2}^{\tau} E' \cap A_2
  \end{array}
  \right\}.$$
 \end{itemize}
\end{definition}

Composition states that if one set of arguments is at least as plausible to be accepted as another in a local view, then that relationship should hold in the global view. Decomposition, on the other hand, focuses on keeping the relationships of the global view intact in the local view. 

\begin{example}\label{ex:comp}
    Consider $F_4$ from Example \ref{ex:af_example} (depicted again in Figure \ref{tikz:f4_recalled}) and $F_5= (\{h,i,j\},\{(h,i),(i,j)\})$ as depicted in Figure \ref{tikz:comp}.
    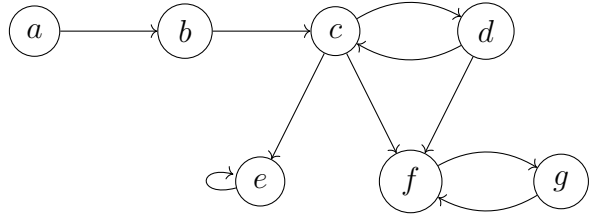
\begin{figure}
    \centering
 
 \scalebox{1}{
\begin{tikzpicture}

\node (a1) at (0,0) [circle, draw,minimum size= 0.65cm] {$a$};
\node (a2) at (2,0) [circle, draw,minimum size= 0.65cm] {$b$};
\node (a3) at (4,0) [circle, draw,minimum size= 0.65cm] {$c$};
\node (a4) at (6,0) [circle, draw,minimum size= 0.65cm] {$d$};
\node (a5) at (3,-2) [circle, draw,minimum size= 0.65cm] {$e$};
\node (a6) at (5,-2) [circle, draw,minimum size= 0.65cm] {$f$};
\node (a7) at (7,-2) [circle, draw,minimum size= 0.65cm] {$g$};

\path[<-] (a2) edge  (a1);
\path[<-] (a3) edge  (a2);
\path[->, bend left] (a3) edge  (a4);
\path[->, bend left] (a4) edge (a3); 

\path[->] (a3) edge (a5);
\path[->] (a3) edge (a6);
\path[->] (a4) edge (a6);
\path[->, bend left] (a6) edge  (a7);
\path[->, bend left] (a7) edge (a6); 

\path[->] (a5) edge  [loop left] node {} ();

\end{tikzpicture}}
   \caption{Recalled AF $F_4$ from Example \ref{ex:af_example}.}
    \label{tikz:f4_recalled}
\end{figure}
      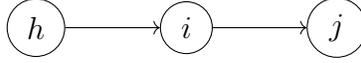
\begin{figure}
    \centering
\scalebox{1}{
\begin{tikzpicture}
 
\node (a1) at (0,0) [circle, draw,minimum size= 0.65cm] {$h$};
\node (a2) at (2,0) [circle, draw,minimum size= 0.65cm] {$i$};
\node (a3) at (4,0) [circle, draw,minimum size= 0.65cm] {$j$};
\path[<-] (a2) edge  (a1);
\path[<-] (a3) edge  (a2);

\end{tikzpicture}}
   \caption{AF $F_5$ from Example \ref{ex:comp}.}
    \label{tikz:comp}
\end{figure}
   Then these two AFs are disjoint, hence for any extension-ranking semantics $\tau$ the relationship in $F_4 \cup F_5$ should be the same as in $F_4$ resp. $F_5$. Consider sets $\{a,c,g,h,j\}$ and $\{b,c,h,i\}$, then $\{a,c,g,h,j\}$ is conflict-free in $F_4$ and $F_5$ ($\{a,c,g\} \in \cf(F_4)$ and $\{h,j\} \in \cf(F_5)$), while $\{b,c,h,i\}$ has conflicts in both $F_4$ and $F_5$, so it is reasonable to say that $\{a,c,g,h,j\}$ is more plausible to be accepted than $\{b,c,h,i\}$ in both $F_4$ and $F_5$ ($\{b,d\} \notin \cf(F_4)$ and $\{h,i\} \notin \cf(F_5)$), i.\,e., $\{a,c,g\} \sqsupset^\tau_{F_4} \{b,c\}$ resp. $\{h,j\} \sqsupset^\tau_{F_5} \{h,i\}$ for an extension-ranking semantics $\tau$, which generalises $\LD^\cf$. So, for composition to be satisfied it has to hold that $\{a,c,g,h,j\} \sqsupset^\tau_{F_4 \cup F_5} \{b,c,h,i\}$.

    Next, we see that $\{a,c,g,h,j\}$ is conflict-free and $\{b,c,h,i\}$ is not conflict-free wrt. $F_4 \cup F_5$, hence for any extension-ranking semantics $\tau$ that generalises $\LD^\cf$, we have $\{a,c,g,h,j\} \sqsupset^\tau_{F_4 \cup F_5} \{b,c,h,i\}$. To satisfy decomposition the relationship between $\{a,c,g,h,j\}$ and $\{b,c,h,i\}$ should stay the same wrt. $F_4$ and $F_5$, i.\,e., $\{a,c,g\} \sqsupset^\tau_{F_4} \{b,c\}$ and $\{h,j\} \sqsupset^\tau_{F_5} \{h,i\}$.
\end{example}

Caminada \cite{DBLP:conf/jelia/Caminada06} discussed that an important property of rational accounts of argumentation is \emph{reinstatement}, i.\,e., the ability to make an attacked argument acceptable by attacking its attackers. Complete semantics implements this property in a strict fashion: if an argument can be reinstated by a set of arguments, it must be included in that set. Therefore, a set of arguments containing an argument it defends is more plausible to be accepted than a set without that argument.
We define two different versions of this principle, a \emph{weak} and a \emph{strong} version. The weak version states that by adding a defended argument (which also introduces no further conflicts) we do not reduce the plausibility of acceptance of any set of arguments. The strong version ensures, that by adding a defended argument to a set, the plausibility of acceptance of that set will be strictly higher than before.

\begin{definition}
Let $\tau$ be an extension-ranking semantics.
\begin{itemize}
\item $\tau$ satisfies \emph{weak reinstatement} iff for all AF $F = (A,R)$ with $E \subseteq A$ it holds that $a \in \mathcal{F}_{F}(E)$, $a \notin E$ and $a  \notin (E^- \cup E^+)$ implies $E \cup \{a\} \sqsupseteq_{F}^{\tau} E$.
\item $\tau$ satisfies \emph{strong reinstatement} iff for all $F = (A,R)$ with $E \subseteq A$ it holds that $a \in \mathcal{F}_{F}(E)$, $a \notin E$ and $a  \notin (E^- \cup E^+)$ implies $E \cup \{a\} \sqsupset_{F}^{\tau} E$.
\end{itemize}
\end{definition}
Note that the condition $a  \notin (E^- \cup E^+)$ above is needed in order to not add additional conflicts to the set (which may again lower the plausibility of acceptance of the set of arguments).
\begin{example}
    Consider $F_5$ from Example \ref{ex:comp}. Then argument $j$ is defended by argument $h$, so for any extension-ranking semantics $\tau$ that satisfies weak reinstatement it has to hold: $\{h,j\} \sqsupseteq^{\tau}_{F_5} \{h\}$.
    For strong reinstatement the relationship between $\{h,j\}$ and $\{h\}$ has to be strict.

    If we consider the set $\{h,i\}$, then we might think that $\{h,i,j\} \sqsupseteq^{\tau}_{F_5} \{h,i\}$ has to hold as well, since $j$ is defended by $\{h,i\}$. However, by adding argument $j$ into the set we create more conflicts since $j \in \{h,i\}^+$. Thus, we can not enforce the relationship between $\{h,i,j\}$ and $\{h,i\}$. We could even argue that $\{h,i\} \sqsupset^{\tau}_{F_5} \{h,i,j\}$ should hold, since $\{h,i\}$ has fewer conflicts than $\{h,i,j\}$.
\end{example}

In general, changes to an AF will affect the inferences drawn. However, the resulting changes in the reasoning should be intuitive. For example, adding an attacker to an argument should not increase the strength of that argument. An extension-ranking semantics should also respect this behaviour. 
For an extension-ranking semantics, it should hold that given two sets $E$ and $E'$, where $E$ is already more plausible to be accepted than $E'$, adding attacks from $E$ to $E'$ should not make $E$ less plausible to be accepted than $E'$. So an extension-ranking semantics should be \emph{robust} against the addition of attacks, which intuitively should not worsen the plausibility of acceptance of a set. 
 The robustness of labelling-based semantics has already been discussed by Rienstra et al. \cite{DBLP:journals/argcom/RienstraSTL20}.
\begin{definition}
    Let $\tau$ be an extension-ranking semantics. $\tau$ satisfies \emph{addition robustness} if for all AF $F= (A,R)$ and $E,E' \subseteq A$ with $E \sqsupseteq^\tau_{F} E'$ it holds that $E \sqsupseteq^\tau_{F'} E'$ where $F'=(A,R')$ with $R'= R \cup \{(a,b)\}$ s.t. $a \in E$, and $b \in E' \setminus E$.
\end{definition} 
\begin{example}
    Consider $F_4$ from Example \ref{ex:af_example} and sets $\{a,c,g\}$ and $\{b,c,e\}$. $\{a,c,g\}$ is the stable extension, while $\{b,c,e\}$ is not even conflict-free. Let $\tau$ be an arbitrary extension-ranking semantics s.t. $\{a,c,g\} \sqsupset^\tau_{F_4} \{b,c,e\}$. If we create a new AF $F_4'$ by adding an attack from $a$ to $e$, then we add an additional reason to reject $\{b,c,e\}$. Thus, if $\tau$ satisfies \emph{addition robustness}, then $\{a,c,g\} \sqsupset^\tau_{F_4'} \{b,c,e\}$.
\end{example}

In abstract argumentation, the content of each argument is neglected and only the relationships between arguments are relevant.  It should make no difference whether an argument is called $a$ or 
$\alpha$, as long as the structure of the AF is the same. Argument-ranking semantics follow this demand in a strict way, where the principle \emph{Abstraction} \cite{DBLP:conf/sum/AmgoudB13a} is defined, which states that the names of the arguments are not relevant for the resulting argument ranking. For extension-ranking semantics, we define a principle similar in spirit to \emph{Abstraction}, called \emph{syntax independence}.
\begin{definition} \label{des:s}
  An extension-ranking semantics $\tau$ satisfies \emph{syntax independence} if for every pair of AFs $F= (A, R)$, $F'=(A', R')$ and for every isomorphism $\gamma: A \rightarrow A'$, for all $E, E' \subseteq A$, we have $E \sqsupseteq_{F}^{\tau} E'$ iff $\gamma(E) \sqsupseteq_{F'}^{\tau}~\gamma(E')$.
\end{definition}

We conclude our discussion of principles for extension-ranking semantics, by analysing the least-discriminating extension-ranking semantics from the previous section wrt.\ them. 

The least-discriminating extension-ranking semantics wrt.\ an extension semantics $\sigma$ satisfies $\sigma$-generalisation by definition, since all $\sigma$-extensions are ranked better than non-$\sigma$-extensions. For example, for $\LD^\ad$, all admissible sets are the most plausible sets, and every non-admissible set is ranked below these sets. However, $\LD^\sigma$ usually does not satisfy $\sigma'$-generalisation for $\sigma\neq\sigma'$. For example, $\LD^\ad$ does not satisfy $\co$-soundness, as the following example shows.
\begin{example}
    Consider $F_4$ from Example \ref{ex:af_example}. Then $\{d\}$ is an admissible set implying $\{d\} \in \maxpl_{\LD^\ad}(F_4)$, however $\{d\}$ is not a complete set, since argument $a$ is defended by $\{d\}$. So, $\LD^\ad$ violates $\co$-soundness. For $\LD^\co$ we can also easily see that $\ad$-completeness is violated. Consider again set $\{d\}$. $\{d\} \notin \maxpl_{\LD^\co}(F_4)$, but $\{d\} \in \ad(F_4)$ so $\{d\}$ is supposed to be a most plausible set wrt. $\ad$.
\end{example}

\begin{proposition}\label{thm:LD_gen}
    $\LD^\sigma$ satisfies $\sigma$-generalisation for $\sigma \in \{\cf,\ad,\co,\pr,\gr, \sst\}$.
\end{proposition}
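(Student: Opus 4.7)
The plan is to verify both directions of the $\sigma$-generalisation requirement ($\sigma$-soundness and $\sigma$-completeness) directly from the definition of the least-discriminating extension-ranking semantics, using the fact that each of the semantics $\sigma \in \{\cf, \ad, \co, \pr, \gr, \sst\}$ always yields at least one extension. The non-emptiness is crucial: $\emptyset \in \cf(F) \cap \ad(F)$, the grounded extension always exists (and thus complete, preferred, and semi-stable sets exist as well, per the classical results recalled in Section~\ref{sec:preliminaries}).

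For $\sigma$-completeness, I would take any $E \in \sigma(F)$ and show $E \in \maxpl_{\LD^\sigma}(F)$. By the definition of $\LD^\sigma$, for any $E' \subseteq A$, either $E' \in \sigma(F)$, yielding $E \equiv^{\LD^\sigma}_F E'$, or $E' \notin \sigma(F)$, yielding $E \sqsupset^{\LD^\sigma}_F E'$. In neither case do we obtain $E' \sqsupset^{\LD^\sigma}_F E$, so $E$ is a maximal element of $\sqsupseteq^{\LD^\sigma}_F$.

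For $\sigma$-soundness, I would argue contrapositively. Let $E \in \maxpl_{\LD^\sigma}(F)$ and assume for contradiction that $E \notin \sigma(F)$. Since $\sigma(F) \neq \emptyset$ for each semantics under consideration, pick any $E^* \in \sigma(F)$. By the definition of $\LD^\sigma$ this yields $E^* \sqsupset^{\LD^\sigma}_F E$, contradicting the maximality of $E$. Hence $E \in \sigma(F)$.

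No step is genuinely hard; the entire proposition follows almost mechanically from unpacking the definitions. The only conceptual point worth flagging explicitly is the non-emptiness of $\sigma(F)$, which is what makes the soundness direction go through: without it, an AF with $\sigma(F) = \emptyset$ would collapse $\LD^\sigma$ into the trivial equivalence where every set is maximal, breaking $\sigma$-soundness. Since stable semantics (which may fail to have extensions) is deliberately excluded from the list, this potential pitfall does not arise, and the proof can proceed uniformly across all six listed semantics.
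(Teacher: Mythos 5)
Your proof is correct and takes essentially the same approach as the paper: both arguments unpack the two-level structure of $\sqsupseteq^{\LD^\sigma}_F$ to show that the $\sigma$-extensions are exactly the maximal elements. You are in fact slightly more careful than the paper's own proof, since you make explicit the non-emptiness of $\sigma(F)$ needed for the soundness direction, a point the paper only addresses separately in its discussion of $\LD^{\st}$ and Proposition~\ref{thm:no st-gen}.
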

%
$\LD^\st$ violates $\st$-soundness, since there exist AFs without any stable extension. However, for these AFs $\sqsupseteq^{\LD^{\st}}$ is still well-defined, just not very expressive. Every set is equally plausible to be accepted and therefore every set is among the most plausible sets. 
\begin{example}\label{ex:oddcycle}
    Consider the odd cycle $F_6= (\{a,b,c\},\{(a,b),(b,c),(c,a)\}$, as depicted in Figure \ref{tikz:oddcycle}.
      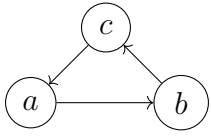
\begin{figure}
    \centering
\scalebox{1}{
\begin{tikzpicture}
 
\node (a) at (0,0) [circle, draw,minimum size= 0.65cm] {$a$};
\node (b) at (2,0) [circle, draw,minimum size= 0.65cm] {$b$};
\node (c) at (1,1) [circle, draw,minimum size= 0.65cm] {$c$};
\path[->] (a) edge  (b);
\path[->] (b) edge  (c);
\path[->] (c) edge  (a);

\end{tikzpicture}}
   \caption{AF $F_6$ from Example \ref{ex:oddcycle}.}
    \label{tikz:oddcycle}
\end{figure}
    This AF has no stable extension. So for any pair of sets $E, E' \subseteq \{a,b,c\}$ it holds $E \equiv^{\LD^{\st}}_{F_6} E'$, since $E, E' \notin \st(F_6)$. Thus every set is equally ranked and every set is among the most plausible sets.   
\end{example}
The stable semantics is not the only extension semantics with such a behaviour. 
 To be precise, for any extension semantics $\sigma$ for which there are AFs without $\sigma$-extensions, we cannot define any extension-ranking semantics satisfying $\sigma$-soundness. 
\begin{proposition}\label{thm:no st-gen}
Let $\sigma$ be an extension semantics s.t. there exists an AF $F=(A,R)$ s.t. $\sigma(F)= \emptyset$, then there is no extension-ranking semantics satisfying $\sigma$-soundness. 
 \end{proposition}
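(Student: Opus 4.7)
The plan is a short proof by contradiction that exploits the finiteness of $2^A$. Suppose, for contradiction, that an extension-ranking semantics $\tau$ satisfies $\sigma$-soundness, and let $F = (A, R)$ be the AF guaranteed by the hypothesis with $\sigma(F) = \emptyset$. Then $\sigma$-soundness forces $\maxpl_\tau(F) \subseteq \sigma(F) = \emptyset$, i.e., $\maxpl_\tau(F) = \emptyset$.

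The next step is to derive a contradiction by showing that $\maxpl_\tau(F)$ is always nonempty. Since $A$ is finite, the power set $2^A$ is finite and nonempty (it contains at least $\emptyset$). By definition of an extension-ranking semantics, $\sqsupseteq^\tau_F$ is a preorder on $2^A$, and the strict part $\sqsupset^\tau_F$ is therefore irreflexive and transitive, hence a strict partial order. A standard argument shows that every finite nonempty set carries maximal elements with respect to any strict partial order: starting from any element, one can follow strictly increasing chains, and since the underlying set is finite, such chains must terminate at a maximal element. Thus $\maxpl_\tau(F) \neq \emptyset$, contradicting the previous paragraph.

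The only mildly delicate point is to check that maximality is actually definable as in the excerpt, namely as the absence of any strictly more plausible set, and that this notion behaves correctly under a preorder (as opposed to a partial order where antisymmetry holds). This is fine: even if many sets are equally plausible (tied under $\equiv^\tau_F$), the equivalence class at the top of any maximal chain contains only elements that have no strict upper bound, so each of those is in $\maxpl_\tau(F)$. Hence no obstacle arises from the preorder being non-antisymmetric.

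I do not anticipate a hard step; the whole argument hinges on the observation that finite preorders always admit maximal elements, combined with the definitional unpacking of $\sigma$-soundness. The proof therefore reduces to spelling out these two observations cleanly.
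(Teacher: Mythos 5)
Your proof is correct and follows essentially the same route as the paper's: both derive a contradiction from the fact that $\maxpl_\tau(F)$ must be nonempty (a finite nonempty set under a preorder always has maximal elements) while $\sigma$-soundness would force it to be contained in $\sigma(F)=\emptyset$. If anything, your version is slightly more careful than the paper's, which invokes ``for any preorder there always exists a maximal set'' without flagging that this relies on finiteness of $2^A$, and which phrases the argument in terms of $\sigma$-generalisation rather than the weaker $\sigma$-soundness actually claimed.
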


By combining two disjoint AFs into a new AF the acceptance of each set does not change, therefore $\LD^\sigma$ satisfies composition for all considered semantical notions.  
\begin{proposition}
    $\LD^\sigma$ satisfies composition for $\sigma \in \{\cf,\ad,\co,\pr, \gr, \st, \sst\}$.
\end{proposition}

Decomposition is violated by all considered least-discriminating extension-ranking semantics. We show this with the following counterexample.
\begin{example}\label{ex:LD_decomp}
    Let $F_7= (\{a,b,c,d\},\{(a,b),(c,d)\})$ and the two sets $E= \{a,b,c\}$ and $E'= \{a,c,d\}$. As depicted in Figure \ref{tikz:LD_decomp}.
       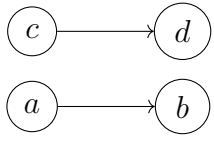
\begin{figure}
    \centering
\scalebox{1}{
\begin{tikzpicture}
 
\node (a) at (0,0) [circle, draw,minimum size= 0.65cm] {$a$};
\node (b) at (2,0) [circle, draw,minimum size= 0.65cm] {$b$};
\node (c) at (0,1) [circle, draw,minimum size= 0.65cm] {$c$};
\node (d) at (2,1) [circle, draw,minimum size= 0.65cm] {$d$};

\path[->] (a) edge  (b);
\path[->] (c) edge  (d);
\end{tikzpicture}}
   \caption{AF $F_7$ from Example \ref{ex:LD_decomp}.}
    \label{tikz:LD_decomp}
\end{figure}
    
 $F_7$ can be partitioned into two disjoint AFs $F_{7,1}= (\{a,b\},\{(a,b)\})$ and $F_{7,2}= (\{c,d\},\{(c,d)\})$. Both $E$ and $E'$ are not conflict-free in $F_7$, hence $E \equiv^{\LD^\sigma}_{F_7} E'$ for any extension semantics that is based on conflict-freeness, i.\,e. $\sigma \in \{\cf,\ad,\co,\pr,\gr, \st,\sst\}$. But $E' \cap \{a,b\}$ is conflict-free in $F_{7,1}$ while $E \cap \{c,d\}$ is conflict-free in $F_{7,2}$, hence $E'\cap \{a,b\} \sqsupset^{\LD^\sigma}_{F_{7,1}} E \cap \{a,b\}$ resp. $E\cap \{c,d\} \sqsupset^{\LD^\sigma}_{F_{7,2}} E' \cap \{c,d\}$  for $\sigma \in \{\cf,\ad,\co,\pr,\gr,\st, \sst\}$. So, $\LD^\sigma$ violates decomposition for $\sigma \in \{\cf,\ad,\co,\pr,\gr,\st, \sst\}$.
\end{example}

Adding an argument, which is defended, does not yield a rejection of the respective set wrt.\ any extension semantics, hence $\LD^\sigma$ satisfies weak reinstatement for all considered semantical notions.
\begin{proposition}
    $\LD^{\sigma}$ satisfies weak reinstatement for $\sigma \in \{\cf,\ad,\co,\pr,\gr,\st, \\\sst \}$.
\end{proposition}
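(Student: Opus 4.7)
The key observation is to unpack what $E\cup\{a\}\sqsupseteq^{\LD^{\sigma}}_{F} E$ means: by definition of the least-discriminating extension-ranking semantics, this relation \emph{fails} only in the single case where $E\in\sigma(F)$ and $E\cup\{a\}\notin\sigma(F)$. Therefore the plan is to split on whether $E\in\sigma(F)$. If $E\notin\sigma(F)$, then regardless of the status of $E\cup\{a\}$, we have either $E\cup\{a\}\sqsupset^{\LD^{\sigma}}_{F}E$ (if $E\cup\{a\}\in\sigma(F)$) or $E\cup\{a\}\equiv^{\LD^{\sigma}}_{F}E$ (if $E\cup\{a\}\notin\sigma(F)$), and we are done. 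So the only non-trivial obligation is: assuming $E\in\sigma(F)$, the reinstatement premises ($a\in\mathcal{F}_F(E)$, $a\notin E$, $a\notin E^-\cup E^+$) imply $E\cup\{a\}\in\sigma(F)$.

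Next I would discharge this obligation by a short case analysis on $\sigma$. For $\sigma=\cf$: the hypothesis $a\notin E^-\cup E^+$ says $a$ has no attack to or from any element of $E$, and $a\notin E$ rules out self-attacks coming from inside $E$; so $E\cup\{a\}\in\cf(F)$ immediately. For $\sigma=\ad$: $E$ is admissible and $a$ is defended by $E$, so the Fundamental Lemma (recalled earlier in the paper) gives $E\cup\{a\}\in\ad(F)$ directly (the extra condition $a\notin E^-\cup E^+$ is not even needed here).

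For the remaining semantics the plan is to show that the reinstatement premise is in fact vacuous when $E\in\sigma(F)$, so there is nothing left to verify. For $\sigma=\co$: $E=\mathcal{F}_F(E)$, so $a\in\mathcal{F}_F(E)=E$, contradicting $a\notin E$. For $\sigma=\gr$ and $\sigma=\sst$: these are in particular complete extensions, so the same argument applies. For $\sigma=\pr$: $E$ is a $\subseteq$-maximal admissible set, but the Fundamental Lemma would produce a strictly larger admissible set $E\cup\{a\}$, contradicting maximality, so again the premise cannot hold. For $\sigma=\st$: from $E^+_F=A\setminus E$ and $a\notin E$ we get $a\in E^+_F$, contradicting $a\notin E^-\cup E^+$.

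The main obstacle is really just keeping the case distinction disciplined: one must be careful not to conflate ``the conclusion $E\cup\{a\}\sqsupseteq E$ fails'' with ``$E\cup\{a\}\notin\sigma(F)$'', since the former additionally requires $E\in\sigma(F)$. Once that framing is in place, the verification reduces to either a direct appeal (for $\cf$ and $\ad$) or an easy vacuity argument (for $\co,\pr,\gr,\st,\sst$), with no genuinely intricate step.
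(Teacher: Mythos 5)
Your proposal is correct and follows essentially the same route as the paper's proof: exploit the two-level structure of $\sqsupseteq^{\LD^{\sigma}}$, dispose of the case $E\notin\sigma(F)$ immediately, and for $E\in\sigma(F)$ check that acceptance is preserved or that the reinstatement premise is vacuous. If anything you are more precise than the paper, which only spells out vacuity for $\gr$ and handles the other semantics with a one-line appeal to ``adding $a$ creates no conflict''; the one spot worth tightening in your write-up is the $\cf$ case, where a potential self-attack $(a,a)$ is excluded not by $a\notin E^{-}\cup E^{+}$ alone but by combining it with $a\in\mathcal{F}_F(E)$ (any self-attacker defended by $E$ would lie in $E^{+}$).
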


While all considered least-discriminating extension-ranking semantics satisfy weak reinstatement, for every $\sigma \in \{\cf,\ad,\co,\pr,\gr,\st,\sst\}$, $\LD^{\sigma}$ violates strong reinstatement. We show this with a counterexample. 
 \begin{example}\label{ex:LD_st_rein}
     Let $F_8 = (\{a,b\},\{(a,a)\})$ be an AF and we consider $E = \{a\}$, as depicted in Figure \ref{tikz:LD_st_rein}.
        \begin{figure}
    \centering
\scalebox{1}{
\begin{tikzpicture}
 
\node (a) at (0,0) [circle, draw,minimum size= 0.65cm] {$a$};
\node (b) at (2,0) [circle, draw,minimum size= 0.65cm] {$b$};

\path[->] (a) edge  [loop left] node {} ();
\end{tikzpicture}}
   \caption{AF $F_8$ from Example \ref{ex:LD_st_rein}.}
    \label{tikz:LD_st_rein}
\end{figure}
     Since $E$ is not conflict-free it follows that $E \notin \sigma(F_8)$ for $\sigma \in \{\ad,\co,\pr,\gr,\st,\sst\}$, however $b \in \mathcal{F}_{F_8}(E)$, $b \notin E$ and $b \notin (E^- \cup E^+)$. So based on strong reinstatement it would hold, that $E \cup \{b\} \sqsupset_{F_8}^{\LD^{\sigma}} E$, but $E \cup \{b\}$ is also not conflict-free. Hence, $E \cup \{b\} \notin \sigma(F_8)$. Therefore $E \cup \{b\} \notin \maxpl_{\LD^{\sigma}}(F_8)$ and $E \cup \{b\} \equiv_{F_8}^{\LD^{\sigma}}$ E. So, $\LD^{\sigma}$ does not satisfies strong reinstatement for $\sigma \in \{\cf,\ad,\co,\pr,\gr,\st,\sst\}$.  
 \end{example}

Adding an attack to an already defeated set of arguments cannot improve the status of that set, i.\,e. if that set was not conflict-free, admissible, grounded or stable, then that set cannot become conflict-free, admissible, grounded or stable. 
So, we see that the least-discriminating extension-ranking semantics with respect to conflict-freeness and admissibility or grounded and stable extension semantics satisfies addition robustness. 
\begin{proposition}\label{thm:LD_ar}
    $\LD^{\sigma}$ satisfies addition robustness for $\sigma \in \{\cf,\ad, \gr,\st\}$.
\end{proposition}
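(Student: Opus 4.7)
The plan is to unfold the definition of $\LD^{\sigma}$: the premise $E \sqsupseteq^{\LD^\sigma}_F E'$ says $E \in \sigma(F)$ or $E' \notin \sigma(F)$, and the conclusion says the analogous statement for $F'$. So for each $\sigma$ I want the two transfer lemmas (i) $E \in \sigma(F) \Rightarrow E \in \sigma(F')$, and (ii) $E' \notin \sigma(F) \Rightarrow E' \notin \sigma(F')$. Throughout, I use the crucial structural constraint that the new edge $(a,b)$ has $a \in E$ and $b \in E'\setminus E$, so $b \notin E$.

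For $\cf$, both (i) and (ii) are immediate: the new attack cannot lie inside $E$ because $b \notin E$, and any pre-existing conflict inside $E'$ survives in $R' \supseteq R$. For $\ad$, (i) follows because, in addition to preserving conflict-freeness of $E$, the attackers of $E$ are unchanged (the new edge targets $b \notin E$), so the defenders of $E$ still work in $F'$; for (ii) I split on whether $a \in E'$: if so, $(a,b)$ creates a conflict in $E'$ so $E' \notin \cf(F') \subseteq \ad(F')$; if not, then any defender of $E'$ in $F'$ against any attacker $y$ must use an edge in $R$ (the only new edge $(a,b)$ needs $a$ as source, but $a \notin E'$), so admissibility in $F'$ would transfer back to $F$, contradicting the hypothesis. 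For $\st$, (i) uses that $E^+_F = A \setminus E$ already, and the only potential new coverage target $b$ lies in $A \setminus E$ anyway; (ii) splits on whether $E'$ is conflict-free in $F'$ (handled by the $\cf$ analysis) and otherwise on an unattacked argument $x \in A \setminus E'$: since $x \neq b$ (because $b \in E'$), no new edge from $E'$ reaches $x$, so $x$ remains unattacked by $E'$ in $F'$.

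The delicate case is $\sigma = \gr$, since $\gr$ returns a unique set so one cannot simply transport membership. I argue by contradiction: assume $E \not\sqsupseteq^{\LD^\gr}_{F'} E'$, so $E' = \gr(F')$ and $E \neq \gr(F')$. Because $b \in E' = \gr(F')$ and $(a,b)\in R'$, conflict-freeness of $\gr(F')$ forces $a \notin \gr(F')$, and defense forces some element of $\gr(F')$ to attack $a$ (via an edge in $R$, since the only new edge does not target $a$). The proof then rests on three subclaims: (A) if $a \in \gr(F)$ then $\gr(F) \subseteq \gr(F')$, proved by induction on the Kleene iterates $\mathcal{F}_F^k(\emptyset) \subseteq \mathcal{F}_{F'}^k(\emptyset)$, using that every $x \in \mathcal{F}_F^k(\emptyset) \subseteq \gr(F)$ satisfies $x \neq b$ (since $b \notin \gr(F)$), hence its attackers are the same in $F$ and $F'$; (B) if $a \notin \gr(F)$ then $\gr(F') \subseteq \gr(F)$, proved dually by showing inductively that $\mathcal{F}_{F'}^k(\emptyset) \subseteq \gr(F)$, using that $a$ never enters the iterate, so $\mathcal{F}_{F'}(S) \subseteq \mathcal{F}_F(S)$ whenever $a \notin S$; and (C) $E' = \gr(F')$ is complete in $F$, proved by checking that $\mathcal{F}_F(E') = \mathcal{F}_{F'}(E') = E'$ (again using $a \notin E'$ to exclude the new edge from $E'$'s attacks).

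To finish, I split on the premise $E \sqsupseteq^{\LD^\gr}_F E'$. If $E = \gr(F)$ then $a \in \gr(F)$, so subclaim (A) yields $a \in \gr(F')$, contradicting $a \notin \gr(F')$. If $E' \neq \gr(F)$, subclaim (C) gives $\gr(F) \subseteq E' = \gr(F')$ with strict inclusion; then either $a \in \gr(F)$ and (A) again contradicts $a \notin \gr(F')$, or $a \notin \gr(F)$ and (B) gives $\gr(F') \subseteq \gr(F)$, contradicting the strict chain $\gr(F) \subsetneq \gr(F')$. The main obstacle is precisely the $\gr$ case, because the grounded extension is neither monotone nor antitone under attack addition; subclaims (A) and (B) together exactly bracket its possible movement, and the construction $a \in E$, $b \in E' \setminus E$ is what makes the conflict-freeness clash between $a$ and $b$ in $\gr(F')$ available to drive every case into contradiction.
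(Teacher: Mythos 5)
Your proof is correct and follows the same overall decomposition as the paper's: for each $\sigma$ you establish the two transfer facts $E \in \sigma(F) \Rightarrow E \in \sigma(F')$ and $E' \notin \sigma(F) \Rightarrow E' \notin \sigma(F')$, and your arguments for $\cf$, $\ad$ and $\st$ coincide with the paper's almost verbatim. The only genuine divergence is the grounded case. The paper proves just the pull-back implication $E' \in \gr(F') \Rightarrow E' \in \gr(F)$ (by tracing the least fixed point of the characteristic function, started from an unattacked argument, back from $F'$ to $F$) and then avoids the first transfer fact altogether by exploiting uniqueness of the grounded extension: if $E = \gr(F)$ then automatically $E' \neq \gr(F)$, so the pull-back already rules out $E' \sqsupset^{\LD^{\gr}}_{F'} E$. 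Your contradiction argument via subclaims (A), (B), (C) is heavier but sound, and in places more carefully justified than the paper's rather terse fixed-point reasoning; what the paper's route buys is brevity, what yours buys is an explicit bracketing of how $\gr$ can move under attack addition. Two small remarks: subclaim (A) as stated is false without the extra hypothesis $b \notin \gr(F)$ (take $R = \emptyset$ and add $(a,b)$: then $\gr(F) = \{a,b\} \not\subseteq \{a\} = \gr(F')$); it does hold in the one place you genuinely need it, namely the case $E = \gr(F)$, where $b \in E' \setminus E$ gives $b \notin \gr(F)$. In your second case the appeal to (A) is redundant anyway, since subclaim (C) already yields $\gr(F) \subseteq \gr(F') $ and hence the contradiction $a \in \gr(F')$ directly.
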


 For complete, preferred and semi-stable semantics, we can give a counterexample showing that addition robustness is violated, since by adding an outgoing attack from a set, additional arguments maybe defended.
 \begin{example}\label{ex:pr_robustness}
     Let $F_9= (A,R)$ be the AF depicted in Figure \ref{tikz:pr_robustness}. 
     \begin{figure}
    \centering
 \scalebox{1}{
\begin{tikzpicture}
\node (a) at (0,0) [circle, draw,minimum size= 0.65cm] {$a$};
\node (b) at (2,0) [circle, draw,minimum size= 0.65cm] {$b$};
\node (d) at (4,2) [circle, draw,minimum size= 0.65cm] {$d$};
\node (e) at (2,2) [circle, draw,minimum size= 0.65cm] {$e$};
\node (f) at (0,2) [circle, draw,minimum size= 0.65cm] {$c$};
\node (ff) at (-2,0)  [circle, draw,minimum size= 0.65cm] {$f$};
\node (g) at (-2,2)  [circle, draw,minimum size= 0.65cm] {$g$};

\path[->,bend left] (a) edge  (f);
\path[->,bend left] (f) edge  (a);
\path[->, dashed] (a) edge  (b);
\path[->] (f) edge  (e);
\path[->] (e) edge  (b);
\path[->] (b) edge  (d);
\path[->] (d) edge  (e);
\path[->] (a) edge  (ff);
\path[->] (f) edge  (g);
\path[->] (ff) edge  [loop left] node {} ();
\path[->] (g) edge  [loop left] node {} ();
\end{tikzpicture}}
   \caption{AF $F_9$ from Example \ref{ex:pr_robustness}, where the dashed attack between $a$ and $b$, $(a,b)$, is added later to obtain $F_{9}'$}
    \label{tikz:pr_robustness}
\end{figure}
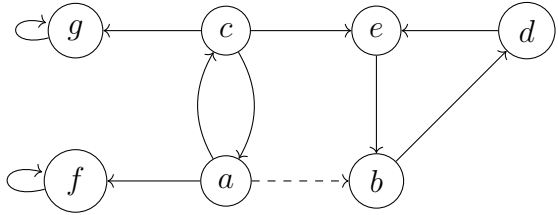
Consider sets $E= \{a\}$ and $E' = \{b,c\}$, both these sets are complete, preferred and semi-stable. However, if we add the attack $(a,b)$, then argument $d$ is defended by $\{a\}$ and therefore $E$ is no longer complete, preferred and semi-stable, while $\{b,c\}$ is still complete, preferred and semi-stable. This shows that $E \sqsupseteq^{\LD^{\{\co,\pr,\sst\}}}_{F_9} E'$ but $E' \sqsupset^{\LD^{\{\co,\pr,\sst\}}}_{F_9'} E$, which violates addition robustness. 
 \end{example}

 \begin{table}[]
 \resizebox{\textwidth}{!}{
     \centering
     \begin{tabular}{|l||c|c|c|c|c|c|c|}
     \hline
     Principles & $\LD^{\cf}$ & $\LD^{\ad}$ & $\LD^{\co}$ & $\LD^{\pr}$ & $\LD^{\gr}$ & $\LD^{\st}$ & $\LD^{\sst}$ \\
     \hline
        $\sigma$-generalisation  & \checkmark ($\sigma= \cf$) &  \checkmark  ($\sigma= \ad$) & \checkmark  ($\sigma= \co$) & \checkmark  ($\sigma= \pr$) & \checkmark  ($\sigma= \gr$)  & X & \checkmark  ($\sigma= \sst$)\\
        composition  & \checkmark &  \checkmark & \checkmark & \checkmark & \checkmark & \checkmark  & \checkmark \\
        decomposition  & X &  X & X & X & X & X & X\\
        weak reinstatement & \checkmark &  \checkmark & \checkmark & \checkmark & \checkmark & \checkmark  & \checkmark\\
        strong reinstatement  & X &  X & X & X & X & X & X \\
        addition robustness & \checkmark &  \checkmark & X & X & \checkmark & \checkmark & X  \\
        syntax independence & \checkmark &  \checkmark & \checkmark & \checkmark & \checkmark & \checkmark & \checkmark  \\ \hline
     \end{tabular} }
     \caption{Principles satisfied by $\LD^\sigma$ for $\sigma \in \{\cf,\ad,\co,\pr,\gr,\st,\sst\}$.}
     \label{tab:principle_LD}
 \end{table} 

 
It is clear by definition that $\LD^\sigma$ satisfies syntax independence for $\sigma \in \{\cf,\ad, \co, \pr, \gr, \st, \sst\}$. 
The principles satisfied by $\LD^\sigma$ are summarised in Table \ref{tab:principle_LD}. 
\section{Base Relations}\label{sec:Base functions}

Next, we propose a general framework for defining extension-ranking semantics based on \emph{simple} relations \emph{between sets of arguments}. Each of these relations models a single aspect of argumentative reasoning.  On their own, these relations are not very expressive, but aggregating them yield  generalisations of various extension semantics (as we will see in the next section). 

In the remainder of this section, however, we will focus on these relations, which we will call \emph{base relations}. We start with set-based relations, where two sets of arguments are compared based on the set of conflicts (\CF), the set of undefended arguments (\UD), the set of defended and not contained arguments (\DN), or the set of unattacked arguments (\UA) the two sets do induce. In the second part of this section, we discuss cardinality-based variations of the aforementioned base relations as well as further variations.

Before introducing each base relation, we define the general notion of a base relation.\begin{definition}
    Let $F=(A,R)$ be an AF. We call a binary relation $\tau$ \emph{base relation} iff $\sqsupseteq^\tau_F \subseteq 2^A \times 2^A$. 
\end{definition}
In other words, a base relation $\tau$ gives us the relation of two sets of arguments $E$ and $E'$ with respect to an AF $F$, i.e. we can say whether $E$ is at least as plausible to be accepted as $E'$, or whether these two sets are equally plausible to be accepted or even incomparable to each other.

By definition, every extension-ranking semantics is a base relation, which means that the least-discriminating extension-ranking semantics $\LD^\sigma$ are also base relations. 

\subsection{Set-based base relations}\label{subsec:Dungean}
There are certain central notions in argumentative evaluations of abstract argumentation frameworks that are implemented in a binary manner in extension semantics. For example, \emph{conflict-freeness} requires that extensions contain no conflicts. So sets do either fulfil this requirement or they do not. When striving for a more fine-grained assessment of the plausibility of acceptance of sets of arguments, we also need a more fine-grained assessment of these notions. In what follows, we define base relations that model the central aspects of \emph{conflicts}, \emph{admissibility}, \emph{completeness} and \emph{stability} by assessing how well a given set fulfils that particular aspect. 

The first aspect we consider is \emph{conflicts}, whose absence is usually regarded as a desirable property of any extension semantics. Acceptable sets of arguments should be internally consistent and should not contradict themselves. 
More concretely, a conflict-free set of arguments is regarded as more plausible to be accepted than a non-conflict-free set. Generalising this idea, we deem a set $E$ to be more plausible to be accepted than another set $E'$ if $E$ has strictly fewer conflicts than $E'$ (wrt.\ set inclusion). 
 We model this aspect with the following \emph{$\sqsupseteq^\CF$ base relation}.
\begin{definition}\label{def:orderings_cf}
Let $F=(A,R)$ and $E \subseteq A$. Define \emph{\CF } via
\begin{align*}
\CF_F(E)&= \{(a,b) \in R | a,b \in E\} 
\end{align*}
and the corresponding \emph{\CF\ base relation} $\sqsupseteq_{F}^{\CF}$ via
\begin{align*}
    E \sqsupseteq_{F}^{\CF} E' & \mbox{ iff }\CF_F(E) \subseteq \CF_F(E')
\end{align*}
\end{definition}
\begin{example}
 Consider $F_{4}$ from Example~\ref{ex:af_example} (depicted again in Figure \ref{tikz:af1_recalled}). Let $\{b,c\}$ and $\{b,c,d\}$. We have $\CF_{F_4}(\{b,c\}) = \{(b,c)\}$ and \\$\CF_{F_4}(\{b,c,d\})= \{(b,c),(c,d),(d,c)\}$. Hence $\{b,c\}$ is more plausible to be accepted than $\{b,c,d\}$ with respect to their conflicts: $$\{b,c\} \sqsupset_{F_{4}}^{\CF} \{b,c,d\}$$

\begin{figure}
    \centering
 
 \scalebox{1}{
\begin{tikzpicture}

\node (a1) at (0,0) [circle, draw,minimum size= 0.65cm] {$a$};
\node (a2) at (2,0) [circle, draw,minimum size= 0.65cm] {$b$};
\node (a3) at (4,0) [circle, draw,minimum size= 0.65cm] {$c$};
\node (a4) at (6,0) [circle, draw,minimum size= 0.65cm] {$d$};
\node (a5) at (3,-2) [circle, draw,minimum size= 0.65cm] {$e$};
\node (a6) at (5,-2) [circle, draw,minimum size= 0.65cm] {$f$};
\node (a7) at (7,-2) [circle, draw,minimum size= 0.65cm] {$g$};

\path[<-] (a2) edge  (a1);
\path[<-] (a3) edge  (a2);
\path[->, bend left] (a3) edge  (a4);
\path[->, bend left] (a4) edge (a3); 

\path[->] (a3) edge (a5);
\path[->] (a3) edge (a6);
\path[->] (a4) edge (a6);
\path[->, bend left] (a6) edge  (a7);
\path[->, bend left] (a7) edge (a6); 

\path[->] (a5) edge  [loop left] node {} ();

\end{tikzpicture}}
   \caption{Recalled AF $F_4$ from Example \ref{ex:af_example}.}
    \label{tikz:af1_recalled}
\end{figure}
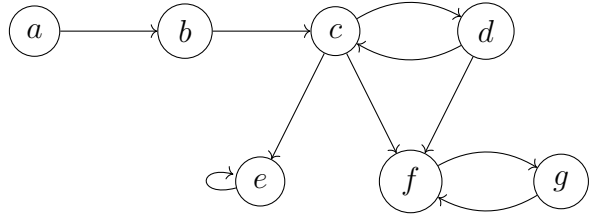
\end{example}

Another generally desirable property is \emph{admissibility}. Sets of arguments should defend themselves against any threat.
More precisely, we consider a set of arguments that defends all its elements to be more plausible to be accepted than a set that contains at least one undefended argument. Thus a set $E$ is more plausible to be accepted than another set $E'$ if $E$ has strictly fewer undefended arguments than $E'$ (wrt. set inclusions). The following \emph{$\sqsupseteq^\UD$ base relation} ($\UD$ stands for ``undefended'') captures this idea, where $\mathcal{F}$ is the characteristic function as defined in Definition \ref{def:semantics}.
\begin{definition}\label{def:orderings_ud}
Let $F=(+A,R)$ and $E \subseteq A$. Define \emph{$\UD$} via
\begin{align*}
\UD_F(E)&= E \setminus \mathcal{F}_{F}(E) 
\end{align*}
and the corresponding \emph{$\UD$ base relation} $\sqsupseteq_{F}^{\UD}$ via
\begin{align*}
    E \sqsupseteq_{F}^{\UD} E' & \mbox{ iff }\UD_F(E) \subseteq \UD_F(E')
\end{align*}
\end{definition}
\begin{example}
    Consider $F_4$ from Example \ref{ex:af_example} and sets $\{b\}$ and $\{b,f\}$. Both arguments $b$ and $f$ are not defended by $ \{b\}$ resp. $\{b,f\}$, however $\{b\}$ contains less undefended arguments therefore $$ \{b\} \sqsupset^{\UD}_{F_4} \{b,f\}$$
\end{example}

A complete extension contains every argument it defends. Thus a set containing all its defended arguments (without adding conflicts) is more plausible to be accepted than a set not containing an argument that is actually defended.
More generally, a set $E$ is more plausible to be accepted than another set  $E'$ if there are fewer arguments \emph{consistently defended} by $E$ and not contained in $E$ than there are for $E'$ (wrt. set inclusion). By \emph{consistent defence} we mean arguments that are defended by $E$ and do not attack $E$ or are attacked by $E$. 
In order to adequately model this notion of \emph{consistent defence}, we need a more general notion of defence than that provided by the characteristic function $ \mathcal{F}_{F}$.
\begin{example}\label{ex:ex3}
Consider the AF $F_{10}$ depicted in Figure~\ref{tikz:af3} and the set $E = \{b\}$.
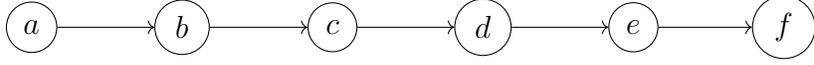
\begin{figure}
    \centering
 
 \scalebox{1}{
\begin{tikzpicture}

\node (a1) at (0,0) [circle, draw,minimum size= 0.65cm] {$a$};
\node (a2) at (2,0) [circle, draw,minimum size= 0.65cm] {$b$};
\node (a3) at (4,0) [circle, draw,minimum size= 0.65cm] {$c$};
\node (a4) at (6,0) [circle, draw,minimum size= 0.65cm] {$d$};
\node (a5) at (8,0) [circle, draw,minimum size= 0.65cm] {$e$};
\node (a6) at (10,0) [circle, draw,minimum size= 0.65cm] {$f$};

\path[->] (a1) edge  (a2);
\path[->] (a2) edge  (a3);
\path[->] (a3) edge  (a4);
\path[->] (a4) edge  (a5);
\path[->] (a5) edge  (a6);

\end{tikzpicture}}
   \caption{Abstract argumentation framework $F_{10}$ from Example \ref{ex:ex3}.}
    \label{tikz:af3}
\end{figure}
The question we want to address is: Which arguments are defended by $\{b\}$? Observe that $\mathcal{F}_{F_{10}}(E)=\{a,d\}$. Argument $d$ is only defended because of $b$, however argument $a$ is also defended by $\{b\}$ despite attacking $b$. So, we state that argument $d$ is defended despite that fact that we do not consider its defender $b$ acceptable. Thus, the characteristic function does not provide an intuitive answer here.  
Next, if we look at the defended arguments of $\{a,d\}$ we receive $\mathcal{F}_{F_{10}}(\{a,d\})=\{a,c,f\}$. The defence of argument $d$ is lost despite being defended before and again we have an argument $f$, which defence is depended on the acceptance of $b$, but this argument is not part of the set. 
\end{example}
The example shows that we need to modify the characteristic function to ignore any incoming attack on our starting set, in this case the attack from argument $a$, and compute the defended arguments assuming the starting set is acceptable. So in our example the expected result is $\{b,d\}$. Iterating the notion of consistent defence, we end up with the set $\{b,d,f\}$, which models the intuition that $d$ and $f$ are consistently defended if we assume that $\{b\}$ is accepted.  

The function $\mathcal{F}^*$ implements the above intuition as follows.
\begin{definition}
     The function $\mathcal{F}^*_{F}: 2^{A} \rightarrow 2^{A}$ is defined via
\begin{align*}
	\mathcal{F}^*_{F}(E) = \bigcup^{\infty}_{i = 1} \mathcal{F}^*_{i,F}(E)
\end{align*}
with
    \begin{align*}
    \mathcal{F}^*_{1,F}(E) &= E\\
    \mathcal{F}^*_{i,F}(E)&= \mathcal{F}^*_{i-1,F}(E) \cup (\mathcal{F}_{F}(\mathcal{F}^*_{i-1,F}(E))\setminus E^-)
\end{align*}
\end{definition}
Applying $\mathcal{F}^*$ to Example~\ref{ex:ex3} we receive the set $\mathcal{F}^*_{F_{10}}(\{b\})=\{b,d,f\}$ as desired.
In other words, $\mathcal{F^*}$ works as follows: The calculation starts with a set of arguments $E$, which is treated as admissible. Then every defended argument by $E$ is added and every attacker of $E$ is removed. Defended arguments are added until a fixed point is reached.
\begin{proposition}\label{prop:f_star fixpoint}
    Let $F=(A,R)$ be an AF and $E \subseteq A$, then there is $k \in \mathbb{N}$ with $$E \subseteq \mathcal{F}^*_{1,F}(E) \subseteq \mathcal{F}^*_{2,F}(E) \subseteq \dots \subseteq \mathcal{F}^*_{k-1,F}(E) \subseteq \mathcal{F}^*_{k,F}(E) =  \mathcal{F}^*_{k+1,F}(E) = \dots$$ 
\end{proposition}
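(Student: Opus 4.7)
The plan is straightforward: prove monotonicity of the sequence $(\mathcal{F}^*_{i,F}(E))_{i \geq 1}$, then invoke finiteness of $A$ to conclude that the chain must stabilize, and finally observe that stabilization at any one step propagates to all subsequent steps.

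First, I would establish the base inclusion $E \subseteq \mathcal{F}^*_{1,F}(E)$. This is immediate from the definition, since $\mathcal{F}^*_{1,F}(E) = E$ (in fact, equality holds).

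Next, I would prove monotonicity, i.e., $\mathcal{F}^*_{i,F}(E) \subseteq \mathcal{F}^*_{i+1,F}(E)$ for all $i \geq 1$. This is again direct from the recursive definition:
\begin{align*}
\mathcal{F}^*_{i+1,F}(E) = \mathcal{F}^*_{i,F}(E) \cup \bigl(\mathcal{F}_{F}(\mathcal{F}^*_{i,F}(E)) \setminus E^-\bigr),
\end{align*}
so $\mathcal{F}^*_{i,F}(E)$ is a subset of $\mathcal{F}^*_{i+1,F}(E)$ simply because it appears as a summand of the union.

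Then I would use finiteness: since $A$ is finite and every $\mathcal{F}^*_{i,F}(E) \subseteq A$, the ascending chain $\mathcal{F}^*_{1,F}(E) \subseteq \mathcal{F}^*_{2,F}(E) \subseteq \dots$ cannot strictly increase forever. Hence there exists a least $k \in \mathbb{N}$ such that $\mathcal{F}^*_{k,F}(E) = \mathcal{F}^*_{k+1,F}(E)$.

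Finally, I would show that stabilization at step $k$ propagates: an easy induction on $j \geq k$ using the recursive definition shows that if $\mathcal{F}^*_{j,F}(E) = \mathcal{F}^*_{k,F}(E)$, then
\begin{align*}
\mathcal{F}^*_{j+1,F}(E) = \mathcal{F}^*_{j,F}(E) \cup \bigl(\mathcal{F}_F(\mathcal{F}^*_{j,F}(E)) \setminus E^-\bigr) = \mathcal{F}^*_{k,F}(E) \cup \bigl(\mathcal{F}_F(\mathcal{F}^*_{k,F}(E)) \setminus E^-\bigr) = \mathcal{F}^*_{k+1,F}(E) = \mathcal{F}^*_{k,F}(E),
\end{align*}
which gives the desired chain of equalities from step $k$ onwards. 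There is no genuine obstacle here; the only thing to be careful about is making sure the recursion is unpacked correctly and that finiteness of $A$ is explicitly invoked to bound the length of the strictly increasing prefix.
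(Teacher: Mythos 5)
Your proposal is correct and follows essentially the same route as the paper's own proof: monotonicity is read off directly from the union in the recursive definition, finiteness of $A$ forces the ascending chain to stabilize, and the equality then propagates to all later indices. Your write-up is in fact slightly more explicit than the paper's on the final propagation step, but the argument is the same.
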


This proposition shows that $\mathcal{F}^*_F$ is well-defined as $\mathcal{F}^*_{i,F}$ always reaches a fixed point.

Before we return to the discussion of base relations, let us analyse $\mathcal{F}^*$ a little more to make the intuition of its behaviour clearer. Since $\mathcal{F}^*$ should be a generalisation of $\mathcal{F}$, it should hold that these two functions return the same arguments if the input is admissible. 
\begin{proposition}\label{thm:f=fstart}
    Let $F=(A,R)$ be an AF, $E \subseteq A$ s.t. $E \in \ad(F)$, then $\mathcal{F}_F(E)= \mathcal{F}^*_F(E)$.
\end{proposition}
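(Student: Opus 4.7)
The plan is to establish the two inclusions $\mathcal{F}_F(E) \subseteq \mathcal{F}^*_F(E)$ and $\mathcal{F}^*_F(E) \subseteq \mathcal{F}_F(E)$ separately, exploiting both that $E$ is conflict-free and that $E$ defends its own elements.

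First I would prove the inclusion $\mathcal{F}_F(E) \subseteq \mathcal{F}^*_F(E)$ by reducing the second iterate of $\mathcal{F}^*$ to $\mathcal{F}_F(E)$. The key auxiliary claim is that $\mathcal{F}_F(E) \cap E^- = \emptyset$. Suppose, for contradiction, that $b \in E^- \cap \mathcal{F}_F(E)$. Then $b$ attacks some $a \in E$, and by admissibility there is $c \in E$ with $(c,b) \in R$. Since $b \in \mathcal{F}_F(E)$, every attacker of $b$ is counter-attacked by $E$, so some $c' \in E$ satisfies $(c',c) \in R$, contradicting conflict-freeness of $E$. Combining this claim with $E \subseteq \mathcal{F}_F(E)$ (which follows directly from admissibility), one obtains
\begin{equation*}
\mathcal{F}^*_{2,F}(E) \;=\; E \cup (\mathcal{F}_F(E) \setminus E^-) \;=\; E \cup \mathcal{F}_F(E) \;=\; \mathcal{F}_F(E),
\end{equation*}
and hence $\mathcal{F}_F(E) \subseteq \mathcal{F}^*_F(E)$.

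For the reverse inclusion $\mathcal{F}^*_F(E) \subseteq \mathcal{F}_F(E)$ I would argue by induction on $i$ that $\mathcal{F}^*_{i,F}(E) \subseteq \mathcal{F}_F(E)$. The base case $i=1$ is immediate. For the inductive step, the previous paragraph gives $\mathcal{F}^*_{2,F}(E) = \mathcal{F}_F(E)$, so it suffices to verify that $\mathcal{F}_F(\mathcal{F}_F(E)) \setminus E^- \subseteq \mathcal{F}_F(E)$. Together with Proposition~\ref{prop:f_star fixpoint} this forces the sequence to stabilise at $\mathcal{F}_F(E)$, yielding the desired equality.

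The main obstacle lies precisely in this last step: the Fundamental Lemma guarantees that $\mathcal{F}_F(E)$ is admissible, so $\mathcal{F}_F(\mathcal{F}_F(E))$ may a priori contain arguments not defended by $E$ alone. To close the gap, I would fix a candidate $d \in \mathcal{F}_F(\mathcal{F}_F(E))$ and trace each attacker $b$ of $d$: by definition, $b$ is counter-attacked by some element of $\mathcal{F}_F(E)$. A case split on whether this counter-attacker lies in $E$ or in $\mathcal{F}_F(E) \setminus E$, combined with conflict-freeness of $E$ and the auxiliary claim $\mathcal{F}_F(E) \cap E^- = \emptyset$, should either exhibit a counter-attacker already inside $E$ (so that $d$ is defended by $E$ directly) or force $d \in E^-$ (so that $d$ is removed by the subtraction in the definition of $\mathcal{F}^*$). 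Formalising this case analysis cleanly is the crux of the argument.
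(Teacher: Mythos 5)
Your first inclusion is correct and is essentially the computation the paper performs: the auxiliary claim $\mathcal{F}_F(E)\cap E^-=\emptyset$ for admissible $E$ is exactly what is needed to obtain $\mathcal{F}^*_{2,F}(E)=E\cup(\mathcal{F}_F(E)\setminus E^-)=\mathcal{F}_F(E)$, and you justify it more carefully than the paper, which simply asserts that $\mathcal{F}_F(E)$ returns no attacker of an admissible $E$.

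The gap you flag in the reverse inclusion is not a formalisation difficulty, however --- the step is false, and with it the equality you are trying to prove. The containment $\mathcal{F}_F(\mathcal{F}_F(E))\setminus E^-\subseteq\mathcal{F}_F(E)$ fails in general: take $F_5=(\{h,i,j\},\{(h,i),(i,j)\})$ and $E=\emptyset$, which is admissible with $E^-=\emptyset$. Then $\mathcal{F}_{F_5}(\emptyset)=\{h\}$ but $\mathcal{F}_{F_5}(\{h\})=\{h,j\}$, so $j\in\mathcal{F}^*_{3,F_5}(\emptyset)$ while $j\notin\mathcal{F}_{F_5}(\emptyset)$, giving $\mathcal{F}^*_{F_5}(\emptyset)=\{h,j\}\neq\{h\}=\mathcal{F}_{F_5}(\emptyset)$ (this agrees with the paper's own computation of $\DN_{F_5}(\emptyset)$ elsewhere). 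Your proposed dichotomy for a candidate $d$ (either $d$ is defended by $E$ directly, or $d\in E^-$) collapses on this instance: the only attacker of $j$ is counter-attacked by $h\in\mathcal{F}_F(E)\setminus E$, yet $j$ is not defended by $\emptyset$ and $j\notin E^-=\emptyset$. What actually holds for admissible $E$ is only the inclusion $\mathcal{F}_F(E)\subseteq\mathcal{F}^*_F(E)$ that you established; since the subtraction of $E^-$ remains vacuous at every later stage, $\mathcal{F}^*_F(E)$ is the least fixed point of $\mathcal{F}_F$ above $E$ (the least complete extension containing $E$), and equality with $\mathcal{F}_F(E)$ holds precisely when $\mathcal{F}_F(E)$ is itself complete. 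For comparison, the paper's proof carries out your first paragraph and then dismisses all $i>2$ with ``we can use the same reasoning'', which is exactly the step you correctly identified as the crux and which does not go through.
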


One frequently used property of $\mathcal{F}_F$ is that the grounded extension is the least fixed point of that function. For $\mathcal{F}^*_F$ we see that this is also the case.
\begin{proposition}
    Let $F=(A,R)$ be an AF and $E = \gr(F)$, then $E$ is the least fixed point of $\mathcal{F}^*_{F}$.
\end{proposition}

Proposition \ref{thm:f=fstart} can be used to show that any complete extension is also a fixed point of $\mathcal{F}^*$.
\begin{proposition}
     Let $F=(A,R)$ be an AF and $E \in \co(F)$, then $\mathcal{F}^*_F(E)= E$.
\end{proposition}
Next, let us have a look at an example that illustrates the behaviour of $\mathcal{F}^*$ in more detail.
\begin{example}\label{ex:fstar}
    Consider $F_4$ from Example \ref{ex:af_example} and the set $\{b\}$. Argument $b$ is attacked by argument $a$, so $b^-_{F_4}= \{a\}$. To calculate all consistently defended arguments by $\{b\}$ we use $\mathcal{F}^*_{F_4}(\{b\})$. Based on the definition of $\mathcal{F}^*$ we have $\mathcal{F}^*_{1,F_4}(\{b\})= \{b\}$. Since arguments $a$ and $d$ are defended by $\{b\}$ and argument $b$ is not defended, we have $\mathcal{F}_{F_4}(\{b\}) = \{a,d\}$ and therefore $\mathcal{F}^*_{2,F_4}(\{b\})= \{b\} \cup (\{a,d\} \setminus \{a\}) = \{b,d\}$. For the next iteration of $\mathcal{F}^*$ we see that $a$, $d$ and $g$ are defended by $\{b,d\}$ i.e. $\mathcal{F}_{F_4}(\{b,d\}) = \{a,d,g\}$ and therefore $\mathcal{F}^*_{3,F_4}(\{b\})= \{b,d\} \cup (\{a,d,g\} \setminus \{a\}) = \{b,d,g\}$. Since, $\{b,d,g\}$ also defends arguments $a$, $d$ and $g$ we reach a fixed point i.e. $\mathcal{F}^*_{4,F_4}(\{b\})= \{b,d,g\}$. Combing every iteration we get: $\mathcal{F}^*_{F_4}(\{b\}) = \{b,d,g\}$.
    \end{example}

Finally, we define the $\sqsupseteq^\DN$ \emph{base relation}, that models the concept of \emph{consistent defence} (\emph{$\DN$} meaning ``consistently defended and not in").
\begin{definition}\label{def:orderings_dn}
Let $F=(A,R)$ and $E \subseteq A$. We define \emph{$\DN$} via 
\begin{align*}
    \DN_F(E)= \mathcal{F}^*_{F}(E) \setminus E
\end{align*}
and the corresponding \emph{$\DN$ base relation} $\sqsupseteq_{F}^{\DN}$ via
\begin{align*}
    E \sqsupseteq_{F}^{\DN} E' & \mbox{ iff }\DN_F(E) \subseteq \DN_F(E').
\end{align*}
\end{definition}
\begin{example}
    Let us continue with Example \ref{ex:fstar}, where $F_4$ from Example \ref{ex:af_example} was used. We calculated already $\mathcal{F}^*_{F_4}(\{b\}) = \{b,d,g\}$, so using \DN\ we get $\DN_{F_4}(\{b\}) = \{d,g\}$. Next, let us look at the set $\{b,d\}$, here we have $\mathcal{F}^*_{F_4}(\{b,d\}) = \{b,d,g\}$, hence $\DN_{F_4}(\{b,d\}) = \{g\}$. Therefore we have: $$\{b,d\} \sqsupset^{\DN}_{F_4} \{b\}$$    
    \end{example}

The last property we will take a look at is \emph{stability}. A set is stable, if it attacks every argument not contained in it. To generalise this concept, we say that a set $E$ is more plausible to be accepted than another set $E'$ if $E$ strictly attacks more arguments than $E'$ (wrt. set inclusion). We capture this aspect with the \emph{$\sqsupseteq^\UA$ base relation} ($\UA$ meaning ``unattacked''). 
\begin{definition}\label{def:orderings_ua}
Let $F=(A,R)$ and $E \subseteq A$. Define \emph{$\UA$} via
\begin{align*}
\UA_F(E)= \{a \in A \setminus E | \neg\exists b \in E: (b,a) \in R\} 
\end{align*}
and the corresponding \emph{$\UA$ base relation} $\sqsupseteq_{F}^{\UA}$ via
\begin{align*}
    E \sqsupseteq_{F}^{\UA} E' & \mbox{ iff }\UA_F(E) \subseteq \UA_F(E')
\end{align*}
\end{definition}
\begin{example}
    Consider $F_4$ from Example \ref{ex:af_example} and the two sets $\{a,c\}$ and $\{a,d\}$. Argument $b$ is attacked by $a$ and $f$ is attacked by both $c$ and $d$, while argument $g$ is not attacked by either of the sets. Argument $e$ is only attacked by $c$. To summarise: $\UA_{F_4}(\{a,c\}) = \{g\}$ and $\UA_{F_4}(\{a,d\}) = \{e,g\}$, showing: $$\{a,c\} \sqsupset^{\UA}_{F_4} \{a,d\}$$ 
\end{example}

\subsection{Cardinality-based base relations}\label{subsec:cardinality}

So far we have used subset comparisons to compare sets of arguments with the base relations. However, other comparison methods are imaginable. A typical alternative to subset comparisons is \emph{cardinality}. Thus, instead of comparing two sets based on set-inclusion, we compare them by the amount of elements these sets contain. 
 The general idea of each base relation remains the same, but instead of comparing two sets in terms of subsets, these sets are compared in terms of cardinality. Each of the base relations still models an aspect of extension-based reasoning, for example ${\mathsf{c\text{-}}\CF}$ compares two sets based on their number of conflicts. With these cardinality-based comparisons we compare any two sets of arguments.
To avoid repeating each definition in detail we define a generalised cardinality-based base relation $\sqsupseteq^{\mathsf{c}\text{-}\mathsf{\tau}}$ for $\tau\in \{\CF, \UD, \DN, \UA\}$.

\begin{definition}
    Let $F=(A,R)$ be an AF and $E,E'\subseteq A$. We define $\mathsf{c}$-$\tau$ for $\tau\in \{\CF, \UD, \DN, \UA\}$ as the \emph{cardinality-based base relation}  $\sqsupseteq^{\mathsf{c}\text{-}\tau}_{F}$ via
    $$E \sqsupseteq^{\mathsf{c}\text{-}\tau}_{F} E' \text{ iff } |\tau_F(E)| \leq |\tau_F(E')|$$
\end{definition}

\begin{example}
    Consider $F_4$ from Example \ref{ex:af_example} and sets $\{c,g\}$ and $\{b,f\}$. Argument $g$ is defended by $\{c,g\}$, while $c$ is not defended, while both $b$ and $f$ are not defended by $\{b,f\}$. However, these two sets are incomparable with respect to \UD, since $\{c\}$ is not a subset of $\{b,f\}$. But using the cardinality-based version $\mathsf{c}$-\UD, these two sets can be compared and we get: $$\{c,g\} \sqsupset^{\mathsf{c}\text{-}\UD}_{F_4} \{b,f\}$$ 
\end{example}

A number of other set comparators, such as similarity measures, can be considered to define a base relation similar to $\sqsupseteq^\tau$ and $\sqsupseteq^{\mathsf{c}\text{-}\tau}$ for $\tau \in \{\CF,\UD,\DN,\UA\}$. 
A full discussion is beyond the scope of this paper and we refer to future work for such a discussion.  

Next, we want to discuss the behaviour of the sets of credulously and skeptically arguments under a given semantics in a small example and highlight shortcomings of these sets.

\begin{example}\label{ex:acceptance}
    Let $F_{11} = (\{a,b,c,d\},\{(a,b),(b,c),(c,d),(d,a)\})$ be an AF, as depicted in Figure \ref{tikz:acceptance}. Then every argument is credulously accepted wrt. e.g. preferred semantics, but no argument is skeptically accepted wrt. preferred semantics. So just using credulous or skeptical acceptance with respect to preferred semantics does not present us with any way to differentiate the arguments.
    
        \begin{figure}
    \centering
 
 \scalebox{1}{
\begin{tikzpicture}

\node (a) at (0,0) [circle, draw,minimum size= 0.65cm] {$a$};
\node (b) at (2,0) [circle, draw,minimum size= 0.65cm] {$b$};
\node (c) at (2,2) [circle, draw,minimum size= 0.65cm] {$c$};
\node (d) at (0,2) [circle, draw,minimum size= 0.65cm] {$d$};

\path[->] (a) edge  (b);
\path[->] (b) edge  (c);
\path[->] (c) edge  (d);
\path[->] (d) edge  (a);

\end{tikzpicture}}
   \caption{AF $F_{11}$ from Example \ref{ex:acceptance}.}
    \label{tikz:acceptance}
\end{figure}
\end{example}
The example above shows that credulous or skeptical acceptance of arguments is too limiting and should be extended. Konieczny et al. \cite{DBLP:conf/ecsqaru/KoniecznyMV15} proposed several ways to compare extensions, such that we can state that one extension is ``better'' than another one. These comparisons then allow us to restrict the credulous or skeptical acceptance of arguments to only the ``best'' extensions. Next, we extend the definitions of Konieczny et al. \cite{DBLP:conf/ecsqaru/KoniecznyMV15} to be applicable to arbitrary pairs of argument sets and not only to $\sigma$-extensions and in turn define new base relations. 

First, we need to recall the notion of \emph{strong defence}. Using the standard notion of defence, arguments are able to defend themselves against their attackers. Such self-defence has been criticised in the past \cite{DBLP:journals/ai/BaroniG07} and so Baroni and Giacomin \cite{DBLP:journals/ai/BaroniG07} introduced a stronger notion of defence called \emph{strong defence}.
\begin{definition}\label{def:strdef}
    Let $F=(A,R)$ be an AF, $a \in A$, and $E \subseteq A$, argument $a$ is \emph{strongly defended} by $E$ (denoted as $sd_{F}(a,E, E')$) from $E' \subseteq A$ iff for every argument $b \in E'$ with $(b,a) \in R$ there is an argument $c \in E \setminus \{a\}$ s.t. $(c,b)\in R$ and $sd_{F}(c,E\setminus \{a\}, E')$.
\end{definition}

In other words, an argument $a$ is \emph{strongly defended} by a set $E$ if $E$ contains a defender of $a$, that is not $a$ for every attacker of $a$, and every defender $c$ must be defended by $E$ without $a$ and itself. With $sd_{F}(E,E')$ we refer to the strongly defended arguments by $E$ from $E'$ in AF $F$. Using the notion of strong defence, extension semantics such as \emph{strongly complete} can be defined in a similar way to Definition \ref{def:semantics} \cite{DBLP:journals/ai/BaroniG07}.

Next, we define two base relations inspired by Konieczny et al. \cite{DBLP:conf/ecsqaru/KoniecznyMV15}.
\begin{definition}\label{def:Pairwise comparison criteria}
    Let $F= (A,R)$ be an AF, $E,E' \subseteq A$ two sets of arguments
    We define $\tau \in \{\nonatt,\strdef\}$ as base relations $\tau$ via:
    \begin{itemize}
        \item $E \sqsupseteq^{\nonatt}_{F} E'$ if $|E \setminus (E^-_{F} \cap E')|\geq |E' \setminus (E'^-_F \cap E)|$.
        \item $E \sqsupseteq^{\strdef}_{F} E'$ if $|sd_F(E,E')| \geq |sd_F(E',E)|$. 
    \end{itemize}
    \end{definition}
    Intuitively, sets with more unattacked arguments are more plausible to be accepted. $\nonatt$ models this idea directly for a pair of sets. So if a set $E$ is attacked less by $E'$ than vice versa, this means that $E$ has less threats coming from $E'$ than vice versa, and therefore $E$ should be more plausible to be accepted than $E'$.
$\strdef$ extends this idea further by not only looking at unattacked arguments, but requiring strong defence instead. $E$ needs to strongly defend itself against $E'$. 

 \begin{example}\label{ex:nonatt}
       Consider $F_4$ from Example \ref{ex:af_example}. The sets $\{a,c,g\}$ and $\{a,d\}$ are both admissible sets. $\{a,c,g\}$ attacks argument $d$, while $\{a,d\}$ attacks only $c$, that means that $\{a,c,g\}$ contains more unattacked arguments than $\{a,d\}$ and therefore: $$\{a,c,g\} \sqsupset^{\nonatt}_{F_4} \{a,d\}$$
       
       For \strdef\ we look at the sets $\{a,b,f\}$ and $\{g\}$, here $\{g\}$ is an admissible set, while $\{a,b,f\}$ is not conflict-free. However, argument $a$ is strongly defended by $\{a,b,f\}$ from $\{g\}$, while $g$ is not strongly defended by $\{g\}$ from the attack from argument $f$. Therefore the number of strongly defended arguments in $\{a,b,f\}$ is higher than in $\{g\}$, so: $$\{a,b,f\} \sqsupset^{\strdef}_{F_4} \{g\}$$ 
    \end{example}
    In Example \ref{ex:nonatt} we see that $\sqsupseteq^{\nonatt}$ and $\sqsupseteq^{\strdef}$ behave differently from $\sqsupseteq^{\UD}$, for $\sqsupseteq^\UD$ we have $\{a,c,g\} \equiv^{\UD}_{F_4} \{a,d\}$ respectively $\{g\} \sqsupset^{\UD}_{F_4} \{a,b,f\}$.

    Note that $\sqsupseteq^{\nonatt}$ and $\sqsupseteq^{\strdef}$ are not transitive, therefore these two base relations do not induce extension rankings, which is a disadvantage since extension rankings are supposed to define a kind of preference ordering i.e. one set is more plausible to be accepted than another set and therefore this should be transitive.
\begin{example}
    Recall $F_6= (\{a,b,c\}, \{(a,b),(b,c),(c,a)\})$ from Example \ref{ex:oddcycle}. Consider $\{a\}$, $\{b\}$, and $\{c\}$. If we compare these three sets with \nonatt\, then we get $\{a\} \sqsupset^{\nonatt}_{F_6} \{b\}$, and $\{b\} \sqsupset^{\nonatt}_{F_6} \{c\}$, so based on \emph{transitivity} of extension rankings, we assume that $\{a\} \sqsupset^{\nonatt}_{F_6} \{c\}$ holds as well, which is false since argument $c$ is not attacked by $a$, while $c$ attacks $a$, so $\{c\} \sqsupset^{\nonatt}_{F_6} \{a\}$ and therefore violating \emph{transitivity}. The same behaviour holds for $\sqsupseteq^{\strdef}_{F_6}$.
 \end{example}

Besides violating transitivity, $\sqsupseteq^{\nonatt}$ and $\sqsupseteq^{\strdef}$ have another disadvantage. The internal conflicts of sets are ignored, so it does not matter, that a conflict-free set is more plausible to be accepted than a conflicting one. Even worse, the set containing every argument $A$ is ranked ``better'' than any other set. 
\begin{proposition}\label{prop:A_min_nonatt,strdef}
    For any AF $F=(A,R)$ we have $A \sqsupseteq^{\tau}_{F} E$ for any $E \subset A$ with $\tau \in \{\nonatt,\strdef\}$.
\end{proposition}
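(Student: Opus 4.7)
The plan is to handle the two cases of $\tau$ separately, exploiting in both that $E \subseteq A$.

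For $\tau = \nonatt$, I read the quantity $|X \setminus (X^-_F \cap Y)|$ as counting the arguments in $X$ that are not attacked by any element of $Y$; under this reading the required inequality at $X = A$, $Y = E$ becomes $|A \setminus E^+_F| \geq |E \setminus A^+_F|$. Since $E \subseteq A$ implies $E^+_F \subseteq A^+_F$, we obtain the chain of inclusions $E \setminus A^+_F \subseteq A \setminus A^+_F \subseteq A \setminus E^+_F$, from which the cardinality inequality follows immediately.

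For $\tau = \strdef$, the core step is a monotonicity lemma for strong defence: for all $X \subseteq X' \subseteq A$ and $Y' \subseteq Y \subseteq A$ and every $a \in A$, if $sd_F(a, X, Y)$ holds then so does $sd_F(a, X', Y')$. I would prove this by induction on $|X|$. In the base case $X = \emptyset$, the recursive definition forces $a$ to have no attacker inside $Y$ (otherwise a counter-attacker from $\emptyset \setminus \{a\}$ would be required), hence also none inside $Y' \subseteq Y$, so $sd_F(a, X', Y')$ holds vacuously. In the inductive step, any $b \in Y'$ attacking $a$ also lies in $Y$, so by hypothesis there is $c \in X \setminus \{a\} \subseteq X' \setminus \{a\}$ with $(c,b) \in R$ and $sd_F(c, X \setminus \{a\}, Y)$; the induction hypothesis, applicable because $|X \setminus \{a\}| < |X|$ together with $X \setminus \{a\} \subseteq X' \setminus \{a\}$ and $Y' \subseteq Y$, upgrades this to $sd_F(c, X' \setminus \{a\}, Y')$, yielding $sd_F(a, X', Y')$.

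Instantiating the lemma at $X = E$, $X' = A$, $Y = A$, $Y' = E$ then gives $sd_F(E, A) \subseteq sd_F(A, E)$, hence $|sd_F(A, E)| \geq |sd_F(E, A)|$ and $A \sqsupseteq^{\strdef}_F E$. The main obstacle is the bookkeeping in the inductive step, because the recursion for strong defence simultaneously shrinks the defending set by $\{a\}$ and keeps the attacking set fixed; keying the induction to the size of the defending set $X$ is what makes the two relaxations (enlarging the defender and shrinking the attacker) compose cleanly.
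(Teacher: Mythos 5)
Your proof is correct and follows the same route as the paper's. For $\nonatt$ the paper argues identically (under the same charitable reading of the definition as ``arguments of $E$ not attacked by $E'$''): the arguments of $E$ unattacked by $A$ are exactly the unattacked arguments lying in $E$, i.e.\ your middle term $A \setminus A^+_F$, which sits between $E \setminus A^+_F$ and $A \setminus E^+_F$. For $\strdef$ the paper merely asserts the key fact (``if an argument is strongly defended by $E$ from $A$, we can use the same reasoning to show the strong defence by $A$ from $E$''), whereas you isolate and actually prove the underlying monotonicity lemma --- $sd_F(a,X,Y)$ is monotone increasing in the defending set and decreasing in the attacking set --- so your write-up is the more rigorous of the two. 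One small repair: your inductive step invokes the hypothesis on the grounds that $|X \setminus \{a\}| < |X|$, which only holds when $a \in X$; if $a \notin X$ the measure does not drop. Inducting instead on $|X \setminus \{a\}|$ (the measure that makes Definition~\ref{def:strdef} well-founded, since every recursive call passes to some $c \in X \setminus \{a\}$ and the set $(X\setminus\{a\})\setminus\{c\}$ is strictly smaller) makes the argument go through verbatim, including in the top-level instantiation $X=E$, $X'=A$, $Y=A$, $Y'=E$.
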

The set containing all arguments $A$ is always among the most conflicting sets (for any AF with attacks), and should therefore be among the least acceptable sets, since conflict-freeness is a desirable property for reasoning with AFs. 
All of Dung's semantics \cite{DBLP:journals/ai/Dung95} use conflict-freeness as a baseline. Other extension semantics such as \emph{naive}, \emph{stage} or \emph{CF2} \cite{baroni2018handbook, verheij1996two, DBLP:journals/logcom/Verheij03, DBLP:conf/ecsqaru/BaroniG03, DBLP:journals/ai/BaroniGG05a} are based on conflict-freeness. Thus, $A$ should be ranked worse than any other set and should not be considered acceptable once we have attacks. However, only for $\sqsupseteq_{F}^{\CF}$ can we be certain, that $A$ is not among the most plausible sets. Later we will discuss solutions to avoid $A$ being in the most plausible sets. 

\section{Aggregation of base relations}\label{sec:combination}
In the previous section, we already hinted at our intention to propose a general framework for defining extension-ranking semantics by aggregating base relations. Several base relations are already applicable as extension-ranking semantics, but these relations focus only on a single aspect of comparing two sets of arguments and fail to encompass the entirety of argumentative evaluations. 


 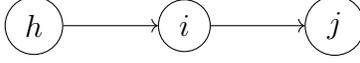
\begin{figure}
    \centering
 
 \scalebox{1}{
\begin{tikzpicture}

\node (a1) at (0,0) [circle, draw,minimum size= 0.65cm] {$h$};
\node (a2) at (2,0) [circle, draw,minimum size= 0.65cm] {$i$};
\node (a3) at (4,0) [circle, draw,minimum size= 0.65cm] {$j$};

\path[<-] (a2) edge  (a1);
\path[<-] (a3) edge  (a2);

\end{tikzpicture}}
   \caption{Recalled AF $F_5$ from Example \ref{ex:comp}.}
    \label{tikz:intro_recall}
\end{figure}

 \begin{figure}
    \begin{minipage}[t]{0.4\textwidth}
 \scalebox{1}{
\begin{tikzpicture}

\node (abc) at (0,0) [] {$\{h,i,j\}$};
\node (ab) at (2,-1) [] {$\{i,j\}$};
\node (bc) at (-2,-1) [] {$\{h,i\}$};
\node (b) at (0,-3) [] {$\{i\}$};
\node (c) at (2,-3) [] {$\{j\}$};
\node (ac) at (4,-3) [] {$\{h,j\}$};
\node (empty) at (-4,-3) [] {$\emptyset$};
\node (a) at (-2,-3) [] {$\{h\}$};

\path[->] (abc) edge  (ab);
\path[->] (abc) edge  (bc);

\path[->] (ab) edge  (b);
\path[->] (ab) edge  (a);
\path[->] (ab) edge  (empty);
\path[->] (ab) edge  (c);
\path[->] (ab) edge  (ac);

\path[->] (bc) edge  (b);
\path[->] (bc) edge  (a);
\path[->] (bc) edge  (empty);
\path[->] (bc) edge  (c);
\path[->] (bc) edge  (ac);

\path[-] (empty) edge  (a);
\path[-] (a) edge  (b);
\path[-] (b) edge  (c);
\path[-] (c) edge  (ac);

\end{tikzpicture}}
    \end{minipage}
    \hspace{1cm}
     \begin{minipage}[t]{0.4\textwidth}
 \scalebox{1}{
\begin{tikzpicture}

\node (abc) at (1,-1) [] {$\{h,i,j\}$};
\node (ab) at (-1,-1) [] {$\{h,i\}$};
\node (bc) at (0,0) [] {$\{i,j\}$};
\node (b) at (-3,-1) [] {$\{i\}$};
\node (c) at (3,-1) [] {$\{j\}$};
\node (ac) at (3,-3) [] {$\{h,i\}$};
\node (empty) at (-1,-3) [] {$\emptyset$};
\node (a) at (1,-3) [] {$\{h\}$};

\path[->] (bc) edge  (c);
\path[->] (bc) edge  (b);
\path[->] (bc) edge  (ab);
\path[->] (bc) edge  (abc);

\path[-] (b) edge  (ab);
\path[-] (ab) edge  (abc);

\path[->] (b) edge  (empty);
\path[->] (b) edge  (a);
\path[->] (b) edge  (ac);

\path[->] (ab) edge  (empty);
\path[->] (ab) edge  (a);
\path[->] (ab) edge  (ac);

\path[->] (abc) edge  (empty);
\path[->] (abc) edge  (a);
\path[->] (abc) edge  (ac);

\path[->] (c) edge  (empty);
\path[->] (c) edge  (a);
\path[->] (c) edge  (ac);

\path[-] (a) edge  (empty);
\path[-] (a) edge  (ac);

\end{tikzpicture}}
    \end{minipage}
   \caption{Lattices depicting $\sqsupseteq^{\CF}_{F_5}$ (left) and $\sqsupseteq^{\UD}_{F_5}$ (right) from Example \ref{ex:combinitation_intro}, where $E \rightarrow E'$ means $E' \sqsupset^{\tau}_{F_5} E$ and $E - E'$ means $E' \equiv^{\tau}_{F_5} E$}\label{tikz:lattice_cfud}
\end{figure}

\begin{example}\label{ex:combinitation_intro}
Recall $F_5$ from Example \ref{ex:comp} (depicted again in Figure \ref{tikz:intro_recall}). Since $\{j\}$ is conflict-free and $\{h,i,j\}$ is not conflict-free we argued that $\{j\}$ is more plausible to be accepted than $\{h,i,j\}$. 
Using the base relations $\sqsupseteq^{\CF}_{F_5}$ gives us our intended result, i.\,e.  $\{j\} \sqsupset^{\CF}_{F_5} \{h,i,j\}$.
However, $\sqsupseteq^\CF$ does not fully model argumentative reasoning about this AF, since $\{j\}$ is equally as plausible to be accepted as the admissible set $\{h,j\}$ wrt. $\sqsupseteq^\CF_{F_5}$. But we argue that this should not be the case, $\{j\}$ is not as plausible to be accepted as any admissible set. Using $\sqsupseteq^{\UD}_{F_5}$ gives us the intended result of $\{h,j\} \sqsupset^{\UD}_{F_5} \{j\}$, but the sets $\{j\}$ and $\{h,i,j\}$ are incomparable wrt. \UD. Thus, both $\sqsupseteq^{\CF}_{F_5}$ and $\sqsupseteq^{\UD}_{F_5}$ alone are not enough to fully capture argumentative reasoning.
Figure \ref{tikz:lattice_cfud} depicts $\sqsupseteq^{\CF}_{F_5}$ and $\sqsupseteq^{\UD}_{F_5}$ as lattices.

By aggregating these two base relations, using on base relation after another, we can finally capture the entire reasoning process for $F_5$ and present a preorder one sets of arguments based on their plausibility of acceptance. In Figure \ref{tikz:lattice_intro} the resulting preorder of $\sqsupseteq^{\CF,\UD}_{F_5}$ is depicted as a lattice, where first $\sqsupseteq^{\CF}_{F_5}$ is used and then, if $\sqsupseteq^{\CF}_{F_5}$ returns equality, $\sqsupseteq^{\UD}_{F_5}$ is used.

        \begin{figure}
    \centering
 \scalebox{1}{
\begin{tikzpicture}

\node (abc) at (0,0) [] {$\{h,i,j\}$};
\node (ab) at (-2,-1) [] {$\{h,i\}$};
\node (bc) at (2,-1) [] {$\{i,j\}$};
\node (b) at (-2,-2) [] {$\{i\}$};
\node (c) at (2,-2) [] {$\{j\}$};
\node (ac) at (-2,-3) [] {$\{h,j\}$};
\node (empty) at (0,-3) [] {$\emptyset$};
\node (a) at (2,-3) [] {$\{h\}$};

\path[->] (abc) edge  (ab);
\path[->] (abc) edge  (bc);

\path[->] (bc) edge  (b);
\path[->] (bc) edge  (c);

\path[->] (ab) edge  (b);
\path[->] (ab) edge  (c);

\path[->] (b) edge  (ac);
\path[->] (b) edge  (empty);
\path[->] (b) edge  (a);

\path[->] (c) edge  (ac);
\path[->] (c) edge  (empty);
\path[->] (c) edge  (a);

\path[-] (ac) edge  (empty);
\path[-] (empty) edge  (a);
\end{tikzpicture}}
   \caption{$\sqsupseteq^{\CF,\UD}_{F_5}$ from Example \ref{ex:combinitation_intro} depicted as a lattice.}\label{tikz:lattice_intro}
\end{figure}
\end{example}
The example above shows that combinations of base relations are needed to define extension-ranking semantics in order to fully capture argumentative reasoning. Next, we propose a general framework for defining extension-ranking semantics using a sequence of base relations. 
\begin{definition}
    Let $F=(A,R)$ be an AF and $(\tau_1,\dots, \tau_n)$ a sequence of base relations. A \emph{aggregation method} $\mathcal{C}$ takes $(\tau_1,\dots, \tau_n)$ and returns an extension-ranking over $2^A$. 
\end{definition}
In the remainder of this section, we discuss a number of aggregation methods (\emph{Lexicographic Combination} in Section \ref{subsec:Lexi} and \emph{Voting Methods} in Section \ref{sec:voting}) and analyse the resulting extension-ranking semantics. 

\subsection{Lexicographic Combination} \label{subsec:Lexi}
In Example \ref{ex:combinitation_intro}, we aggregate $\sqsupseteq^\CF$ and $\sqsupseteq^\UD$ by applying these two base relations one after the other. If we reverse the order of using $\sqsupseteq^\UD$ and $\sqsupseteq^\CF$, then the set containing every argument $A$ is ranked quite highly, because $A$ is ranked highly wrt. $\sqsupseteq^\UD$. To avoid such behaviour we start with  $\sqsupseteq^\CF$ and refine the resulting ranking by applying other base relations one after the other. Such an aggregation is known as \emph{lexicographic combination} $\lex$. 
\begin{definition}
    Let $F=(A,R)$ be an AF, $E,E'\subseteq A$ and $(\sqsupseteq^{\tau_1}_F,\dots,\sqsupseteq_F^{\tau_n})$ be sequence of base relations. The \emph{lexicographic combination} $\lex(\sqsupseteq^{\tau_1}_F,\dots,\sqsupseteq^{\tau_n}_F)$ is defined via:  $E \sqsupset_F^{\lex(\tau_1,\dots,\tau_n)} E'$ iff there exists $i$ s.t. $E \sqsupset_F^{\tau_i} E'$ and for all $j < i, E \equiv_F^{\tau_j} E'$, and $E \equiv_F^{\lex(\tau_1,\dots,\tau_n)} E'$ iff for all $i, E \equiv_F^{\tau_i} E'$. 
\end{definition}
\begin{example}
   Let us look again at Example \ref{ex:combinitation_intro}. Formally the lattice in Figure \ref{tikz:lattice_intro} is produced by lexicographily combining  $\sqsupseteq^\CF_{F_5}$ and $\sqsupseteq^\UD_{F_5}$ i.e. $\lex(\sqsupseteq^\CF_{F_5}, \sqsupseteq^\UD_{F_5})$. For the three sets $\{h,i\}$, $\{i\}$ and $\{h,j\}$ we have: $$\{h,j\}\equiv^\CF_{F_5} \{i\} \sqsupset^\CF_{F_5} \{h,i\}$$ Thus, in-order to differentiate $\{h,j\}$ and $\{i\}$ we use $\sqsupseteq^\UD_{F_5}$ next. So, $\{h,j\} \sqsupset^\UD_{F_5} \{i\}$ and this entails the final preorder: $$\{h,j\} \sqsupset^{\lex(\CF,\UD)}_{F_5} \{i\} \sqsupset^{\lex(\CF,\UD)}_{F_5} \{h,i\}$$ 
\end{example}

Next, we look at certain sequences of base relations and study their behaviour. 
\subsubsection{Admissible Extension-ranking Semantics}\label{subsubsec:r-ad}
Revisiting the definition of admissibility, we see that a set $E$ is admissible if and only if $E$ has no internal conflicts and contains no undefended argument. These two aspects are captured by $\CF$ and $\UD$ respectively. Indeed, the lexicographic combination of the two base relations $\sqsupseteq^\CF$ and $\sqsupseteq^\UD$ provide a generalisation of the admissible extensions semantics.   
\begin{definition}\label{defn:adm-ranking-semantics}
    Let $F=(A,R)$ be an AF and $E,E'\subseteq A$. Define the \emph{admissbile extension-ranking semantics} $\rAd$ via $$E \sqsupseteq^{\rAd}_F E' \text{ iff } E \sqsupseteq^{\lex(\CF,\UD)}_F E'$$
\end{definition}
In other words, $E$ is more plausible to be accepted than $E'$ if $E$ is more plausible to be accepted than $E'$ wrt. $\sqsupseteq^\CF$ or in case of equality $E$ is more plausible to be accepted than $E'$ wrt. $\sqsupseteq^\UD$. 
We rename $\lex(\sqsupseteq^\CF,\sqsupseteq^\UD)$ to $\rAd$ to show the behaviour of the preorder and also to be inline with other works where $\rAd$ was already defined \cite{DBLP:conf/ijcai/SkibaRTHK21}.

\begin{example}\label{ex:rAD}
 Consider $F_{4}$ from Example~\ref{ex:af_example}. If we compare the sets $\{a,b\}$, $\{b\}$ and $\{b,f\}$, we see that $\{b\}$ and $\{b,f\}$ are conflict-free, while $\{a,b\}$ is not. Therefore we use $\sqsupseteq^\CF_{F_4}$  and compare these sets using the conflicts base relation. Comparing $\{b\}$ and $\{b,f\}$ further we see that both these sets are not admissible, however using \UD\ we get: $\UD_{F_4}(\{b\}) = \{b\}$ and $\UD_{F_4}(\{b,f\})= \{b,f\}$, so $\{b\} \sqsupset_{F_{4}}^{\rAd} \{b,f\}$. Therefore set $\{b\}$ is closer to be admissible than $\{b,f\}$. Extending our investigation to the three sets introduced we get:
 $$\{b\} \sqsupset_{F_{4}}^{\rAd} \{b,f\}  \sqsupset_{F_{4}}^{\rAd} \{a,b\} $$

 If we comparing the admissible sets $\{a,c,g\}$ and $\{a,d,g\}$ we have: 
 \begin{align*}
     &\CF_{F_4}(\{a,c,g\}) = \CF_{F_4}(\{a,d,g\}) = \emptyset \\
     &\UD_{F_4}(\{a,c,g\}) = \UD_{F_4}(\{a,d,g\}) = \emptyset 
 \end{align*}
 So, $\{a,c,g\} \equiv_{F_{4}}^{\rAd} \{a,d,g\}$ and more importantly $\{a,c,g\}, \{a,d,g\} \in \maxpl_{\rAd}(F_4)$.
 
 For the two conflict-free sets $\{b,g\}$ and $\{a,f\}$ we get $\UD_{F_4}(\{b,g\})= \{b\}$ and $\UD_{F_4}(\{a,f\})= \{f\}$ therefore these two sets are incomparable, i.e.\ $$\{b,g\} \asymp_{F_{4}}^{\rAd} \{a,f\}$$
 \end{example}
The full preorder of $\sqsupseteq^\rAd_{F_5}$ from Example \ref{ex:combinitation_intro} can be found in Figure \ref{tikz:lattice_intro}. We see that the three admissible sets $\{h,j\}$, $\emptyset$, and  $\{h\}$ are the most plausible sets, every not admissible set is ranked worse. Additionally, the set $\{i\}$ is closer to be admissible than $\{h,i\}$, because it is more plausible to be accepted wrt. $\sqsupseteq^\rAd_{F_5}$.

Next, we show the compliance of the admissible extension-ranking semantics with the principles defined in Section \ref{sec:principles}. 

First, we show that $\rAd$ is indeed a generalisation of admissibility. 
$\rAd$ is based on the lexicographic combination of $\sqsupseteq^\CF$ and $\sqsupseteq^\UD$, so first checking for minimal conflicts and then minimal undefended arguments. If both these relations return $\emptyset$ for a set $E$, then $E$ is admissible. 
\begin{proposition}
$\rAd$ satisfies $\ad$-generalisation.
\end{proposition}

The following two lemmas will be useful to prove (de)composition. 

\begin{lemma}\label{lem:composition_decomposition_1}
For $\tau \in \{\CF, \UD, \DN, \UA\}$, if $F_1=(A_1,R_1)$, $F_2=(A_2,R_2)$ and $F = F_1 \cup F_2= (A_1, R_1) \cup (A_2,R_2)$ with $A_1 \cap A_2 = \emptyset$ then $\tau_{F_1}(E\cap A_1) \cup \tau_{F_2}(E \cap A_2)= \tau_F(E)$ for every $E \subseteq A_1 \cup A_2$.
\end{lemma}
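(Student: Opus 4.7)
The plan is to prove the lemma by case analysis on $\tau \in \{\CF, \UD, \DN, \UA\}$. The key structural observation driving every case is that since $A_1 \cap A_2 = \emptyset$ and $R_1 \subseteq A_1 \times A_1$, $R_2 \subseteq A_2 \times A_2$, the attack relation of $F$ is the disjoint union $R_1 \sqcup R_2$, so there are no ``cross-component'' attacks. Consequently, for any $a \in A_i$ we have $a^{-}_F = a^{-}_{F_i}$ and $a^{+}_F = a^{+}_{F_i}$. I will also use repeatedly that $E = (E \cap A_1) \sqcup (E \cap A_2)$.

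First I would handle $\CF$, which is almost immediate: an attack $(a,b) \in R$ with $a,b \in E$ must lie in either $R_1$ or $R_2$ (since they partition $R$), and in the first case both $a,b \in A_1$, hence $a,b \in E \cap A_1$, giving $(a,b) \in \CF_{F_1}(E \cap A_1)$; the symmetric argument handles $R_2$. The reverse inclusion is trivial. Next I would handle $\UA$: for $a \in A_1 \setminus E$, $a$ is unattacked by $E$ in $F$ iff no $b \in E$ attacks $a$, which by the no-cross-attack property is iff no $b \in E \cap A_1$ attacks $a$ in $F_1$; again the symmetric argument works for $A_2$, and since $A \setminus E = (A_1 \setminus (E \cap A_1)) \sqcup (A_2 \setminus (E \cap A_2))$ we recover the identity.

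For $\UD$ I need the auxiliary claim that $\mathcal{F}_F(E) = \mathcal{F}_{F_1}(E \cap A_1) \cup \mathcal{F}_{F_2}(E \cap A_2)$. This follows because an argument $a \in A_1$ is defended by $E$ in $F$ iff every attacker $b \in a^{-}_F = a^{-}_{F_1} \subseteq A_1$ is counter-attacked by some $c \in E$, and any such $c$ must lie in $A_1$ (as $c$ attacks $b \in A_1$), hence in $E \cap A_1$; conversely defence in $F_1$ transfers to $F$. Combined with the partition $E = (E \cap A_1) \sqcup (E \cap A_2)$, the identity $\UD_F(E) = \UD_{F_1}(E \cap A_1) \cup \UD_{F_2}(E \cap A_2)$ follows by set algebra.

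The main obstacle is the $\DN$ case, because $\mathcal{F}^*$ is defined by iteration and a priori an iteration in $F$ could mix components. I would prove by induction on $i \in \mathbb{N}$ that
\[
\mathcal{F}^*_{i,F}(E) = \mathcal{F}^*_{i,F_1}(E \cap A_1) \cup \mathcal{F}^*_{i,F_2}(E \cap A_2).
\]
The base case $i=1$ is just the partition of $E$. For the inductive step, by the $\mathcal{F}$-decomposition just proved applied to $\mathcal{F}^*_{i-1,F}(E)$, which by induction already splits cleanly across the two components, $\mathcal{F}_F(\mathcal{F}^*_{i-1,F}(E))$ splits too; moreover $E^- = (E \cap A_1)^-_{F_1} \cup (E \cap A_2)^-_{F_2}$ by the no-cross-attack property, and intersecting with $A_i$ isolates each component. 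Taking the union over all $i$ and invoking Proposition~\ref{prop:f_star fixpoint} gives $\mathcal{F}^*_F(E) = \mathcal{F}^*_{F_1}(E \cap A_1) \cup \mathcal{F}^*_{F_2}(E \cap A_2)$. Subtracting $E$ and using that $\mathcal{F}^*_{F_i}(E \cap A_i) \subseteq A_i$ (which is disjoint from $E \cap A_{3-i}$) yields the desired identity for $\DN$.
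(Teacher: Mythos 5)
Your proof is correct and follows essentially the same route as the paper's: both rest on the observation that $A_1\cap A_2=\emptyset$ forbids cross-component attacks, so membership in $\tau_F(E)$ for an argument (or attack) living in $A_i$ depends only on $E\cap A_i$ and $F_i$. The paper only writes out the $\UD$ case and dismisses the rest as ``similar''; your explicit induction on $\mathcal{F}^*_{i,F}$ for the $\DN$ case supplies exactly the detail the paper omits, and it is the one case where something genuinely needs to be checked, so your version is if anything more complete.
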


\begin{lemma}\label{lem:composition_decomposition_2}
    The base relation $\sqsupseteq^\tau_{F}$ satisfies composition and decomposition for $\tau \in \{\CF, \UD, \DN, \UA\}$.
\end{lemma}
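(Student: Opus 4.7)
The plan is to reduce both directions to Lemma \ref{lem:composition_decomposition_1}. Recall that for each $\tau \in \{\CF,\UD,\DN,\UA\}$, the relation $\sqsupseteq^\tau_F$ is defined by set inclusion of $\tau_F(\cdot)$, so the whole claim boils down to manipulating these inclusions across the two disjoint components.

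For \emph{composition}, I would simply take the hypotheses $E\cap A_1 \sqsupseteq^\tau_{F_1} E'\cap A_1$ and $E\cap A_2 \sqsupseteq^\tau_{F_2} E'\cap A_2$, unfold them into $\tau_{F_1}(E\cap A_1)\subseteq \tau_{F_1}(E'\cap A_1)$ and $\tau_{F_2}(E\cap A_2)\subseteq \tau_{F_2}(E'\cap A_2)$, take the union of these two inclusions, and apply Lemma \ref{lem:composition_decomposition_1} on both sides to conclude $\tau_F(E)\subseteq \tau_F(E')$, i.e.\ $E\sqsupseteq^\tau_F E'$.

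For \emph{decomposition}, starting from $E\sqsupseteq^\tau_F E'$ and Lemma \ref{lem:composition_decomposition_1}, I get
\[
\tau_{F_1}(E\cap A_1)\cup \tau_{F_2}(E\cap A_2) \;\subseteq\; \tau_{F_1}(E'\cap A_1)\cup \tau_{F_2}(E'\cap A_2).
\]
The key step, and the main (minor) obstacle, is to argue that this union inclusion splits componentwise. This rests on the disjointness observation: since $A_1\cap A_2=\emptyset$, we also have $(A_1\times A_1)\cap(A_2\times A_2)=\emptyset$, and by construction $\tau_{F_i}(X)\subseteq A_i$ for $\tau\in\{\UD,\DN,\UA\}$ and $\tau_{F_i}(X)\subseteq A_i\times A_i$ for $\tau=\CF$ (this is immediate for $\CF$ and $\UA$; for $\UD$ and $\DN$ it uses that $\mathcal{F}_{F_i}$ and $\mathcal{F}^*_{F_i}$ only produce arguments from $A_i$ when fed a subset of $A_i$). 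Hence any $x\in\tau_{F_1}(E\cap A_1)$ lies in the $A_1$-part and therefore cannot belong to $\tau_{F_2}(E'\cap A_2)$, forcing $x\in \tau_{F_1}(E'\cap A_1)$; symmetrically for the second component. This yields both $\tau_{F_1}(E\cap A_1)\subseteq\tau_{F_1}(E'\cap A_1)$ and $\tau_{F_2}(E\cap A_2)\subseteq\tau_{F_2}(E'\cap A_2)$, i.e.\ the two required inequalities for decomposition.

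The only subtle case to double-check is $\tau=\DN$, because $\mathcal{F}^*$ is defined iteratively; I would briefly note that if $E\subseteq A_1\cup A_2$ with $A_1\cap A_2=\emptyset$ and no cross-attacks exist, then each iterate of $\mathcal{F}^*_F$ restricted to $A_i$ coincides with $\mathcal{F}^*_{F_i}$ applied to $E\cap A_i$, so $\DN_F(E)$ splits cleanly along the components. With that in hand, the decomposition argument goes through uniformly for all four base relations.
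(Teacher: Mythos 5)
Your proof is correct and follows essentially the same route as the paper: both directions are reduced to Lemma~\ref{lem:composition_decomposition_1}, with composition obtained by taking the union of the two local inclusions and decomposition by splitting the global inclusion back into components. The only difference is that you explicitly justify why the union inclusion splits componentwise (via $\tau_{F_i}(\cdot)\subseteq A_i$, resp.\ $A_i\times A_i$, and disjointness of $A_1$ and $A_2$), a step the paper's proof leaves implicit; this is a welcome clarification rather than a deviation.
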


So, these two lemmas show that the base relations do indeed satisfy composition and decomposition. Using this information we can easily show that the aggregation of these base relations also satisfy composition and decomposition.  

The lexicographic combination of $\sqsupseteq^\CF$ and $\sqsupseteq^\UD$ satisfies composition and decomposition. 
\begin{proposition}
    $\rAd$ satisfies composition and decomposition.
\end{proposition}

The \emph{Fundamental Lemma} states that the addition of a defended argument $a$ into an admissible set $E$ does not destroy the admissibility of $E$, hence weak reinstatement is satisfied while strong reinstatement is violated. 
\begin{proposition}\label{prop:rAD_reinstatement}
   $\rAd$ satisfies weak reinstatement.
\end{proposition}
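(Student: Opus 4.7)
The plan is to unpack the lexicographic definition $\rAd = \lex(\CF, \UD)$ and verify that adding such an argument $a$ neither introduces new conflicts nor more undefended elements. Concretely, given the hypotheses $a \in \mathcal{F}_F(E)$, $a \notin E$ and $a \notin (E^- \cup E^+)$, it suffices to show $\CF_F(E \cup \{a\}) = \CF_F(E)$ together with $\UD_F(E \cup \{a\}) \subseteq \UD_F(E)$; then either both $\CF$ and $\UD$ coincide (giving $E \cup \{a\} \equiv^\rAd_F E$) or $\UD$ strictly shrinks (giving $E \cup \{a\} \sqsupset^\rAd_F E$), and in either case $E \cup \{a\} \sqsupseteq^\rAd_F E$.

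First, I would establish the conflict equality. The assumption $a \notin E^- \cup E^+$ immediately rules out any attack between $a$ and $E$. For the self-loop case, note that if $(a,a) \in R$ then $a \in a^-_F$ and the condition $a \in \mathcal{F}_F(E)$ would force some $c \in E$ with $(c,a) \in R$, contradicting $a \notin E^+$. Hence no pair involving $a$ lies in $R$ restricted to $E \cup \{a\}$, so $\CF_F(E \cup \{a\}) = \CF_F(E)$, giving $E \cup \{a\} \equiv^\CF_F E$.

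Next, I would bound the undefended set. Let $b \in \UD_F(E \cup \{a\})$, so $b \in E \cup \{a\}$ and $b \notin \mathcal{F}_F(E \cup \{a\})$. The case $b = a$ is impossible: $E$ already defends $a$ and enlarging the defending set preserves defence, so $a \in \mathcal{F}_F(E \cup \{a\})$. Thus $b \in E$. By contrapositive, if $b$ were in $\mathcal{F}_F(E)$, then every attacker of $b$ would be counter-attacked by $E \subseteq E \cup \{a\}$, placing $b$ in $\mathcal{F}_F(E \cup \{a\})$, a contradiction. Hence $b \in E \setminus \mathcal{F}_F(E) = \UD_F(E)$, which yields the desired inclusion.

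The main obstacle is the subtle self-attack case in the conflict step: one must explicitly rule out $(a,a) \in R$ using that $E$ defends $a$ together with $a \notin E^+$, since the side condition $a \notin (E^- \cup E^+)$ alone does not preclude a self-loop. Once that detail is handled, combining the $\CF$-equality with the $\UD$-inclusion via the lexicographic rule concludes the argument.
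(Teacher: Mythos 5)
Your proof is correct and follows essentially the same route as the paper's: establish $\CF_F(E\cup\{a\})=\CF_F(E)$, show $\UD_F(E\cup\{a\})\subseteq\UD_F(E)$ via monotonicity of the characteristic function and the fact that $a\in\mathcal{F}_F(E\cup\{a\})$, and conclude through the lexicographic definition of $\rAd$. Your explicit exclusion of the self-attack $(a,a)$ in the conflict step is a small but welcome refinement that the paper's proof leaves implicit.
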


\begin{example}\label{ex:AD_not_reinstatment}
Consider the argumentation framework $F_5= \{\{h,i,j\}, \{(h,i),\\(i,j)\}\}$ from Example \ref{ex:comp}. The sets $E_1 = \{h\}$ and $E_2 = \{h,j\}$ are both admissible. By applying the admissible extension-ranking semantics, we get $E_1 \equiv_{F_5}^{\rAd} E_2$. However, $j$ is the only suitable candidate to be reinstated for $E_1$. But $E_1 \cup \{j\} \not \sqsupset_{F_5}^{\rAd} E_1$, as they are equally plausible to be accepted. So, the admissible extension-ranking semantics does not satisfy strong reinstatement.
\end{example}

Adding an outgoing attack into an AF will not remove any admissible sets, hence addition robustness is satisfied.
\begin{proposition}
    $\rAd$ satisfies addition robustness.
\end{proposition}


\subsubsection{Complete Extension-ranking Semantics}\label{subsubsec:r-co}
In the previous subsection, we discussed a generalisation of admissibility, now the next step is clear. Can we generalise the remaining extension semantics in a similar way? We start with the complete extension semantics. Complete extensions are admissible sets, where every defended argument is included. So, a generalisation of the complete extension semantics should be based on the admissible extension-ranking semantics and refined by favouring sets that contain more defended arguments. The behaviour of the complete extension semantics can be modelled with a lexicographic combination of $\sqsupseteq^\rAd$ and $\sqsupseteq^\DN$. 
\begin{definition}\label{defn:co-ranking-semantics}
    Let $F= (A,R)$ be an AF and $E,E' \subseteq A$. Define the \emph{complete extension-ranking semantics} $\rCo$ via:
    $$E \sqsupseteq^{\rCo}_F E' \text{ iff } E \sqsupseteq^{\lex(\CF,\UD,\DN)}_F E'$$
\end{definition}
So, if $E$ is strictly more plausible to be accepted than $E'$ wrt. $\sqsupseteq^{\rAd}$ then this relationship holds for $\rCo$ as well, only if the two sets are equally plausible to be accepted wrt. $\sqsupseteq^{\rAd}$, the base relation $\sqsupseteq^\DN$ is used to. 
\begin{example}\label{ex:rCO}
 Consider $F_{4}$ from Example~\ref{ex:af_example} and sets $\{b\}$, $\{g\}$ and $\{d\}$. While $\{g\}$ and $\{d\}$ are both admissible set $\{b\}$ is not admissible, so we can use $\sqsupseteq^\rAd$ to receive $\{g\} \sqsupset^{\rCo}_{F_4} \{b\}$ respectively $\{d\} \sqsupset^{\rCo}_{F_4} \{b\}$. To compare $\{g\}$ and $\{d\}$ further we look at $\sqsupseteq^\DN$. Argument $a$ is defended by both these sets, while $\{d\}$ also defends $g$ in addition to $a$. So, $\DN_{F_4}(\{g\}) = \{a\}$ and $\DN_{F_4}( \{d\})= \{a,g\}$ and therefore $\{g\} \sqsupset_{F_{4}}^{\rCo} \{d\}$. 
 Since $\sqsupseteq^\rCo$ is built on top of $\sqsupseteq^\rAd$ we can extend the result of Example \ref{ex:rAD} further:
 $$ \{g\} \sqsupset_{F_{4}}^{\rCo} \{d\} \sqsupset^{\rCo}_{F_4} \{b\} \sqsupset_{F_{4}}^{\rCo} \{b,f\}  \sqsupset_{F_{4}}^{\rCo} \{a,b\} $$
 So among these sets $\{g\}$ is the closed set to be complete, while not being complete itself. 
\end{example}
\begin{example}\label{ex:lattice_rco}
    Consider again $F_5$ from Example \ref{ex:combinitation_intro}. The lattice in Figure \ref{tikz:lattice_rco} depicts the preorder $\sqsupseteq^{\rCo}_{F_5}$. The first three level are the same as in the preorder of $\sqsupseteq^{\rAd}_{F_5}$, however with $\sqsupseteq^\rCo_{F_5}$ we can differentiate the sets $\emptyset$, $\{h\}$, and $\{h,j\}$. The complete set $\{h,j\}$ is the most plausible set, however between the two admissible sets $\emptyset$ and $\{h\}$ we can say that $\{h\}$ is closer to be complete than $\emptyset$.
           \begin{figure}
    \centering
 \scalebox{1}{
\begin{tikzpicture}

\node (abc) at (0,0) [] {$\{h,i,j\}$};
\node (ab) at (-2,-1) [] {$\{h,i\}$};
\node (bc) at (2,-1) [] {$\{i,j\}$};
\node (b) at (-2,-2) [] {$\{i\}$};
\node (c) at (2,-2) [] {$\{j\}$};
\node (ac) at (0,-5) [] {$\{h,j\}$};
\node (empty) at (0,-3) [] {$\emptyset$};
\node (a) at (0,-4) [] {$\{h\}$};

\path[->] (abc) edge  (ab);
\path[->] (abc) edge  (bc);

\path[->] (bc) edge  (b);
\path[->] (bc) edge  (c);

\path[->] (ab) edge  (b);
\path[->] (ab) edge  (c);

\path[->] (b) edge  (empty);

\path[->] (c) edge  (empty);

\path[->] (a) edge  (ac);
\path[->] (empty) edge  (a);
\end{tikzpicture}}
   \caption{$\sqsupseteq^{\rCo}_{F_5}$ from Example \ref{ex:lattice_rco} depicted as a lattice, where $E \rightarrow E'$ means $E' \sqsupset^{\rCo}_{F_5} E$.}\label{tikz:lattice_rco}
\end{figure}
\end{example}

The complete semantics is a refinement of admissibility, hence refining $\sqsupseteq^\rAd$ with the help of an additional base relation here $\sqsupseteq^\DN$, is intuitive. This combination is indeed a generalisation of the complete extension semantics. 
\begin{proposition}
$\rCo$ satisfies $\co$-generalisation.
\end{proposition}

Like for $\rAd$ the lexicographic combination of $\sqsupseteq^\CF$, $\sqsupseteq^\UD$, and $\sqsupseteq^\DN$ does not violate composition and decomposition. 
\begin{proposition}
    $\rCo$ satisfies composition and decomposition.
\end{proposition}

Similar to the extension-based version of the complete semantics, the strong reinstatement property is satisfied. 
\begin{proposition}\label{prop:rCO_reinstatement}
   $\rCo$ satisfies strong reinstatement. 
\end{proposition}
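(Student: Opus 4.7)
My plan is to verify each level of the lexicographic combination defining $\rCo$ in turn, showing that moving from $E$ to $E\cup\{a\}$ either strictly improves matters or leaves them unchanged at the $\CF$ and $\UD$ levels, and strictly improves matters at the $\DN$ level when the previous two coincide. Throughout, I use the hypotheses: $a$ is defended by $E$, $a\notin E$, and there are no attacks between $a$ and $E$ in either direction (from $a\notin E^-\cup E^+$). A preliminary observation is that $a$ cannot attack itself: a self-attacking $a$ would need to be attacked by some element of $E$ to be defended, contradicting $a\notin E^+$.

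\emph{Step 1 ($\CF$-level equality).} Since $a$ exchanges no attacks with $E$ and $(a,a)\notin R$, adding $a$ introduces no new internal conflict, so $\CF_F(E\cup\{a\})=\CF_F(E)$, i.e.\ $E\cup\{a\}\equiv^{\CF}_F E$.

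\emph{Step 2 ($\UD$-level improvement).} By monotonicity of $\mathcal{F}_F$, the fact that $a\in\mathcal{F}_F(E)$ gives $a\in\mathcal{F}_F(E\cup\{a\})$, so $a\notin\UD_F(E\cup\{a\})$, and for any $b\in E$ defended by $E$ the same monotonicity preserves defence in $E\cup\{a\}$. Combined with $a\notin E$, this yields $\UD_F(E\cup\{a\})\subseteq\UD_F(E)$, i.e.\ $E\cup\{a\}\sqsupseteq^{\UD}_F E$. If the inclusion is strict we already obtain $E\cup\{a\}\sqsupset^{\rAd}_F E$, hence $E\cup\{a\}\sqsupset^{\rCo}_F E$, and we are done.

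\emph{Step 3 ($\DN$-level improvement, the main obstacle).} Assume $\UD_F(E\cup\{a\})=\UD_F(E)$. I claim $\DN_F(E\cup\{a\})\subsetneq\DN_F(E)$. First, since $a\in\mathcal{F}_F(E)$ and $a\notin E^-$, we have $a\in\mathcal{F}^*_{2,F}(E)\subseteq\mathcal{F}^*_F(E)$, and $a\notin E$, so $a\in\DN_F(E)$; obviously $a\notin\DN_F(E\cup\{a\})$. It remains to show $\mathcal{F}^*_F(E\cup\{a\})\subseteq\mathcal{F}^*_F(E)$. I prove by induction on $i$ that $\mathcal{F}^*_{i,F}(E\cup\{a\})\subseteq\mathcal{F}^*_F(E)$. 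The base case holds since $E\subseteq\mathcal{F}^*_F(E)$ and $a\in\mathcal{F}^*_F(E)$. For the inductive step, use monotonicity of $\mathcal{F}_F$ with the hypothesis to get $\mathcal{F}_F(\mathcal{F}^*_{i-1,F}(E\cup\{a\}))\subseteq\mathcal{F}_F(\mathcal{F}^*_F(E))$; by Proposition~\ref{prop:f_star fixpoint}, $\mathcal{F}^*_F(E)$ is a fixed point of the iteration, whence $\mathcal{F}_F(\mathcal{F}^*_F(E))\setminus E^-\subseteq\mathcal{F}^*_F(E)$. Since $E^-\subseteq(E\cup\{a\})^-$, subtracting the larger set $(E\cup\{a\})^-$ keeps us inside $\mathcal{F}^*_F(E)$, closing the induction. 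Taking unions over $i$ gives $\mathcal{F}^*_F(E\cup\{a\})\subseteq\mathcal{F}^*_F(E)$, so $\DN_F(E\cup\{a\})\subseteq\mathcal{F}^*_F(E)\setminus(E\cup\{a\})\subseteq\DN_F(E)$, and the inclusion is strict by the witness $a$.

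\emph{Conclusion.} In the first case we are done by Step 2. In the second case, Steps 1, 2, and 3 together give $E\cup\{a\}\equiv^{\CF}_F E$, $E\cup\{a\}\equiv^{\UD}_F E$, and $E\cup\{a\}\sqsupset^{\DN}_F E$, which is precisely the definition of $E\cup\{a\}\sqsupset^{\lex(\CF,\UD,\DN)}_F E$, i.e.\ $E\cup\{a\}\sqsupset^{\rCo}_F E$. The main technical obstacle is the inductive containment $\mathcal{F}^*_F(E\cup\{a\})\subseteq\mathcal{F}^*_F(E)$, because $\mathcal{F}^*$ is not obviously monotone in its seed: enlarging the seed also enlarges the subtracted $(\cdot)^-$. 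The argument hinges on the key fact $E^-\subseteq(E\cup\{a\})^-$ combined with the fixed-point closure of $\mathcal{F}^*_F(E)$.
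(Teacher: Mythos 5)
Your proof is correct and follows essentially the same route as the paper's: reduce to the case where the $\CF$ and $\UD$ levels are unchanged (i.e.\ $E\cup\{a\}\equiv^{\rAd}_F E$), exhibit $a$ itself as a witness in $\DN_F(E)\setminus\DN_F(E\cup\{a\})$, and control $\mathcal{F}^*_F(E\cup\{a\})$ against $\mathcal{F}^*_F(E)$. The one substantive difference is at that last step: the paper simply asserts the equality $\mathcal{F}^*_F(E\cup\{a\})=\mathcal{F}^*_F(E)$ from the existence of fixed points, whereas you prove only the inclusion $\mathcal{F}^*_F(E\cup\{a\})\subseteq\mathcal{F}^*_F(E)$ by induction on the iteration, using monotonicity of $\mathcal{F}_F$, the fixed-point closure of $\mathcal{F}^*_F(E)$, and $E^-\subseteq(E\cup\{a\})^-$. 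Your version is the more careful one: the inclusion is all that is needed for $\DN_F(E\cup\{a\})\subsetneq\DN_F(E)$, and the reverse inclusion can in fact fail when $E$ is not conflict-free (an attacker $y$ of $a$ may lie in $\mathcal{F}^*_F(E)$ yet be excluded from $\mathcal{F}^*_F(E\cup\{a\})$ because $y\in a^-\subseteq(E\cup\{a\})^-$), so your decision not to claim equality is the right call.
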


Since the addition of an attack can defend additional arguments, we see that the complete extension-ranking semantics violates addition robustness, because a set can change from complete to not complete by adding one outgoing attack.
\begin{example}\label{ex:co_violates_robustness}
\begin{figure}
    \centering
 
 \scalebox{1}{
\begin{tikzpicture}

\node (a) at (0,0) [circle, draw,minimum size= 0.65cm] {$a$};
\node (b) at (0,2) [circle, draw,minimum size= 0.65cm] {$b$};
\node (c) at (2,0) [circle, draw,minimum size= 0.65cm] {$c$};
\node (d) at (2,2) [circle, draw,minimum size= 0.65cm] {$d$};
\node (e) at (4,0) [circle, draw,minimum size= 0.65cm] {$e$};
\node (f) at (6,0) [circle, draw,minimum size= 0.65cm] {$f$};

\path[->, bend left] (a) edge  (c);
\path[->, bend left] (c) edge  (a);
\path[->, bend left] (b) edge  (d);
\path[->, bend left] (d) edge  (b);

\path[->] (c) edge  (e);
\path[->] (d) edge  (e);
\path[->] (e) edge  (f);

\path[->, dashed] (a) edge  (b);

\end{tikzpicture}}
   \caption{Abstract argumentation framework $F_{12}$ from Example \ref{ex:co_violates_robustness}, where the dashed attack from $a$ to $b$ is added to obtain $F_{12}'$.}
    \label{tikz:robustness}
\end{figure}
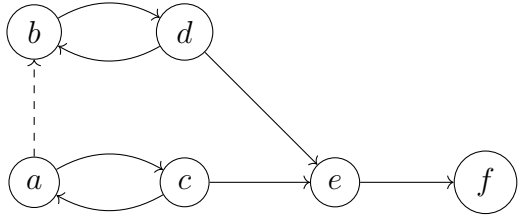
    Let $F_{12}$ be the AF depicted in Figure \ref{tikz:robustness}. Consider the two sets $E = \{a\}$ and $E'= \{b,c\}$. $E$ is a complete extension, while $E'$ is not a complete extension, so $E \sqsupset^{\rCo}_{F_{12}} E'$. So, if addition robustness is satisfied, we can add an attack between $E$ and $E'$ and keep this relationship. Say we add $(a,b)$ into $F_{12}$ to create $F_{12}'$, like depicted with the dashed line in Figure \ref{tikz:robustness}. Then $E$ is no longer a complete set, since arguments $d$ and $f$ are defended so $\DN_{F_{12}'}(E) = \{d,f\}$, while for $E'$ we have $\DN_{F_{12}'}(E') = \{f\}$, hence $E' \sqsupset^{\rCo}_{F_{12}'} E$, which violates addition robustness.
\end{example}


\subsubsection{Grounded Extension-ranking Semantics}\label{subsubsec:r-gr}
The grounded extension is defined as the minimal complete extension, so it is natural to refine the complete extension-ranking semantics by a notion of minimality to obtain a generalisation of the grounded extension semantics. Thus, before we can define the grounded extension-ranking semantics we need to define a base relation that models the notion of minimality.
\begin{definition}\label{def:min_bf}
Let $F=(A,R)$ be an AF and $E,E' \subseteq A$. Define the \emph{minimality base relation} \Min\ via:
$$E \sqsupseteq^{\Min}_F E' \text{ iff } E \subseteq E'$$
\end{definition}
In other words, a set $E$ is more plausible to be accepted than $E'$ if $E$ is smaller (wrt. subset comparisons) than $E'$. Note, that this base relation is only useful in combination with other base relation, since the structure of the underlying AF is completely irrelevant to establish the plausibility of acceptance of a set. 

Now, we can define a generalisation of the grounded extension semantics by refining $\sqsupseteq^\rCo$ with $\sqsupseteq^{\Min}$.
\begin{definition}\label{defn:gr-ranking-semantics}
    Let $F= (A,R)$ be an AF and $E,E' \subseteq A$. Define \emph{grounded extension-ranking semantics} $\rGr$ via:
    $$ E \sqsupseteq^{\rGr}_F E' \text{ iff } E \sqsupseteq^{\lex(\CF,\UD,\DN,\Min)}_F E'$$
\end{definition}

       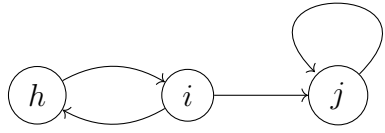
\begin{figure}
    \centering
 
 \scalebox{1}{
\begin{tikzpicture}

\node (a1) at (0,0) [circle, draw,minimum size= 0.65cm] {$h$};
\node (a2) at (2,0) [circle, draw,minimum size= 0.65cm] {$i$};
\node (a3) at (4,0) [circle, draw,minimum size= 0.65cm] {$j$};

\path[->,bend left] (a2) edge  (a1);
\path[->,bend left] (a1) edge  (a2);

\path[<-] (a3) edge  (a2);
\path[->, loop] (a3) edge  (a3);

\end{tikzpicture}}
   \caption{AF $F_{13}$ from Example \ref{ex:lattice_rgr}.}
    \label{tikz:rgr}
\end{figure}

\begin{example}\label{ex:rGR}
 Consider $F_{4}$ from Example~\ref{ex:af_example} and sets $\{g\}$, $\{a,g\}$ and $\{a,c,g\}$. $\{a,g\}$ and $\{a,c,g\}$ are both complete extensions, while $\{g\}$ is not complete. Hence, $\rCo$ gives us $\{a,g\} \sqsupset_{F_{4}}^{\rGr} \{g\}$ and $\{a,c,g\} \sqsupset_{F_{4}}^{\rGr} \{g\}$. $\{a,g\}$ and $\{a,c,g\}$ are not grounded, however $\{a,g\} \subset \{a,c,g\}$, so $\{a,g\} \sqsupset_{F_{4}}^{\rGr} \{a,c,g\}$. Using the results from Example \ref{ex:rCO} we can extend the ranking:
  $$\{a,g\} \sqsupset_{F_{4}}^{\rGr} \{a,c,g\} \sqsupset_{F_{4}}^{\rGr}  \{g\} \sqsupset_{F_{4}}^{\rGr} \{d\} \sqsupset^{\rGr}_{F_4} \{b\} \sqsupset_{F_{4}}^{\rGr} \{b,f\}  \sqsupset_{F_{4}}^{\rGr} \{a,b\} $$ 
\end{example}

\begin{example}\label{ex:lattice_rgr}
    Let $F_{13}= (\{h,i,j\},\{(h,i),(i,h),(i,j),(j,j)\})$ as depicted in Figure \ref{tikz:rgr}. The full preorders of $\sqsupseteq^{\rCo}_{F_{13}}$ (left) and $\sqsupseteq^{\rGr}_{F_{13}}$ (right) are depicted as lattices in Figure \ref{tikz:lattice_rgr}. The main difference between these two preorders is that $\emptyset$ is ranked better than $\{h\}$ and $\{i\}$ in $\sqsupseteq^{\rGr}_{F_{13}}$. While $\{h\}$ and $\{i\}$ are not the grounded extensions, these two sets are still considered more plausible to be accepted than every other set except $\emptyset$, because they are complete extensions. 
     
        \begin{figure}
    \centering
    \begin{minipage}[t]{0.4\textwidth}
 \scalebox{1}{
\begin{tikzpicture}

\node (abc) at (0,0) [] {$\{h,i,j\}$};
\node (ab) at (2,-1) [] {$\{h,i\}$};
\node (bc) at (-2,-1) [] {$\{i,j\}$};
\node (b) at (2,-4) [] {$\{i\}$};
\node (c) at (-2,-2) [] {$\{j\}$};
\node (ac) at (-2,-3) [] {$\{h,j\}$};
\node (empty) at (-2,-4) [] {$\emptyset$};
\node (a) at (0,-4) [] {$\{h\}$};

\path[->] (abc) edge  (ab);
\path[->] (abc) edge  (bc);

\path[->] (bc) edge  (c);
\path[->] (c) edge  (ac);

\path[->] (ac) edge  (empty);
\path[->] (ac) edge  (a);
\path[->] (ac) edge  (b);

\path[->] (ab) edge  (b);
\path[->] (ab) edge  (a);
\path[->] (ab) edge  (empty);
\path[-] (empty) edge  (a);
\path[-] (b) edge  (a);

\end{tikzpicture}}
    \end{minipage}
     \begin{minipage}[t]{0.4\textwidth}
 \scalebox{1}{
\begin{tikzpicture}

\node (abc) at (0,0) [] {$\{h,i,j\}$};
\node (ab) at (2,-1) [] {$\{h,i\}$};
\node (bc) at (-2,-1) [] {$\{j,i\}$};
\node (b) at (2,-4) [] {$\{i\}$};
\node (c) at (-2,-2) [] {$\{j\}$};
\node (ac) at (-2,-3) [] {$\{h,j\}$};
\node (empty) at (0,-5) [] {$\emptyset$};
\node (a) at (-2,-4) [] {$\{h\}$};

\path[->] (abc) edge  (ab);
\path[->] (abc) edge  (bc);

\path[->] (bc) edge  (c);
\path[->] (c) edge  (ac);

\path[->] (ac) edge  (a);
\path[->] (ac) edge  (b);

\path[->] (ab) edge  (b);
\path[->] (ab) edge  (a);
\path[->] (a) edge  (empty);
\path[->] (b) edge  (empty);

\end{tikzpicture}}
    \end{minipage}
   \caption{Lattices depicting $\sqsupseteq^{\rCo}_{F_{13}}$ (left) and $\sqsupseteq^{\rGr}_{F_{13}}$ (right) from Example \ref{ex:lattice_rgr}.}\label{tikz:lattice_rgr}
\end{figure}

\end{example}

Note that by definition of the grounded extension-ranking semantics is a refinement of the complete extension-ranking semantics, since a set of arguments $E$ that is more plausible to be accepted than a other set $E'$ with respect to the complete extension-ranking semantics, is also more plausible to be accepted than $E'$ with respect to the grounded extension-ranking semantics. Thus the complete and the grounded extension-ranking semantics do behave to each other as the classic Dung complete and grounded extension semantics (where the grounded extension semantics is a refinement of the complete extension semantics).

The unique grounded extension is the smallest set among the complete extensions, hence creating a preorder based on $\sqsupseteq^{\rCo}$, which is then refined based on the size of the sets entails a generalisation of the grounded semantics. 
\begin{proposition}
$\rGr$ satisfies $\gr$-generalisation.
\end{proposition}

Since $\rGr$ is based on $\rCo$ we can use the results of $\rCo$ for the proofs of the remaining principles. Since $\rCo$ satisfies composition, decomposition, and strong reinstatement, $\rGr$ satisfies these principles as well and violates addition robustness. 

\begin{proposition}
    $\rGr$ satisfies composition and decomposition.
\end{proposition}

\begin{proposition}
   $\rGr$ satisfies strong reinstatement. 
\end{proposition}

\begin{proposition}
    $\rGr$ violates addition robustness. 
\end{proposition}

\subsubsection{Preferred Extension-ranking Semantics}\label{subsubsec:r-pr}
While the grounded extension semantics uses a minimisation of sets, the preferred extension semantics uses a maximisation to reason. More precisely, the preferred extensions are the maximally admissible sets. Therefore, to generalise the preferred extension semantics, a notion of maximisation is needed. 
\begin{definition}
    Let $F= (A,R)$ be an AF and $E,E' \subseteq A$. We define the \emph{maximality base relation} \Max\ via:
    $$E \sqsupseteq^{\Max}_F E' \text{ iff } E \supseteq E'$$
\end{definition}
So, a superset $E$ is more plausible to be accepted than its subset $E'$.
Similar to the minimality base relation \Min\ defined in Definition \ref{def:min_bf}, \Max\ should only be used in combination with other base relations.

To define a generalisation of the preferred extension semantics we lexicographicaly combine $\sqsupseteq^\rAd$ with $\sqsupseteq^{\Max}$. 
\begin{definition}\label{defn:pr-ranking-semantics}
    Let $F=(A,R)$ be an AF and $E,E' \subseteq A$. Define \emph{preferred extension-ranking semantics} $\rPr$ via:
    $$E \sqsupseteq^{\rPr}_F E' \text{ iff } E \sqsupseteq^{\lex(\CF,\UD,\Max)}_F E'$$
\end{definition}

\begin{example}\label{ex:lattice_rpr}
 We continue Example \ref{ex:af_example}. Consider sets $\{b\}$, $\{a\}$ and $\{a,g\}$. $\{a\}$ and $\{a,g\}$ are admissible sets and $\{b\}$ is not admissible. So, based on $\rAd$ we have $\{a\} \sqsupset_{F_{4}}^{\rPr}  \{b\}$ and  $\{a,g\} \sqsupset_{F_{4}}^{\rPr}  \{b\}$. Additionally, both $\{a\}$ and $\{a,g\}$ are not preferred, however $ \{a\} \subset \{a,g\}$, so $\{a,g\} \sqsupset_{F_{4}}^{\rPr} \{a\}$ meaning that $\{a,g\}$ is closer to be preferred than $\{a\}$. Extending the ranking from Example \ref{ex:rAD} we get:
  $$\{a,g\} \sqsupset_{F_{4}}^{\rPr} \{a\} \sqsupset_{F_{4}}^{\rPr}  \{b\} \sqsupset_{F_{4}}^{\rPr} \{b,f\}  \sqsupset_{F_{4}}^{\rPr} \{a,b\} $$
\end{example}
\begin{example}
    Consider $F_{13}$ from Example \ref{ex:lattice_rgr}. The preorder $\sqsupseteq^{\rPr}_{F_{13}}$ is depicted in Figure \ref{tikz:lattice_rpr}. If we compare the lattice for $\sqsupseteq^{\rPr}_{F_{13}}$ with the lattice for $\sqsupseteq^{\rCo}_{F_{13}}$, we see that the most plausible sets wrt. $\sqsupseteq^{\rCo}_{F_{13}}$ are split. $\emptyset$ is ranked worse than $\{h\}$ and $\{i\}$, which holds true for the preferred extension semantics as well, since $\emptyset$ is not a preferred extension, while $\{h\}$ and $\{i\}$ are. 
    \begin{figure}
    \centering
 \scalebox{1}{
\begin{tikzpicture}

\node (abc) at (0,0) [] {$\{h,i,j\}$};
\node (ab) at (2,-1) [] {$\{h,i\}$};
\node (bc) at (-2,-1) [] {$\{i,j\}$};
\node (b) at (2,-5) [] {$\{i\}$};
\node (c) at (-2,-2) [] {$\{j\}$};
\node (ac) at (-2,-3) [] {$\{h,j\}$};
\node (empty) at (0,-4) [] {$\emptyset$};
\node (a) at (-2,-5) [] {$\{h\}$};

\path[->] (abc) edge  (ab);
\path[->] (abc) edge  (bc);

\path[->] (bc) edge  (c);
\path[->] (c) edge  (ac);

\path[->] (ac) edge  (empty);

\path[->] (ab) edge  (empty);
\path[->] (empty) edge  (a);
\path[->] (empty) edge  (b);
\path[-] (a) edge  (b);

\end{tikzpicture}}

   \caption{Lattices depicting $\sqsupseteq^{\rPr}_{F_{13}}$ from Example \ref{ex:lattice_rgr}.}\label{tikz:lattice_rpr}
\end{figure}

\end{example}
Sometimes the preferred extensions are defined as maximal complete extensions. For extension semantics this change in definition does not change the acceptance of any set, since every preferred extension is also a complete extension. However, if we define the preferred extension-ranking semantics based on completeness and not on admissibility we obtain different extension rankings. 
\begin{definition}\label{defn:co-pr-ranking-semantics}
    Let $F=(A,R)$ and $E, E' \subseteq A$. We define the \emph{complete-preferred extension-ranking semantics} \emph{\rCoPr} via
\begin{align}
E \sqsupseteq_{F}^{\rCoPr} E' & \mbox{ iff }E \sqsupseteq_{F}^{\lex(\CF,\UD,\DN,\Max)} E'
\end{align}
\end{definition}
\begin{example}\label{ex:rCoPr}
    Consider $F_4$ from Example \ref{ex:af_example} and sets $\{d\}$ and $\emptyset$. If we use $\rPr$ we obtain $$ \{d\} \sqsupset_{F_4}^{\rPr}\emptyset $$ since both these sets are admissible and $\{d\}$ is the bigger set. However, using $\rCoPr$ gives us $$\emptyset \sqsupset_{F_4}^{\rCoPr} \{d\}$$ since both these sets are not complete but $\DN_{F_4}(\{d\})= \{a,g\}$ and $\DN_{F_4}(\emptyset)= \{a\}$. So, the switch from admissibility to completeness does indeed change our rankings.

    For a second example consider sets $\{a,g\}$, $\{a,c\}$ and $\emptyset$. All three sets are admissible therefore they are equally ranked with respect to $\sqsupseteq^\rAd_{F_4}$. However, $\emptyset \subset \{a,g\}$ and $\{a,c\}$, so $ \{a,g\} \sqsupset_{F_4}^{\rPr} \emptyset $ and $ \{a,c\} \sqsupset_{F_4}^{\rPr}\emptyset $, but $\{a,g\}$ and $\{a,c\}$ are incomparable with respect to $\subseteq$, so:
    $$ \{a,g\} \asymp^{\rPr}_{F_4} \{a,c\} \sqsupset_{F_4}^{\rPr} \emptyset $$
    For $\sqsupseteq^\rCoPr_{F_4}$ we compare $\{a,g\}$, $\{a,c\}$ and $\emptyset$ via $\sqsupseteq^\DN$. $\{a,g\}$ is a complete extension so $\DN_{F_4}(\{a,g\})=\emptyset$ the other two sets are not complete i.e. $\DN_{F_4}(\{a,c\})= g$ and $\DN_{F_4}(\emptyset)= a$. So, among these three sets $\{a,g\}$ is the closed to be preferred, while the other two sets are incomparable. 
     $$ \{a,g\} \sqsupset_{F_4}^{\rCoPr}  \{a,c\} \asymp^{\rCoPr}_{F_4}  \emptyset $$
     
\end{example}
The example above shows that $\rPr$ and $\rCoPr$ are not refinements of each other and they produce different rankings. 

Despite the differences between $\rPr$ and $\rCoPr$, both satisfy $\pr$-generalisation, so both these versions are generalisations of the preferred extension semantics.
Thus, depending on the context and application, either version can be used as a generalisation of the preferred extension semantics. In deciding which version to use, the importance of completeness of sets must be questioned. If sets that are consistently defended and contain more arguments are to be favoured, then $\rCoPr$ should be used, otherwise $\rPr$ can be used instead.
\begin{proposition}
$\rPr$ and $\rCoPr$ are satisfying $\pr$-generalisation.
\end{proposition}

Similar to the other cases a lexicographic combination of $\sqsupseteq^{\rAd}_{F}$ resp.  $\sqsupseteq^{\rCo}_{F}$ and maximisation does satisfy composition and decomposition. 
\begin{proposition}
    $\rPr$ and $\rCoPr$ are satisfying composition and decomposition.
\end{proposition}

The main difference between $\sqsupseteq^{\rAd}_{F}$ and $\sqsupseteq^{\rPr}_{F}$ (resp. $\sqsupseteq^{\rCoPr}_{F}$) is the maximisation in the preferred case. So, adding an argument, which is defended and does not create new conflicts, into the set should make the set more plausible to be accepted based on preferred reasoning.
\begin{proposition}
   $\rPr$ and $\rCoPr$ are satisfying strong reinstatement. 
\end{proposition}


To show that addition robustness is violated by $\rPr$ and $\rCoPr$ we can use Example \ref{ex:pr_robustness} again. 

\subsubsection{(Semi)-Stable Extension-ranking Semantics}\label{subsubsec:r-sst}
For a set of arguments to be stable, it must be conflict-free and attack all arguments not contained in that set. 
We have already shown in Proposition \ref{thm:no st-gen} that there are no extension-ranking semantics, that satisfy $\st$-soundness and therefore also $\st$-generalisation, so although we could consider defining an extension-ranking semantics by lexicographically combining $\sqsupseteq^{\CF}$ and $\sqsupseteq^{\UA}$, the resulting semantics will not be $\st$-sound. 
However, a generalisation of the semi-stable extensions semantics is possible. Recall that a set $E$ is semi-stable if it is complete and maximises $E \cup E^+$. We achieve a generalisation by lexicographically combining $\sqsupseteq^{\rCo}$ and $\sqsupseteq^{\UA}$.
\begin{definition}\label{defn:ss-ranking-semantics}
    Let $F=(A,R)$ be an AF and $E,E' \subseteq A$. We define the \emph{semi-stable extension-ranking semantics} $\rSst$ via:
    $$E \sqsupseteq^{\rSst}_F E' \text{ iff } E \sqsupseteq^{\lex(\CF,\UD,\DN,\UA)}_{F} E'$$
\end{definition}
\begin{example}
 Consider $F_{4}$ from Example~\ref{ex:af_example} and sets $\{a,g\}$, $\{a,d,g\}$, and $\{g\}$. $\{a,g\}$ and $\{a,d,g\}$ are both complete extensions, while $\{g\}$ is not complete. So, $\sqsupseteq^{\rCo}_{F_4}$ returns $\{a,g\} \sqsupset_{F_{4}}^{\rSst} \{g\}$ and $\{a,d,g\} \sqsupset_{F_{4}}^{\rSst} \{g\}$. To compare $\{a,g\}$ and $\{a,d,g\}$ further we use $\sqsupseteq^{\UA}_{F_4}$. Both these sets are not stable, but $\UA_{F_4}(\{a,g\}) = \{c,d,e\}$ and $\UA_{F_4}(\{a,d,g\})= \{e\}$, so $\{a,d,g\} \sqsupset_{F_{4}}^{\rSst} \{a,g\}$. If we compare the stable extension $\{a,c,g\}$ with $\{a,d,g\}$ we get: 
$$\{a,c,g\} \sqsupset_{F_{4}}^{\rSst} \{a,d,g\}$$
because $\UA_{F_4}(\{a,c,g\}) = \emptyset$, in particular this also entails that $\{a,c,g\} \in \maxpl_{\rSst}(F_4)$. 
 Since $\sqsupseteq^\rSst_{F_4}$ is based on $\sqsupseteq^\rCo_{F_4}$ we can extend the ranking calculated in Example \ref{ex:rCO}: 
 \begin{align*}   \{a,c,g\} \sqsupset_{F_{4}}^{\rSst} \{a,&\, d,g\} \sqsupset_{F_{4}}^{\rSst} \{a,g\} \sqsupset_{F_{4}}^{\rSst}  \{g\} \\ &\sqsupset_{F_{4}}^{\rSst} \{d\}  \sqsupset^{\rSst}_{F_4} \{b\} \sqsupset_{F_{4}}^{\rSst} \{b,f\}  \sqsupset_{F_{4}}^{\rSst} \{a,b\}
 \end{align*} 
 \end{example}
 The example shows nicely the behaviour of this family of extension-ranking semantics. If we divide the resulting ranking into parts, we have first the (semi-)stable extension $\{a,c,g\}$ then complete extensions which are not semi-stable $\{a,d,g\}$ and $\{a,g\}$, after that admissible sets $\{g\}$ and $\{d\}$ followed by conflict-free sets $\{b\}$ and $\{b,f\}$ and ending with the conflicting set $\{a,b\}$. Using only extension semantics such a grouping of sets would be possible as well, however using extension-ranking semantics we can compare sets inside these groups, i.\,e. $\{a,d,g\}$ is closer to be semi-stable than $\{a,g\}$ or $\{b\}$ is closer to be admissible than $\{a,f\}$. 

\begin{example}\label{ex:lattice_rsst}
Consider $F_{13}$ from Example \ref{ex:lattice_rgr}. The corresponding lattice to $\sqsupseteq^{\rSst}_{F_{13}}$ is depicted in Figure \ref{tikz:lattice_rsst}. Comparing $\sqsupseteq^{\rSst}_{F_{13}}$  with $\sqsupseteq^{\rCo}_{F_{13}}$, we see that the most plausible set of $\sqsupseteq^{\rCo}_{F_{13}}$ are divided. $\{i\}$ is more plausible than $\{h\}$ and $\{h\}$ is more plausible than $\emptyset$. So, both $\{h\}$ and $\emptyset$ are both complete extensions but none of them is a semi-stable extension, but we show that $\{h\}$ is closer to be semi-stable than $\emptyset$.

 \begin{figure}
    \centering
 \scalebox{1}{
\begin{tikzpicture}

\node (abc) at (0,0) [] {$\{h,i,j\}$};
\node (ab) at (2,-1) [] {$\{h,i\}$};
\node (bc) at (-2,-1) [] {$\{i,j\}$};
\node (b) at (0,-6) [] {$\{i\}$};
\node (c) at (-2,-2) [] {$\{j\}$};
\node (ac) at (-2,-3) [] {$\{h,j\}$};
\node (empty) at (0,-4) [] {$\emptyset$};
\node (a) at (0,-5) [] {$\{h\}$};

\path[->] (abc) edge  (ab);
\path[->] (abc) edge  (bc);

\path[->] (bc) edge  (c);
\path[->] (c) edge  (ac);

\path[->] (ac) edge  (empty);

\path[->] (ab) edge  (empty);
\path[->] (empty) edge  (a);
\path[->] (a) edge  (b);

\end{tikzpicture}}

   \caption{Lattices depicting $\sqsupseteq^{\rSst}_{F_{13}}$ from Example \ref{ex:lattice_rsst}.}\label{tikz:lattice_rsst}
\end{figure}
\end{example}

A property of the semi-stable extension semantics is that the semi-stable extensions do coincide with the stable extensions if they exist. For the semi-stable extension-ranking semantics, we have a similar behaviour: If stable extensions exists, then the most plausible sets of the semi-stable extension-ranking semantics are the stable extensions.
\begin{proposition}\label{prop:r-sst=st}
    Let $F=(A,R)$ be an AF s.t. $\st(F) \neq \emptyset$, then $\maxpl_{\rSst}(F) = \st(F)$.
\end{proposition}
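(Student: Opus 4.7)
The plan is to prove the two inclusions separately, exploiting the fact that the lexicographic combination is minimized exactly by stable extensions.

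First I would show $\st(F) \subseteq \maxpl_{\rSst}(F)$. Let $E \in \st(F)$. Since $E$ is conflict-free we have $\CF_F(E) = \emptyset$; since $E$ is admissible we have $\UD_F(E) = \emptyset$; since $E$ is complete (every stable extension is complete) we have $\DN_F(E) = \emptyset$; and since $E$ attacks every argument outside of it, $\UA_F(E) = \emptyset$. Each of the four component sets is thus $\subseteq$-minimal (namely, empty), so no set $E'$ can satisfy $E' \sqsupset_F^{\tau_i} E$ for any $\tau_i \in \{\CF,\UD,\DN,\UA\}$. By definition of the lexicographic combination, no $E'$ can satisfy $E' \sqsupset_F^{\rSst} E$, so $E \in \maxpl_{\rSst}(F)$.

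Next I would show $\maxpl_{\rSst}(F) \subseteq \st(F)$. Let $E \in \maxpl_{\rSst}(F)$ and pick some witness $E^* \in \st(F)$, which exists by hypothesis. By the previous paragraph, $\CF_F(E^*) = \UD_F(E^*) = \DN_F(E^*) = \UA_F(E^*) = \emptyset$. Suppose for contradiction $E$ is not stable. Then at least one of $\CF_F(E)$, $\UD_F(E)$, $\DN_F(E)$, $\UA_F(E)$ is non-empty; let $\tau_i$ be the first component (in the order $\CF, \UD, \DN, \UA$) where this happens. Then for every $j < i$ we have $\tau_j(E) = \emptyset = \tau_j(E^*)$, so $E \equiv_F^{\tau_j} E^*$, while $\tau_i(E^*) = \emptyset \subsetneq \tau_i(E)$, so $E^* \sqsupset_F^{\tau_i} E$. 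By definition of $\lex$, this gives $E^* \sqsupset_F^{\rSst} E$, contradicting $E \in \maxpl_{\rSst}(F)$. Hence $E$ must have all four sets empty, i.e.\ $E$ is conflict-free, admissible, complete and attacks every outside argument, so $E \in \st(F)$.

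The two directions together yield the claimed equality. There is no real obstacle here: the argument is essentially that a stable extension minimizes every component of the lexicographic combination simultaneously, so it is always at the very top of $\sqsupseteq_F^{\rSst}$, and any other set that reaches the top must likewise zero out all components and therefore also be stable. The only subtle point to state carefully is the standard fact that $\st(F) \subseteq \co(F)$, which is what forces $\DN_F(E) = \emptyset$ for stable $E$.
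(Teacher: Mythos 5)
Your proof is correct and follows essentially the same route as the paper's: both directions rest on the observation that a stable extension has $\CF_F(E)=\UD_F(E)=\DN_F(E)=\UA_F(E)=\emptyset$ and hence cannot be strictly beaten, and that any maximal element must match a stable witness on all four components and is therefore itself stable. Your write-up is slightly more explicit than the paper's about how the lexicographic definition turns a first non-empty component into a strict defeat, but the substance is identical.
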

Proposition \ref{prop:r-sst=st} also implies that $\rSst$ satisfies $\st$-completeness, since if $\st(F) = \emptyset$, then $\maxpl_{\rSst}(F) \supseteq \emptyset$. 


Refining the $\sqsupseteq^{\rCo}$ with $\sqsupseteq^\UA$ does maximise the range of each set and therefore represents a generalisation of the semi-stable semantics.  
\begin{proposition}
    $\rSst$ satisfies $\sst$-generalisation.
\end{proposition}

The proofs of the next three propositions are based on the proofs for $\rCo$. 
\begin{proposition}
    $\rSst$ satisfies composition and decomposition.
\end{proposition}

\begin{proposition}
   $\rSst$ satisfies strong reinstatement. 
\end{proposition}

\begin{proposition}
    $\rSst$ violates addition robustness. 
\end{proposition}

It is clear by definition, that all extension-ranking semantics satisfy \emph{Syntax Independence}.

Table \ref{tab:principle_rsigma} summarises the results of this Subsection.

 \begin{table}[]
     \centering
      \resizebox{\textwidth}{!}{
     \begin{tabular}{|l||c|c|c|c|c|}
     \hline
     Principles & $\rAd$ & $\rCo$& $\rGr$ &  $\rPr$  &$\rSst$ \\
     \hline
        $\sigma$-generalisation  & \checkmark  ($\sigma= \ad$)  &  \checkmark  ($\sigma= \co$) & \checkmark  ($\sigma= \gr$) & \checkmark  ($\sigma= \pr$) & \checkmark  ($\sigma= \sst$) \\
        composition  & \checkmark &  \checkmark & \checkmark & \checkmark & \checkmark  \\
        decomposition  & \checkmark &  \checkmark & \checkmark & \checkmark & \checkmark \\
        weak reinstatement & \checkmark &  \checkmark & \checkmark & \checkmark & \checkmark \\
        strong reinstatement  & X &  \checkmark & \checkmark & \checkmark & \checkmark\\
        addition robustness & \checkmark &  X & X & X & X   \\
        syntax independence & \checkmark &  \checkmark & \checkmark & \checkmark & \checkmark \\ \hline
     \end{tabular} }
     \caption{Principles satisfied by $r\text{-}\sigma$ for $\sigma \in \{\ad, \co, \gr, \pr, \sst\}$.}
     \label{tab:principle_rsigma}
 \end{table}

\subsubsection{Cardinality-based Instances}\label{subsubsec:cardinality_ext_ranking}
The cardinality-based variations of the base relations from Section~\ref{subsec:cardinality} can also be used to define generalisations of extension semantics.

    \begin{definition}
    Let $F=(A,R)$ and $E,E' \subseteq A$. Define the \emph{cardinality-based extension-ranking semantics} $\mathsf{r\text{-}c\text{-}\sigma}$ for $\sigma \in \{\ad,\co,\gr,\pr,\sst\}$ via
    \begin{itemize}
        \item $E\sqsupseteq_{F}^{\mathsf{r\text{-}c\text{-}\ad}} E'$ iff $E \sqsupseteq_{F}^{\lex(\mathsf{c\text{-}\CF,c\text{-}\UD})} E'$. 
        \item $E\sqsupseteq_{F}^{\mathsf{r\text{-}c\text{-}\co}} E'$ iff $E \sqsupseteq_{F}^{\lex(\mathsf{c\text{-}\CF,c\text{-}\UD,c\text{-}\DN})} E'$.
       \item $E\sqsupseteq_{F}^{\mathsf{r\text{-}c\text{-}\gr}} E'$ iff $E \sqsupseteq_{F}^{\lex(\mathsf{c\text{-}\CF,c\text{-}\UD,c\text{-}\DN},\Min)} E'$.
       \item $E\sqsupseteq_{F}^{\mathsf{r\text{-}c\text{-}\pr}} E'$ iff $E \sqsupseteq_{F}^{\lex(\mathsf{c\text{-}\CF,c\text{-}\UD},\Max)} E'$.
       \item $E\sqsupseteq_{F}^\mathsf{{r\text{-}c\text{-}\sst}} E'$ iff $E \sqsupseteq_{F}^{\lex(\mathsf{c\text{-}\CF,c\text{-}\UD,c\text{-}\DN,c\text{-}\UA})} E'$.
    \end{itemize}
\end{definition}

\begin{example}\label{ex:crAD}
    Consider $F_4$ from Example \ref{ex:af_example} and sets $\{c,g\}$ and $\{b,f\}$. Argument $g$ is defended by $\{c,g\}$, while $c$ is not defended, while both $b,f$ are not defended by $\{b,f\}$. However, $\sqsupseteq^\rAd_{F_4}$ returns incompatibility between these two sets, since $\{c\}$ is not a subset of $\{b,f\}$. But using the cardinality-based version of $\rAd$ we can compare these two sets with each other and state $\{c,g\} \sqsupset_{F_4}^{\mathsf{r\text{-}c\text{-}\ad}} \{b,f\}$.

    Consider sets $\{a,b\}$, $\{b\}$ and $\{b,f\}$, then $\{a,b\}$ is not conflict-free, so $\{b\} \sqsupset_{F_4}^{\mathsf{\mathsf{r\text{-}c\text{-}\ad}}} \{a,b\}$ and $\{b,f\} \sqsupset_{F_4}^{\mathsf{r\text{-}c\text{-}\ad}} \{a,b\}$. To compare $\{b\}$ and $\{b,f\}$ further we look at $\sqsupseteq^{\mathsf{r\text{-}c\text{-}\UD}}_{F_4}$, argument $b$ is not defended by $\{b\}$ so $|\UD_{F_4}(\{b\})|= |\{b\}| = 1$. For $\{b,f\}$ we have $|\UD_{F_4}(\{b,f\})|= |\{b,f\}| = 2$ and this entails: $\{b\} \sqsupset_{F_4}^{\mathsf{r\text{-}c\text{-}\ad}} \{b,f\}$. So, the resulting ranking between these three sets coincides with the one for $\sqsupseteq^\rAd_{F_4}$ in Example \ref{ex:rAD}.
    $$\{b\} \sqsupset_{F_{4}}^{\mathsf{r\text{-}c\text{-}\ad}} \{b,f\}  \sqsupset_{F_{4}}^{\mathsf{r\text{-}c\text{-}\ad}} \{a,b\} $$
    \end{example}
    \begin{example}\label{ex:lattice_rcad}
    Consider $F_{14}$ as depicted in Figure \ref{tikz:rcad}. Extension rankings for $\sqsupseteq_{F_{14}}^{\rAd}$ and $\sqsupseteq_{F_{14}}^{\mathsf{r\text{-}c\text{-}\ad}}$ are depicted in Figure \ref{tikz:lattice_rcad}. The difference between the two rankings is that $\{h,i\}$ and $\{i,j\}$ are incomparable for $\sqsupseteq_{F_{14}}^{\rAd}$, while for $\sqsupseteq_{F_{14}}^{\mathsf{r\text{-}c\text{-}\ad}}$ these two sets are comparable and we can say that $\{i,j\}$ is more plausible to be accepted than $\{h,i\}$.
           \begin{figure}
    \centering
 
 \scalebox{1}{
\begin{tikzpicture}

\node (a1) at (0,0) [circle, draw,minimum size= 0.65cm] {$h$};
\node (a2) at (2,0) [circle, draw,minimum size= 0.65cm] {$i$};
\node (a3) at (4,0) [circle, draw,minimum size= 0.65cm] {$j$};

\path[->,bend left] (a2) edge  (a1);
\path[->,bend left] (a1) edge  (a2);

\path[<-] (a3) edge  (a2);

\end{tikzpicture}}
   \caption{AF $F_{14}$ from Example \ref{ex:lattice_rcad}.}
    \label{tikz:rcad}
\end{figure}

        \begin{figure}
    \begin{minipage}[t]{0.4\textwidth}
 \scalebox{0.9}{
\begin{tikzpicture}

\node (abc) at (0,0) [] {$\{h,i,j\}$};
\node (ab) at (2,-1) [] {$\{h,i\}$};
\node (bc) at (-2,-1) [] {$\{i,j\}$};
\node (b) at (-1,-3) [] {$\{i\}$};
\node (c) at (0,-2) [] {$\{j\}$};
\node (ac) at (-3,-3) [] {$\{h,i\}$};
\node (empty) at (1,-3) [] {$\emptyset$};
\node (a) at (3,-3) [] {$\{h\}$};

\path[->] (abc) edge  (ab);
\path[->] (abc) edge  (bc);

\path[->] (bc) edge  (c);
\path[->] (ab) edge  (c);

\path[->] (c) edge  (empty);
\path[->] (c) edge  (a);
\path[->] (c) edge  (b);
\path[->] (c) edge  (ac);

\path[-] (empty) edge  (a);
\path[-] (empty) edge  (b);
\path[-] (b) edge  (ac);

\end{tikzpicture}}
    \end{minipage}
    \hspace{1cm}
     \begin{minipage}[t]{0.4\textwidth}
 \scalebox{0.9}{
\begin{tikzpicture}

\node (abc) at (0,0) [] {$\{h,i,j\}$};
\node (ab) at (0,-1) [] {$\{h,i\}$};
\node (bc) at (0,-2) [] {$\{i,j\}$};
\node (b) at (-1,-4) [] {$\{i\}$};
\node (c) at (0,-3) [] {$\{j\}$};
\node (ac) at (-3,-4) [] {$\{h,j\}$};
\node (empty) at (1,-4) [] {$\emptyset$};
\node (a) at (3,-4) [] {$\{h\}$};

\path[->] (abc) edge  (ab);

\path[->] (bc) edge  (c);
\path[->] (ab) edge  (bc);

\path[->] (c) edge  (empty);
\path[->] (c) edge  (a);
\path[->] (c) edge  (b);
\path[->] (c) edge  (ac);

\path[-] (empty) edge  (a);
\path[-] (empty) edge  (b);
\path[-] (b) edge  (ac);

\end{tikzpicture}}
    \end{minipage}
   \caption{Lattices depicting $\sqsupseteq^{\rAd}_{F_{14}}$ (left) and $\sqsupseteq_{F_{14}}^{\mathsf{r\text{-}c\text{-}\ad}}$ (right) from Example \ref{ex:lattice_rcad}.}\label{tikz:lattice_rcad}
\end{figure}
    \end{example}

    In Example \ref{ex:crAD} we see that $\sqsupseteq^{\mathsf{r\text{-}c\text{-}ad}}_{F_4}$ and $\sqsupseteq^\rAd_{F_4}$ coincide in the second part. This behaviour is always the case for the whole family of $\mathsf{r\text{-}c\text{-}\sigma}$ extension-ranking semantics when $\sqsupseteq^\mathsf{r\text{-}\sigma}$ can compare two sets. 
\begin{proposition}
  Let $F= (A,R)$ be an AF and $E,E' \subseteq A$, if $E \sqsupseteq^{\mathsf{\mathsf{r\text{-}\sigma}}}_F E'$ then $E \sqsupseteq^{\mathsf{r\text{-}c\text{-}\sigma}}_F E'$ for $\sigma \in \{\ad,\co,\gr,\pr,\sst\}$.
\end{proposition}

In other words, the family of extension-ranking semantics $\mathsf{r\text{-}c\text{-}\sigma}$ is a refinement of $\mathsf{r\text{-}\sigma}$, because former family fixes the non-totality issues of $\sqsupseteq^{\mathsf{r\text{-}\sigma}}$ for $\sigma \in \{\ad,\co,\sst\}$, but coincides with $\sqsupseteq^{\mathsf{r\text{-}\sigma}}$ when this extension-ranking semantics can compare two sets. 
 
Next, we  look at the principles that the $\mathsf{r\text{-}c\text{-}\sigma}$ extension-ranking semantics satisfy. Starting with the $\sigma$-generalisation principle, we see that the cardinality-based versions of the extension-ranking semantics $\mathsf{r\text{-}c\text{-}\sigma}$ are also generalisations of the extension semantics. 
\begin{proposition}
    $\mathsf{r\text{-}c\text{-}\sigma}$ satisfies $\sigma$-generalisation $\sigma \in \{\ad,\co,\gr,\pr,\sst\}$.
\end{proposition}

So, both $\mathsf{r\text{-}\sigma}$ and $\mathsf{r\text{-}c\text{-}\sigma}$ are generalisations of extension semantics. Therefore, both families of approaches return the same most plausible sets, the $\sigma$-extensions. Note that $\mathsf{r\text{-}\sigma}$ does not return total preorders, since the subsets relation can yield incompatibilities, while the cardinality-based versions return total preorders, since we can always compare the number of elements inside two sets. 
Looking at the other properties, we see that $\mathsf{r\text{-}c\text{-}\sigma}$ satisfies composition. 
\begin{proposition}
    $\mathsf{r\text{-}c\text{-}\sigma}$ satisfies composition for $\sigma \in \{\ad,\co,\gr,\pr,\sst\}$.
\end{proposition}
However, for decomposition we can see that $\mathsf{r\text{-}c\text{-}\sigma}$ violates it for $\sigma \in \{\ad,\co,\gr,\pr,\sst\}$. 
\begin{example}\label{ex:decomp_rc}
    Assume $F_{15} = (\{a,b,c,d,e\}, \{(a,b),(a,c),(d,e)\})$ as depicted in Figure \ref{tikz:decomp_rc} and $E = \{d,e\}$ and $E' = \{a,b,c\}$. 
     
\begin{figure}
    \centering
 
 \scalebox{1}{
\begin{tikzpicture}

\node (a) at (0,0) [circle, draw,minimum size= 0.65cm] {$a$};
\node (b) at (2,0) [circle, draw,minimum size= 0.65cm] {$b$};
\node (c) at (2,-1) [circle, draw,minimum size= 0.65cm] {$c$};

\node (d) at (4,0) [circle, draw,minimum size= 0.65cm] {$d$};
\node (e) at (6,0) [circle, draw,minimum size= 0.65cm] {$e$};

\path[->] (a) edge  (b);
\path[->] (a) edge  (c);
\path[->] (d) edge  (e);

\end{tikzpicture}}
   \caption{AF $F_{15}$ from Example \ref{ex:decomp_rc}.}
    \label{tikz:decomp_rc}
\end{figure}
    
    We can split this AF into two disjoint AFs $F_{15,1}= (\{a,b,c\},\{(a,b), (a,c)\})$ and $F_{15,2}= (\{d,e\},\{(d,e)\})$. Since $E'$ has two conflicts and $E$ has only one it is clear that $E \sqsupset^{\CF}_{F_{15}} E'$ and since every $\mathsf{r\text{-}c\text{-}\sigma}$ is based on $\sqsupseteq^\CF$ the same holds for $\sigma \in \{\ad,\co,\gr,\pr,\sst\}$. If we now take a closer look at the sub-AFs $F_{15,1}$ and $F_{15,2}$, we see that $E \cap \{a,b,c\}$ is conflict-free in $F_{15,1}$, while $E' \cap \{a,b,c\}$ has two conflicts, so  $E \cap \{a,b,c\} \sqsupset^{\CF}_{F_{15,1}} E' \cap \{a,b,c\}$. However, $E'\cap \{d,e\}$ is conflict-free in $F_{15,2}$ and $E \cap \{d,e\}$ has one conflict in $F_{15,2}$, so $E' \cap \{d,e\} \sqsupset^{\CF}_{F_{15,2}} E \cap \{d,e\}$ and this shows that $\mathsf{c\text{-}\CF}$ violates decomposition and therefore $\mathsf{r\text{-}c\text{-}\sigma}$ also violates decomposition. 
\end{example}
So, while we gain total preorders using cardinality-based extension-ranking semantics, we lose decomposition. In a sense, we lose the local view of each independent AF.

Based on our results for $\rAd$, we can show that the admissible version of the cardinality-based extension-ranking semantics also only satisfies weak reinstatement but not the strong version.
\begin{proposition}
     $\mathsf{r\text{-}c\text{-}\ad}$ satisfies weak reinstatement and violates strong reinstatement.
\end{proposition}


For the remaining semantics, we can show that the cardinality-based extension-ranking semantics satisfies strong reinstatement. 
\begin{proposition}
$\mathsf{r\text{-}c\text{-}\sigma}$ satisfies strong reinstatement for $\sigma \in \{\co,\gr,\pr,\sst\}$.
\end{proposition}

Addition robustness is only satisfied by $\rAd$. Similar results hold for the cardinality-based version $\mathsf{r\text{-}c\text{-}\ad}$.
\begin{proposition}
   $\mathsf{r\text{-}c\text{-}\ad}$ satisfies addition robustness.
\end{proposition}
    

For $\sigma \in \{\co,\gr,\pr,\sst\}$ the counterexamples used before (Examples \ref{ex:pr_robustness} and \ref{ex:co_violates_robustness}) can be used again to show that $\mathsf{r\text{-}c\text{-}\sigma}$ also violates addition robustness. For example, in Example \ref{ex:co_violates_robustness} it holds that $\DN_{F_{12}}(E=\{a\})= \emptyset$ and $\DN_{F_{12}}(E'= \{b,c\})= \{d\}$. After adding an attack originating from $E$ to $E'$, the consistently defended arguments of $E$ changed to $\DN_{F_{12}'}(E)= \{d,f\}$, while the consistently defended arguments of $E'$ remained the same, i.\,e., $\DN_{F_{12}'}(E')= \{d\}$. So the relation between $E$ and $E'$ in $F_{12}'$ changed to $E' \sqsupset^{\mathsf{r\text{-}c\text{-}\sigma}}_{F_{12}'} E$.

Finally, it is clear by definition that $\mathsf{r\text{-}c\text{-}\sigma}$ satisfies syntax independence for $\sigma \in \{\ad,\co,\gr,\pr,\sst\}$, since the names of the individual arguments are irrelevant for calculating the rankings. 

So, the cardinality-based extension-ranking semantics do behave similar to their subset variations, we just trade decomposition for totality. Thus, depending on the task at hand, we can choose which version is more appropriate by evaluating between totality and decomposition. 

Table \ref{tab:principle_rcsigma} summarises the results of this subsection.
\begin{table}[]
     \centering
      \resizebox{\textwidth}{!}{
     \begin{tabular}{|l||c|c|c|c|c|}
     \hline
     Principles & $\mathsf{r\text{-}c\text{-}\ad}$ & $\mathsf{r\text{-}c\text{-}\co}$& $\mathsf{r\text{-}c\text{-}\gr}$ &  $\mathsf{r\text{-}c\text{-}\pr}$  &$\mathsf{r\text{-}c\text{-}\sst}$ \\
     \hline
        $\sigma$-generalisation  & \checkmark  ($\sigma= \ad$)&  \checkmark ($\sigma= \co$)& \checkmark ($\sigma= \gr$)& \checkmark ($\sigma= \pr$)& \checkmark ($\sigma= \sst$)\\
        composition  & \checkmark &  \checkmark & \checkmark & \checkmark & \checkmark  \\
        decomposition  & X  & X & X & X & X \\
        weak reinstatement & \checkmark &  \checkmark & \checkmark & \checkmark & \checkmark \\
        strong reinstatement  & X &  \checkmark & \checkmark & \checkmark & \checkmark\\
        addition robustness & \checkmark &  X & X & X & X   \\
        syntax independence & \checkmark &  \checkmark & \checkmark & \checkmark & \checkmark \\ \hline
     \end{tabular} }
     \caption{Principles satisfied by $\mathsf{r\text{-}c\text{-}\sigma}$.}
     \label{tab:principle_rcsigma}
 \end{table}


\subsection{Voting Methods}\label{sec:voting}
\emph{Voting theory} is an area of research that focuses on finding the winner of an election. 
An \emph{election} consists of a set $\mathcal{A}$ of $m$ \emph{alternatives} such as the mayoral candidates, and a set of \emph{voters} $\mathcal{N}= \{1,2,\dots, n\}$. Each voter $i$ cast a \emph{vote} as a linear ordering $\succsim_i$ of $\mathcal{A}$, where $\succsim_i$ is \emph{transitive} (if $a \succsim_i b$ and $b \succsim_i c$, then $a \succsim_i$ for all $a,b,c \in \mathcal{A}$), \emph{complete} ($a \succsim_i b$ or $b \succsim_i a$ for all $a\neq b \in \mathcal{A}$), \emph{reflexive} ($a \succsim_i a$ for all $a \in \mathcal{A}$), and \emph{antisymmetric} (if $a \succsim_i b$ and $a \succsim_i b$, then $a =b$, for all $a, b \in \mathcal{A}$). The linear order $\succsim_i$ represents voter's preference over the alternatives, i.\,e., $a \succsim_i b$ means that voter $i$ prefers or votes for alternative $a$ over alternative $b$. 
A \emph{voting rule} determines the winner(s) of an election, that is, the most preferred alternative based on the votes of the voters. A number of voting rules can be found in the literature, such as the \emph{plurality voting rule}, which selects as the winner the alternatives that is placed at the top of the most votes, or the \emph{Borda rule}, where alternatives receive points according to their position in each vote, then the alternative with the highest sum of points wins the election. For more information on voting theory and other voting rules, we refer interested readers to \cite{DBLP:reference/choice/2016}. 
In this paper we focus on the \emph{Copeland rule}, which looks at the number of pairwise comparisons that an alternative wins versus the number of pairwise comparisons that the alternative loses. The alternative with the best win/loss record wins the overall election. 

 Konieczny et al. \cite{DBLP:conf/ecsqaru/KoniecznyMV15} were inspired by the Copeland rule to propose methods for comparing $\sigma$-extensions. \emph{Pairwise comparison criteria} were proposed, and a $\sigma$-extension $E$ is favoured over $E'$ if $E$ has a better win/loss record than $E'$ wrt. these criteria. In the rest of this subsection, we generalise Konieczny et al.'s \cite{DBLP:conf/ecsqaru/KoniecznyMV15} definitions to propose a family of extension-ranking semantics that aggregates base relations differently from the lexicographic combination. Our previously defined base relations take on the role of pairwise comparison criteria and we aggregate the pairwise comparisons into a single preorder using a rule such as the \emph{Copeland Rule} (but other aggregation rules can be applied in the same way).
\begin{definition}
Let $F=(A,R)$ be an AF, $E,E' \subseteq A$ and $(\sqsupseteq^{\tau_1}_F,\dots,\sqsupseteq^{\tau_n}_F)$ be a sequence of base relations. We define the \emph{Copeland-based combination} $\cope(\sqsupseteq^{\tau_1}_F,\dots,\sqsupseteq^{\tau_n}_F)$ via:
\begin{align*}
&E \sqsupseteq^{\cope(\tau_1,\dots,\tau_n)}_{F} E' \text{ iff }
\\ &\Sigma^n_{i=1} |\{\mathcal{E} \subseteq A | E \sqsupseteq^{\tau_i}_{F}  \mathcal{E}\}| - |\{\mathcal{E'} \subseteq A | \mathcal{E'} \sqsupseteq^{\tau_i}_{F} E\}| \geq 
\\ &\Sigma^n_{i=1} |\{\mathcal{E} \subseteq A | E' \sqsupseteq^{\tau_i}_{F} \mathcal{E} \}| - |\{\mathcal{E'} \subseteq A | \mathcal{E'} \sqsupseteq^{\tau_i}_{F} E'\}|     
\end{align*}
\end{definition}
In other words, a set $E$ is more plausible to be accepted than $E'$ if the number of sets ranked worse is greater than the number of sets ranked worse than $E'$. 
This can be translated to say that if we consider all base relations, $E$ has a better balance of wins and losses than $E'$. 
\begin{example}\label{ex:cope_CF_UD}
Consider $F_5$ from Example \ref{ex:comp}. Let us look at the sets $\{h,i\}$ and $\{i\}$, then $\{h,i\}$ wins only against $\{h,i,j\}$ wrt. $\sqsupseteq^\CF_{F_5}$ while that set loses against $5$ other sets $(\{h\}, \{i\}, \{j\}, \{h,j\}, \emptyset)$, so $\{h,i\}$ receives a value of $-4$, while $\{i\}$ receives a value of $3$. So $\{i\} \sqsupset^{\cope(\CF)}_{F_5} \{h,i\}$. However, for $\sqsupseteq^\UD_{F_5}$ we have the values: $-2$ for $\{h,i\}$ and $-2$ for $\{i\}$, hence $\{h,i\} \equiv^{\cope(\UD)}_{F_5} \{i\}$. So the base relations individually rank these two sets differently. If we combine these scores we get: $-6$ for $\{h,i\}$ and $1$ for $\{i\}$, thus $$\{i\} \sqsupset^{\cope(\CF,\UD)}_{F_5} \{h,i\}$$ 

The full extension ranking of $\sqsupseteq_{F_{5}}^{\cope(\CF,\UD)}$ is depicted as a lattice in Figure \ref{tikz:lattice_cope_CF_UD} on the right. Comparing the extension ranking of $\sqsupseteq_{F_{5}}^{\cope(\CF,\UD)}$ with $\sqsupseteq_{F_{5}}^{\rAd}$ we see a few differences. For example the set containing every argument $\{h,i,j\}$ is more plausible to be accepted than $\{i,j\}$ wrt. $\sqsupseteq_{F_{5}}^{\cope(\CF,\UD)}$, while for $\sqsupseteq_{F_{5}}^{\rAd}$, $\{h,i,j\}$ is the least plausible set.

So, these two different aggregation methods induce different extension rankings.
\begin{figure}
    \begin{center}
    \begin{minipage}[t]{0.4\textwidth}
 \scalebox{1}{
\begin{tikzpicture}

\node (abc) at (0,0) [] {$\{h,i,j\}$};
\node (ab) at (-2,-1) [] {$\{h,i\}$};
\node (bc) at (2,-1) [] {$\{i,j\}$};
\node (b) at (-2,-2) [] {$\{i\}$};
\node (c) at (2,-2) [] {$\{j\}$};
\node (ac) at (-2,-3) [] {$\{h,j\}$};
\node (empty) at (0,-3) [] {$\emptyset$};
\node (a) at (2,-3) [] {$\{h\}$};

\path[->] (abc) edge  (ab);
\path[->] (abc) edge  (bc);

\path[->] (bc) edge  (b);
\path[->] (bc) edge  (c);

\path[->] (ab) edge  (b);
\path[->] (ab) edge  (c);

\path[->] (b) edge  (ac);
\path[->] (b) edge  (empty);
\path[->] (b) edge  (a);

\path[->] (c) edge  (ac);
\path[->] (c) edge  (empty);
\path[->] (c) edge  (a);

\path[-] (ac) edge  (empty);
\path[-] (empty) edge  (a);
\end{tikzpicture}}
    \end{minipage}
     \begin{minipage}[t]{0.4\textwidth}
 \scalebox{1}{
\begin{tikzpicture}

\node (abc) at (0,-1) [] {$\{h,i,j\}$};
\node (ab) at (0,-2) [] {$\{h,i\}$};
\node (bc) at (0,0) [] {$\{i,j\}$};
\node (b) at (-1,-3) [] {$\{i\}$};
\node (c) at (1,-3) [] {$\{j\}$};
\node (ac) at (-2,-4) [] {$\{h,j\}$};
\node (empty) at (0,-4) [] {$\emptyset$};
\node (a) at (2,-4) [] {$\{h\}$};

\path[->] (bc) edge  (abc);

\path[->] (abc) edge  (ab);

\path[->] (ab) edge  (c);
\path[->] (ab) edge  (b);

\path[-] (b) edge  (c);

\path[->] (b) edge  (ac);
\path[->] (b) edge  (a);
\path[->] (b) edge  (empty);

\path[->] (c) edge  (ac);
\path[->] (c) edge  (a);
\path[->] (c) edge  (empty);

\path[-] (empty) edge  (a);
\path[-] (empty) edge  (ac);

\end{tikzpicture}}
    \end{minipage}
   \caption{Lattices depicting $\sqsupseteq^{\rAd}_{F_5}$ (left) and $\sqsupseteq_{F_{5}}^{\cope(\CF,\UD)}$ (right) from Example \ref{ex:cope_CF_UD}.}\label{tikz:lattice_cope_CF_UD}
           
    \end{center}
\end{figure}
\end{example}
The two extension-ranking semantics $\sqsupseteq^{\cope(\nonatt)}$ and $\sqsupseteq^{\cope(\strdef)}$ are generalisations of the \emph{Copeland-based extensions} $\mathsf{CBE}_{\sigma,\gamma}$ defined by Konieczny et al. \cite{DBLP:conf/ecsqaru/KoniecznyMV15}. Although $\sqsupseteq^{\nonatt}$ and $\sqsupseteq^{\strdef}$ are not \emph{transitive} relations $\sqsupseteq^{\cope(\nonatt)}$ and $\sqsupseteq^{\cope(\strdef)}$ are still transitive and reflexive. In general, by using \cope\ we use any sequence of base relations to construct an extension ranking.
\begin{proposition}\label{prop:cope_extension-ranking}
Let $F=(A,R)$ be an AF. 
    $\sqsupseteq^{\cope(\tau_1,\dots, \tau_n)}_F$ is \emph{reflexive} and \emph{transitive} for any sequence of base relations $(\tau_1,\dots, \tau_n)$.
\end{proposition}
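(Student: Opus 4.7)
The plan is to reduce both reflexivity and transitivity of $\sqsupseteq^{\cope(\tau_1,\dots,\tau_n)}_F$ to the corresponding properties of the standard order $\geq$ on $\mathbb{Z}$, by observing that the Copeland construction associates each set $E \subseteq A$ with a single integer score and then compares these scores. Concretely, I would define the Copeland score function
\begin{equation*}
    s(E) \;=\; \sum_{i=1}^{n}\Bigl(\bigl|\{\mathcal{E} \subseteq A \mid E \sqsupseteq^{\tau_i}_F \mathcal{E}\}\bigr| - \bigl|\{\mathcal{E}' \subseteq A \mid \mathcal{E}' \sqsupseteq^{\tau_i}_F E\}\bigr|\Bigr),
\end{equation*}
and immediately note that, by the definition of $\cope$, for all $E, E' \subseteq A$ we have $E \sqsupseteq^{\cope(\tau_1,\dots,\tau_n)}_F E'$ iff $s(E) \geq s(E')$. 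Since $A$ is finite, both counts in the definition of $s$ are finite, so $s(E) \in \mathbb{Z}$ is well-defined for every $E \subseteq A$.

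For reflexivity, I would simply observe that $s(E) \geq s(E)$ trivially holds for every $E \subseteq A$, hence $E \sqsupseteq^{\cope(\tau_1,\dots,\tau_n)}_F E$. For transitivity, I would assume $E \sqsupseteq^{\cope(\tau_1,\dots,\tau_n)}_F E'$ and $E' \sqsupseteq^{\cope(\tau_1,\dots,\tau_n)}_F E''$, unfold the definitions to obtain $s(E) \geq s(E')$ and $s(E') \geq s(E'')$, and conclude $s(E) \geq s(E'')$ by transitivity of $\geq$ on $\mathbb{Z}$; this gives $E \sqsupseteq^{\cope(\tau_1,\dots,\tau_n)}_F E''$ as required.

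There is no real obstacle here: the key conceptual point is simply that the Copeland aggregation converts each base relation (which, as highlighted for $\nonatt$ and $\strdef$, need not itself be transitive) into a numerical contribution to the score, so that the induced order inherits reflexivity and transitivity from $(\mathbb{Z},\geq)$ regardless of the algebraic properties of the underlying $\sqsupseteq^{\tau_i}_F$. The proof is therefore essentially a two-line verification once the score function $s$ is introduced.
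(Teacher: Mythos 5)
Your proof is correct and takes essentially the same approach as the paper: the paper's own argument also reduces both properties to the win/loss score (your $s(E)$) and invokes reflexivity and transitivity of the ordering on integers. Your version is, if anything, slightly cleaner for making the score function explicit and noting that this is exactly why transitivity of $\sqsupseteq^{\cope(\tau_1,\dots,\tau_n)}_F$ does not depend on transitivity of the underlying base relations.
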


While \lex\ fully retains the input relation when applied to a single base relation, i.\,e., for any AF $F=(A,R)$ it holds that $\lex(\sqsupseteq^\tau_F)= \sqsupseteq^\tau_F$, this behaviour does not always hold for \cope. 

\begin{example}\label{ex:cope_not_id}
Consider $F_4$ from Example \ref{ex:af_example} and sets $\{b,d\}$ and $\{c,f\}$. 
Argument $b$ is not attacked by $\{c,f\}$ while $d$ attacked both $c$ and $f$ this implies $\{b,d\} \sqsupset^{\nonatt}_{F_4} \{c,f\}$, however using \cope\ we get $$\{c,f\} \sqsupset^{\cope(\nonatt)}_{F_4} \{b,d\}$$ So, the relationship between $\{b,d\}$ and $\{c,f\}$ changes by using \cope.

Next, consider sets $\{b,f\}$ and $\{c,f\}$, now arguments $b$ and $f$ are strongly defended by $\{b,f\}$ from $\{c,f\}$, while $c$ and $f$ are not strongly defended by $\{c,f\}$ from $\{b,f\}$, thus $\{b,f\} \sqsupset^{\strdef}_{F_4} \{c,f\}$. However, using \cope\ we get: $$\{c,f\} \sqsupset^{\cope(\strdef)}_{F_4} \{b,f\}$$ So, again the relationship between the two sets changes.
\end{example}

\begin{example}
    Consider $F_5$ from Example \ref{ex:comp} and sets $\{h,i\}$ and $\{i,j\}$. These two sets are incomparable wrt. \CF, i.e. $\{h,i\} \asymp^{\CF}_{F_5} \{i,j\}$. However, with the help of \cope\ these two sets are comparable and equally plausible $\{h,i\} \equiv^{\cope(\CF)}_{F_5} \{i,j\}$. Hence, with the help of \cope\ we can now compare two previously incomparable sets. But the relationship between these two sets has changed. 
\end{example}
The two examples above show that \cope\ cannot be used as an identity function for every base relation, but $\sqsupseteq^{\nonatt}$, $\sqsupseteq^{\strdef}$, and $\sqsupseteq^\CF$ lack important properties, namely \emph{transitivity} ($\sqsupseteq^{\nonatt}$, $\sqsupseteq^{\strdef}$) and \emph{totality} ($\sqsupseteq^\CF$). If the underlying base relation $\tau$ satisfies these two properties, then $\cope(\sqsupseteq^\tau)$ coincides with $\sqsupseteq^\tau$. 
\begin{proposition}\label{prop:cope_not_ID}
Let $F=(A,R)$ be an AF. 
    If $\sqsupseteq^\tau$ is total and transitive, then $\cope(\sqsupseteq^\tau_F)=\sqsupseteq^\tau_F$.
\end{proposition}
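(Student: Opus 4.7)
The plan is to prove both inclusions of the equality $\sqsupseteq^{\cope(\tau)}_F = \sqsupseteq^\tau_F$ via a ``win-minus-loss'' analysis. For a single base relation the Copeland condition reduces to comparing the scalar $s(E) := w(E) - l(E)$ where $w(E) = |\{\mathcal{E} \subseteq A \mid E \sqsupseteq^\tau_F \mathcal{E}\}|$ and $l(E) = |\{\mathcal{E} \subseteq A \mid \mathcal{E} \sqsupseteq^\tau_F E\}|$. So the goal becomes: $s(E) \geq s(E')$ iff $E \sqsupseteq^\tau_F E'$. Note that by assumption $\sqsupseteq^\tau_F$ is total, which in particular forces reflexivity (instantiate totality at $E=E'$).

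For the forward direction, assume $E \sqsupseteq^\tau_F E'$. I will use transitivity twice: any $\mathcal{E}$ with $E' \sqsupseteq^\tau_F \mathcal{E}$ satisfies $E \sqsupseteq^\tau_F \mathcal{E}$, yielding the set-inclusion $\{\mathcal{E} \mid E' \sqsupseteq^\tau_F \mathcal{E}\} \subseteq \{\mathcal{E} \mid E \sqsupseteq^\tau_F \mathcal{E}\}$, hence $w(E') \leq w(E)$. Symmetrically, $\{\mathcal{E} \mid \mathcal{E} \sqsupseteq^\tau_F E\} \subseteq \{\mathcal{E} \mid \mathcal{E} \sqsupseteq^\tau_F E'\}$, hence $l(E) \leq l(E')$. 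Subtracting gives $s(E) \geq s(E')$, which is exactly $E \sqsupseteq^{\cope(\tau)}_F E'$.

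For the reverse direction, I would argue by contrapositive: suppose $E \not\sqsupseteq^\tau_F E'$. By totality, this forces $E' \sqsupseteq^\tau_F E$, and strictly so (since $E \sqsupseteq^\tau_F E'$ fails). Applying the forward-direction argument to $E' \sqsupseteq^\tau_F E$ already gives $w(E') \geq w(E)$ and $l(E') \leq l(E)$. The main technical point of the proof is promoting one of these inequalities to a strict one, which I would obtain by exhibiting a distinguishing element: the set $E'$ itself. By reflexivity, $E' \in \{\mathcal{E} \mid E' \sqsupseteq^\tau_F \mathcal{E}\}$, but $E' \notin \{\mathcal{E} \mid E \sqsupseteq^\tau_F \mathcal{E}\}$ by our assumption $E \not\sqsupseteq^\tau_F E'$. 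Hence the inclusion is proper, giving $w(E') > w(E)$; combined with $l(E') \leq l(E)$ we obtain $s(E') > s(E)$, i.e. $E \not\sqsupseteq^{\cope(\tau)}_F E'$.

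The main obstacle will be precisely the strict-inequality step in the reverse direction; it is crucial that reflexivity (derived from totality) provides a witness separating $w(E)$ from $w(E')$, which is what ensures Copeland does not accidentally equate $E$ and $E'$ when they are in fact strictly ordered by $\tau$. Everything else is a matter of bookkeeping the cardinalities induced by the two set-inclusions provided by transitivity.
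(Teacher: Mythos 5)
Your proof is correct and follows essentially the same route as the paper's: the forward direction via the two transitivity-induced set inclusions (wins can only grow, losses can only shrink), and the reverse direction by using totality to reduce to the case $E' \sqsupset^{\tau}_F E$ and then showing the Copeland comparison must also be strict. The one point where you go beyond the paper is that strictness step: the paper merely asserts that $E' \sqsupset^{\tau}_F E$ forces $E' \sqsupset^{\cope(\tau)}_F E$, whereas you justify it with the explicit witness $E'$ itself (counted among $E'$'s wins by reflexivity, which does follow from totality read over all pairs including the diagonal, but excluded from $E$'s wins by assumption), so your write-up actually fills in a small gap left implicit in the published argument.
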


Next, we investigate compliance to our principles for some simple sequences of base relations. We start with $\cope(\sqsupseteq^\CF,\sqsupseteq^\UD)$. As we discussed earlier, the lexicographic combination of $\sqsupseteq^\CF$ and $\sqsupseteq^\UD$ generalises classical admissibility. Lexicographic combination is not the only aggregation method for which $\sqsupseteq^\CF$ and $\sqsupseteq^\UD$ generalise admissibility. Combining these two base relations with the Copeland-based combination also generalises admissibility. 
\begin{proposition}
    $\cope(\sqsupseteq^\CF,\sqsupseteq^\UD)$ satisfies $\ad$-generalisation.
    \end{proposition}
   
Since $\ad$-generalisation is satisfied, we know that $\cope(\sqsupseteq^\CF,\sqsupseteq^\UD)$ cannot be a generalisation of any other extension semantics.  

While $\cope(\sqsupseteq^\CF,\sqsupseteq^\UD)$ generalises admissibility, composition is violated. 
\begin{example}\label{ex:cope(CF,UD)_comp}
    Let $F_{15}= (\{a,b,c,d,e,f,g\}, \{(a,b),(b,a),(c,a),(a,c),(b,c),\\(c,b),(d,d),(d,e),(e,d),(d,f),(f,g)\}$ be an AF, as depicted in Figure \ref{tikz:cope(CF,UD)_comp}. $F_{15}$ can be split into two disjoint AFs $F_{15,1}= (\{a,b,c\}, \{(a,b),(b,a),(a,c),(b,c),\\(c,b)\})$ and $F_{15,2}=(\{d,e,f,g\}, \{(d,d),(d,e),(e,d),(d,f),(f,g)\}$. Consider the sets $\{a,d,e\}$ and $\{b,e,g\}$, then we have $\{a\}~\equiv^{\cope(\CF,\UD)}_{F_{15,1}} \{b\}$ and  \\$\{e,g\}~ \sqsupset^{\cope(\CF,\UD)}_{F_{15,2}} \{d,e\}$. So, $\{b,e,g\}$ should be at least as plausible to be accepted as $\{a,d,e\}$ in $F$ wrt. $\sqsupseteq^{\cope(\CF,\UD)}_{F_{15}}$. However, we have $$\{a,d,e\} \sqsupset^{\cope(\CF,\UD)}_{F_{15}} \{b,e,g\}$$ So, composition is violated.
 
      \begin{figure}
    \centering
 
 \scalebox{1}{
\begin{tikzpicture}

\node (a) at (0,0) [circle, draw,minimum size= 0.65cm] {$a$};
\node (b) at (2,0) [circle, draw,minimum size= 0.65cm] {$b$};
\node (c) at (1,2) [circle, draw,minimum size= 0.65cm] {$c$};

\node (d) at (3,0) [circle, draw,minimum size= 0.65cm] {$d$};
\node (e) at (5,0) [circle, draw,minimum size= 0.65cm] {$e$};

\node (f) at (5,1) [circle, draw,minimum size= 0.65cm] {$f$};
\node (g) at (6,1) [circle, draw,minimum size= 0.65cm] {$g$};

\path[->,bend left] (a) edge  (b);
\path[->,bend left] (b) edge  (a);
\path[->,bend left] (a) edge  (c);
\path[->,bend left] (c) edge  (a);
\path[->,bend left] (c) edge  (b);
\path[->,bend left] (b) edge  (c);

\path[->,bend left] (d) edge  (e);
\path[->,bend left] (e) edge  (d);
\path[->, loop] (d) edge  (d);

\path[->] (d) edge  (f);
\path[->] (f) edge  (g);

\end{tikzpicture}}
   \caption{AF $F_{15}$ from Example \ref{ex:cope(CF,UD)_comp}.}
    \label{tikz:cope(CF,UD)_comp}
\end{figure}
\end{example}

Decomposition is also violated by $\cope(\sqsupseteq^\CF,\sqsupseteq^\UD)$.
\begin{example}\label{ex:cope(CF,UD)_decomp}
     Let $F_{16}= (\{a,b,c,d,e\}, \{(a,b),(a,c),(d,e)\})$ be an AF as depicted in Figure \ref{tikz:cope(CF,UD)_decomp}. This AF can be partitioned into two AFs $F_{16,1}=(\{a,b,c\},\{(a,b),(a,c)\})$ and $F_{16,2}=(\{d,e\}, \{(d,e)\})$. Consider sets $\{a,e\}$ and $\{c,d\}$, then $\{c,d\} \equiv^{\cope(\CF,\UD)}_{F_{16}} \{a,e\}$ and $\{d\}\sqsupset^{\cope(\CF,\UD)}_{F_{16,2}} \{e\}$. However, we also have  $\{a\}\sqsupset^{\cope(\CF,\UD)}_{F_{16,1}} \{c\}$ and thus violating decomposition.

         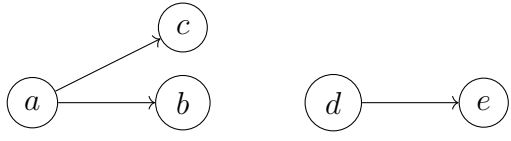
\begin{figure}
    \centering
 
 \scalebox{1}{
\begin{tikzpicture}

\node (a) at (0,0) [circle, draw,minimum size= 0.65cm] {$a$};
\node (b) at (2,0) [circle, draw,minimum size= 0.65cm] {$b$};
\node (c) at (2,1) [circle, draw,minimum size= 0.65cm] {$c$};

\node (d) at (4,0) [circle, draw,minimum size= 0.65cm] {$d$};
\node (e) at (6,0) [circle, draw,minimum size= 0.65cm] {$e$};

\path[->] (a) edge  (b);
\path[->] (a) edge  (c);

\path[->] (d) edge  (e);

\end{tikzpicture}}
   \caption{AF $F_{16}$ from Example \ref{ex:cope(CF,UD)_decomp}.}
    \label{tikz:cope(CF,UD)_decomp}
\end{figure}
\end{example}

Adding a defended argument to a set does not change the number of comparisons the set wins with respect to its conflicts or undefended arguments. Thus weak reinstatement is satisfied.
\begin{proposition}\label{prop:cope_CFUD_weak_reinst}
     $\cope(\sqsupseteq^\CF,\sqsupseteq^\UD)$ satisfies weak reinstatement.
\end{proposition}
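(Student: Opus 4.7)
The plan is to show that the conditions of weak reinstatement yield two set-theoretic facts about $\CF_F$ and $\UD_F$, and then use those facts to compare the Copeland scores contributed by each base relation. Fix $F=(A,R)$, $E\subseteq A$ and $a\in \mathcal{F}_F(E)\setminus E$ with $a\notin E^-\cup E^+$. Let $E'=E\cup\{a\}$.

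First I would establish the ``no-conflict-change'' step: since $a\notin E^-\cup E^+$, the argument $a$ shares no attack (in either direction) with any member of $E$, and $a$ cannot self-attack because $a\in \mathcal{F}_F(E)$ would force its self-attacker to be counterattacked by $E$, but a self-attacker cannot be counterattacked in a way that also counterattacks itself — the cleaner argument is simply that if $(a,a)\in R$ then $a\in a^-\subseteq E^-$ after we note $a$ would need to be in $E^-$ to give anyone a chance to defend it; in any case, any pair in $\CF_F(E')\setminus\CF_F(E)$ would involve $a$ together with some $b\in E$, forcing $a\in E^-\cup E^+$, contradiction. Hence $\CF_F(E')=\CF_F(E)$.

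Next I would prove the ``undefended-shrinks'' step. Adding an argument can only enlarge the characteristic function, so $\mathcal{F}_F(E)\subseteq \mathcal{F}_F(E')$; since $a\in\mathcal{F}_F(E)\subseteq\mathcal{F}_F(E')$, the element $a$ contributes nothing to $\UD_F(E')$. Therefore
\[
\UD_F(E')=(E\cup\{a\})\setminus\mathcal{F}_F(E')=E\setminus\mathcal{F}_F(E')\subseteq E\setminus\mathcal{F}_F(E)=\UD_F(E).
\]

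Finally I would translate these two facts into a Copeland-score comparison. From $\CF_F(E')=\CF_F(E)$ it is immediate that for every $X\subseteq A$, $E\sqsupseteq^{\CF}_F X$ iff $E'\sqsupseteq^{\CF}_F X$, and symmetrically $X\sqsupseteq^{\CF}_F E$ iff $X\sqsupseteq^{\CF}_F E'$, so the $\CF$-contribution to the Copeland score of $E$ equals that of $E'$. From $\UD_F(E')\subseteq\UD_F(E)$ and transitivity of $\subseteq$ we get $\{X\mid E\sqsupseteq^{\UD}_F X\}\subseteq\{X\mid E'\sqsupseteq^{\UD}_F X\}$ (wins weakly increase) and $\{X\mid X\sqsupseteq^{\UD}_F E'\}\subseteq\{X\mid X\sqsupseteq^{\UD}_F E\}$ (losses weakly decrease), so the $\UD$-contribution for $E'$ is at least that for $E$. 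Summing the two contributions gives Copeland score of $E'$ $\geq$ Copeland score of $E$, i.e.\ $E'\sqsupseteq^{\cope(\CF,\UD)}_F E$.

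No step here looks like a serious obstacle; the only mild delicacy is making the $\CF_F(E')=\CF_F(E)$ claim airtight when $a$ might a priori self-attack, which is why I would emphasise the assumption $a\notin E^-\cup E^+$ combined with $a\in\mathcal{F}_F(E)$ to rule this case out cleanly. Everything else is bookkeeping on subset relations and counting wins vs.\ losses in the Copeland sum.
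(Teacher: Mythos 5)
Your proof is correct and follows the same overall strategy as the paper's: analyse the contribution of each base relation to the Copeland score separately, show the $\CF$-contribution is unchanged and the $\UD$-contribution does not get worse, and sum. Two points of comparison are worth making. First, for the $\UD$ step the paper asserts the \emph{equality} $\UD_F(E)=\UD_F(E\cup\{a\})$, whereas you only claim the inclusion $\UD_F(E\cup\{a\})\subseteq\UD_F(E)$; your version is the right one, since adding $a$ may newly defend previously undefended members of $E$ (e.g.\ $E=\{b\}$ with $b$ attacked by $c$ and $a$ an unattacked attacker of $c$), in which case the inclusion is strict and the paper's equality fails. The inclusion is all that weak reinstatement needs, and your win/loss bookkeeping (wins weakly increase, losses weakly decrease) handles it cleanly; the paper's stronger claim is only used afterwards to argue $E\cup\{a\}\equiv^{\cope(\CF,\UD)}_F E$ for the strong-reinstatement remark, which your argument deliberately does not (and cannot in general) deliver. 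Second, you explicitly rule out the case $(a,a)\in R$ when establishing $\CF_F(E\cup\{a\})=\CF_F(E)$ — correctly, via $a\in\mathcal{F}_F(E)$ forcing $a\in E^+$ if $a$ self-attacked — a case the paper's proof silently glosses over; tighten the phrasing of that sub-argument (the clean statement is that $E$ must attack every attacker of $a$, so a self-attack would put $a\in E^+$), but the idea is sound.
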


In the proof for Proposition \ref{prop:cope_CFUD_weak_reinst}, we have shown that $E \cup \{a\} \equiv^{\cope(\CF,\UD)}_F E$ holds, so strong reinstatemant is violated. 

Despite the fact that ${\cope(\sqsupseteq^\CF,\sqsupseteq^\UD)}$ satisfies $\ad$-generalisation, addition robustness is violated.
\begin{example}\label{ex:cope_cfud_addrob}
    Let $F_{17}=(A,R)=(\{a,b,c,d,e,f,g\},\{(b,c),(b,d), (g,f),\\(g,e),(f,e),(e,f)\})$ be an AF, as depicted in Figure \ref{tikz:cope_cfud_addrob}. Consider sets $\{a,b,c,d\}$ and $\{a,e,f\}$, then $\{a,e,f\} \sqsupset^{\cope(\CF,\UD)}_{F_{17}} \{a,b,c,d\}$, therefore we can add the attack between $a$ and $b$. So consider AF $F_{17}' = (A, R \cup \{(a,b)\})$, if addition robustness is satisfied, then $\{a,e,f\} \sqsupset^{\cope(\CF,\UD)}_{F_{17}'} \{a,b,c,d\}$ has to hold as well, but this is not the case since $$\{a,b,c,d\} \sqsupset^{\cope(\CF,\UD)}_{F_{17}'} \{a,e,f\}$$ Hence, addition robustness is violated. 
  \begin{figure}
    \centering
 
 \scalebox{1}{
\begin{tikzpicture}

\node (a) at (0,0) [circle, draw,minimum size= 0.65cm] {$a$};
\node (b) at (2,0) [circle, draw,minimum size= 0.65cm] {$b$};
\node (c) at (3,1) [circle, draw,minimum size= 0.65cm] {$c$};
\node (d) at (3,-1) [circle, draw,minimum size= 0.65cm] {$d$};

\node (e) at (5,-1) [circle, draw,minimum size= 0.65cm] {$e$};
\node (f) at (7,-1) [circle, draw,minimum size= 0.65cm] {$f$};
\node (g) at (6,1) [circle, draw,minimum size= 0.65cm] {$g$};

\path[->, dashed] (a) edge  (b);
\path[->] (b) edge  (c);
\path[->] (b) edge  (d);

\path[->,bend left] (e) edge  (f);
\path[->,bend left] (f) edge  (e);
\path[->] (g) edge  (f);
\path[->] (g) edge  (e);

\end{tikzpicture}}
   \caption{AF $F_{17}$ from Example \ref{ex:cope_cfud_addrob}, where attack $(a,b)$ is added later to obtain $F_{17}'$.}
    \label{tikz:cope_cfud_addrob}
\end{figure}
\end{example}

In Example \ref{ex:cope_not_id}, we showed that $\sqsupseteq^{\cope(\nonatt)}$ and $\sqsupseteq^{\cope(\strdef)}$ are not equal to $\sqsupseteq^{\nonatt}$ and $\sqsupseteq^{\strdef}$, so it is interesting to study the properties of $\cope(\sqsupseteq^{\nonatt})$ and $\cope(\sqsupseteq^{\strdef})$. 

Proposition \ref{prop:A_min_nonatt,strdef} already shows that the set containing all arguments is among the best sets for $\sqsupseteq^{\nonatt}$ and $\sqsupseteq^{\strdef}$, this behaviour gives us an indicator that $\cope(\sqsupseteq^{\nonatt})$ and $\cope(\sqsupseteq^{\strdef})$ do not satisfy $\sigma$-generalisation for any conflict-free based extension semantics $\sigma$. 
\begin{example}
    Consider $F_5 = (\{h,i,j\},\{(h,i),(i,j)\})$ from Example \ref{ex:comp}. For both $\sqsupseteq^{\cope(\nonatt)}$ and $\sqsupseteq^{\cope(\strdef)}$ the set $\{h,i,j\}$ is among the best sets, i.\,e., $\{h,i,j\} \in \maxpl_{\cope(\tau)}(F_5)$ for $\tau \in \{\nonatt, \strdef\}$. Hence, $\sigma$-generalisation is violated for $\sigma \in \{\cf,\ad,\co,\gr,\pr,\st, \sst\}$.
\end{example}

Similar to $\cope(\sqsupseteq^\CF,\sqsupseteq^\UD)$, $\cope(\sqsupseteq^{\nonatt})$ and $\cope(\sqsupseteq^{\strdef})$ also violate composition.
\begin{example}\label{ex:cope_nonatt_comp}
    Consider $F_{18}= (\{a,b,c,d,e\}, \{(a,b),(b,a),(c,d),(c,e)\}$ as depicted in Figure \ref{tikz:cope_nonatt_comp}. $F_{18}$ can be splitted into two disjoint AFs $F_{18,1}= (\{a,b\}, \{(a,b),(b,a)\})$ and $F_{18,2}=(\{c,d,e\}, \{(c,d),(c,e)\}$. Consider sets $\{a,c\}$ and $\{b,c,d,e\}$. In $F_{18,1}$, $\{a\}$ wins the comparison wrt. $\sqsupseteq^{\nonatt}_{F_{18,1}}$ and $\sqsupseteq^{\strdef}_{F_{18,1}}$ against $\emptyset$ and $\{b\}$, while $\{b\}$ wins the comparison against $\emptyset$ and $\{a\}$, thus $\{a\}$ and $\{b\}$ have the same win/lose record in $F_{18,1}$ wrt. $\sqsupseteq^{\cope({\nonatt})}_{F_{18,1}}$ and $\sqsupseteq^{\cope({\strdef})}_{F_{18,1}}$. In $F_{18,2}$ similar holds, so $\{a\} \equiv^{\cope(\tau)}_{F_{18,1}} \{b\}$ and $\{c\} \equiv^{\cope(\tau)}_{F_{18,2}} \{c,d,e\}$ for $\tau \in \{\nonatt,\strdef\}$. In $F_{18}$ $\{a,c\}$ wins 27 comparisons and loses 15 comparisons wrt. $\sqsupseteq^{\nonatt}_{F_{18}}$ and $\sqsupseteq^{\strdef}_{F_{18}}$, $\{b,c,d,e\}$ wins 27 comparisons and loses 11 comparisons wrt. $\sqsupseteq^{\nonatt}_{F_{18}}$ and $\sqsupseteq^{\strdef}_{F_{18}}$. Thus, $\{b,c,d,e\}$ has a better win/lose ratio than $\{a,c\}$ and therefore $\{b,c,d,e\} \sqsupset^{\cope(\tau)}_{F_{18}} \{a,c\}$ for $\tau \in \{\nonatt,\strdef\}$ showing that composition is violated.  

      \begin{figure}
    \centering
 
 \scalebox{1}{
\begin{tikzpicture}

\node (a) at (0,0) [circle, draw,minimum size= 0.65cm] {$a$};
\node (b) at (1.5,0) [circle, draw,minimum size= 0.65cm] {$b$};
\node (c) at (3,0) [circle, draw,minimum size= 0.65cm] {$c$};
\node (d) at (4,1) [circle, draw,minimum size= 0.65cm] {$d$};
\node (e) at (4,0) [circle, draw,minimum size= 0.65cm] {$e$};

\path[->, bend left] (b) edge  (a);
\path[->, bend left] (a) edge  (b);

\path[->] (c) edge  (d);
\path[->] (c) edge  (e);

\end{tikzpicture}}
   \caption{AF $F_{18}$ from Example \ref{ex:cope_nonatt_comp}.}
    \label{tikz:cope_nonatt_comp}
\end{figure}
\end{example}

Decomposition is also violated by  $\cope(\sqsupseteq^{\nonatt})$ and $\cope(\sqsupseteq^{\strdef})$.
\begin{example}\label{ex:cope_nonatt_decomp}
     Let $F_{16}= (\{a,b,c,d,e\}, \{(a,b),(a,c),(d,e)\})$ be an AF from Example \ref{ex:cope(CF,UD)_decomp}. This AF can be partitioned into two AFs $F_{16,1}=(\{a,b,c\},\{(a,b),\\(a,c)\})$ and $F_{16,2}=(\{d,e\}, \{(d,e)\})$. Consider sets $\{a,b,e\}$ and $\{c,d\}$, then for $\tau \in \{\nonatt,\strdef\}$ we have $\{a,b,e\} \sqsupset^{\cope(\tau)}_{F_{16}} \{c,d\}$, but we also have $\{a,b\} \sqsupset^{\cope(\tau)}_{F_{16,1}} \{c\}$ and $\{d\} \sqsupset^{\cope(\tau)}_{F_{16,2}} \{e\}$.
\end{example}

Since $\sqsupseteq^{\nonatt}$ and $\sqsupseteq^{\strdef}$ only rely on the attackers, we show that adding a defended argument only increases the strength of a set. 
\begin{proposition}
    ${\cope(\sqsupseteq^\tau)}$ satisfies strong reinstatement for $\tau \in \{\nonatt,\\ \strdef\}$.
\end{proposition}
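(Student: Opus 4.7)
The goal is to show $w_\tau(E \cup \{a\}) > w_\tau(E)$ for each $\tau \in \{\nonatt, \strdef\}$, where $w_\tau(X) := |\{\mathcal{E} \subseteq A : X \sqsupseteq^\tau_F \mathcal{E}\}| - |\{\mathcal{E} \subseteq A : \mathcal{E} \sqsupseteq^\tau_F X\}|$ is the Copeland score entering the definition of $\sqsupseteq^{\cope(\tau)}_F$. The key preliminary observation is that the hypotheses $a \in \mathcal{F}_F(E)$, $a \notin E$ and $a \notin E^- \cup E^+$ jointly force $E \cap a^-_F = \emptyset$ (otherwise some $e \in E$ would attack $a$ and hence $a \in E^+$), $a \notin E^-_F$ (since $a \notin E^-$), and $a \notin a^-_F$ (a self-attack on $a$ would force $E$ to attack $a$, contradicting $a \notin E^+$). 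Writing $Y := E \cup \{a\}$, this yields $Y \cap Y^-_F = E \cap E^-_F$.

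Both $\sqsupseteq^\nonatt_F$ and $\sqsupseteq^\strdef_F$ are induced by integer-valued antisymmetric differences $\delta^\tau_X(\mathcal{E}) = -\delta^\tau_\mathcal{E}(X)$ (respectively $|X \setminus (X^-_F \cap \mathcal{E})| - |\mathcal{E} \setminus (\mathcal{E}^-_F \cap X)|$ and $|sd_F(X, \mathcal{E})| - |sd_F(\mathcal{E}, X)|$), so that $X \sqsupseteq^\tau_F \mathcal{E}$ iff $\delta^\tau_X(\mathcal{E}) \geq 0$. A routine rearrangement then shows that the contribution of each $\mathcal{E}$ to $w_\tau(Y) - w_\tau(E)$ equals $\mathrm{sign}(\delta^\tau_Y(\mathcal{E})) - \mathrm{sign}(\delta^\tau_E(\mathcal{E}))$ with $\mathrm{sign}\colon \mathbb{Z} \to \{-1, 0, +1\}$. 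It therefore suffices to show that $\delta^\tau_Y(\mathcal{E}) \geq \delta^\tau_E(\mathcal{E})$ for every $\mathcal{E}$ (so that each contribution is non-negative) and that the sign strictly increases for at least one $\mathcal{E}$. For $\tau = \nonatt$, direct expansion using the key observation gives
$$\delta^\nonatt_Y(\mathcal{E}) - \delta^\nonatt_E(\mathcal{E}) = 1 + \mathbf{1}[a \in \mathcal{E} \cap \mathcal{E}^-_F] \geq 1$$
for every $\mathcal{E} \subseteq A$; taking $\mathcal{E} = E$ yields $\delta^\nonatt_E(E) = 0$ and $\delta^\nonatt_Y(E) = 1$, so the sign strictly increases and the claim follows.

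For $\tau = \strdef$, I will first establish two inductive lemmas about $sd_F$: a monotonicity lemma stating $X \subseteq X' \Rightarrow sd_F(X, \mathcal{E}) \subseteq sd_F(X', \mathcal{E})$, and an anti-monotonicity lemma stating $\mathcal{E} \subseteq \mathcal{E}' \Rightarrow sd_F(X, \mathcal{E}') \subseteq sd_F(X, \mathcal{E})$. Both are proved by induction on $|X|$ following the recursive definition of strong defence, because a witnessing defender chain transfers verbatim when the defender set grows or the opponent set shrinks. From these one obtains $|sd_F(Y, \mathcal{E})| \geq |sd_F(E, \mathcal{E})|$ and $|sd_F(\mathcal{E}, Y)| \leq |sd_F(\mathcal{E}, E)|$, hence $\delta^\strdef_Y(\mathcal{E}) \geq \delta^\strdef_E(\mathcal{E})$ for every $\mathcal{E}$. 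For $\mathcal{E} = E$, the key observation gives $a^-_F \cap E = \emptyset$, so $a \in sd_F(Y, E)$ vacuously, and since $a \notin E \supseteq sd_F(E, E)$ the inclusion $sd_F(Y, E) \supseteq sd_F(E, E)$ is strict; together with $sd_F(E, Y) \subseteq sd_F(E, E)$ this yields $\delta^\strdef_Y(E) \geq \delta^\strdef_E(E) + 1 = 1$ and hence a strict sign increase. The main technical obstacle is the careful induction required for the two lemmas on $sd_F$; once these are in place, the Copeland-score argument mirrors the $\nonatt$ case exactly.
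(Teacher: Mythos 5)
Your proposal is correct and follows essentially the same route as the paper's proof: both establish that enlarging $E$ to $E \cup \{a\}$ can only improve the outcome of every pairwise comparison under $\sqsupseteq^\tau$ and that the comparison against $E$ itself strictly improves, which forces a strictly better Copeland score. Your version is the more careful one --- the explicit $\mathrm{sign}(\delta^\tau)$ bookkeeping makes the final strictness step airtight where the paper only asserts that ``$E \cup \{a\}$ wins against $E$'', and your two monotonicity lemmas for $sd_F$ supply the induction that the paper's ``similar to the $\nonatt$ case'' glosses over.
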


$\cope(\sqsupseteq^\tau)$ violates addition robustness for $\tau \in \{\nonatt, \strdef\}$.
\begin{example}\label{ex:cope_nonatt_addrob}
    Consider AF $F_{19}=(\{a,b,c,d,e,f\}, \{(c,a),(d,b)\})$ and the two sets $\{a,d,e\}$ and $\{a,b,e,f\}$. These two sets are equally plausible to be accepted wrt. $\cope(\sqsupseteq^\tau)$ for $\tau \in \{\nonatt,\strdef\}$, i.\,e., $\{a,d,e\} \equiv^{\cope(\tau)}_{F_{19}} \{a,b,e,f\}$. So, if addition robustness is satisfied, then the attack $(a,b)$ can be added and it should not hold that $\{a,b,e,f\} \sqsupset^{\cope(\tau)}_{F_{19}'} \{a,d,e\}$, where $F'= (\{a,b,c,d,e,f\}, \{(c,a),(d,b)\} \cup \{(a,b)\})$. However, this is the case for $F_{19}'$, therefore $\cope(\sqsupseteq^\tau)$ violates addition robustness for $\tau \in \{\nonatt,\strdef\}$.
     \begin{figure}
    \centering
 
 \scalebox{1}{
\begin{tikzpicture}

\node (a) at (0,0) [circle, draw,minimum size= 0.65cm] {$a$};
\node (b) at (0,2) [circle, draw,minimum size= 0.65cm] {$b$};
\node (c) at (2,0) [circle, draw,minimum size= 0.65cm] {$c$};
\node (d) at (2,2) [circle, draw,minimum size= 0.65cm] {$d$};
\node (e) at (4,0) [circle, draw,minimum size= 0.65cm] {$e$};
\node (e) at (4,2) [circle, draw,minimum size= 0.65cm] {$f$};

\path[->] (c) edge  (a);
\path[->] (d) edge  (b);

\path[->, dashed] (a) edge  (b);

\end{tikzpicture}}
   \caption{AF $F_{19}$ from Example \ref{ex:cope_nonatt_addrob}, where the attack $(a,b)$ is added later to obtain $F_{19}'$.}
    \label{tikz:cope_nonatt_addrob}
\end{figure}
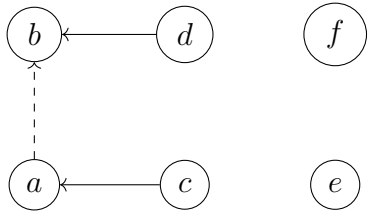
\end{example}

It is clear by definition that all extension-ranking semantics discussed in this section satisfy syntax independence. Table \ref{tab:principle_cope} summarises the results of this section.

 \begin{table}[]
     \centering
     \begin{tabular}{|l||c|c|c|}
     \hline
     Principles & $\cope(\sqsupseteq^\CF,\sqsupseteq^\UD)$ & $\cope(\sqsupseteq^{\nonatt})$& $\cope(\sqsupseteq^{\strdef})$  \\
     \hline
        $\sigma$-generalisation  & \checkmark ($\sigma= \ad$)&  X & X  \\
        composition  & X & X & X  \\
        decomposition  & X & X & X  \\
        weak reinstatement & \checkmark &  \checkmark & \checkmark  \\
        strong reinstatement  & X &  \checkmark & \checkmark\\
        addition robustness & X & X & X   \\
        syntax independence & \checkmark &  \checkmark & \checkmark  \\ \hline
     \end{tabular}
     \caption{Principles satisfied by $\cope(\sqsupseteq^\CF,\sqsupseteq^\UD)$,  $\cope(\sqsupseteq^{\nonatt})$, and $\cope(\sqsupseteq^{\strdef})$.}
     \label{tab:principle_cope}
 \end{table}

\section{Other Extension-ranking Semantics}\label{sec:additional combinations}
All of our approaches to define an extension-ranking semantics involve applying relations to the entire set of arguments, and we evaluate the entire set as a single entity. However, argument-ranking semantics \cite{DBLP:conf/sum/AmgoudB13a} examine the individual arguments and determine their individual strength.  Consequently, it makes sense to assess the plausibility of acceptance of a set of arguments by considering the quality or strength of its components i.\,e., a set $E$ is more plausible to be accepted than $E'$ if $E$ contains ``better'' arguments. 

In this section we use argument-ranking semantics to introduce new families of extension-ranking semantics. In Section \ref{subsec:AR} we use the induced preorder over the set of argument of an argument-ranking semantics to define extension-ranking semantics, while in Section \ref{subsec:numerical_evaluation} we use the calculated strength values of the individual arguments by an argument-ranking semantics to construct an extension-ranking semantics.

\subsection{Using Argument-ranking Semantics}\label{subsec:AR}
Argument-ranking semantics determine the quality of each argument within an AF. The higher an argument is ranked, the better. A set containing a large number of highly ranked arguments should be considered more plausible to be accepted compared to a set containing worse ranked arguments. Bonzon et al. \cite{DBLP:conf/kr/BonzonDKM18} proposed to use argument-ranking semantics to compare $\sigma$-extensions. By generalising their definitions, we define a new base relation, where a set $E$ is more plausible to be accepted than $E'$ if $E$ contains more better ranked arguments than $E'$. 
\begin{definition}
    Let $F=(A,R)$ be an AF and $\rho$ an argument-ranking semantics. For $E,E' \subseteq A$, we define the  \emph{argument qualtiy count} $\mathcal{N}_{\rho,F}(E,E')$ between $E$ and $E'$ wrt. $\rho$ via:
    $$\mathcal{N}_{\rho,F}(E,E')= |\{(a,b)\text{ s.t. } a \succ^{\rho}_F b \text{ with } a \in E \text{ and } b \in E'\}|$$ 
    and the corresponding $\mathcal{N}_\rho$ \emph{base relation} $\sqsupseteq^{\mathcal{N}_\rho}$ via:
    $$ E \sqsupseteq^{\mathcal{N}_\rho}_F E' \text{ iff } \mathcal{N}_{\rho,F}(E,E') \geq \mathcal{N}_{\rho,F}(E',E)$$
\end{definition}
In other words, the argument quality count is the number of arguments of $E$ that are ranked better than arguments of $E'$ with respect to an argument-ranking semantics. 
\begin{example}\label{ex:Nrho}
    Consider $F_4$ from Example \ref{ex:af_example} recalled again in Figure \ref{tikz:af1_recall}.
    \begin{figure}
    \centering
 
 \scalebox{1}{
\begin{tikzpicture}

\node (a1) at (0,0) [circle, draw,minimum size= 0.65cm] {$a$};
\node (a2) at (2,0) [circle, draw,minimum size= 0.65cm] {$b$};
\node (a3) at (4,0) [circle, draw,minimum size= 0.65cm] {$c$};
\node (a4) at (6,0) [circle, draw,minimum size= 0.65cm] {$d$};
\node (a5) at (3,-2) [circle, draw,minimum size= 0.65cm] {$e$};
\node (a6) at (5,-2) [circle, draw,minimum size= 0.65cm] {$f$};
\node (a7) at (7,-2) [circle, draw,minimum size= 0.65cm] {$g$};

\path[<-] (a2) edge  (a1);
\path[<-] (a3) edge  (a2);
\path[->, bend left] (a3) edge  (a4);
\path[->, bend left] (a4) edge (a3); 

\path[->] (a3) edge (a5);
\path[->] (a3) edge (a6);
\path[->] (a4) edge (a6);
\path[->, bend left] (a6) edge  (a7);
\path[->, bend left] (a7) edge (a6); 

\path[->] (a5) edge  [loop left] node {} ();

\end{tikzpicture}}
   \caption{Recalled AF $F_4$ from Example \ref{ex:af_example}.}
    \label{tikz:af1_recall}
\end{figure}
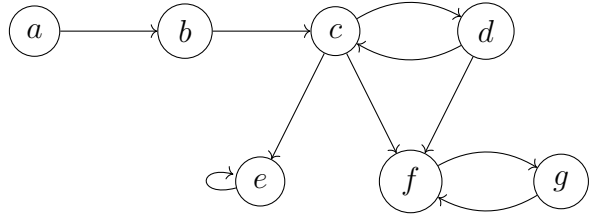
    We recall the corresponding ranking wrt. h-categoriser as presented in Example \ref{ex:h-cat}: $$a \succ^{Cat}_{F_4} g \succ^{Cat}_{F_4} d \succ^{Cat}_{F_4} e \succ^{Cat}_{F_4} b \succ^{Cat}_{F_4} c \succ^{Cat}_{F_4} f$$ 
    Consider the sets $\{a,c,g\}$ and $\{a,b,g\}$, the first set is a stable extension, while the second one is not conflict-free. To calculate $\mathcal{N}_{Cat,F}(\{a,c,g\},\{a,b,g\})$, we see that $a$ is the best ranked argument, so $a \succ^{Cat}_{F_4} b$, $a \succ^{Cat}_{F_4} g$ but also $a \succ^{Cat}_{F_4} c$. $g$ is ranked better than $b$ and $c$. So, $\mathcal{N}_{Cat,F_4}(\{a,c,g\},\{a,b,g\}) = 3$. However, we also have $b \succ^{Cat}_{F_4} c$ and therefore $\mathcal{N}_{Cat, F_4}(\{a,b,g\},\{a,c,g\}) = 4$ implying $\{a,b,g\} \sqsupset^{\mathcal{N}_{Cat}}_{F_4} \{a,c,g\}$. Hence, although $\{a,c,g\}$ is a stable extension, that set is ranked worse than a conflict set $\{a,b,g\}$, i.e.
    $$\{a,b,g\} \sqsupset^{\mathcal{N}_{Cat}}_{F_4} \{a,c,g\}$$
\end{example}
The example above shows that $\sqsupseteq^{\mathcal{N}_\rho}$ ignores internal conflicts of the set. However, the set containing every argument $A$ is not always among the most plausible sets. 
\begin{example}
    Let us continue Example \ref{ex:Nrho}, consider the two sets $\{a\}$ and $\{a,b,c,d,e,f,g\}$, then argument $a$ is ranked better than every other argument, thus $\mathcal{N}_{Cat,F_4}(\{a\},\{a,b,c,d,e,f,g\}) = 6$ while  $\mathcal{N}_{Cat,F_4}(\{a,b,c,d,e,f,g\},\\ \{a\}) = 0$. Therefore: $$\{a\} \sqsupset^{\mathcal{N}_{Cat}}_{F_4} \{a,b,c,d,e,f,g\}$$
\end{example}

In addition to ignoring internal conflicts, the base relation $\sqsupseteq^{\mathcal{N}_{Cat}}$ is not transitive.
\begin{example}
    Let us continue Example \ref{ex:Nrho} and sets $\{a,f\}$, $\{d,g\}$, and $\{a,c\}$. Then $\{a,f\} \equiv^{\mathcal{N}_{Cat}}_{F_4} \{d,g\}$ and $\{d,g\} \equiv^{\mathcal{N}_{Cat}}_{F_4} \{a,c\}$, but $\{a,c\} \sqsupset^{\mathcal{N}_{Cat}}_{F_4} \{a,f\}$, this shows that  $\sqsupseteq^{\mathcal{N}_{Cat}}_{F_4}$ is not transitive.
\end{example}

As shown in Proposition \ref{prop:cope_extension-ranking} the Copeland-based combination 
can be used to denote an extension-ranking semantics based on $\sqsupseteq^{\mathcal{N}_{\rho}}$ ($\cope(\sqsupseteq^{\mathcal{N}_\rho})$).

\begin{example}\label{ex:lattice_cope_NCat}
  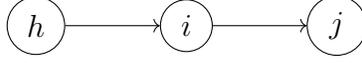
\begin{figure}
    \centering
 
 \scalebox{1}{
\begin{tikzpicture}

\node (a1) at (0,0) [circle, draw,minimum size= 0.65cm] {$h$};
\node (a2) at (2,0) [circle, draw,minimum size= 0.65cm] {$i$};
\node (a3) at (4,0) [circle, draw,minimum size= 0.65cm] {$j$};

\path[<-] (a2) edge  (a1);
\path[<-] (a3) edge  (a2);

\end{tikzpicture}}
   \caption{Recalled AF $F_5$ from Example \ref{ex:comp}.}
    \label{tikz:intro_recall2}
\end{figure}
     Recall $F_5$ from Example \ref{ex:comp} and recalled in Figure \ref{tikz:intro_recall2}. The set $\{h\}$ is the only most plausible set for $\sqsupseteq^{\cope(\mathcal{N}_{Cat})}_{F_5}$. Notice that $$\{h,i,j\} \equiv^{\cope(\mathcal{N}_{Cat})}_{F_5} \emptyset$$ So this extension-ranking semantics does behave differently to the previously discussed semantics. The complete extension ranking is depicted in Figure \ref{tikz:lattice_cope_NCat}.
      \begin{figure}
    \centering
 \scalebox{1}{
\begin{tikzpicture}

\node (abc) at (1,-2) [] {$\{h,i,j\}$};
\node (ab) at (-3,-2) [] {$\{h,i\}$};
\node (bc) at (0,-1) [] {$\{i,j\}$};
\node (b) at (0,0) [] {$\{i\}$};
\node (c) at (-1,-2) [] {$\{j\}$};
\node (ac) at (0,-3) [] {$\{h,j\}$};
\node (empty) at (3,-2) [] {$\emptyset$};
\node (a) at (0,-4) [] {$\{h\}$};

\path[->] (b) edge  (bc);

\path[->] (bc) edge  (c);
\path[->] (bc) edge  (ab);
\path[->] (bc) edge  (abc);
\path[->] (bc) edge  (empty);

\path[-] (ab) edge  (c);
\path[-] (abc) edge  (c);
\path[-] (abc) edge  (empty);

\path[->] (ab) edge  (ac);
\path[->] (c) edge  (ac);
\path[->] (abc) edge  (ac);
\path[->] (empty) edge  (ac);

\path[->] (ac) edge  (a);

\end{tikzpicture}}

   \caption{Lattices depicting $\sqsupseteq^{\cope(\mathcal{N}_{Cat})}_{F_5}$ from Example \ref{ex:lattice_cope_NCat}.}\label{tikz:lattice_cope_NCat}
\end{figure}
\end{example}

Next, we investigate the behaviour of $\cope(\sqsupseteq^{\mathcal{N}_{Cat}})$ in detail and we see that except for syntax independence this extension-ranking semantics violates every principle.
In Example \ref{ex:lattice_cope_NCat} we see that the set $\{h,j\}$ is not among the most plausible sets despite being admissible and even a stable extension. This shows that  $\cope(\sqsupseteq^{\mathcal{N}_{Cat}})$ violates  $\sigma$-generalisation for all $\sigma \in \{\cf, \ad, \co, \gr, \pr, \st, \sst\}$. 

To show the violation for composition and decomposition, we use Example \ref{ex:cope_nonatt_comp} and Example \ref{ex:cope(CF,UD)_decomp} again.
\begin{example}
    Consider $F_{18}$ from Example \ref{ex:cope_nonatt_comp} and sets $\{a,c\}$ and $\{a,b,c\}$. Then we have $\{a\} \equiv^{\cope(\mathcal{N}_{Cat})}_{F_{18,1}} \{a,b\}$ and $\{c\} \equiv^{\cope(\mathcal{N}_{Cat})}_{F_{18,2}} \{c\}$, however we have $\{a,c\} \sqsupset^{\cope(\mathcal{N}_{Cat})}_{F_{18}} \{a,b,c\}$. So composition is violated. 
\end{example}
\begin{example}
    Consider $F_{16}$ from Example \ref{ex:cope(CF,UD)_decomp} and sets $\{a,b,e\}$ and $\{c,d\}$. Then $\{c,d\} \sqsupset^{\cope(\mathcal{N}_{Cat})}_{F_{16}} \{a,b,e\}$ and $\{d\} \sqsupset^{\cope(\mathcal{N}_{Cat})}_{F_{18,2}} \{e\}$, but $\{a,b\} \sqsupset^{\cope(\mathcal{N}_{Cat})}_{F_{18,1}} \{c\}$. Therefore decomposition is violated.
\end{example}

In contrast to $\cope(\sqsupseteq^\CF,\sqsupseteq^\UD)$, $\cope(\sqsupseteq^{\nonatt})$, and $\cope(\sqsupseteq^{\strdef})$, \\ $\cope(\sqsupseteq^{\mathcal{N}_{Cat}})$ violates weak reinstatement and therefore also strong reinstatement. 
\begin{example}
    Recall $F_5$ from Example \ref{ex:comp} and recalled in Figure \ref{tikz:intro_recall2}. Consider set $\{h\}$, then argument $j$ is defended by $h$ and can be reinstated. So $\{h,j\} \sqsupseteq^{\cope(\mathcal{N}_{Cat})}_{F_5} \{h\}$ should hold, however this is not the case, therefore weak reinstatement is violated. 
\end{example}

Addition robustness is also violated by $\cope(\sqsupseteq^{\mathcal{N}_{Cat}})$.
\begin{example}\label{ex:cope_Nrho_addrob}
    Let $F_{20}= (\{a,b,c,d,e,f\}, \{(b,c),(b,e),(c,e),(d,e)\})$ be an AF, as depicted in Figure \ref{tikz:cope_Nrho_addrob}. Consider the two sets $\{a,e\}$ and $\{b,c,d,f\}$. For these two sets it holds that $\{a,e\} \equiv^{\cope(\mathcal{N}_{Cat})}_{F_{20}} \{b,c,d,f\}$. So we can add the attack $(a,b)$ and create $F_{20}'=  (\{a,b,c,d,e,f\}, \{(b,c),(b,e),(c,e),(d,e)\} \cup \{(a,b)\})$. However in $F_{20}'$ we have: $\{b,c,d,f\} \sqsupset^{\cope(\mathcal{N}_{Cat})}_{F_{20}'} \{a,e\}$. So addition robustness is violated. 
    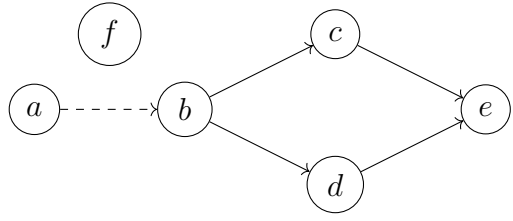
\begin{figure}
    \centering
 
 \scalebox{1}{
\begin{tikzpicture}

\node (a) at (0,0) [circle, draw,minimum size= 0.65cm] {$a$};
\node (b) at (2,0) [circle, draw,minimum size= 0.65cm] {$b$};
\node (c) at (4,1) [circle, draw,minimum size= 0.65cm] {$c$};
\node (e) at (4,-1) [circle, draw,minimum size= 0.65cm] {$d$};
\node (d) at (6,0) [circle, draw,minimum size= 0.65cm] {$e$};
\node (f) at (1,1) [circle, draw,minimum size= 0.65cm] {$f$};

\path[->] (b) edge  (c);
\path[->] (b) edge  (e);
\path[->] (c) edge  (d);
\path[->] (e) edge  (d);

\path[->,dashed] (a) edge  (b);

\end{tikzpicture}}
   \caption{AF $F_{20}$ from Example \ref{ex:cope_Nrho_addrob}, where attack $(a,b)$ is added later to obtain $F'_{20}$.}\label{tikz:cope_Nrho_addrob}
\end{figure}
\end{example}

Since the names of the arguments do not have any influence on the h-categoriser ranking, we see that $\cope(\sqsupseteq^{\mathcal{N}_{Cat}})$ satisfies syntax independence.

\begin{table}[]
     \centering
     \begin{tabular}{|l||c|}
     \hline
     Principles & $\cope(\sqsupseteq^{\mathcal{N}_{Cat}})$ \\
     \hline
        $\sigma$-generalisation  & X  \\
        composition  & X  \\
        decomposition  & X  \\
        weak reinstatement & X  \\
        strong reinstatement  & X \\
        addition robustness & X \\
        syntax independence &  \checkmark  \\ \hline
     \end{tabular}
     \caption{Principles satisfied by $\cope(\sqsupseteq^{\mathcal{N}_{Cat}})$.}
     \label{tab:principle_cope_NCat}
 \end{table}
In Table \ref{tab:principle_cope_NCat} we see that $\cope(\sqsupseteq^{\mathcal{N}_{Cat}})$ only satisfies syntax independence. Therefore we will not discuss this family of extension-ranking semantics further.

Amgoud and Ben-Naim \cite{DBLP:conf/sum/AmgoudB13a} also discussed how to compare two sets of arguments using an argument-ranking semantics. The notion of \emph{group comparison} states that a set of arguments $E$ is at least as plausible to be accepted as another set $E'$ if we can find for each argument of $E'$ an argument of $E$ which is stronger with respect to an argument-ranking semantics.

\begin{definition}
  Let $F=(A,R)$ be an AF, $E,E' \subseteq A$, and $\rho$ an argument-ranking semantics.
    We define the \emph{group comparison extension-ranking semantics} ($gc_\rho$) as follows:
    $$E \sqsupseteq^{gc_\rho}_F E' \text{ iff } \forall a \in E' \text{ there is a } b\in E \text{ s.t. } b \succeq^{\rho}_F a$$  
\end{definition}

First note that $\sqsupseteq^{gc_\rho}$ is transitive.
\begin{proposition}
    Let $F=(A,R)$ be an AF, $E,E',E'' \subseteq A$, and $\rho$ be an argument-ranking semantics. If $E \sqsupseteq^{gc_\rho}_F E'$ and $E' \sqsupseteq^{gc_\rho}_F E''$ then $E \sqsupseteq^{gc_\rho}_F E''$.
\end{proposition}

\begin{example}
    Consider $F_4$ from Example \ref{ex:af_example} and the sets $\{a,b,c\}$ and $\{d,e,g\}$. We use h-categoriser as the underlying argument-ranking semantics, then $a \succ^{Cat}_{F_4} d$,  $a \succ^{Cat}_{F_4} e$, and  $a \succ^{Cat}_{F_4} g$, so $\{a,b,c\} \sqsupseteq^{gc_{Cat}}_{F_4} \{d,e,g\}$. 
    If we compare these results with $\sqsupseteq^{\mathcal{N}_{Cat}}_{F_4}$ we see that $\mathcal{N}_{Cat,F_4}(\{a,b,c\},\{d,e,g\})= 3$ and $\mathcal{N}_{Cat,F_4}(\{d,e,g\}, \{a,b,c\})= 6$, so $\{d,e,g\} \sqsupseteq^{\mathcal{N}_{Cat}}_{F_4} \{a,b,c\}$ and also $\{d,e,g\} \sqsupseteq^{\cope(\mathcal{N}_{Cat})}_{F_4} \{a,b,c\}$. Thus these two semantics do behave differently.     
\end{example}

\begin{example}\label{ex:lattice_gcCat}
    Recall $F_5$ from Example \ref{ex:comp}. We use the h-categoriser argument-ranking semantics, then the set $\{h,i,j\}$ is among the most plausible sets with respect to $\sqsupseteq^{gc_{Cat}}_{F_5}$. The two admissible sets $\{h\}$ and $\{h,j\}$ are also among the most plausible sets. The complete extension ranking can be found in Figure \ref{tikz:lattice_gcCat}.
    \begin{figure}
    \centering
 \scalebox{1}{
\begin{tikzpicture}

\node (abc) at (-3,-4) [] {$\{h,i,j\}$};
\node (ab) at (1,-4) [] {$\{h,i\}$};
\node (bc) at (2,-2) [] {$\{i,j\}$};
\node (b) at (0,-1) [] {$\{i\}$};
\node (c) at (-2,-2) [] {$\{j\}$};
\node (ac) at (-1,-4) [] {$\{h,j\}$};
\node (empty) at (0,0) [] {$\emptyset$};
\node (a) at (3,-4) [] {$\{h\}$};

\path[->] (empty) edge  (b);

\path[->] (b) edge  (c);
\path[->] (b) edge  (bc);
\path[-] (c) edge  (bc);

\path[->] (c) edge  (abc);
\path[->] (c) edge  (ac);
\path[->] (c) edge  (ab);
\path[->] (c) edge  (a);

\path[->] (bc) edge  (abc);
\path[->] (bc) edge  (ac);
\path[->] (bc) edge  (ab);
\path[->] (bc) edge  (a);

\path[-] (abc) edge  (ac);
\path[-] (ac) edge  (ab);
\path[-] (ab) edge  (a);

\end{tikzpicture}}

   \caption{Lattices depicting $\sqsupseteq^{gc_{Cat}}_{F_5}$ from Example \ref{ex:lattice_gcCat}.}\label{tikz:lattice_gcCat}
\end{figure}
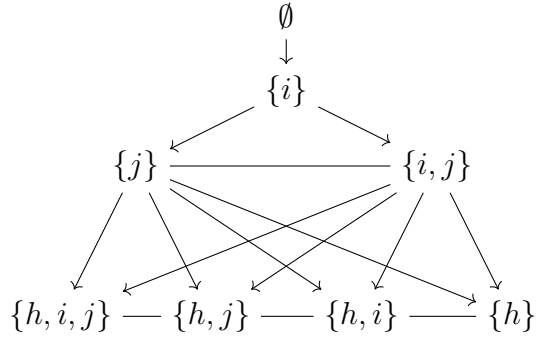
\end{example}

We begin the investigation of the satisfied principles of  $gc_\rho$ by showing that the set containing all arguments $A$ is among the most plausible sets for any argument-ranking semantics $\rho$.
\begin{proposition}\label{prop:A_max_gc}
    Let $F=(A,R)$ be an AF and $\rho$ an argument-ranking semantics, then $A \in \maxpl_{gc_{\rho}}(F)$.     
\end{proposition}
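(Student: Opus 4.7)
The plan is to unfold the definitions of $\maxpl_{gc_\rho}$ and $\sqsupseteq^{gc_\rho}$ and show that $A$ universally dominates every subset with respect to $\sqsupseteq^{gc_\rho}_F$, which immediately rules out any $E' \subseteq A$ being strictly more plausible than $A$.

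More concretely, I would first recall that $A \in \maxpl_{gc_\rho}(F)$ means there is no $E' \subseteq A$ with $E' \sqsupset^{gc_\rho}_F A$, and that $E' \sqsupset^{gc_\rho}_F A$ by definition requires both $E' \sqsupseteq^{gc_\rho}_F A$ and $A \not\sqsupseteq^{gc_\rho}_F E'$. So it suffices to establish $A \sqsupseteq^{gc_\rho}_F E'$ for every $E' \subseteq A$, since this directly falsifies the second clause of any candidate strict domination.

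The key observation is that $\succeq^\rho_F$ is a preorder on $A$ and therefore reflexive: $a \succeq^\rho_F a$ for every $a \in A$. Given any $E' \subseteq A$ and any $a \in E'$, the element $a$ itself lies in $A$ and witnesses the existential clause in the definition of $\sqsupseteq^{gc_\rho}$, because $a \succeq^\rho_F a$. Hence for every $a \in E'$ there is a $b \in A$ (namely $b = a$) with $b \succeq^\rho_F a$, which gives $A \sqsupseteq^{gc_\rho}_F E'$.

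Combining these two steps, no $E' \subseteq A$ can satisfy $E' \sqsupset^{gc_\rho}_F A$, so $A$ is maximal, i.e., $A \in \maxpl_{gc_\rho}(F)$. The argument is essentially a one-liner using reflexivity of $\succeq^\rho_F$, and there is no real obstacle since nothing about the specific choice of $\rho$ or the attack structure of $F$ is used beyond the fact that $\rho$ produces a preorder on $A$.
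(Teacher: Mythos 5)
Your proof is correct, and it actually takes a slightly different (and more robust) route than the paper's. The paper's proof picks ``the best ranked argument $a \in A$'', i.e.\ an argument with $a \succ^{\rho}_F b$ for all other $b$, and uses it as the universal witness for the existential clause in the definition of $\sqsupseteq^{gc_\rho}_F$. You instead use reflexivity of the preorder $\succeq^{\rho}_F$: for each $a \in E'$ the witness is $a$ itself, which lies in $A$ and satisfies $a \succeq^{\rho}_F a$. Both arguments establish the same key fact, $A \sqsupseteq^{gc_\rho}_F E'$ for every $E' \subseteq A$, from which maximality follows immediately. Your version buys something real: since the paper explicitly allows argument-ranking semantics to return non-total preorders, a single globally top-ranked argument need not exist, so the paper's ``w.l.o.g.'' is not innocuous, whereas your reflexivity argument needs nothing beyond the fact that $\succeq^{\rho}_F$ is a preorder. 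Your explicit unfolding of why $A \sqsupseteq^{gc_\rho}_F E'$ for all $E'$ rules out any strict domination of $A$ is also a step the paper leaves implicit.
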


Since $A$ is among the most plausible sets for $\sqsupseteq^{gc_\rho}$ we know that $\sigma$-generalisation is violated for every argument-ranking semantics $\rho$. 

The best ranked argument inside a set dominates the entire set. Therefore composition is satisfied if the underlying argument-ranking semantics satisfies Independence.
\begin{proposition}
    $gc_\rho$ satisfies composition if $\rho$ satisfies Independence.
\end{proposition}

While the dominance of the best ranked argument is helpful to satisfy composition, this dominance hurt the satisfaction of decomposition.
\begin{proposition}
     $gc_\rho$ violates decomposition if $\rho$ satisfies Void Precedence, Non-attacked Equality, and Independence.
\end{proposition}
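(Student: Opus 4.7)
The plan is to construct an explicit counterexample exploiting the ``dominance'' feature of $\sqsupseteq^{gc_\rho}$, namely that a single strong argument in $E$ can cover every argument of $E'$. I would take $F_1=(\{a\},\emptyset)$ and $F_2=(\{b,c\},\{(b,c)\})$, and let $F = F_1 \cup F_2$. The idea is to pick $E=\{a\}$ and $E'=\{c\}$: globally $a$ dominates $c$, so $E \sqsupseteq^{gc_\rho}_F E'$, but locally in $F_2$ the projections are $E \cap A_2 = \emptyset$ and $E' \cap A_2 = \{c\}$, and an empty set cannot dominate a non-empty one under $\sqsupseteq^{gc_\rho}$, which breaks decomposition.

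The verification proceeds in three short steps. First, in $F_2$, Void Precedence yields $b \succ^\rho_{F_2} c$. Second, in $F$, since both $a$ and $b$ are unattacked, Non-attacked Equality gives $a \simeq^\rho_F b$; and since $F_2$ is a connected component of $F$, Independence transfers $b \succ^\rho_{F_2} c$ to $b \succ^\rho_F c$. Combining these two facts by transitivity of the preorder $\succeq^\rho_F$ yields $a \succ^\rho_F c$, so $\{a\} \sqsupseteq^{gc_\rho}_F \{c\}$ by the definition of group comparison.

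Third, I check the decomposition conclusion. For $F_1$, the projections are $\{a\}$ and $\emptyset$, and $\{a\} \sqsupseteq^{gc_\rho}_{F_1} \emptyset$ holds vacuously. For $F_2$, however, we would need $\emptyset \sqsupseteq^{gc_\rho}_{F_2} \{c\}$, i.e., for every $x \in \{c\}$ some $y \in \emptyset$ with $y \succeq^\rho_{F_2} x$ --- which is impossible since $\emptyset$ has no elements. Hence $E \sqsupseteq^{gc_\rho}_F E'$ does not entail $E \cap A_2 \sqsupseteq^{gc_\rho}_{F_2} E' \cap A_2$, and decomposition fails.

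The only subtlety worth flagging is the appeal to Independence: I need to know that the preorder $\succeq^\rho$ restricted to the arguments of a connected component coincides with the preorder on that component viewed as a standalone AF. This is exactly what the Independence principle asserts for $F_2 \in cc(F)$, so the transfer of $b \succ^\rho_{F_2} c$ to $b \succ^\rho_F c$ is justified. All remaining steps are direct from the definitions.
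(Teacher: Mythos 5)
Your proof is correct, but it is a genuinely different counterexample from the paper's. The paper works with a four-argument pattern: unattacked $a,c$-attacked in component $F_1$ and unattacked $b,d$-attacked in $F_2$, and compares $E=\{a,d\}$ with $E'=\{b,c\}$. Non-attacked Equality makes $a$ and $b$ interchangeable at the global level, so $\{a,d\} \equiv^{gc_\rho}_F \{b,c\}$, while Independence plus Void Precedence force the projections to be \emph{strictly} ordered within each component ($\{a\}$ above $\{c\}$ in $F_1$, $\{b\}$ above $\{d\}$ in $F_2$), contradicting what decomposition would demand of the direction $\{b,c\} \sqsupseteq^{gc_\rho}_F \{a,d\}$. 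Your construction instead exploits an edge case of the group-comparison definition: the projection $E \cap A_2 = \emptyset$ can never dominate a non-empty set, so decomposition fails as soon as a singleton in one component globally dominates a singleton in the other. This is shorter and, as you essentially observe, needs less than the stated hypotheses --- Void Precedence applied directly in $F$ already gives $a \succ^\rho_F c$ (since $a$ is unattacked and $c$ is attacked in $F$), so Non-attacked Equality and Independence are not actually required for your example; this only strengthens the result, since the proposition asserts violation under the conjunction of all three principles. The trade-off is that your counterexample isolates a degeneracy of $gc_\rho$ on empty projections rather than the cross-component dominance phenomenon (a globally equivalent pair whose local orderings both go strictly one way) that the paper's example exhibits and that persists even when both projections are non-empty. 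Both arguments are valid; the paper's is the more informative about \emph{why} the best-element dominance in $gc_\rho$ is incompatible with a local view.
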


Next, we show that weak reinstatement is satisfied.
\begin{proposition}\label{prop:gc_wreinst}
    $gc_\rho$ satisfies weak reinstatement.
\end{proposition}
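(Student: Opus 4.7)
The proof plan is essentially to observe that the conclusion $E \cup \{a\} \sqsupseteq^{gc_\rho}_F E$ follows from the much more general fact that any superset dominates its subset under $\sqsupseteq^{gc_\rho}$, which in turn reduces to reflexivity of $\succeq^\rho_F$. In particular, the hypotheses $a \in \mathcal{F}_F(E)$, $a \notin E$, and $a \notin (E^- \cup E^+)$ are not actually needed for this direction of the implication; they matter only for the strong version.

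Concretely, I would fix an arbitrary AF $F = (A,R)$ and $E \subseteq A$, and assume $a \in \mathcal{F}_F(E) \setminus E$ with $a \notin (E^- \cup E^+)$. By the definition of $\sqsupseteq^{gc_\rho}$, I have to show that for every $b \in E$ there exists $c \in E \cup \{a\}$ with $c \succeq^\rho_F b$. Since $\succeq^\rho_F$ is a preorder, it is reflexive, so for each $b \in E$ I can simply choose $c := b \in E \subseteq E \cup \{a\}$, which gives $c \succeq^\rho_F b$ immediately. Hence $E \cup \{a\} \sqsupseteq^{gc_\rho}_F E$, establishing weak reinstatement.

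There is essentially no obstacle here: the principle is insensitive to any details of $\rho$ beyond reflexivity, and $a$ plays no role in the witness construction. If anything, the only thing to be careful about is citing the right background fact, namely that $\succeq^\rho_F$ is a preorder (and hence reflexive), which is part of the very definition of an argument-ranking semantics. It is also worth noting in passing that the same argument shows monotonicity of $\sqsupseteq^{gc_\rho}$ under supersets, which foreshadows why strong reinstatement would require substantially more work (one would need to show strictness, i.e., that $E \not\sqsupseteq^{gc_\rho}_F E \cup \{a\}$, which in general fails).
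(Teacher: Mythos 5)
Your proof is correct and rests on the same key fact as the paper's, namely reflexivity of $\succeq^\rho_F$: for each $b \in E$ the witness $c := b \in E \cup \{a\}$ already satisfies $c \succeq^\rho_F b$, so $E \cup \{a\} \sqsupseteq^{gc_\rho}_F E$. Your version is actually tidier than the paper's, which splits into the cases ``$a \succ^\rho_F b$ for all $b \in E$'' and ``some $b \in E$ with $b \succ^\rho_F a$'' (a split that is neither needed nor exhaustive, since $a$ may be incomparable to or equivalent with elements of $E$), whereas your direct argument covers all cases at once and correctly identifies that the reinstatement hypotheses on $a$ are irrelevant for the weak version.
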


The proof of Proposition \ref{prop:gc_wreinst} shows that $E \cup \{a\} \sqsupset^{gc_{\rho}}_F E$ cannot be guaranteed for every AF $F$, hence strong reinstatement is violated. 

Adding attacks to an argument decreases the strength of an argument and in turn increase the strength of the arguments attacked by the newly attacked argument. This behaviour results in the violation of addition robustness by $gc_{Cat}$. 
\begin{example}\label{ex:gc_cat_addrob}
Let $F_{21}= (\{a,b,c,d,e,f\},\{(b,c),(c,d),(e,a),(e,b),(e,e)\})$ be an AF, as depicted in Figure \ref{tikz:gc_cat_addrob}. Consider the two sets $\{a,d\}$ and $\{b,c\}$, then we have $\{a,d\} \equiv^{gc_{Cat}}_{F_{21}} \{b,c\}$ since $$a \simeq^{Cat}_{F_{21}} b \simeq^{Cat}_{F_{21}} c \simeq^{Cat}_{F_{21}} d \simeq^{Cat}_{F_{21}} e \simeq^{Cat}_{F_{21}} $$ So we can add attack $(a,b)$ to $F_{21}$ and create $F_{21}' =  (\{a,b,c,d,e,f\},\\ \{(b,c),(c,d),(e,a),(e,b),(e,e)\} \cup \{(a,b)\})$. However, this addition changes the argument ranking to $$c  \succ^{Cat}_{F_{21}'} a \simeq^{Cat}_{F_{21}'} e \succ^{Cat}_{F_{21}'}  d \succ^{Cat}_{F_{21}'} b$$ and this implies $\{b,c\} \sqsupset^{gc_{Cat}}_{F_{21}'} \{a,d\}$ and therefore addition robustness is violated. 
    
 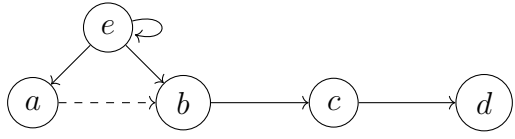
\begin{figure}
    \centering
 
 \scalebox{1}{
\begin{tikzpicture}

\node (a) at (0,0) [circle, draw,minimum size= 0.65cm] {$a$};
\node (b) at (2,0) [circle, draw,minimum size= 0.65cm] {$b$};
\node (c) at (4,0) [circle, draw,minimum size= 0.65cm] {$c$};
\node (e) at (1,1) [circle, draw,minimum size= 0.65cm] {$e$};
\node (d) at (6,0) [circle, draw,minimum size= 0.65cm] {$d$};

\path[->] (b) edge  (c);
\path[->] (c) edge  (d);
\path[->] (e) edge  (a);
\path[->] (e) edge  (b);

\path[->,dashed] (a) edge  (b);
\path[->] (e) edge  [loop right] node {} ();

\end{tikzpicture}}
   \caption{AF $F_{21}$ from Example \ref{ex:gc_cat_addrob}, where attack $(a,b)$ is added later to obtain $F'_{21}$.}\label{tikz:gc_cat_addrob}
\end{figure}
\end{example}
Example \ref{ex:gc_cat_addrob} can be used to show the violation of addition robustness if a different argument-ranking semantics is used, like for example the \emph{Burden-based semantics} \cite{DBLP:conf/sum/AmgoudB13a} or the argument-ranking semantics by Matt and Toni \cite{DBLP:conf/jelia/MattT08}.

Finally we show that syntax independence is satisfied by $gc_\rho$.
\begin{proposition}
     $gc_\rho$ satisfies syntax independence if $\rho$ satisfies Abstraction.
\end{proposition}

Since the h-categoriser argument-ranking semantics satisfies Abstraction, Independence, Void Precedence, and Non-attacked Equivalence, $gc_{Cat}$ satisfies composition and syntax independence and violates decomposition like depicted in Table \ref{tab:principle_gc_Cat}.

\begin{table}[]
     \centering
     \begin{tabular}{|l||c|}
     \hline
     Principles & $gc_{Cat}$ \\
     \hline
        $\sigma$-generalisation  & X  \\
        composition  & \checkmark  \\
        decomposition  & X  \\
        weak reinstatement & \checkmark  \\
        strong reinstatement  & X \\
        addition robustness & X \\
        syntax independence &  \checkmark  \\ \hline
     \end{tabular}
     \caption{Principles satisfied by $gc_{Cat}$.}
     \label{tab:principle_gc_Cat}
 \end{table}

\subsection{Numerical Evaluation Functions}\label{subsec:numerical_evaluation}
In the previous section, arguments that are not part of the set are ignored when the set is evaluated. To illustrate the problem of this behaviour, consider the following example.
\begin{example}
    Consider $F_4$ from Example \ref{ex:af_example} and the sets $\{c\}$ and $\{d\}$. If we use $\sqsupseteq^{\mathcal{N}_{Cat}}$ to compare these two sets, we see that $\{d\} \sqsupset^{\mathcal{N}_{Cat}}_{F_4} \{c\}$. However, the existence of the argument $b$ is completely ignored. We could remove the argument $b$ from $F_4$ and the relation between these two sets will not change, although the existence of $b$ is important for the rejection of $\{c\}$. Without the attack of $b$ on $c$, the set $\{c\}$ is admissible, so the reasoning for the modified AF changes. Thus, focusing only on a local view is not appropriate in abstract argumentation, since the acceptance, and thus also the plausibility of acceptance of a set should be evaluated in the context of the complete AF. 
\end{example}
Konieczny et al. \cite{DBLP:conf/ecsqaru/KoniecznyMV15} proposed using the number of times an argument within a set is part of a $\sigma$-extension, and aggregating the resulting sequences to compare two $\sigma$-extensions. We use the same idea to define an extension-ranking semantics. We evaluate sets of arguments based on the quality of the arguments they contain, and an argument is considered ``good'' if it is contained in a large number of $\sigma$-extensions, preferably in every $\sigma$-extension. Prior to introducing an extension-ranking semantics, it is necessary to articulate criteria that determine when an argument is qualifies as ``good''.

\begin{definition}
    Let $F= (A,R)$ be an AF, $a \in A$. A function $\varepsilon_F(a): A \rightarrow \mathbb{N}$ is called a \emph{numerical evaluation function}. $\varepsilon_F(a)$ gives $a$ a numerical value in the context of $F$. By extending this definition, we get for an set of arguments $E = \{a,b,\dots\}$ a sequence $\varepsilon_F(E)= \{\varepsilon_F(a),\varepsilon_F(b),\dots\}$.   
\end{definition}
The \emph{numerical evaluation functions} gives each argument a numerical value corresponding to its strength.
Konieczny et al. \cite{DBLP:conf/ecsqaru/KoniecznyMV15} proposed a numerical evaluation function based on the number of times an argument is part of a $\sigma$-extension. 

\begin{definition}[\cite{DBLP:conf/ecsqaru/KoniecznyMV15}]\label{def:ne}
    Let $F= (A,R)$ be an AF, $\sigma$ an extension semantics and $a \in A$. $ne_{\sigma,F}(a)$ is the number of $\sigma$-extension $a$ is part of, i.\,e., $ne_{\sigma,F}(a)=|\{E \in \sigma(F) | a \in E\}|$. For any set $E = \{a,b,\dots\}$ we define its \emph{support sequence} as $vsupp_{\sigma,F}(E)= \{ne_{\sigma,F}(a),ne_{\sigma,F}(b),\dots\}$.  
\end{definition}

For each set of arguments $E$, we return a sequence containing for each argument $a \in E$ the number of occurrences of $a$ within a $\sigma$-extension, i.\,e. the number of times argument $a$ is acceptable with respect to $\sigma$.
\begin{example}\label{ex:ne_co}
    Consider $F_4$ from Example \ref{ex:af_example}. The corresponding complete extensions are $\{a\},\{a,g\},\{a,c,g\}$, and $\{a,d,g\}$. So the corresponding sequences for $ne_{\co,F_4}$ are $ne_{\co,F_4}(\{a\})= \{4\}$,  $ne_{\co,F_4}(\{a,g\})= \{4,3\}$, \\$ne_{\co,F_4}(\{a,c,g\})= \{4,1,3\}$, and $ne_{\co,F_4}(\{a,d,g\})= \{4,1,3\}$.
\end{example}

Now we can compute a sequence for each set based on the strength of the arguments it contains. To finally reason with these sequences 
we need methods for comparing two sequences. One way is to first aggregate each sequence and then compare the aggregated values.

\begin{definition}
An \emph{aggregation function} $\Box$ maps a sequence of numbers $v= \{v_1,\dots, v_n\}$ to a single value, i.\,e., for every $n \in \mathbb{N}$, $\Box: \mathbb{N}^n \rightarrow \mathbb{N}$ s.t.:
\begin{itemize}
    \item if $v_i \geq v_i'$, then $\Box\{v_1,\dots, v_i, \dots, v_n\} \geq \Box\{v_1,\dots, v'_i, \dots, v_n\}$
    \item $\Box(x)=x$
\end{itemize}
\end{definition}

 We denote by $sum(v)$ the sum of all elements of the sequence $v$, i.\,e., $sum(v) = \Sigma_{i=1}^n v_i$. $max(v)$ returns the maximum element of the sequence $v$ and $min(v)$ the minimum element. $leximin(v)$ rearranges the sequence $v$ so that the elements are ordered in decreasing order and $leximax(v)$ rearranges the sequence $v$ so that the elements are ordered in increasing order. 
    Note that besides $sum(v)$, $max(v)$, $min(v)$, $leximin(v)$, and $leximax(v)$, there are a number of other aggregation functions that can be used to aggregate sequences (see for example the work of Dubois et al. \cite{DBLP:journals/fss/DuboisFP96} for a discussion).

    Finally, using aggregated sequences, we can compare sets of arguments based on the strength of the arguments they contain. For $sum(v)$, $max(v)$ and $min(v)$ the order $\geq^{\Box}$ is clear, for $leximax(v)$ and $leximin(v)$ we use the lexicographic order $\geq^{\{leximin,leximax\}}$ for natural numbers, i.\,e., for two sequences $v,v'$ in ascending (descending) order $v \geq^{leximin} v'$ ($v \geq^{leximax} v'$) if and only if there exists $i$ s.t. $v_i > v'_i$ and for all $j < i$, $v_j = v_j'$, and $v =^{\{leximin,leximax\}} v'$ if and only if for all $i$, $v_i = v_i'$. Thus, a set with a better aggregated sequence is more plausible to be accepted than other sets. Using this observation, we define an extension-ranking semantics.

    \begin{definition}
    Let $F=(A,R)$ be an AF, $\varepsilon$ a numerical evaluation function, $E,E' \subseteq A$ two sets of arguments and $\Box$ an aggregation function. We define the \emph{order-based extension-ranking semantics} ${\OBE_{\varepsilon,\Box}}$ via
    $$E \sqsupseteq^{\OBE_{\varepsilon,\Box}}_{F} E' \text{ iff } \Box(\varepsilon_F(E)) \geq^\Box  \Box(\varepsilon_F(E'))$$
\end{definition}
Note that $\OBE_{\varepsilon,\Box}$ is only an extension-ranking semantics if $\Box$ is transitive.

\begin{example}\label{ex:OBE_neco}
    Let us consider $F_4$ from Example \ref{ex:af_example} and the following sets $\{a\},\{a,e,g\},\{a,c,g\}$, and $\{a,d,g\}$. So, the corresponding sequences for $ne_{\co}$ are $ne_{\co,F_4}(\{a\})= \{4\}$,  $ne_{\co,F_4}(\{a,e,g\})= \{4,0,3\}$,  $ne_{\co,F_4}(\{a,c,g\})= \{4,1,3\}$, and $ne_{\co,F_4}(\{a,d,g\})= \{4,1,3\}$. When we use $sum(v)$ as the underlying aggregating function, the resulting ranking is:
    $$\{a,c,g\} \equiv^{\OBE_{ne_{\co},sum}}_{F_4} \{a,d,g\} \sqsupset^{\OBE_{ne_{\co},sum}}_{F_4} \{a,e,g\} \sqsupset^{\OBE_{ne_{\co},sum}}_{F_4} \{a\}$$ 
  Using $max$ we see that all these four sets are equally plausible to be accepted, since they all contain $a$, which is a skeptically accepted argument w.r.t. the complete extension semantics. So the choice of the aggregation function is important. 
\end{example}
\begin{example}\label{ex:lattice_OBE_nead,sum}
Consider $F_5$ from Example \ref{ex:comp}. We use $ne_\ad$ as the numerical evaluation function and $sum$ as the aggregation function, then the resulting extension ranking $\sqsupseteq^{\OBE_{ne_\ad,sum}}_{F_5}$ is depicted in Figure \ref{tikz:lattice_OBE_nead,sum}. While the admissible set $\{h,j\}$ is still among the most plausible sets the other admissible sets $\{h\}$ and $\emptyset$ are not. Also, the set containing every argument $\{h,i,j\}$ is among the most plausible sets.

 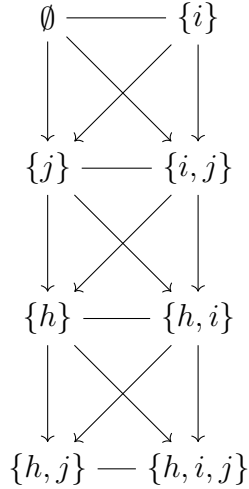
\begin{figure}
    \centering
 \scalebox{1}{
\begin{tikzpicture}

\node (abc) at (2,-6) [] {$\{h,i,j\}$};
\node (ab) at (2,-4) [] {$\{h,i\}$};
\node (bc) at (2,-2) [] {$\{i,j\}$};
\node (b) at (2,0) [] {$\{i\}$};
\node (c) at (0,-2) [] {$\{j\}$};
\node (ac) at (0,-6) [] {$\{h,j\}$};
\node (empty) at (0,0) [] {$\emptyset$};
\node (a) at (0,-4) [] {$\{h\}$};

\path[-] (b) edge  (empty);
\path[->] (b) edge  (bc);
\path[->] (b) edge  (c);
\path[->] (empty) edge  (bc);
\path[->] (empty) edge  (c);

\path[-] (bc) edge  (c);
\path[->] (bc) edge  (ab);
\path[->] (bc) edge  (a);
\path[->] (c) edge  (a);
\path[->] (c) edge  (ab);

\path[-] (a) edge  (ab);
\path[->] (a) edge  (ac);
\path[->] (a) edge  (abc);
\path[->] (ab) edge  (ac);
\path[->] (ab) edge  (abc);

\path[-] (ac) edge  (abc);
\end{tikzpicture}}

   \caption{Lattices depicting $\sqsupseteq^{\OBE_{ne_\ad,sum}}_{F_5}$ from Example \ref{ex:lattice_OBE_nead,sum}.}\label{tikz:lattice_OBE_nead,sum}
\end{figure}
\end{example}

Next, we investigate the principles $\OBE_{ne_\sigma,\Box}$ satisfies.

In Example \ref{ex:lattice_OBE_nead,sum} we have already seen that the set containing all arguments $A$ is among the most plausible sets for  $\sqsupseteq^{\OBE_{ne_\ad,sum}}_{F_5}$. We use this observation to show the violation of $\sigma$-generalisation, i.e. we show at $A$ is always among the most plausible sets for $\sigma \in \{\ad,\co,\gr,\pr,\sst\}$.

\begin{proposition}
    Let $F=(A,R)$ be an AF, then $A \in \maxpl_{\OBE_{ne_\sigma,\Box}}(F)$ for $\sigma \in \{\ad,\co,\gr,\pr,\sst\}$ and $\Box \in \{sum,max,leximax\}$.
\end{proposition}

For $\Box = \{min,leximin\}$ the set containing all arguments is not among the most plausible sets for $\OBE_{ne_\sigma,\Box}$, however $\sigma$-generalisation is still violated.
\begin{example}
    Consider $F_5$ from Example \ref{ex:comp}, then $\{h,j\}$ is a complete, preferred, and stable extension and also the grounded extension. However, $min(\{h\}) > min(\{h,j\})$ and therefore $\{h\} \sqsupset^{\OBE_{ne_\sigma,\Box}}_{F_5} \{h,j\}$ for all $\sigma \in \{\ad,\co,\gr,\pr,\sst\}$ and $\Box \in \{min,leximin\}$ and therefore $\sigma$-generalisation is violated.
\end{example}

The following lemmas will be helpful to show that $\OBE_{ne_{\sigma},\Box}$ satisfies composition for $\sigma \in \{\ad,\co,\pr,\st\}$ and $\Box \in \{leximax,leximin\}$.
\begin{lemma}\label{lemma:lexmax_union}
    Let $V,W,X,Y$ be sequences in descending order s.t. $V \sqsupseteq^{leximax} W$ and $X \sqsupseteq^{leximax} Y$. It holds that $c_1 * V \cup c_2 *X \sqsupseteq^{leximax} c_1 * W \cup c_2* Y$ for $c_1, c_2 \in \mathbb{N}$.
\end{lemma}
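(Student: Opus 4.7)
My plan is to prove the lemma by strong induction on the total length $L = c_1(|V| + |W|) + c_2(|X| + |Y|)$. The base case $L = 0$ is immediate. For the inductive step I first dispose of the degenerate cases $c_1 = 0$ or $c_2 = 0$, where the conclusion reduces directly to one of the two hypotheses, so I may assume $c_1, c_2 \geq 1$. Let $M = \max(V_1, X_1)$ and $M' = \max(W_1, Y_1)$ be the leading values of the two sorted-descending merges. Since $V \sqsupseteq^{leximax} W$ (sorted descending) entails $V_1 \geq W_1$, and similarly $X_1 \geq Y_1$, we always have $M \geq M'$; in the strict case $M > M'$ the first position of the leximax comparison already witnesses the inequality and we are done.

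So assume $M = M'$ and let $p, q, r, s$ denote the numbers of leading occurrences of $M$ in $V, X, W, Y$ respectively (well-defined because the sequences are descending). The key auxiliary claim is that $p \geq r$ and $q \geq s$. For $p \geq r$, I case split on $V_1$: if $V_1 < M$ then $W_1 \leq V_1 < M$ gives $p = r = 0$; if $V_1 = M$ and $W_1 < M$ then $r = 0 \leq p$; and if $V_1 = W_1 = M$, suppose for contradiction $r > p$. Then $V$ and $W$ agree on the first $p$ positions (all equal to $M$), while $W_{p+1} = M$ but $V_{p+1} < M$ (or $V$ has length exactly $p$), so the leximax comparison strictly favours $W$ over $V$, contradicting $V \sqsupseteq^{leximax} W$. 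The inequality $q \geq s$ is proved symmetrically.

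With $p \geq r$, $q \geq s$ and $c_1, c_2 \geq 1$, the leading blocks of $M$'s in the sorted merges satisfy $c_1 p + c_2 q \geq c_1 r + c_2 s$. If the inequality is strict the sorted LHS has an $M$ at a position where the sorted RHS already has a value strictly smaller than $M$, so leximax strictly favours the LHS and we are done. Otherwise $c_1 p + c_2 q = c_1 r + c_2 s$, which combined with $p \geq r$, $q \geq s$ and $c_1, c_2 \geq 1$ forces $p = r$ and $q = s$. I then strip the identical leading $M$-blocks from $V$ and $W$ (respectively from $X$ and $Y$) to obtain strictly shorter descending sequences $V', W', X', Y'$; because the stripped prefixes coincide, the relations $V' \sqsupseteq^{leximax} W'$ and $X' \sqsupseteq^{leximax} Y'$ are preserved. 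The inductive hypothesis applied to these shorter sequences yields $c_1 V' \cup c_2 X' \sqsupseteq^{leximax} c_1 W' \cup c_2 Y'$, and prepending the common block of $c_1 p + c_2 q$ leading $M$'s on both sides preserves the leximax inequality, completing the induction.

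The main obstacle is the auxiliary prefix-count claim $p \geq r$ (and $q \geq s$), since this is the only place where the leximax hypothesis is used in a non-trivial way, and where a naive threshold-counting intuition fails: for sorted-descending sequences, $V \sqsupseteq^{leximax} W$ does not in general imply that $V$ has at least as many entries above every threshold as $W$ (e.g. $V = (5,5,1)$ vs.\ $W = (5,4,4)$). Once this prefix-count inequality is in place, the remainder is a routine case split on whether the combined $M$-block lengths on the two sides agree.
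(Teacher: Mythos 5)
Your proof rests on a misreading of the notation $c_1 * V$. In the paper this denotes element-wise scalar multiplication, $c_1 * V = (c_1 v_1, c_1 v_2, \dots)$, not the multiset consisting of $c_1$ copies of $V$. This is forced by the intended application: Lemma~\ref{lemma:ne_comp} gives $ne_{\sigma,F}(a) = ne_{\sigma,F_1}(a)\cdot|\sigma(F_2)|$ for $a\in A_1$, so the support sequence of $E$ in $F_1\cup F_2$ is the merge of the two sub-sequences with each \emph{entry} scaled by a constant, and the composition proof for $\Box=leximax$ invokes the lemma in exactly that form. Under the correct reading, your identification of the leading value as $M=\max(V_1,X_1)$ and of the leading block length as $c_1p+c_2q$ is wrong: the leading value of the merge is $\max(c_1V_1,c_2X_1)$ and the block has length $p'+q'$ where $p',q'$ count leading maxima in the \emph{scaled} sequences. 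The repair is mechanical — for $c_1,c_2\ge 1$, scaling every entry by a positive constant preserves $\sqsupseteq^{leximax}$ positionwise, so the lemma reduces to the case $c_1=c_2=1$ and your argument applies verbatim to $c_1V, c_1W, c_2X, c_2Y$ — but as written the proof establishes a different statement. (Be aware also that your dismissal of $c_1=0$ as "reducing to one hypothesis" is only valid under your reading; under scalar multiplication $0*V$ is an all-zero sequence whose length still matters, a degenerate case the paper likewise does not treat.)

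Setting the notational issue aside, your underlying argument that merging two descending sequences preserves $\sqsupseteq^{leximax}$ is correct and takes a genuinely different route from the paper's. The paper pads the sequences to equal length with $-\infty$, splits into four cases according to whether each hypothesis is strict or an equivalence, and in each case runs an induction on the position of the first disagreement, with a sub-case analysis on which constituent sequence supplies each merged entry. You instead induct on total length, prove the prefix-count claim ($p\ge r$ and $q\ge s$ for the leading occurrences of the common maximum), and strip identical maximal blocks before recursing. Your route is shorter, isolates the one place where the leximax hypothesis does real work, and correctly flags why a naive threshold-counting argument fails ($V=(5,5,1)$ vs.\ $W=(5,4,4)$); the block-stripping also handles unequal lengths without explicit padding. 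Once rewritten for scalar multiplication, it would be a cleaner proof than the paper's.
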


For leximin we can show the same behaviour using the same reasoning. 
\begin{lemma}\label{lemma:lexmin_union}
    Let $V,W,X,Y$ be sequences in descending order s.t. $V \sqsupseteq^{leximin} W$ and $X \sqsupseteq^{leximin} Y$. It holds that $c_1*V \cup c_2*X \sqsupseteq^{leximin} c_1*W \cup c_2*Y$ for $c_1,c_2 \in \mathbb{N}$.
\end{lemma}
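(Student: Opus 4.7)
The plan is to mirror the argument of Lemma \ref{lemma:lexmax_union} with ``smallest element'' systematically replacing ``largest element'', since the only structural feature of $leximax$ used there is that it is the lexicographic order induced by a position-by-position scan of appropriately sorted sequences. First I would reduce the general statement to the case $c_1 = c_2 = 1$. Interpreting $c \cdot V$ as the multiset consisting of $c$ copies of each entry of $V$, the claim for arbitrary $c_1, c_2$ follows from the unit case by iterating: if $V \sqsupseteq^{leximin} W$ then $V \cup V \sqsupseteq^{leximin} W \cup W$, and inductively $c_1 \cdot V \sqsupseteq^{leximin} c_1 \cdot W$; similarly for $X,Y$; then one application of the unit case to these multiplied sequences finishes the reduction.

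For the unit case, I would switch to a threshold-counting characterisation of $leximin$, which makes the union operation transparent. For any multiset $S$ of natural numbers and any threshold $t \in \mathbb{N}$, let $N_{<t}(S)$ denote the number of elements of $S$ strictly below $t$. One checks that $S \sqsupseteq^{leximin} S'$ (for equal-length multisets) is equivalent to: either $N_{<t}(S) = N_{<t}(S')$ for all $t$, or at the smallest $t$ where they differ one has $N_{<t}(S) < N_{<t}(S')$. This is exactly the reading of the sorted-ascending lexicographic order used in the paper's definition.

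The key step is then immediate from the identity
\begin{equation*}
N_{<t}(V \cup X) \;=\; N_{<t}(V) + N_{<t}(X),
\qquad
N_{<t}(W \cup Y) \;=\; N_{<t}(W) + N_{<t}(Y).
\end{equation*}
The hypotheses give $N_{<t}(V) \leq N_{<t}(W)$ and $N_{<t}(X) \leq N_{<t}(Y)$ for every $t$, with strict inequality at the first thresholds where $V,W$ and $X,Y$ respectively disagree. Adding these pointwise yields $N_{<t}(V \cup X) \leq N_{<t}(W \cup Y)$ for all $t$, with strict inequality at the minimum of the two first-disagreement thresholds. Reading the characterisation backwards, this is exactly $V \cup X \sqsupseteq^{leximin} W \cup Y$, as required.

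The main obstacle is handling the interaction between the two ``first differences'' coming from $(V,W)$ and $(X,Y)$ when one tries to argue directly on the sorted merged sequences: ties across the component sequences force an awkward case split on which side first diverges. The threshold-counting reformulation above side-steps this entirely, because counts add cleanly over multiset unions, so the inequalities coming from the two hypotheses simply superpose. The only mild care needed is verifying that the reformulation is genuinely equivalent to the paper's definition of $\geq^{leximin}$ on sorted sequences of equal length; this is a routine induction on the position of the first disagreement, which I would record as a small preparatory observation before the main argument.
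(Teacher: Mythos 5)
Your threshold-counting strategy is genuinely different from the paper's treatment of this pair of lemmas (the paper proves the $leximax$ twin, Lemma~\ref{lemma:lexmax_union}, by a four-case induction on the position of the first difference in the merged sorted sequences, and for $leximin$ only asserts ``same reasoning''), and the reformulation itself is correct and cleaner. The problem is the step you then lean on: $V \sqsupseteq^{leximin} W$ does \emph{not} give $N_{<t}(V) \leq N_{<t}(W)$ for every threshold $t$. Take $V = (2,3)$ and $W = (1,10)$: the smallest entries are compared first and $2 > 1$, so $V$ strictly beats $W$ under $leximin$, yet $N_{<4}(V) = 2 > 1 = N_{<4}(W)$. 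Consequently your derived claim that $N_{<t}(V \cup X) \leq N_{<t}(W \cup Y)$ holds for all $t$ also fails (same $V,W$ with $X = Y = (5)$, at $t = 4$). The $leximin$ hypothesis constrains the counts only up to the first threshold at which they disagree; beyond that point nothing is controlled, so the ``pointwise superposition over all $t$'' does not exist.

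The argument is repairable precisely because your own characterisation never looks past the first disagreement. Let $t_1$ and $t_2$ be the first thresholds at which the counts of $(V,W)$, respectively $(X,Y)$, differ (taking $t_i = \infty$ if they never do). For every $t < \min(t_1,t_2)$ all four counts pairwise coincide, hence so do the union counts; at $t = \min(t_1,t_2)$, if finite, one summand is strictly smaller on the $V,X$ side and the other is still an equality, so $N_{<t}(V \cup X) < N_{<t}(W \cup Y)$ there. Thus the first disagreement of the union counts occurs at $\min(t_1,t_2)$ and goes the right way, which is all the characterisation requires; what happens for larger $t$ is irrelevant. Two further points need attention before this can stand in for the paper's proof. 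First, in the paper $c * V$ is the entrywise scaling $(c v_1, c v_2, \dots)$ --- this is what Lemma~\ref{lemma:ne_comp} produces in the composition proof --- not $c$ copies of $V$; the reduction to $c_1 = c_2 = 1$ is still immediate (scaling by $c \geq 1$ preserves every entrywise comparison and hence the $leximin$ order), but your multiset-copies reduction as written proves a different statement. Second, your characterisation is stated only for equal-length multisets, whereas the sets compared in the application may have different cardinalities, so you still need the paper's padding device (appending sentinel values that do not affect any finite count) before the reformulation applies.
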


\begin{lemma}\label{lemma:ne_comp}
    Let $F_1= (A_1, R_1)$ and $F_2= (A_2,R_2)$ be two AFs s.t. $A_1 \cap A_2 =\emptyset$ and $F = F_1 \cup F_2$ then it holds that $ne_{\sigma,F}(a) = (ne_{\sigma,F_1} * |\sigma(F_2)|) + (ne_{\sigma,F_2}(a) * |\sigma(F_1)|)$ for every $a \in A_1 \cup A_2$ and $\sigma \in \{\ad,\co, \gr, \pr,\st, \sst\}$.
\end{lemma}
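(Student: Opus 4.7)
The plan is to reduce the claim to the standard disjoint‐union decomposition property for the semantics in question: namely, that for $F=F_1\cup F_2$ with $A_1\cap A_2=\emptyset$ we have
\[
\sigma(F) \;=\; \{E_1\cup E_2 \mid E_1\in \sigma(F_1),\ E_2\in\sigma(F_2)\}
\]
for every $\sigma \in \{\ad,\co,\gr,\pr,\st,\sst\}$. Once this is established, a simple counting argument yields the formula.

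First, I would verify the above decomposition property for each of the six semantics. For $\cf$ and $\ad$ it follows directly from the fact that no attacks cross between $A_1$ and $A_2$: any conflict/undefended argument in $F$ is localised to one component. For $\co$, one uses $\mathcal{F}_F(E_1\cup E_2) = \mathcal{F}_{F_1}(E_1)\cup\mathcal{F}_{F_2}(E_2)$, which is immediate because defence only involves arguments within the same component. From this, $\pr$ and $\gr$ follow by the respective maximality/minimality: the inclusion-maximal (resp.\ minimal) complete sets in $F$ are exactly the unions of component-wise extremal complete sets, where for $\gr$ one additionally uses that the grounded extension is unique. For $\st$, the condition $E^+=A\setminus E$ splits across components since attacks do not cross. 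Finally, for $\sst$, note that $E_1\cup E_2 \cup (E_1\cup E_2)^+_F = (E_1\cup E_1^{+}_{F_1})\cup (E_2\cup E_2^{+}_{F_2})$, so maximality of the range in $F$ coincides with component-wise maximality among complete sets. These decomposition facts are folklore in abstract argumentation (see e.g.\ results on directionality and SCC-recursive semantics in \cite{baroni2018handbook}), but I would spell them out for completeness.

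Next comes the counting step. Fix $a\in A_1\cup A_2$ and assume w.l.o.g.\ $a\in A_1$. By the decomposition,
\[
\{E\in\sigma(F)\mid a\in E\} \;=\; \{E_1\cup E_2 \mid E_1\in\sigma(F_1),\ a\in E_1,\ E_2\in\sigma(F_2)\},
\]
and this union is disjoint because $A_1\cap A_2=\emptyset$ forces a unique split of any $E\in\sigma(F)$ into $E_1 = E\cap A_1$ and $E_2 = E\cap A_2$. Taking cardinalities gives $ne_{\sigma,F}(a) = ne_{\sigma,F_1}(a)\cdot|\sigma(F_2)|$. Since $a\notin A_2$, we have $ne_{\sigma,F_2}(a)=0$, so the second summand in the claimed formula vanishes and the equation holds. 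The case $a\in A_2$ is symmetric.

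The only real obstacle is the semantics-by-semantics verification of the disjoint-union decomposition, in particular for $\gr$ and $\sst$ where the extremality condition needs to be carried through carefully; otherwise everything is bookkeeping. To keep the proof compact, I would state the decomposition as an auxiliary lemma (it is essentially a consequence of Lemma~\ref{lem:composition_decomposition_1} together with the defining conditions of each semantics) and then apply the counting argument uniformly.
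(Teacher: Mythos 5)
Your proposal is correct and follows essentially the same route as the paper's proof: both reduce the claim to the fact that the $\sigma$-extensions of $F$ are exactly the unions $E_1\cup E_2$ of $\sigma$-extensions of the two components (verified semantics by semantics), and then count. If anything, your version is slightly more careful, since you make explicit the converse direction of the decomposition (every $E\in\sigma(F)$ splits uniquely into $E\cap A_1$ and $E\cap A_2$) and the vanishing of the second summand via $ne_{\sigma,F_2}(a)=0$, both of which the paper's proof leaves largely implicit.
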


\begin{proposition}
    $\OBE_{ne_\sigma, \Box}$ satisfies composition for $\sigma \in \{\ad,\co,\pr,\st, \sst\}$ and $\Box \in \{sum,max,leximax,min,leximin\}$.
\end{proposition}

For extension semantics $\sigma \in \{\ad, \co,\pr, \st, \sst\}$ and aggregation functions $\Box \in \{sum, max, leximax, min, leximin\}$ we can show that decomposition is violated for $\OBE_{ne_\sigma,\Box}$. 
\begin{example}\label{ex:OBE_NE:decomp}
    Let $F_{22}$ be the AF as depicted in Figure \ref{tikz:OBE_NE:decomp}. Consider the sets $\{a,e\}$ and $\{c,f,g\}$. $F_{22}$ can be partitioned into two disjoint AFs $F_{22,1}= (\{a,b,c\}, \{(a,b),(b,a),(a,c),(b,c)\}$ and $F_{22,2}= (\{d,e,f,g\}, \{(d,e),(e,f),\\(e,g),(f,g),(g,f)\}$. The corresponding sequences are: $ne_{\co,F_{22}}(\{a,e\})= (3,0)$,  $ne_{\co,F_{22,1}}(\{a\})= (1)$,  $ne_{\co,F_{22,2}}(\{e\})= (0)$, and  $ne_{\co,F_{22}}(\{c,f,g\})= (0,3,3)$, $ne_{\co,F_{22,1}}(\{c\})= (0)$, and $ne_{\co,F_{22,2}}(\{f,g\})= (1,1)$. 
   \begin{description}
       \item[``$\Box = sum$'':] We have $\{c,f,g\} \sqsupset^{\OBE_{ne_\co,sum}}_{F_{22}} \{a,e\}$, but $\{a\} \sqsupset^{\OBE_{ne_\co,sum}}_{F_{22,1}} \{c\}$. 
    
   \item[``$\Box \in \{max, leximax\}$'':] We have $\{a,e\} \equiv^{\OBE_{ne_\co,\Box}}_{F_{22}} \{c,f,g\}$, but  $\{a\} \sqsupset^{\OBE_{ne_\co,\Box}}_{F_{22,1}} \{c\}$ and $\{f,g\} \sqsupset^{\OBE_{ne_\co,\Box}}_{F_{22,2}} \{e\}$. 
    
      \item[``$\Box \in \{min, leximin\}$'':] We have $\{a,e\} \equiv^{\OBE_{ne_\co,\Box}}_{F_{22}} \{c,f,g\}$, but $\{a\} \sqsupset^{\OBE_{ne_\co,\Box}}_{F_{22,1}} \{c\}$ and $\{f,g\} \sqsupset^{\OBE_{ne_\co,\Box}}_{F_{22,2}} \{e\}$.  
 \end{description} 
 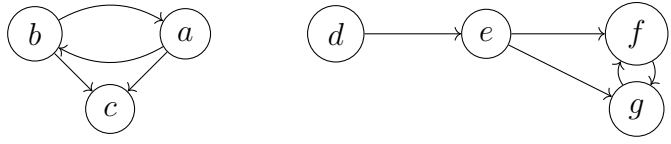
\begin{figure}
    \centering
 
 \scalebox{1}{
\begin{tikzpicture}

\node (a) at (0,0) [circle, draw,minimum size= 0.65cm] {$a$};
\node (b) at (-2,0) [circle, draw,minimum size= 0.65cm] {$b$};
\node (c) at (-1,-1) [circle, draw,minimum size= 0.65cm] {$c$};

\node (d) at (2,0) [circle, draw,minimum size= 0.65cm] {$d$};
\node (e) at (4,0) [circle, draw,minimum size= 0.65cm] {$e$};
\node (f) at (6,0) [circle, draw,minimum size= 0.65cm] {$f$};
\node (g) at (6,-1) [circle, draw,minimum size= 0.65cm] {$g$};

\path[->] (d) edge  (e);
\path[->] (e) edge  (f);
\path[->] (e) edge  (g);
\path[->,bend left] (f) edge  (g);
\path[->, bend left] (g) edge  (f);

\path[->,bend left] (a) edge  (b);
\path[->, bend left] (b) edge  (a);
\path[->] (a) edge  (c);
\path[->] (b) edge  (c);

\end{tikzpicture}}
   \caption{AF $F_{22}$ from Example \ref{ex:OBE_NE:decomp}.}\label{tikz:OBE_NE:decomp}
\end{figure}
\end{example}

For $\sigma = \gr$ we need a different counterexample for show that $\OBE_{ne_\gr,\Box}$ violates decomposition for $\Box \in \{sum,max,leximax, min,leximin\}$.
\begin{example}\label{ex:OBE_NE_gr:decomp}
    Let $F_{23} = (\{a,b,c,d,e,f\}, \{(a,b),(b,c),(d,e),(e,f)\})$ be an AF as depicted in Figure \ref{tikz:OBE_NE_gr:decomp}. Consider sets $\{a,c,e\}$ and $\{b,d,f\}$. Then $F_{23}$ can be partitioned into $F_{23,1}=  (\{a,b,c\}, \{(a,b),(b,c)\})$ and  $F_{23,2}= (\{d,e,f\}, \{(d,e),(e,f)\})$. We have $\{a,c,e\} \equiv^{\OBE_{ne_\gr,\Box}}_{F_{23}} \{b,d,f\}$ for \\$\Box \in \{sum,max,leximax,min,leximin\}$. However, we have  $\{a,c\} \sqsupset^{\OBE_{ne_\gr,\Box}}_{F_{23,1}} \{b\}$ and  $\{d,f\} \sqsupset^{\OBE_{ne_\gr,\Box}}_{F_{23,2}} \{e\}$ for $\Box \in \{sum,max,leximax,min,leximin\}$. Thus, decomposition is violated for $\OBE_{ne_\gr,\Box}$ with $\Box \in \{sum,max,leximax,\\ min,leximin\}$.

     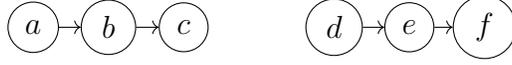
\begin{figure}
    \centering
 
 \scalebox{1}{
\begin{tikzpicture}

\node (a) at (0,0) [circle, draw,minimum size= 0.65cm] {$a$};
\node (b) at (1,0) [circle, draw,minimum size= 0.65cm] {$b$};
\node (c) at (2,0) [circle, draw,minimum size= 0.65cm] {$c$};

\node (d) at (4,0) [circle, draw,minimum size= 0.65cm] {$d$};
\node (e) at (5,0) [circle, draw,minimum size= 0.65cm] {$e$};
\node (f) at (6,0) [circle, draw,minimum size= 0.65cm] {$f$};

\path[->] (a) edge  (b);
\path[->] (b) edge  (c);
\path[->] (d) edge  (e);
\path[->] (e) edge  (f);
\end{tikzpicture}}
   \caption{AF $F_{23}$ from Example \ref{ex:OBE_NE_gr:decomp}.}\label{tikz:OBE_NE_gr:decomp}
\end{figure}
\end{example}

Since adding a defended argument into a set does not decrease the strength values of the other arguments $\OBE_{ne_\sigma,\Box}$ satisfies weak reinstatement for  $\Box \in \{sum, max\}$ . For $\Box = leximax$, $\OBE_{ne_\sigma,leximax}$ even satisfies strong reinstatement.
\begin{proposition}
    For $\Box \in \{sum, max\}$  $\OBE_{ne_\sigma,\Box}$ satisfies weak reinstatement and for $\Box = leximax$ satisfies strong reinstatement with $\sigma \in \{\ad, \co, \gr,\\ \pr, \st, \sst\}$.  
\end{proposition}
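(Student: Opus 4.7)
My plan is to fix $F = (A,R)$, $E \subseteq A$, and $a \in A$ satisfying the reinstatement hypotheses ($a \in \mathcal{F}_F(E)$, $a \notin E$, $a \notin E^-\cup E^+$), and to reduce everything to a single observation about the two value sequences. Writing $v := \varepsilon_F(E) = (ne_{\sigma,F}(x))_{x \in E}$ and $v' := \varepsilon_F(E \cup \{a\})$, the key fact is that as multisets $v' = v \cup \{k\}$ where $k := ne_{\sigma,F}(a) \in \mathbb{N}$. Hence the whole proposition boils down to showing that appending a non-negative integer $k$ to $v$ can only weakly (for $\Box \in \{sum, max\}$) or strictly (for $\Box = leximax$) increase the aggregate.

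The two weak cases are direct calculations. For $\Box = sum$,
\[ sum(v') = sum(v) + k \geq sum(v), \]
so $E \cup \{a\} \sqsupseteq_F^{\OBE_{ne_\sigma,sum}} E$. For $\Box = max$,
\[ max(v') = \max(max(v), k) \geq max(v), \]
yielding the analogous conclusion. In both cases the inequality is immediate from $k \geq 0$, and this part of the argument does not depend on which semantics $\sigma$ is used; no further analysis of what $a$ brings to the table is needed.

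The interesting case is strong reinstatement for $\Box = leximax$. Here I sort both sequences in decreasing order and compare them coordinate-wise. The sequence $v'$ is obtained from the sorted $v$ by inserting $k$ at the (unique) position $p$ with $v_{p-1} \geq k \geq v_p$. Coordinates $1,\dots,p-1$ are identical; at coordinate $p$, $v'_p = k \geq v_p$, and every subsequent coordinate of $v'$ equals the previous coordinate of $v$, so $v'_{p+j} = v_{p+j-1} \geq v_{p+j}$ by the decreasing order. Hence $v' \geq^{leximax} v$ with equality only if the sorted $v$ extends indefinitely with the same value $k$, which is impossible for a finite sequence of distinct position comparisons; the extra coordinate of $v'$ must break the tie.

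The main obstacle I anticipate is precisely pinning down the strict inequality for leximax when $k$ lies at the boundary of $v$'s range (in particular when $k = 0$ and all of $v$'s entries are also $0$, which can arise for $\sigma = \st$ on AFs without stable extensions). I plan to handle this by adopting the standard convention for leximax on variable-length sequences, namely that a sequence whose sorted form properly extends another's on the right is strictly preferred, so that the additional coordinate contributed by $a$ always yields $v' >^{leximax} v$. Under this convention, the coordinate-wise argument above gives strictness, which settles strong reinstatement for every $\sigma \in \{\ad, \co, \gr, \pr, \st, \sst\}$ since the argument never inspects $\sigma$ beyond $ne_{\sigma,F}(a) \in \mathbb{N}$.
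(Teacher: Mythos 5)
Your proof is correct and follows essentially the same route as the paper's: the $sum$ and $max$ cases reduce to $ne_{\sigma,F}(a)\geq 0$, and the $leximax$ case inserts the new value into the sorted sequence, compares coordinate-wise, and uses the extra coordinate of the longer sequence to obtain strictness. Your explicit treatment of the variable-length tie-breaking convention is in fact a touch more careful than the paper's own phrasing of that final step, but it is the same argument.
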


If the added argument $a$ is part less $\sigma$-extensions than the other arguments of the set it is defended by, then the addition of $a$ decreases the minimum and therefore weak reinstatement is violated by $\OBE_{ne_\sigma,\Box}$ for $\Box \in \{min, leximin\}$ for $\sigma \in \{\ad,\co,\pr,\st,\sst\}$
\begin{example}\label{ex:OBE_NE_min:weak_reinstat}
    Let $F_{24}$ be an AF as depicted in Figure \ref{tikz:OBE_NE_min:weak_reinstat}. Then every argument except $c$ is part of three complete extensions. Argument $c$ is defended by the set $\{a,e\}$, hence adding $c$ into $\{a,e\}$ should not lower the plausibility of acceptance of this set. However, $min(ne_{\co,F_{24}}(\{a,e\})) = 3$ but $min(ne_{\co,F_{24}}(\{a,c,e\})) = 1$, so $\{a,e\} \sqsupset^{\OBE_{ne_\co,\Box}}_{F_{24}} \{a,c,e\}$ for $\Box \in \{min,leximin\}$. So, weak reinstatement is violated. The same behaviour can also be observed for semantics $\sigma \in \{\ad,\pr,\st,\sst\}$.
 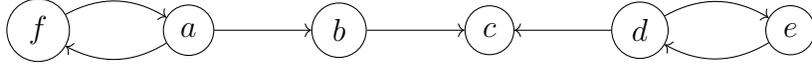
\begin{figure}
    \centering
 
 \scalebox{1}{
\begin{tikzpicture}

\node (a) at (0,0) [circle, draw,minimum size= 0.65cm] {$a$};
\node (b) at (2,0) [circle, draw,minimum size= 0.65cm] {$b$};
\node (c) at (4,0) [circle, draw,minimum size= 0.65cm] {$c$};
\node (d) at (6,0) [circle, draw,minimum size= 0.65cm] {$d$};
\node (e) at (8,0) [circle, draw,minimum size= 0.65cm] {$e$};
\node (f) at (-2,0) [circle, draw,minimum size= 0.65cm] {$f$};

\path[->] (a) edge  (b);
\path[->] (b) edge  (c);
\path[->] (d) edge  (c);
\path[->,bend left] (d) edge  (e);
\path[->, bend left] (e) edge  (d);

\path[->,bend left] (a) edge  (f);
\path[->, bend left] (f) edge  (a);

\end{tikzpicture}}
   \caption{AF $F_{24}$ from Example \ref{ex:OBE_NE_min:weak_reinstat}.}\label{tikz:OBE_NE_min:weak_reinstat}
\end{figure}
\end{example}

So, for only $\sigma = \gr$, $\OBE_{ne_\gr,\Box}$ for $\Box \in \{min,leximin\}$ satisfies weak reinstatement.   
\begin{proposition}\label{prop:OBE_ne_gr_min_weak_reinstat}
    $\OBE_{ne_\gr,\Box}$ for $\Box \in \{min,leximin\}$ satisfies weak reinstatement.   
\end{proposition}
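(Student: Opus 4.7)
The plan is to exploit two facts: (i) since the grounded extension $\gr(F)$ is unique, $ne_{\gr,F}(x) \in \{0,1\}$ for every argument $x \in A$, so support sequences are binary; and (ii) the characteristic function $\mathcal{F}_F$ is monotonic and $\gr(F)$ is one of its fixed points, so $E \subseteq \gr(F)$ entails $\mathcal{F}_F(E) \subseteq \mathcal{F}_F(\gr(F)) = \gr(F)$. These two observations allow a clean case split on whether $E$ is entirely contained in $\gr(F)$ or not.

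For $\Box = min$ I would argue directly. If $E \subseteq \gr(F)$, then by monotonicity and $a \in \mathcal{F}_F(E)$ we get $a \in \gr(F)$, so every entry of $ne_{\gr,F}(E \cup \{a\})$ equals $1$ and the minimum remains $1 \geq 1 = \min(ne_{\gr,F}(E))$. Otherwise some entry of $ne_{\gr,F}(E)$ equals $0$, giving $\min(ne_{\gr,F}(E)) = 0$, and since the values are non-negative we have $\min(ne_{\gr,F}(E \cup \{a\})) \geq 0 = \min(ne_{\gr,F}(E))$. In either case $E \cup \{a\} \sqsupseteq_F^{\OBE_{ne_\gr,min}} E$.

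The $\Box = leximin$ case follows the same split. When $E \subseteq \gr(F)$ the argument above yields $a \in \gr(F)$, so the sorted support sequence of $E \cup \{a\}$ consists of $|E|+1$ ones and dominates the sequence of $|E|$ ones attached to $E$ in the lexicographic order. When $E \not\subseteq \gr(F)$, both sorted sequences already start with at least one $0$; adding $a$ merely inserts a further $0$ or a further $1$ into the multiset of values, and I would conclude by comparing the resulting sorted sequences position by position. The main obstacle I expect to encounter is the mixed sub-case, in which $E$ contains both arguments in and out of $\gr(F)$ while $a \notin \gr(F)$: here the leximin comparison of two sequences of different lengths requires a delicate treatment, and the extra conditions $a \notin E$ and $a \notin E^- \cup E^+$ will have to be used together with the monotonicity of $\mathcal{F}_F$ to rule out a drop in the lexicographic comparison.
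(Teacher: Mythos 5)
Your proposal follows essentially the same route as the paper's proof: the uniqueness of the grounded extension makes every support value $0$ or $1$, and the case split on whether $E \subseteq \gr(F)$ (equivalently, whether $\min(ne_{\gr,F}(E))$ is $1$ or $0$) settles the $\Box = min$ case exactly as in the paper; you are in fact more explicit than the paper in justifying $a \in \gr(F)$ via monotonicity of $\mathcal{F}_F$ and $\mathcal{F}_F(\gr(F)) = \gr(F)$. The ``delicate'' mixed sub-case you anticipate for $leximin$ does not actually arise: because the values are binary and $ne_{\gr,F}(E \cup \{a\})$ is obtained from $ne_{\gr,F}(E)$ by inserting a single $0$ or $1$, the two sorted sequences agree on their common prefix, so the comparison can never come out strictly in favour of $E$; the paper itself dispatches $leximin$ with the same one-line extension of the $min$ computation, concluding $E \cup \{a\} \equiv^{\OBE_{ne_\gr,\Box}}_{F} E$.
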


The proof for Proposition \ref{prop:OBE_ne_gr_min_weak_reinstat} we show that $E \cup \{a\} \equiv^{\OBE_{ne_\gr,\Box}}_F E$ for $\Box \in \{min,leximin\}$, so strong reinstatement is violated. 

To show that addition robustness is violated by $\OBE_{ne_\sigma,\Box}$ for $\sigma \in \{\ad,\co,\gr,\\ \pr,\st,\sst\}$ and $\Box \in \{sum, max, leximax, min, leximin\}$ we need several counterexamples. 
We begin with $\OBE_{ne_\sigma,\Box}$ with $\sigma \in \{\co,\pr\}$ and $\Box \in \{sum, max, \\leximax, min, leximin\}$.

\begin{example}\label{ex:OBE_co_addrob}
    Let $F_{25}$ be the AF depicted in Figure \ref{tikz:OBE_co_addrob}. Consider sets $\{a,e\}$ and $\{b,g\}$, then for $\Box \in \{sum,leximax,min,leximax\}$ and $\sigma \in \{\co,\pr\}$, we have $\{a,e\} \equiv^{\OBE_{ne_\sigma,\Box}}_{F_{25}} \{b,g\}$. Hence, we can add the attack $(a,b)$ to obtain $F_{25}'$, however for this AF we get: $$\{b,g\} \sqsupset^{\OBE_{ne_\sigma,\Box}}_{F_{25}'} \{a,e\}$$ 

    For $\Box = max$ we consider the two sets $\{a\}$ and $\{b,c\}$, again we have $\{a\} \equiv^{\OBE_{ne_\sigma,max}}_{F_{25}} \{b,c\}$ and therefore we can add $(a,b)$, however we get:  $$\{b,c\} \sqsupset^{\OBE_{ne_\sigma,max}}_{F_{25}'} \{a\}$$ So, addition robustness is violated by $\OBE_{ne_\sigma,\Box}$ with $\sigma \in \{\co,\pr\}$ and $\Box \in \{sum,max,leximax,min,leximin\}$.

\begin{figure}
    \centering
 
 \scalebox{1}{
\begin{tikzpicture}

\node (a) at (0,0) [circle, draw,minimum size= 0.65cm] {$a$};
\node (b) at (0,1.5) [circle, draw,minimum size= 0.65cm] {$b$};
\node (c) at (2,0) [circle, draw,minimum size= 0.65cm] {$c$};
\node (d) at (2,1.5) [circle, draw,minimum size= 0.65cm] {$d$};
\node (e) at (3,0) [circle, draw,minimum size= 0.65cm] {$e$};
\node (f) at (4,0) [circle, draw,minimum size= 0.65cm] {$f$};
\node (bb) at (0,-1) [circle, draw,minimum size= 0.65cm] {$\bar{b}$};
\node (cc) at (2,-1) [circle, draw,minimum size= 0.65cm] {$\bar{c}$};
\node (ff) at (4,-1) [circle, draw,minimum size= 0.65cm] {$\bar{f}$};

\node (g) at (3,-2) [circle, draw,minimum size= 0.65cm] {$g$};


\path[->, bend left] (a) edge  (c);
\path[->, bend left] (c) edge  (a);
\path[->, bend left] (b) edge  (d);
\path[->, bend left] (d) edge  (b);

\path[->] (c) edge  (e);
\path[->] (d) edge  (e);
\path[->] (e) edge  (f);

\path[->, bend right] (b) edge  (bb);
\path[->] (c) edge  (cc);
\path[->] (f) edge  (ff);

\path[->] (ff) edge  (g);
\path[->] (cc) edge  (g);
\path[->] (bb) edge  (g);

\path[->] (cc) edge  [loop left] node {} ();
\path[->] (bb) edge  [loop left] node {} ();
\path[->] (ff) edge  [loop left] node {} ();

\path[->, dashed] (a) edge  (b);

\end{tikzpicture}}
   \caption{Abstract argumentation framework $F_{25}$ from Example \ref{ex:OBE_co_addrob}, where the dashed attack from $a$ to $b$ is added to obtain $F_{25}'$.}
    \label{tikz:OBE_co_addrob}
\end{figure}
\end{example}

Next, we look at $\OBE_{ne_\sigma,\Box}$ with $\sigma = \ad$ and $\Box \in \{sum,leximax,leximin\}$.
\begin{example}\label{ex:OBE_ad_sum_addrob}
    Let $F_{26}$ be the AF depicted in Figure \ref{tikz:OBE_ad_sum_addrob}. Consider sets $\{a,d,e\}$ and $\{a,b,c\}$, then for $\sigma = \ad$ and $\Box \in \{sum,leximax,leximin\}$ we have $\{a,d,e\} \equiv^{\OBE_{ne_\ad,\Box}}_{F_{26}} \{a,b,c\}$. So, we obtain $F_{26}'$ by adding attack $(a,b)$, however this entails $\{a,b,c\} \sqsupset^{\OBE_{ne_\ad,\Box}}_{F_{26}'} \{a,d,e\}$. So, addition robustness is violated for $\OBE_{ne_\ad,\Box}$ with $\Box \in \{sum,leximax,leximin\}$.

    \begin{figure}
    \centering
 
 \scalebox{1}{
\begin{tikzpicture}

\node (a) at (0,0) [circle, draw,minimum size= 0.65cm] {$a$};
\node (b) at (2,0) [circle, draw,minimum size= 0.65cm] {$b$};
\node (c) at (4,0) [circle, draw,minimum size= 0.65cm] {$c$};

\node (d) at (6,0) [circle, draw,minimum size= 0.65cm] {$d$};
\node (e) at (7,0) [circle, draw,minimum size= 0.65cm] {$e$};


\path[->] (b) edge  (c);

\path[->] (b) edge  [loop above] node {} ();
\path[->] (d) edge  [loop above] node {} ();
\path[->] (e) edge  [loop above] node {} ();

\path[->, dashed] (a) edge  (b);

\end{tikzpicture}}
   \caption{Abstract argumentation framework $F_{26}$ from Example \ref{ex:OBE_ad_sum_addrob}, where the dashed attack from $a$ to $b$ is added to obtain $F_{26}'$.}
    \label{tikz:OBE_ad_sum_addrob}
\end{figure}
\end{example}

For $\OBE_{ne_\sigma,\Box}$ with $\sigma \in \{\st, \sst\}$ and $\Box \in \{sum,leximax,leximin\}$ we use the following example.
\begin{example}\label{ex:OBE_st_sum_addrob}
    Let $F_{27}$ be the AF depicted in Figure \ref{tikz:OBE_st_sum_addrob}. Consider sets $\{a,d,f\}$ and $\{b,c,e\}$, then for $\sigma \in \{\st, \sst\}$ and $\Box \in \{sum,leximax,leximin\}$ we get $\{a,d,f\} \equiv^{\OBE_{ne_\sigma,\Box}}_{F_{27}} \{b,c,e\}$. Hence, attack $(a,b)$ can be added to obtain $F_{27}'$, however this entails $\{b,c,e\} \sqsupset^{\OBE_{ne_\sigma,\Box}}_{F_{27}'} \{a,d,f\}$. So, addition robustness is violated for $\OBE_{ne_\sigma,\Box}$ with $\Box \in \{sum,leximax,leximin\}$.

    \begin{figure}
    \centering
 
 \scalebox{1}{
\begin{tikzpicture}

\node (a) at (0,0) [circle, draw,minimum size= 0.65cm] {$a$};
\node (b) at (2,0) [circle, draw,minimum size= 0.65cm] {$b$};
\node (c) at (2,-1.5) [circle, draw,minimum size= 0.65cm] {$c$};

\node (d) at (0,-1.5) [circle, draw,minimum size= 0.65cm] {$d$};
\node (e) at (4,-1.5) [circle, draw,minimum size= 0.65cm] {$e$};

\node (f) at (-2,0) [circle, draw,minimum size= 0.65cm] {$f$};


\path[->] (b) edge  (c);
\path[->] (b) edge  (d);
\path[->] (b) edge  (e);
\path[->, bend left] (d) edge  (c);
\path[->, bend left] (c) edge  (d);
\path[->, bend right, in = -90, out = -60,  looseness=0.6] (d) edge  (e);
\path[->, bend left, in = 90, out = 60,  looseness=0.6] (e) edge  (d);

\path[->] (d) edge  (a);
\path[->] (a) edge  (f);

\path[->] (f) edge  [loop above] node {} ();

\path[->, dashed] (a) edge  (b);

\end{tikzpicture}}
   \caption{Abstract argumentation framework $F_{27}$ from Example \ref{ex:OBE_st_sum_addrob}, where the dashed attack from $a$ to $b$ is added to obtain $F_{27}'$.}
    \label{tikz:OBE_st_sum_addrob}
\end{figure}
\end{example}

We continue with $\OBE_{ne_\sigma,max}$ with $\sigma \in \{\ad,\st, \sst\}$.
\begin{example}\label{ex:OBE_max_addrob}
    Let $F_{28}$ be the AF depicted in Figure \ref{tikz:OBE_max_addrob}. Consider sets $\{a,e\}$ and $\{b,f\}$, then for $\OBE_{ne_\ad,max}$ we get $\{a,e\} \equiv^{\OBE_{ne_\ad,max}}_{F_{28}} \{b,f\}$. Hence, attack $(a,b)$ can be added to obtain $F_{28}'$, but for this AF we get: $$\{b,f\} \sqsupset^{\OBE_{ne_\ad,max}}_{F_{28}'} \{a,e\}$$
    
    For $\sigma \in \{\st, \sst\}$, we consider sets $\{a\}$ and $\{b,f\}$, then $\{a\} \equiv^{\OBE_{ne_\sigma,max}}_{F_{28}} \{b,f\}$, but $\{b,f\} \sqsupset^{\OBE_{ne_\sigma,max}}_{F_{28}'} \{a\}$.
    So, addition robustness is violated for $\OBE_{ne_\sigma,max}$ with $\sigma \in \{\ad,\st, \sst\}$.

    \begin{figure}
    \centering
 
 \scalebox{1}{
\begin{tikzpicture}

\node (a) at (0,0) [circle, draw,minimum size= 0.65cm] {$a$};
\node (b) at (2,0) [circle, draw,minimum size= 0.65cm] {$b$};
\node (c) at (1,-1) [circle, draw,minimum size= 0.65cm] {$c$};

\node (d) at (4,0) [circle, draw,minimum size= 0.65cm] {$d$};
\node (e) at (6,0) [circle, draw,minimum size= 0.65cm] {$e$};

\node (f) at (3,-1) [circle, draw,minimum size= 0.65cm] {$f$};


\path[->] (b) edge  (c);
\path[->] (c) edge  (a);
\path[->] (b) edge  (d);
\path[->, bend left] (b) edge  (f);
\path[->, bend left] (f) edge  (b);

\path[->] (d) edge  (e);
\path[->] (f) edge  (d);

\path[->, dashed] (a) edge  (b);

\end{tikzpicture}}
   \caption{Abstract argumentation framework $F_{28}$ from Example \ref{ex:OBE_max_addrob}, where the dashed attack from $a$ to $b$ is added to obtain $F_{28}'$.}
    \label{tikz:OBE_max_addrob}
\end{figure}
\end{example}

Next, we discuss $\OBE_{ne_\sigma,min}$ with $\sigma \in \{\ad,\st,\sst\}$ 
\begin{example}\label{ex:OBE_min_addrob}
    Let $F_{29}$ be the AF depicted in Figure \ref{tikz:OBE_min_addrob}. Consider sets $\{a,c\}$ and $\{b,c\}$, then for $\sigma = \{\ad,\st, \sst\}$ and $\Box = min$ we get $\{a,c\} \equiv^{\OBE_{ne_\sigma,min}}_{F_{29}} \{b,c\}$. Thus, attack $(a,b)$ can be added to obtain $F_{29}'$, however this entails $\{b,c\} \sqsupset^{\OBE_{ne_\sigma,min}}_{F_{29}'} \{a,c\}$. Therefore, addition robustness is violated for $\OBE_{ne_\sigma,min}$ with $\sigma = \{\ad,\st, \sst\}$.

    \begin{figure}
    \centering
 
 \scalebox{1}{
\begin{tikzpicture}

\node (a) at (0,0) [circle, draw,minimum size= 0.65cm] {$a$};
\node (b) at (2,-2) [circle, draw,minimum size= 0.65cm] {$b$};
\node (c) at (0,-2) [circle, draw,minimum size= 0.65cm] {$c$};

\node (d) at (2,0) [circle, draw,minimum size= 0.65cm] {$d$};
\node (e) at (-1,-1) [circle, draw,minimum size= 0.65cm] {$e$};


\path[->] (a) edge  (c);
\path[->] (c) edge  (b);
\path[->] (b) edge  (d);
\path[->] (d) edge  (a);
\path[->, bend left] (a) edge  (e);
\path[->, bend left] (e) edge  (a);
\path[->, bend left] (c) edge  (e);
\path[->, bend left] (e) edge  (c);

\path[->, dashed] (a) edge  (b);

\end{tikzpicture}}
   \caption{Abstract argumentation framework $F_{29}$ from Example \ref{ex:OBE_min_addrob}, where the dashed attack from $a$ to $b$ is added to obtain $F_{29}'$.}
    \label{tikz:OBE_min_addrob}
\end{figure}
\end{example}

We continue with the counterexamples for $\sigma= \gr$.
\begin{example}\label{ex:OBE_gr_sum_addrob}
    Let $F_{30}$ be the AF depicted in Figure \ref{tikz:OBE_gr_sum_addrob}. Consider sets $\{a,e,f\}$ and $\{b,c,d\}$, then for $\sigma = \gr$ and $\Box = \{sum,leximax,leximin\}$ we get $\{a,e,f\} \equiv^{\OBE_{ne_\gr,\Box}}_{F_{30}} \{b,c,d\}$. Hence, attack $(a,b)$ can be added to obtain $F_{30}'$, however this entails $\{b,c,d\} \sqsupset^{\OBE_{ne_\gr,\Box}}_{F_{30}'} \{a,e,f\}$. So, addition robustness is violated for $\OBE_{ne_\gr,\Box}$ with $\Box = \{sum,leximax,leximin\}$.

    \begin{figure}
    \centering
 
 \scalebox{1}{
\begin{tikzpicture}

\node (a) at (0,0) [circle, draw,minimum size= 0.65cm] {$a$};
\node (b) at (2,0) [circle, draw,minimum size= 0.65cm] {$b$};
\node (c) at (4,1) [circle, draw,minimum size= 0.65cm] {$c$};

\node (d) at (4,0) [circle, draw,minimum size= 0.65cm] {$d$};
\node (e) at (6,0) [circle, draw,minimum size= 0.65cm] {$e$};
\node (f) at (7,0) [circle, draw,minimum size= 0.65cm] {$f$};


\path[->] (b) edge  (c);
\path[->] (b) edge  (d);

\path[->] (f) edge  [loop above] node {} ();
\path[->] (e) edge  [loop above] node {} ();

\path[->, dashed] (a) edge  (b);

\end{tikzpicture}}
   \caption{Abstract argumentation framework $F_{30}$ from Example \ref{ex:OBE_gr_sum_addrob}, where the dashed attack from $a$ to $b$ is added to obtain $F_{30}'$.}
    \label{tikz:OBE_gr_sum_addrob}
\end{figure}
\end{example}

The final counterexample is for $\OBE_{ne_\gr,\Box}$ for $\Box \in \{max,min\}$.
\begin{example}\label{ex:OBE_gr_max_addrob}
    Let $F_{31}$ be the AF depicted in Figure \ref{tikz:OBE_gr_max_addrob}. Consider sets $\{a,d\}$ and $\{b,e\}$, then for $\sigma = \gr$ and $\Box = \{max,min\}$ we get $\{a,d\} \equiv^{\OBE_{ne_\gr,\Box}}_{F_{31}} \{b,e\}$. We obtain $F_{31}'$ by adding attack $(a,b)$, but this entails $\{b,e\} \sqsupset^{\OBE_{ne_\gr,\Box}}_{F_{31}'} \{a,d\}$. Therefore, addition robustness is violated for $\OBE_{ne_\gr,\Box}$ with $\Box = \{max,min\}$.

    \begin{figure}
    \centering
 
 \scalebox{1}{
\begin{tikzpicture}

\node (a) at (0,0) [circle, draw,minimum size= 0.65cm] {$a$};
\node (b) at (2,0) [circle, draw,minimum size= 0.65cm] {$b$};
\node (c) at (4,0) [circle, draw,minimum size= 0.65cm] {$c$};

\node (d) at (6,0) [circle, draw,minimum size= 0.65cm] {$d$};
\node (e) at (7,0) [circle, draw,minimum size= 0.65cm] {$e$};


\path[->] (b) edge  (c);
\path[->] (c) edge  (d);

\path[->] (a) edge  [loop above] node {} ();

\path[->, dashed] (a) edge  (b);

\end{tikzpicture}}
   \caption{Abstract argumentation framework $F_{31}$ from Example \ref{ex:OBE_gr_max_addrob}, where the dashed attack from $a$ to $b$ is added to obtain $F_{31}'$.}
    \label{tikz:OBE_gr_max_addrob}
\end{figure}
\end{example}

We know that the numerical evaluation function does not relay on the names of the arguments therefore syntax independence is satisfied. 
\begin{proposition}
     $\OBE_{ne_\sigma,\Box}$ satisfies syntax independence for $\Box \in \{sum,max,\\min,leximax, leximin\}$.
\end{proposition}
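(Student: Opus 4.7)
The plan is to show that any isomorphism $\gamma : A \to A'$ between AFs $F=(A,R)$ and $F'=(A',R')$ induces a bijection between the $\sigma$-extensions of $F$ and of $F'$, which in turn implies that the numerical evaluation function $ne_\sigma$ is preserved pointwise, so that the aggregated values (for any of the listed aggregators) agree on $E$ and $\gamma(E)$. Formally, I want to prove: for every $a \in A$, $ne_{\sigma,F}(a) = ne_{\sigma,F'}(\gamma(a))$, and for every $E \subseteq A$, $\Box(\varepsilon_F(E)) = \Box(\varepsilon_{F'}(\gamma(E)))$. The conclusion $E \sqsupseteq^{\OBE_{ne_\sigma,\Box}}_F E' \iff \gamma(E) \sqsupseteq^{\OBE_{ne_\sigma,\Box}}_{F'} \gamma(E')$ is then immediate from the definition of the semantics.

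The first step is the structural lemma: since $\gamma$ is a bijection with $(a,b) \in R$ iff $(\gamma(a),\gamma(b)) \in R'$, conflict-freeness, defence, attack ranges, and the characteristic function all transfer through $\gamma$. A short induction on the definitions yields $\sigma(F') = \{\gamma(E) : E \in \sigma(F)\}$ for $\sigma \in \{\ad,\co,\gr,\pr,\st,\sst\}$ (for several of these this is already folklore, and for the rest it is a direct unfolding of Definition~\ref{def:semantics}). Using this, for any $a \in A$ we get
\[
ne_{\sigma,F'}(\gamma(a)) = |\{E' \in \sigma(F') \mid \gamma(a) \in E'\}| = |\{\gamma(E) : E \in \sigma(F),\, a \in E\}| = ne_{\sigma,F}(a),
\]
where the last equality uses the injectivity of $\gamma$ on $2^A$.

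The second step lifts this to sequences. Writing $E = \{a_1,\dots,a_k\}$, we have
\[
\varepsilon_{F'}(\gamma(E)) = \{ne_{\sigma,F'}(\gamma(a_1)),\dots,ne_{\sigma,F'}(\gamma(a_k))\} = \{ne_{\sigma,F}(a_1),\dots,ne_{\sigma,F}(a_k)\} = \varepsilon_F(E),
\]
so the two sequences are equal as multisets. Since $sum$, $max$, $min$, $leximax$ and $leximin$ depend only on the multiset of entries (they are invariant under permutation of the input), we obtain $\Box(\varepsilon_F(E)) = \Box(\varepsilon_{F'}(\gamma(E)))$ for every $\Box \in \{sum, max, min, leximax, leximin\}$, and analogously for $E'$. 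Applying the definition of $\sqsupseteq^{\OBE_{ne_\sigma,\Box}}$ on both sides then yields syntax independence.

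I do not expect serious obstacles here; the only delicate point is verifying that each of the extension semantics of interest is genuinely preserved by the isomorphism, in particular for $\gr$ (least fixed point of $\mathcal{F}_F$) and $\sst$ (maximality of $E \cup E^+_F$), since these use additional set-theoretic conditions. Both follow straightforwardly because $\gamma$ preserves $\mathcal{F}$ (i.e.\ $\gamma(\mathcal{F}_F(E)) = \mathcal{F}_{F'}(\gamma(E))$) and preserves set inclusion and attack ranges ($\gamma(E^+_F) = \gamma(E)^+_{F'}$).
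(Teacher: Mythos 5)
Your proposal is correct and follows essentially the same route as the paper's own proof, which simply asserts that $ne_{\sigma,F}(E) = ne_{\sigma,F'}(\gamma(E))$ for any isomorphism $\gamma$ and concludes directly from the definition of $\sqsupseteq^{\OBE_{ne_\sigma,\Box}}$. You merely fill in the justification the paper leaves implicit — that $\gamma$ induces a bijection on $\sigma$-extensions, hence preserves $ne_\sigma$ pointwise and therefore the aggregated values for all permutation-invariant aggregators — which is a harmless (and arguably welcome) elaboration rather than a different argument.
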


The satisfies principles by $\OBE_{ne_\sigma,\Box}$ for $\Box \in \{sum, max, leximax, min, \\leximin\}$ and $\sigma \in \{\ad, \co, \gr, \pr, \st, \sst\}$ are summarised in Table \ref{tab:principle_obe_ne}.
\begin{table}[]
     \centering
      \resizebox{\textwidth}{!}{
     \begin{tabular}{|l||c|c|c|c|c|}
     \hline
     Principles & $\OBE_{ne_\sigma,sum}$ & $\OBE_{ne_\sigma,max}$& $\OBE_{ne_\sigma,leximax}$& $\OBE_{ne_\sigma,min}$& $\OBE_{ne_\sigma,leximin}$ \\
     \hline
        $\sigma$-generalisation  & X & X & X & X & X \\
        composition  & \checkmark &  \checkmark &  \checkmark &  \checkmark &  \checkmark  \\
        decomposition  & X & X& X& X& X  \\
        weak reinstatement & \checkmark & \checkmark & \checkmark & X (\checkmark $\sigma = \gr$) & X (\checkmark $\sigma = \gr$) \\
        strong reinstatement  & X & X & \checkmark & X & X   \\
        addition robustness & X & X & X & X & X  \\
        syntax independence &  \checkmark &  \checkmark &  \checkmark &  \checkmark &  \checkmark  \\ \hline
     \end{tabular} }
     \caption{Principles satisfied by $\OBE_{ne_\sigma,\Box}$ for $\Box \in \{sum, max, leximax, min, leximin\}$ and $\sigma \in \{\ad, \co, \gr, \pr, \st, \sst\}$.}
     \label{tab:principle_obe_ne}
 \end{table}

A different option to evaluate the strength of individual arguments is to use argument-ranking semantics such as the h-categoriser \cite{besnardh01}. h-categoriser is part of a subfamily of argument ranking semantics called \emph{gradual semantics} \cite{DBLP:conf/sum/AmgoudB13a,DBLP:conf/kr/AmgoudBDV16}, where each argument is assigned a numerical strength value. Thus, these semantics can be used as numerical evaluation functions, i.\,e., we can calculate the h-categoriser value for each argument of a set to create a sequence. The gradual semantics are thus an alternative to $ne_\sigma$.
\begin{example}
Consider again $F_4$ from Example \ref{ex:af_example} as well as sets $\{a\}$, $\{a,e,g\}$, $\{a,c,g\}$ and $\{a,d,g\}$. In this example we use h-categoriser as a numerical evaluation function. The corresponding h-categoriser values are: $Cat(a)= 1$, $Cat(c)= 0.46$, $Cat(d)= 0.69$, $Cat(e)= 0.51$, and $Cat(g)= 0.74$. Thus the corresponding sequences are: $Cat(\{a\}) = \{1\}$, $Cat(\{a,e,g\}) = \{1,0.51,0.74\}$, $Cat(\{a,c,g\}) = \{1,0.46,0.74\}$ and $Cat(\{a,d,g\}) = \{1,0.69,\\0.74\}$. Using $sum(v)$ as the underlying aggregation function results in the following ranking: 
$$\{a,d,g\} \sqsupset^{\OBE_{Cat,sum}}_{F_4} \{a,e,g\} \sqsupset^{\OBE_{Cat,sum}}_{F_4} \{a,c,g\} \sqsupset^{\OBE_{Cat,sum}}_{F_4} \{a\}$$
The resulting ranking differs from Example \ref{ex:OBE_neco}, where $\{a,c,g\}$ is more plausible to be accepted than $\{a,e,g\}$. This shows that the choice of numerical evaluation function is important for the resulting ranking. 
\end{example}

Besides gradual semantics, there are argument-ranking semantics that do not produce numerical values for each argument. An example of this is the argument-ranking semantics \emph{burden-based semantics} ($Bbs$) of Amgoud and Ben-Naim \cite{DBLP:conf/sum/AmgoudB13a}, which gives us only a preorder about the strength of each argument, i.\,e., $a \succ^{Bbs}_F b$ means that $a$ is stronger than $b$ in $F$ wrt. $Bbs$, but we cannot say how much stronger $a$ is compared to $b$. Bonzon et al. \cite{DBLP:conf/kr/BonzonDKM18} proposed a way to give each argument a numerical strength value based on its position within a preorder.  
\begin{definition}\label{def:sv}
    Let $F= (A,R)$ be an AF  and $\rho$ be an argument-ranking semantics for AF. The \emph{numerical strength value} $sv^\rho_{F}(a): A \rightarrow \mathbb{N}$ of argument $a \in A$ in $F$ with respect to $\rho$
    is the length of the longest sequence of arguments $a_1,\dots,a_i$ s.t. $a_1 \succ^\rho_F \dots \succ^\rho_F a_i \succ^\rho_F a$, i.\,e., $sv^\rho_F(a)=i$ and $sv^\rho_F(a)=0$ if there is no $b \in A$ s.t. $b \succ^\rho_F a$.
    \end{definition}
 So, argument $a$ gets the value of 0 if it is the best ranked argument, $b$ receives value 1 if $b$ is in the second position of the preorder and so on.
 \begin{example}
 Consider $F_4$ from Example \ref{ex:af_example}. The \emph{burden-based argument-ranking semantics} assess the strength of an argument in relation to the strength of its attackers. Let $\succeq_{lex}$ be the \emph{lexicographical preference order}, which for (possibly infinite) real-valued vectors $V=(V_1,V_2,\ldots)$ and $V'=(V'_1,V'_2,\ldots)$ is defined as $V \succ_{lex} V'$ iff there exists an $i$ s.t. $V_i < V'_i$ and $\forall j < i$, $V_j = V'_j$ (and $V \simeq_{lex} V'$ iff for all $i$, $V_i = V_i'$).
 Then the \emph{burden number} $bur_i(a)$ for argument $a\in A$ in iteration $i$ is defined as
    \begin{align*}
&bur_i(a):= \left\{\begin{array}{ll} 1   \hspace{2pt} &\text{if $i=0$}\\
          1+\sum_{b\in a^{-}_F} \frac{1}{bur_{i-1}(b)}  \hspace{2pt} &\text{otherwise}\end{array}\right.
\end{align*}
    Let $bur(a)=(bur_0(a),bur_1(a),bur_2(a),\ldots)$ and define the \emph{burden-based argument-ranking semantics} $\succeq^{Bbs}_{F}$ via
   $a \succeq^{Bbs}_{F} b$ iff $bur(a)~ \succeq_{lex} bur(b)$ for all $a,b \in A$.

     Since $a$ is unattacked we have $bur(a)=(1,1,\dots)$. Then the corresponding argument ranking is:
     $$a \succ^{Bbs}_{F_4} g \succ^{Bbs}_{F_4} d \succ^{Bbs}_{F_4} b \succ^{Bbs}_{F_4} e \succ^{Bbs}_{F_4} c \succ^{Bbs}_{F_4} f$$
      So, the numerical strength values $sv^{Bbs}_{F_4}$ are:
    \begin{align*}
        sv^{Bbs}_{F_4}(a)&= 0 &  sv^{Bbs}_{F_4}(b)&= 3 & sv^{Bbs}_{F_4}(c)&= 5 \\
        sv^{Bbs}_{F_4}(d)&= 2 &  sv^{Bbs}_{F_4}(e)&= 4 & sv^{Bbs}_{F_4}(f)&= 6 \\
        sv^{Bbs}_{F_4}(g)&= 1
    \end{align*}
     \end{example}
 Now we can compute a numerical strength sequence for each argument ranking semantics, which can be used as a \emph{numerical evaluation function} $\varepsilon$ for $\OBE_{\varepsilon,\Box}$. When the aggregation functions $sum$, $max$, $min$, $leximin$ and $leximax$ are used, the resulting extension ranking is a generalisation of the \emph{Rank-based extensions} defined by Bonzon et al. \cite{DBLP:conf/aaai/BonzonDKM16}.
 
While gradual semantics like h-categoriser semantics already give us numerical strength values for each argument, Definition \ref{def:sv} can be applied as well. However, the resulting extension rankings coincide for $\Box \in \{max, min, \\leximax, leximin\}$. 

\begin{proposition}\label{prop:sv=rho}
     For AF $F=(A,R)$, $E,E' \subseteq A$, and $\rho$ a gradual semantics. Then $E \sqsupseteq^{\OBE_{\rho,\Box}}_{F} E'$ iff $E \sqsupseteq^{\OBE_{sv^{\rho},\Box}}_{F} E'$ for $\Box \in \{max, min,leximax, leximin\}$.
 \end{proposition}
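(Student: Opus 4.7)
The plan is to reduce both comparisons to a purely ordinal question about the preorder $\succeq^\rho_F$ that $\rho$ induces on $A$. The crucial fact is that $\rho$ and $sv^\rho_F$ carry exactly the same ordinal information on $A$, just with opposite numerical directions: by Definition~\ref{def:sv}, $sv^\rho_F(a)$ is determined solely by the position of $a$ in the preorder $\succeq^\rho_F$, so for all $a,b\in A$ we have $\rho(a)>\rho(b)$ iff $sv^\rho_F(a)<sv^\rho_F(b)$, and $\rho(a)=\rho(b)$ iff $sv^\rho_F(a)=sv^\rho_F(b)$. This is the correspondence lemma I would establish first.

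Given this, I would sort, for any $E\subseteq A$, its arguments into rank-levels $L_1(E)\succ L_2(E)\succ\dots$ according to $\succeq^\rho_F$. Both $\rho(E)$ (sorted descendingly) and $sv^\rho_F(E)$ (sorted ascendingly) then correspond, level by level, to the same multiset of arguments. The proof then splits into cases on $\Box$. For $\Box=\max$, both $\max(\rho(E))\geq \max(\rho(E'))$ and the corresponding comparison under $sv^\rho_F$ reduce to the ordinal statement ``a $\rho$-maximal element of $E$ is at least as strong as a $\rho$-maximal element of $E'$ in $\succeq^\rho_F$''. The case $\Box=\min$ is symmetric, reducing to the analogous ordinal statement about $\rho$-minimal elements. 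For $\Box\in\{\leximax,\leximin\}$, I would induct on the rank level: at each step $i$, the $i$-th comparison in the lexicographic order concerns whether the $i$-th strongest (resp.\ weakest) argument in $E$ dominates the $i$-th strongest (resp.\ weakest) argument in $E'$ in $\succeq^\rho_F$; by the correspondence lemma, this ordinal decision agrees for $\rho$ and $sv^\rho_F$.

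The main obstacle will be bookkeeping around ties: when several arguments share a common $\rho$-value, they also share a common $sv^\rho_F$-value, and the lexicographic aggregations compare element by element after sorting, so I must show that tied blocks in $\rho(E)$ and $sv^\rho_F(E)$ have the same cardinalities and occupy matched positions in their respective sorted sequences. Once this tie-alignment is in place, each elementary comparison in the lexicographic chain reduces cleanly to an ordinal assertion about $\succeq^\rho_F$. Finally, I would note why the statement excludes $sum$: summation depends on the actual magnitudes of $\rho$-values, which are compressed to ranks under $sv^\rho_F$, so the reduction to ordinal information fails and the two semantics can disagree.
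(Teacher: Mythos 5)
Your overall route is the same as the paper's: reduce each aggregated comparison to the pairwise ordinal information that $\rho$ induces on $A$ and argue that $sv^{\rho}_F$ carries exactly that information. The paper's proof does precisely this -- it asserts that $\rho(e_i)\geq\rho(e_j')$ iff $e_i\succeq^{\rho}_F e_j'$ iff $sv^{\rho}(e_i)\geq sv^{\rho}(e_j')$, and then concludes that $max$, $min$, $leximax$ and $leximin$ all agree on the two sequences. Your explicit handling of ties via rank-levels and the level-by-level induction for the lexicographic aggregations is in fact more careful than what the paper writes down.

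The genuine gap is in the step where you pass from your correspondence lemma to ``both comparisons reduce to the same ordinal statement''. You record the correspondence as order-\emph{reversing}: $\rho(a)>\rho(b)$ iff $sv^{\rho}_F(a)<sv^{\rho}_F(b)$, which is what Definition \ref{def:sv} literally says (the top-ranked argument gets value $0$). But then $max(sv^{\rho}_F(E))\geq max(sv^{\rho}_F(E'))$ is a statement about the $\rho$-\emph{weakest} elements of $E$ and $E'$, whereas $max(\rho(E))\geq max(\rho(E'))$ is a statement about their $\rho$-\emph{strongest} elements; these are dual, not identical, ordinal assertions. Concretely, take $a\succ^{\rho}_F b\succ^{\rho}_F c$, $E=\{a\}$, $E'=\{b,c\}$: then $max(\rho(E))=\rho(a)\geq\rho(b)=max(\rho(E'))$, but with the order-reversing $sv$ one gets $max(sv^{\rho}_F(E))=0<2=max(sv^{\rho}_F(E'))$, so the two sides of the claimed equivalence disagree for $\Box=max$. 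With your lemma as stated, $\OBE_{sv^{\rho},max}$ coincides with $\OBE_{\rho,min}$ (and $leximax$ with $leximin$), not with $\OBE_{\rho,max}$. The paper's own proof (and its worked example computing $sv^{Cat}_{F_4}$) silently uses the opposite, order-\emph{preserving} convention, namely that $e_i\succeq^{\rho}_F e_j'$ implies $sv^{\rho}(e_i)\geq sv^{\rho}(e_j')$; under that reading your argument goes through essentially as the paper's does. So you must either adopt the order-preserving correspondence explicitly, or additionally reverse the comparison $\geq^{\Box}$ on $sv$-sequences -- as written, the reduction step fails.
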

   
For aggregation function $\Box= sum$ Proposition \ref{prop:sv=rho} does not hold. 

\begin{example}
Consider $F_4$ from Example \ref{ex:af_example} and sets $\{a,c\}$ and $\{d,e\}$. Like shown in Example \ref{ex:h-cat} the corresponding h-categoriser ranking is: $$a \succ^{Cat}_{F_4} g \succ^{Cat}_{F_4} d \succ^{Cat}_{F_4} e \succ^{Cat}_{F_4} b \succ^{Cat}_{F_4} c \succ^{Cat}_{F_4} f$$ So, using the numerical strength value $sv^{Cat}$ we get: $sv^{Cat}_{F_4}(\{a,c\})=\{6,1\}$ and $sv^{Cat}_{F_4}(\{d,e\})=\{4,3\}$, thus these two sets are equally plausible to be accepted when using $sum(v)$ as aggregation function, i.e. $$\{a,c\} \equiv^{\OBE_{sv^{Cat},sum}}_{F_4} \{d,e\}$$

The h-categoriser values are $Cat(a)= 1$, $Cat(c)= 0.46$, $Cat(d)= 0.69$, and $Cat(e)= 0.51$. So, the aggregated values are: $sum(\{1,0.4\})) = 1.46$ and $sum(\{0.69,0.51\}) = 1.2$. Thus, $\{a,c\} \sqsupset^{\OBE_{Cat,sum}}_{F_4} \{d,e\}$. 
 \end{example}

 In the remainder of this section, we will only discuss $\OBE_{\rho,\Box}$ for $\Box \in \{sum,max,min,leximax,leximin\}$. A full investigation of $\OBE_{sv^{\rho},\Box}$ and especially $\OBE_{sv^{\rho},sum}$ we leave open for future work. 

Note that $\sqsupseteq^{\OBE_{\rho,max}}$ coincides with $\sqsupseteq^{gc_{\rho}}$ if $\rho$ is total, so these two extension-ranking semantics behave the same.
 \begin{proposition}
     Let $F=(A,R)$ be an AF and $\rho$ a total gradual semantics, then $\sqsupseteq^{\OBE_{\rho,max}}_F = \sqsupseteq^{gc_{\rho}}_F$.
 \end{proposition}
 
 So, $\OBE_{\rho,max}$ satisfies composition, weak reinstatement, and syntax independence and violates $\sigma$-generalisation, decomposition, strong reinstatement, and addition robustness if $\rho$ is total. 

 Next we show that the set containing every argument is among the most plausible sets for $\Box \in \{sum,leximax\}$.

 \begin{proposition}
     Let $F= (A,R)$ be an AF, then $A \in \maxpl_{\OBE_{\rho,\Box}}(F)$ for $\Box \in \{sum,leximax\}$.
 \end{proposition}

 So, $\OBE_{\rho,\Box}(F)$ for $\Box \in \{sum,leximax\}$ violates $\sigma$-generalisation for all discussed extension semantics. 
For $\Box \in \{min,leximin\}$, we can also show that $\sigma$-generalisation is violated if $\rho$ satisfies Void Precedence. 

\begin{example}
    Consider $F_5$ from Example \ref{ex:comp} and let $\rho$ be an argument-ranking semantics satisfying Void Precedence. Then we know that $\rho(i) > \rho(j)$, so $min(\rho(\{h,j\})) = \rho(j)$ and therefore $\{h\} \sqsupset^{\OBE_{\rho, \Box}}_{F_5} \{h,j\}$ for $\Box \in \{min,leximin\}$. However, $\{h,j\}$ is an admissible set and also a complete, preferred, grounded, stable and semi-stable extension for $F_5$, thus $\sigma$-generalisation is violated for $\sigma \in \{\ad,\co,\pr,\gr,\st, \sst \}$. 
\end{example}

Next, we show that  $\OBE_{\rho,\Box}$ satisfies composition for all previously discussed aggregation functions. However, we first need to redefine the Independence principle for argument-ranking semantics for gradual semantics like proposed by Amgoud et al. \cite{DBLP:conf/ijcai/AmgoudBDV17}. 
\begin{definition}
    A gradual semantics $\rho$ satisfies \emph{Independence for gradual semantics} if and only if for any two AFs $F = (A,R)$ and $F'=(A',R')$ such that $A \cap A' = \emptyset$ it holds that for all $a \in A$, $\rho_F(a) = \rho_{F \cup F'}(a)$. 
\end{definition}
In other words, unconnected arguments should not change the numerical strength value of an argument.
\begin{proposition}
     $\OBE_{\rho,\Box}$ satisfies composition for $\Box \in \{sum, max, leximax,\\ min, leximin\}$ if $\rho$ satisfies Independence for gradual semantics. 
\end{proposition}

For the violation of decomposition we can find a general result based on the principles a gradual semantics $\rho$ satisfies. Besides Independence for gradual semantics we also need the principles \emph{Equivalence} and \emph{Maximality} \cite{DBLP:conf/ijcai/AmgoudBDV17}.
\begin{definition}
     A gradual semantics $\rho$ satisfies \emph{Equivalence} if and only if for any AF $F = (A,R)$ and two arguments $a,b \in A$ there exits a bijective function $f$ from $a^-_F$ to $b^-_F$ s.t. for all $x \in a^-_F$, $\rho_F(x)= \rho_F(f(x))$, then $\rho_F(a) = \rho_F(b)$.
\end{definition}
Informally, if the attackers of two arguments have the same strength, then the strength of these two arguments should be the same.
\begin{definition}
     A gradual semantics $\rho$ satisfies \emph{Maximality} if and only if for any AF $F = (A,R)$ 
     and argument $a \in A$ s.t. $a^-_F = \emptyset$, then $\rho_F(a)=1$.
\end{definition}
In other words, unattacked arguments should receive the highest possible strength value.

\begin{proposition}
    $\OBE_{\rho,\Box}$ violates decomposition for $\Box \in \{sum, leximax,\\ min, leximax\}$ if $\rho$ satisfies Independence for gradual semantics, Equivalence, and Maximality. 
\end{proposition}

 Similar to $\OBE_{ne_{\sigma},\Box}$, $\OBE_{\rho,\Box}$ also satisfies weak reinstatement for $\Box = sum$ and strong reinstatement for $\Box = leximax$. 
 \begin{proposition}
     $\OBE_{\rho,\Box}$ satisfies weak reinstatement for $\Box = sum$ and strong reinstatement for $\Box = leximax$. 
 \end{proposition}
  
  For $\Box \in \{min, leximin\}$ we can show that weak reinstatement is violated for    $\OBE_{\rho,\Box}$. 
  \begin{example}
      Consider $F_5$ from Example \ref{ex:comp}. Then if gradual semantics $\rho$ satisfies Maximality, then $\rho_{F_5}(h) > \rho_{F_5}(j)$ and therefore $min(\rho_{F_5}(\{h\})) > min(\rho_{F_5}(\{h,j\}))$ and thus $\{h\} \sqsupseteq^{\OBE_{\rho,\Box}}_{F_5} \{h,j\}$ for $\Box \in \{min,leximin\}$. Hence, weak reinstatement is violated.
  \end{example}

To show that $\OBE_{Cat,\Box}$ violates addition robustness we need two counterexamples. We start with $\Box \in \{sum,max,leximax\}$.
\begin{example}\label{ex:obe_cat_max_addrob}
    Consider $F_{32} = (\{a,b,c,d,e,f\}, \{(b,c),(b,e),(c,d),(e,f),\\(g,a),(g,b),(g,g)\})$ as depicted in Figure \ref{tikz:obe_cat_max_addrob}. If we use h-categoriser, then every argument receives the same strength value. Consider the sets $\{a,d,f\}$ and $\{b,c,e\}$, then $\{a,d,f\} \equiv^{\OBE_{Cat, \Box}}_{F_{32}} \{b,c,e\}$ for $\Box \in \{sum,max, leximax\}$. Hence, we can add the attack $(a,b)$ to obtain $F_{32}'=  (\{a,b,c,d,e,f\}, \{(b,c),\\(b,e),(c,d),(e,f),(g,a),(g,b),(g,g)\} \cup \{(a,b)\})$, then the strength values of arguments $c$ and $e$ increase and the values of $b$, $d$ and $f$ decreases. It follows that $\{b,c,e\} \sqsupset^{\OBE_{Cat, \Box}}_{F_{32}'} \{a,d,f\}$ for $\Box \in \{sum,max, leximax\}$ and therefore we see that addition robustness is violated by $\OBE_{Cat,\Box}$ for   $\Box \in \{sum,max, leximax\}$.
    \begin{figure}
    \centering
 
 \scalebox{1}{
\begin{tikzpicture}

\node (a) at (0,0) [circle, draw,minimum size= 0.65cm] {$a$};
\node (b) at (2,0) [circle, draw,minimum size= 0.65cm] {$b$};
\node (c) at (6,1) [circle, draw,minimum size= 0.65cm] {$d$};
\node (e) at (1,1) [circle, draw,minimum size= 0.65cm] {$g$};
\node (d) at (4,0) [circle, draw,minimum size= 0.65cm] {$e$};
\node (f) at (4,1) [circle, draw,minimum size= 0.65cm] {$c$};
\node (g) at (6,0) [circle, draw,minimum size= 0.65cm] {$f$};

\path[->] (b) edge  (d);
\path[->] (b) edge  (f);
\path[->] (f) edge  (c);
\path[->] (e) edge  (a);
\path[->] (e) edge  (b);
\path[->] (d) edge  (g);

\path[->,dashed] (a) edge  (b);
\path[->] (e) edge  [loop right] node {} ();

\end{tikzpicture}}
   \caption{AF $F_{32}$ from Example \ref{ex:obe_cat_max_addrob}, where attack $(a,b)$ is added later to obtain $F'_{32}$.}\label{tikz:obe_cat_max_addrob}
\end{figure}
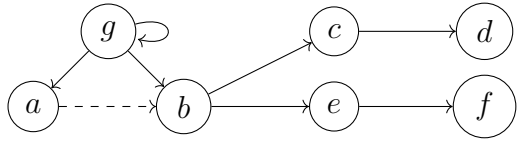
\end{example}

Next we look at $\Box \in \{min, leximin\}$.

\begin{example}\label{ex:obe_cat_min_addrob}
    Consider $F_{33} = (\{a,b,c,d,e\},\{(b,c),(b,d),(c,e),(d,e)\})$ as depicted in Figure \ref{tikz:obe_cat_min_addrob} and sets $\{a,e\}$ and $\{b,c\}$. Then $\{a,e\} \equiv^{\OBE_{Cat, \Box}}_{F_{33}} \{b,c\}$ for $\Box \in \{min, leximin\}$. Hence, we can add attack $(a,b)$ to obtain $F_{33}' =  (\{a,b,c,d,e\},\{(b,c),(b,d),(c,e),(d,e)\} \cup \{(a,b)\})$, however this yield to $\{b,c\} \sqsupset^{\OBE_{Cat, \Box}}_{F_{33}'} \{a,e\}$ and therefore addition robustness is violated by  $\OBE_{Cat,\Box}$ for $\Box \in \{min, leximin\}$.

      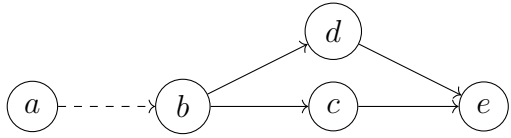
\begin{figure}
    \centering
 
 \scalebox{1}{
\begin{tikzpicture}

\node (a) at (0,0) [circle, draw,minimum size= 0.65cm] {$a$};
\node (b) at (2,0) [circle, draw,minimum size= 0.65cm] {$b$};
\node (c) at (4,0) [circle, draw,minimum size= 0.65cm] {$c$};
\node (d) at (4,1) [circle, draw,minimum size= 0.65cm] {$d$};
\node (e) at (6,0) [circle, draw,minimum size= 0.65cm] {$e$};

\path[->] (b) edge  (c);
\path[->] (b) edge  (d);
\path[->] (c) edge  (e);
\path[->] (d) edge  (e);

\path[->,dashed] (a) edge  (b);
\end{tikzpicture}}
   \caption{AF $F_{33}$ from Example \ref{ex:obe_cat_min_addrob}, where attack $(a,b)$ is added later to obtain $F'_{33}$.}\label{tikz:obe_cat_min_addrob}
\end{figure}
\end{example}

To satisfy syntax independence, we have to be sure that the underlying gradual semantics is not influenced by the names of the arguments. 
\begin{proposition}
    If gradual semantics $\rho$ satisfies Abstraction, then $\OBE_{\rho,\Box}$ satisfies syntax independence for $\Box \in \{sum, max, leximax, min, leximin\}$.
\end{proposition}

Since the h-categoriser semantics satisfies Abstraction, Independence for gradual semantics, Equivalence, Maximality, and Void Precedence and this semantics is total, Table \ref{tab:principle_obe_Cat} depicted the satisfied and violated principles by $\OBE_{Cat,\Box}$ for $\Box \in \{sum,max,leximax,min,leximin\}$. 

\begin{table}[]
     \centering
      \resizebox{\textwidth}{!}{
     \begin{tabular}{|l||c|c|c|c|c|}
     \hline
     Principles & $\OBE_{Cat,sum}$ & $\OBE_{Cat,max}$& $\OBE_{Cat,leximax}$& $\OBE_{Cat,min}$& $\OBE_{Cat,leximin}$ \\
     \hline
        $\sigma$-generalisation  & X & X & X & X & X \\
        composition  & \checkmark  & \checkmark & \checkmark & \checkmark & \checkmark  \\
        decomposition  & X & X& X& X& X  \\
        weak reinstatement & \checkmark & \checkmark & \checkmark & X& X  \\
        strong reinstatement  & X & X& \checkmark & X& X   \\
        addition robustness & X & X& X & X& X  \\
        syntax independence &  \checkmark &  \checkmark &  \checkmark &  \checkmark &  \checkmark  \\ \hline
     \end{tabular} }
     \caption{Principles satisfied by $\OBE_{Cat,\Box}$ for $\Box \in \{sum, max, leximax, min, leximin\}$.}
     \label{tab:principle_obe_Cat}
 \end{table}

\section{Related Work}\label{sec:related work}
Several works have addressed the problem of refining reasoning in abstract argumentation. Konieczny et al. \cite{DBLP:conf/ecsqaru/KoniecznyMV15} focus on refining the acceptance conditions of individual arguments by restricting when a set of arguments is considered acceptable. Bonzon et al. \cite{DBLP:conf/kr/BonzonDKM18} use argument-ranking semantics to refine extension semantics in order to restrict the acceptance of a set of arguments.  While both of these works present approaches to restricting sets, neither of them attempt to rank sets based on their plausibility of acceptance. Furthermore, neither approach distinguishes between two not acceptable sets. 
However, we have extended the definitions of Konieczny et al. \cite{DBLP:conf/ecsqaru/KoniecznyMV15} and Bonzon et al.  \cite{DBLP:conf/kr/BonzonDKM18} to define extension-ranking semantics in line with their ideas. 

Extensions of argumentation frameworks have been much discussed in recent years. Such as \emph{bipolar argumentation frameworks} (BAF) \cite{DBLP:conf/ecsqaru/CayrolL05a}, where there is an attack concept and a support concept, or \emph{weighted argumentation frameworks} (WAF) \cite{DBLP:journals/ai/DunneHMPW11}, where each attack and argument is given a weight. The study of methods for ranking arguments according to their (relative) degree of acceptability in these extended frameworks has received considerable attention in recent years (such as the work by Amgoud et.al \cite{DBLP:journals/ai/AmgoudDV22}). Defining extension-ranking notions for these extensions of argumentation frameworks is an interesting task, where the additional information can lead to new semantic notions for comparing two sets of arguments based on their plausibility of acceptance. 

Amgoud and Vesic \cite{DBLP:journals/ijar/AmgoudV14} proposed \emph{rich preference-based argumentation framework} (rich PAF), which are extensions of AFs where the framework takes as input, in addition to arguments and attacks, preferences about the arguments and sets of arguments. The extension-ranking semantics can be used to construct rich PAFs for any AF without the need for additional preferences over the argument sets. It is interesting to examine the resulting rich PAFs and to see which properties are satisfied by rich PAFs constructed with extension-ranking semantics. 

In Yun et al. \cite{DBLP:conf/comma/YunVCB18}, the authors develop a framework to identify the best sets among a selected set of sets of arguments, like all admissible sets. They use argument-ranking semantics to determine the best sets. Their definitions are in line with the extension-ranking semantics proposed in Section \ref{sec:additional combinations}, where argument-ranking semantics can be used to define an extension-ranking semantics. While the authors use an argument-ranking semantics and a lifting operator to identify the best sets among a selected set of sets of arguments, the converse problem, i.\,e. determining an argument ranking based on a ranking over sets of arguments, was briefly discussed by Skiba et al. \cite{DBLP:conf/ijcai/SkibaRTHK21}. These two discussions give us an insight into the relationship between argument-ranking semantics and extension-ranking semantics, so a closer look at these two reasoning approaches in abstract argumentation is interesting. 

The labelling-based semantics are a variation of extension semantics, where each argument gets a label of $\mathsf{in}$, $\mathsf{out}$, $\mathsf{undec}$. If an argument is labelled $\mathsf{in}$ it means that it is part of an $\sigma$-extension, while $\mathsf{out}$ means that the argument is attacked by an accepted argument and is therefore definitely not part of the extension. $\mathsf{undec}$ (short for undecided) means that no clear decision can be found. Based on these labels, an extension can be constructed by taking every argument labelled as $\mathsf{in}$. The papers \cite{DBLP:conf/comma/ArieliR14}, \cite{DBLP:phd/hal/Rienstra14} and \cite{DBLP:conf/comma/RienstraT18} considered ordering over labellings. The latter two papers discussed some general principles for these orders, namely \emph{conditional directionality} and \emph{SCC stratification}, which are the counterparts of the \emph{directionality} and \emph{SCC decomposability} principles used in abstract argumentation \cite{DBLP:journals/ker/BaroniCG11}. These approaches can be extended to define a new family of extension-ranking semantics. 

The work of Booth et al. \cite{DBLP:conf/comma/BoothKRT12} also explores a notion of generalising the acceptance of arguments. They introduced \emph{conditional acceptance functions}, which are weaker than labelling semantics, but allow the ordering of labels to indicate which labels are the ``most rational''. While the authors investigate generalisation of acceptance, their motivation is to better model dynamic processes. Extension-ranking semantics are not as useful for dynamic models, since in order to reason the full ranking is required. 


\section{Conclusion}\label{sec:conclusion}
In this paper, we presented a general framework for ranking sets of arguments based on their plausibility of acceptance such that we can say that one set is ``closer'' to being acceptable than another set. For this purpose, we introduced \emph{extension-ranking semantics}, which are functions that induce a preorder over the powerset of arguments, allowing us to say that set $E$ is more plausible to be accepted than $E'$. We also introduced a number of principles for evaluating extension-ranking semantics, and also for guiding the development of new extension-ranking semantics. 

In order to define a general framework for the development of extension-ranking semantics, we considered several central aspects of argumentative reasoning and defined these aspects as simple base relations, each of which models one of these aspects. By combining these base relations, we propose a family of extension-ranking semantics, which turn out to be generalisations of Dung's classical extension semantics. 

We were also inspired by work focused on refining extension-based reasoning, and proposed approaches to ranking sets of arguments according to the quality of the arguments they contain.  We discussed different ways of estimating the quality or strength of each argument, and aggregation methods for determining the strength of the whole set.

If we compare the extension-ranking semantics discussed on the basis of the principles satisfied, we see that $LD^\sigma$, $\mathsf{r}$-$\sigma$, and $\mathsf{r}$-$\mathsf{c}$-$\sigma$ all satisfy their respective $\sigma$-generalisation principles.
In addition, $\cope(\sqsupseteq^{\cf},\sqsupseteq^{\UD})$ satisfies $\ad$-generalisation, these results show us that the proposed base relations $\sqsupseteq^\CF,\sqsupseteq^\UD, \sqsupseteq^\DN$, and $\sqsupseteq^\UA$ do actually model our intended behaviour, while the other extension-ranking semantics requires an additional prepossessing step to avoid the ignorance of internal conflicts.  
The fact that $LD^\sigma$, $\mathsf{r}$-$\sigma$, $gc_{Cat}$, $\OBE_{ne_{\sigma},\Box}$, and $\OBE_{Cat, \Box}$ satisfy composition shows us that the Copeland-based combination $\cope$ does not behave optimally, even though we can always construct a preorder using this aggregation method.
A strength of the $\mathsf{r}$-$\sigma$ extension-ranking semantics family is that it is the only one that satisfies decomposition, but the preoders are not total. 
The two reinstatement principles give us insight into which base relation or aggregation method we should use to create a good preorder. For $\cope$ we should use the base relations $\sqsupseteq^\CF,\sqsupseteq^\UD, \sqsupseteq^\DN, \sqsupseteq^\UA, \sqsupseteq^\nonatt$, and $\sqsupseteq^\strdef$ and avoid using $\sqsupseteq^{\mathcal{N}_{Cat}}$. When using the order-based extension-ranking semantics $\OBE_{\varepsilon,\Box}$ we should not use $\Box = \{min , leximin\}$ and preferably use $\Box = leximax$. 
Since only $LD^\cf$, $LD^\ad$, $LD^\gr$, $LD^\st$, $\rAd$, and $\mathsf{r\text{-}c\text{-}\ad}$ satisfy addition robustness we have to be careful when adding attacks to an AF in order to change the AF.
So, all in all $\mathsf{r}$-$\sigma$ and $\mathsf{r}$-$\mathsf{c}$-$\sigma$ behave the best with respect to our set of principles.


At several points in this paper we have already discussed future work, such as discussing additional sequences of base relations or using similarity measures such as \emph{Jaccard similarity} to define base relations or extension-ranking semantics. Defining extension-ranking semantics for the extension of abstract argumentation frameworks such as \emph{Bipolar Argumentation Frameworks} or \emph{Weighted Argumentation Frameworks} is also interesting. The idea of ranking sets of arguments is interesting not only for the abstract setting, but also for the structured approach like \emph{ABA}, where the individual arguments are more expressive, but the sets of arguments respectively sets of assumptions are only accepted or rejected, with no in-between. Finally, extension-ranking semantics can be used to guide the enforcement of extensions in abstract argumentation. Highly ranked sets should be easier to enforce.

\medskip\noindent\textbf{Acknowledgements.}
The research reported here was supported by the Deutsche Forschungsgemeinschaft under grants 423456621 and 506604007.

\bibliographystyle{elsarticle-num}
\bibliography{references}

\begin{thebibliography}{10}

\bibitem{DBLP:conf/sum/AmgoudB13a}
L.~Amgoud and J.~Ben{-}Naim.
\newblock Ranking-based semantics for argumentation frameworks.
\newblock In W.~Liu, V.~S. Subrahmanian, and J.~Wijsen, editors, {\em Scalable
  Uncertainty Management - 7th International Conference, {SUM} 2013,
  Washington, DC, USA, September 16-18, 2013. Proceedings}, volume 8078 of {\em
  Lecture Notes in Computer Science}, pages 134--147. Springer, 2013.

\bibitem{DBLP:conf/kr/AmgoudBDV16}
L.~Amgoud, J.~Ben{-}Naim, D.~Doder, and S.~Vesic.
\newblock Ranking arguments with compensation-based semantics.
\newblock In C.~Baral, J.~P. Delgrande, and F.~Wolter, editors, {\em Principles
  of Knowledge Representation and Reasoning: Proceedings of the Fifteenth
  International Conference, {KR} 2016, Cape Town, South Africa, April 25-29,
  2016}, pages 12--21. {AAAI} Press, 2016.

\bibitem{DBLP:conf/ijcai/AmgoudBDV17}
L.~Amgoud, J.~Ben{-}Naim, D.~Doder, and S.~Vesic.
\newblock Acceptability semantics for weighted argumentation frameworks.
\newblock In C.~Sierra, editor, {\em Proceedings of the Twenty-Sixth
  International Joint Conference on Artificial Intelligence, {IJCAI} 2017,
  Melbourne, Australia, August 19-25, 2017}, pages 56--62. ijcai.org, 2017.

\bibitem{DBLP:journals/ai/AmgoudDV22}
L.~Amgoud, D.~Doder, and S.~Vesic.
\newblock Evaluation of argument strength in attack graphs: Foundations and
  semantics.
\newblock {\em Artif. Intell.}, 302:103607, 2022.

\bibitem{DBLP:journals/ijar/AmgoudV14}
L.~Amgoud and S.~Vesic.
\newblock Rich preference-based argumentation frameworks.
\newblock {\em Int. J. Approx. Reason.}, 55(2):585--606, 2014.

\bibitem{DBLP:conf/comma/ArieliR14}
O.~Arieli and T.~Rienstra.
\newblock Preferential reasoning based on abstract argumentation semantics.
\newblock In S.~Parsons, N.~Oren, C.~Reed, and F.~Cerutti, editors, {\em
  Computational Models of Argument - Proceedings of {COMMA} 2014, Atholl Palace
  Hotel, Scottish Highlands, UK, September 9-12, 2014}, volume 266 of {\em
  Frontiers in Artificial Intelligence and Applications}, pages 77--88. {IOS}
  Press, 2014.

\bibitem{DBLP:journals/aim/AtkinsonBGHPRST17}
K.~Atkinson, P.~Baroni, M.~Giacomin, A.~Hunter, H.~Prakken, C.~Reed, G.~R.
  Simari, M.~Thimm, and S.~Villata.
\newblock Towards artificial argumentation.
\newblock {\em {AI} Mag.}, 38(3):25--36, 2017.

\bibitem{DBLP:journals/ker/BaroniCG11}
P.~Baroni, M.~Caminada, and M.~Giacomin.
\newblock An introduction to argumentation semantics.
\newblock {\em Knowl. Eng. Rev.}, 26(4):365--410, 2011.

\bibitem{baroni2018handbook}
P.~Baroni, D.~Gabbay, M.~Giacomin, and L.~Van~der Torre.
\newblock Handbook of formal argumentation.
\newblock 2018.

\bibitem{DBLP:conf/ecsqaru/BaroniG03}
P.~Baroni and M.~Giacomin.
\newblock Solving semantic problems with odd-length cycles in argumentation.
\newblock In T.~D. Nielsen and N.~L. Zhang, editors, {\em Symbolic and
  Quantitative Approaches to Reasoning with Uncertainty, 7th European
  Conference, {ECSQARU} 2003, Aalborg, Denmark, July 2-5, 2003. Proceedings},
  volume 2711 of {\em Lecture Notes in Computer Science}, pages 440--451.
  Springer, 2003.

\bibitem{DBLP:journals/ai/BaroniG07}
P.~Baroni and M.~Giacomin.
\newblock On principle-based evaluation of extension-based argumentation
  semantics.
\newblock {\em Artif. Intell.}, 171(10-15):675--700, 2007.

\bibitem{DBLP:journals/ai/BaroniGG05a}
P.~Baroni, M.~Giacomin, and G.~Guida.
\newblock Scc-recursiveness: a general schema for argumentation semantics.
\newblock {\em Artif. Intell.}, 168(1-2):162--210, 2005.

\bibitem{DBLP:journals/flap/BaumannDMW21}
R.~Baumann, S.~Doutre, J.~Mailly, and J.~P. Wallner.
\newblock Enforcement in formal argumentation.
\newblock {\em {FLAP}}, 8(6):1623--1678, 2021.

\bibitem{besnardh01}
P.~Besnard and A.~Hunter.
\newblock A logic-based theory of deductive arguments.
\newblock {\em Artif. Intell.}, 128(1-2):203--235, 2001.

\bibitem{DBLP:conf/aaai/BonzonDKM16}
E.~Bonzon, J.~Delobelle, S.~Konieczny, and N.~Maudet.
\newblock A comparative study of ranking-based semantics for abstract
  argumentation.
\newblock In D.~Schuurmans and M.~P. Wellman, editors, {\em Proceedings of the
  Thirtieth {AAAI} Conference on Artificial Intelligence, February 12-17, 2016,
  Phoenix, Arizona, {USA}}, pages 914--920. {AAAI} Press, 2016.

\bibitem{DBLP:conf/kr/BonzonDKM18}
E.~Bonzon, J.~Delobelle, S.~Konieczny, and N.~Maudet.
\newblock Combining extension-based semantics and ranking-based semantics for
  abstract argumentation.
\newblock In M.~Thielscher, F.~Toni, and F.~Wolter, editors, {\em Principles of
  Knowledge Representation and Reasoning: Proceedings of the Sixteenth
  International Conference, {KR} 2018, Tempe, Arizona, 30 October - 2 November
  2018}, pages 118--127. {AAAI} Press, 2018.

\bibitem{DBLP:conf/comma/BoothKRT12}
R.~Booth, S.~Kaci, T.~Rienstra, and L.~W.~N. van~der Torre.
\newblock Conditional acceptance functions.
\newblock In B.~Verheij, S.~Szeider, and S.~Woltran, editors, {\em
  Computational Models of Argument - Proceedings of {COMMA} 2012, Vienna,
  Austria, September 10-12, 2012}, volume 245 of {\em Frontiers in Artificial
  Intelligence and Applications}, pages 470--477. {IOS} Press, 2012.

\bibitem{DBLP:conf/sum/BoothKRT13}
R.~Booth, S.~Kaci, T.~Rienstra, and L.~W.~N. van~der Torre.
\newblock A logical theory about dynamics in abstract argumentation.
\newblock In W.~Liu, V.~S. Subrahmanian, and J.~Wijsen, editors, {\em Scalable
  Uncertainty Management - 7th International Conference, {SUM} 2013,
  Washington, DC, USA, September 16-18, 2013. Proceedings}, volume 8078 of {\em
  Lecture Notes in Computer Science}, pages 148--161. Springer, 2013.

\bibitem{DBLP:reference/choice/2016}
F.~Brandt, V.~Conitzer, U.~Endriss, J.~Lang, and A.~D. Procaccia, editors.
\newblock {\em Handbook of Computational Social Choice}.
\newblock Cambridge University Press, 2016.

\bibitem{DBLP:conf/jelia/Caminada06}
M.~Caminada.
\newblock On the issue of reinstatement in argumentation.
\newblock In M.~Fisher, W.~van~der Hoek, B.~Konev, and A.~Lisitsa, editors,
  {\em Logics in Artificial Intelligence, 10th European Conference, {JELIA}
  2006, Liverpool, UK, September 13-15, 2006, Proceedings}, volume 4160 of {\em
  Lecture Notes in Computer Science}, pages 111--123. Springer, 2006.

\bibitem{DBLP:journals/aamas/CaminadaP11}
M.~Caminada and G.~Pigozzi.
\newblock On judgment aggregation in abstract argumentation.
\newblock {\em Auton. Agents Multi Agent Syst.}, 22(1):64--102, 2011.

\bibitem{DBLP:journals/logcom/CaminadaCD12}
M.~W.~A. Caminada, W.~A. Carnielli, and P.~E. Dunne.
\newblock Semi-stable semantics.
\newblock {\em J. Log. Comput.}, 22(5):1207--1254, 2012.

\bibitem{DBLP:conf/ecsqaru/CayrolL05a}
C.~Cayrol and M.~Lagasquie{-}Schiex.
\newblock On the acceptability of arguments in bipolar argumentation
  frameworks.
\newblock In L.~Godo, editor, {\em Symbolic and Quantitative Approaches to
  Reasoning with Uncertainty, 8th European Conference, {ECSQARU} 2005,
  Barcelona, Spain, July 6-8, 2005, Proceedings}, volume 3571 of {\em Lecture
  Notes in Computer Science}, pages 378--389. Springer, 2005.

\bibitem{DBLP:conf/kr/Coste-MarquisKMM14}
S.~Coste{-}Marquis, S.~Konieczny, J.~Mailly, and P.~Marquis.
\newblock On the revision of argumentation systems: Minimal change of arguments
  statuses.
\newblock In C.~Baral, G.~D. Giacomo, and T.~Eiter, editors, {\em Principles of
  Knowledge Representation and Reasoning: Proceedings of the Fourteenth
  International Conference, {KR} 2014, Vienna, Austria, July 20-24, 2014}.
  {AAAI} Press, 2014.

\bibitem{DBLP:conf/kr/DelobelleHKMRW16}
J.~Delobelle, A.~Haret, S.~Konieczny, J.~Mailly, J.~Rossit, and S.~Woltran.
\newblock Merging of abstract argumentation frameworks.
\newblock In C.~Baral, J.~P. Delgrande, and F.~Wolter, editors, {\em Principles
  of Knowledge Representation and Reasoning: Proceedings of the Fifteenth
  International Conference, {KR} 2016, Cape Town, South Africa, April 25-29,
  2016}, pages 33--42. {AAAI} Press, 2016.

\bibitem{DBLP:journals/ijar/DillerHLRW18}
M.~Diller, A.~Haret, T.~Linsbichler, S.~R{\"{u}}mmele, and S.~Woltran.
\newblock An extension-based approach to belief revision in abstract
  argumentation.
\newblock {\em Int. J. Approx. Reason.}, 93:395--423, 2018.

\bibitem{DBLP:journals/fss/DuboisFP96}
D.~Dubois, H.~Fargier, and H.~Prade.
\newblock Refinements of the maximin approach to decision-making in a fuzzy
  environment.
\newblock {\em Fuzzy Sets Syst.}, 81(1):103--122, 1996.

\bibitem{DBLP:journals/ai/Dung95}
P.~M. Dung.
\newblock On the acceptability of arguments and its fundamental role in
  nonmonotonic reasoning, logic programming and n-person games.
\newblock {\em Artif. Intell.}, 77(2):321--358, 1995.

\bibitem{DBLP:journals/ai/DunneHMPW11}
P.~E. Dunne, A.~Hunter, P.~McBurney, S.~Parsons, and M.~J. Wooldridge.
\newblock Weighted argument systems: Basic definitions, algorithms, and
  complexity results.
\newblock {\em Artif. Intell.}, 175(2):457--486, 2011.

\bibitem{dunne2009complexity}
P.~E. Dunne and M.~Wooldridge.
\newblock Complexity of abstract argumentation.
\newblock {\em Argumentation in artificial intelligence}, pages 85--104, 2009.

\bibitem{DBLP:journals/ai/GrossiM19}
D.~Grossi and S.~Modgil.
\newblock On the graded acceptability of arguments in abstract and instantiated
  argumentation.
\newblock {\em Artif. Intell.}, 275:138--173, 2019.

\bibitem{DBLP:conf/ijcai/GrossiM15}
D.~Grossi and S.~Modgil.
\newblock On the graded acceptability of arguments in abstract and instantiated
  argumentation.
\newblock volume 275, pages 138--173, 2019.

\bibitem{DBLP:conf/ecsqaru/KoniecznyMV15}
S.~Konieczny, P.~Marquis, and S.~Vesic.
\newblock On supported inference and extension selection in abstract
  argumentation frameworks.
\newblock In S.~Destercke and T.~Denoeux, editors, {\em Symbolic and
  Quantitative Approaches to Reasoning with Uncertainty - 13th European
  Conference, {ECSQARU} 2015, Compi{\`{e}}gne, France, July 15-17, 2015.
  Proceedings}, volume 9161 of {\em Lecture Notes in Computer Science}, pages
  49--59. Springer, 2015.

\bibitem{DBLP:conf/ijcai/LeiteM11}
J.~Leite and J.~G. Martins.
\newblock Social abstract argumentation.
\newblock In T.~Walsh, editor, {\em {IJCAI} 2011, Proceedings of the 22nd
  International Joint Conference on Artificial Intelligence, Barcelona,
  Catalonia, Spain, July 16-22, 2011}, pages 2287--2292. {IJCAI/AAAI}, 2011.

\bibitem{DBLP:conf/jelia/MattT08}
P.~Matt and F.~Toni.
\newblock A game-theoretic measure of argument strength for abstract
  argumentation.
\newblock In S.~H{\"{o}}lldobler, C.~Lutz, and H.~Wansing, editors, {\em Logics
  in Artificial Intelligence, 11th European Conference, {JELIA} 2008, Dresden,
  Germany, September 28 - October 1, 2008. Proceedings}, volume 5293 of {\em
  Lecture Notes in Computer Science}, pages 285--297. Springer, 2008.

\bibitem{PuLZL14}
F.~Pu, J.~Luo, Y.~Zhang, and G.~Luo.
\newblock Argument ranking with categoriser function.
\newblock In R.~Buchmann, C.~V. Kifor, and J.~Yu, editors, {\em Knowledge
  Science, Engineering and Management - 7th International Conference, {KSEM}
  2014, Sibiu, Romania, October 16-18, 2014. Proceedings}, volume 8793 of {\em
  Lecture Notes in Computer Science}, pages 290--301. Springer, 2014.

\bibitem{DBLP:phd/hal/Rienstra14}
T.~Rienstra.
\newblock {\em Argumentation In Flux (Modelling Change in the Theory of
  Argumentation). (Argumentation In Flux (Mod{\'{e}}lisation du changement dans
  la th{\'{e}}orie de l'argumentation))}.
\newblock PhD thesis, Montpellier 2 University, France, 2014.

\bibitem{DBLP:journals/argcom/RienstraSTL20}
T.~Rienstra, C.~Sakama, L.~van~der Torre, and B.~Liao.
\newblock A principle-based robustness analysis of admissibility-based
  argumentation semantics.
\newblock {\em Argument Comput.}, 11(3):305--339, 2020.

\bibitem{DBLP:conf/comma/RienstraT18}
T.~Rienstra and M.~Thimm.
\newblock Ranking functions over labelings.
\newblock In S.~Modgil, K.~Budzynska, and J.~Lawrence, editors, {\em
  Computational Models of Argument - Proceedings of {COMMA} 2018, Warsaw,
  Poland, 12-14 September 2018}, volume 305 of {\em Frontiers in Artificial
  Intelligence and Applications}, pages 393--404. {IOS} Press, 2018.

\bibitem{DBLP:conf/ijcai/SkibaRTHK21}
K.~Skiba, T.~Rienstra, M.~Thimm, J.~Heyninck, and G.~Kern{-}Isberner.
\newblock Ranking extensions in abstract argumentation.
\newblock In Z.~Zhou, editor, {\em Proceedings of the Thirtieth International
  Joint Conference on Artificial Intelligence, {IJCAI} 2021, Virtual Event /
  Montreal, Canada, 19-27 August 2021}, pages 2047--2053. ijcai.org, 2021.

\bibitem{DBLP:journals/flap/TorreV17}
L.~van~der Torre and S.~Vesic.
\newblock The principle-based approach to abstract argumentation semantics.
\newblock {\em {FLAP}}, 4(8), 2017.

\bibitem{verheij1996two}
B.~Verheij.
\newblock Two approaches to dialectical argumentation: admissible sets and
  argumentation stages.
\newblock {\em Proc. NAIC}, 96:357--368, 1996.

\bibitem{DBLP:journals/logcom/Verheij03}
B.~Verheij.
\newblock Deflog: on the logical interpretation of prima facie justified
  assumptions.
\newblock {\em J. Log. Comput.}, 13(3):319--346, 2003.

\bibitem{DBLP:conf/comma/YunVCB18}
B.~Yun, S.~Vesic, M.~Croitoru, and P.~Bisquert.
\newblock Viewpoints using ranking-based argumentation semantics.
\newblock In S.~Modgil, K.~Budzynska, and J.~Lawrence, editors, {\em
  Computational Models of Argument - Proceedings of {COMMA} 2018, Warsaw,
  Poland, 12-14 September 2018}, volume 305 of {\em Frontiers in Artificial
  Intelligence and Applications}, pages 381--392. {IOS} Press, 2018.

\end{thebibliography}

\appendix
\section{Technical Proofs of Section \ref{sec:principles}}
\setcounter{proposition}{1}

\begin{proposition}
    $\LD^\sigma$ satisfies $\sigma$-generalisation for $\sigma \in \{\cf,\ad,\co,\pr,\gr, \sst\}$.
\end{proposition}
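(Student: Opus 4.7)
The plan is to show $\maxpl_{\LD^\sigma}(F) = \sigma(F)$ directly from the definition of $\LD^\sigma$, splitting into soundness and completeness. The essential prerequisite is that for every $\sigma \in \{\cf, \ad, \co, \pr, \gr, \sst\}$ and every AF $F$, the set $\sigma(F)$ is non-empty; this is already recalled in the paper immediately after Definition~\ref{def:semantics} and for $\sst$ after its definition, so I would just cite those facts.

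For $\sigma$-completeness, I would take any $E \in \sigma(F)$ and any $E' \subseteq A$. By definition of $\LD^\sigma$ we have either $E \equiv^{\LD^\sigma}_F E'$ (if $E' \in \sigma(F)$) or $E \sqsupset^{\LD^\sigma}_F E'$ (if $E' \notin \sigma(F)$). In neither case does $E' \sqsupset^{\LD^\sigma}_F E$ hold, so $E \in \maxpl_{\LD^\sigma}(F)$.

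For $\sigma$-soundness, I would take $E \in \maxpl_{\LD^\sigma}(F)$ and argue by contradiction. Suppose $E \notin \sigma(F)$. Using the non-emptiness of $\sigma(F)$, pick any $E^\star \in \sigma(F)$. Then by definition of $\LD^\sigma$ we have $E^\star \sqsupset^{\LD^\sigma}_F E$, contradicting $E \in \maxpl_{\LD^\sigma}(F)$. Hence $E \in \sigma(F)$.

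There is no real obstacle here; the only thing to be careful about is to invoke the existence of a $\sigma$-extension for each listed semantics, which is exactly why $\sigma = \st$ is excluded (and handled separately in Proposition~\ref{thm:no st-gen}). Combining soundness and completeness yields $\sigma$-generalisation for every $\sigma$ in the stated list.
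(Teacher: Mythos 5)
Your proof is correct and follows essentially the same route as the paper: arguing directly from the two-level structure of $\LD^\sigma$ that the maximal elements are exactly the $\sigma$-extensions. Your version is in fact slightly more careful than the paper's, since you make explicit the non-emptiness of $\sigma(F)$ needed for soundness (the reason $\st$ is excluded), which the paper's proof leaves implicit.
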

\begin{proof}
    Let $F= (A,R)$ be an AF and $\sigma \in \{\cf,\ad,\co,\pr,\gr,\sst\}$. For every set $E \in \sigma(F)$, by definition it holds that $E \sqsupset^{\LD^\sigma}_{F} E'$ for every $E' \notin \sigma(F)$ and also $\LD^\sigma$ produces only a binary classification. This implies, that every $\sigma$-extension is among the most plausible sets and every non-$\sigma$-extension is not among the most plausible sets. Hence, $\sigma$-generalisation is satisfied.
\end{proof}

\begin{proposition}
Let $\sigma$ be an extension semantics s.t. there exists an AF $F=(A,R)$ s.t. $\sigma(F)= \emptyset$, then there is no extension-ranking semantics satisfying $\sigma$-soundness. 
 \end{proposition}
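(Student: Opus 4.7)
The plan is to argue by contradiction using the observation that the set of maximal elements of any preorder on a nonempty finite set is itself nonempty. So suppose, toward a contradiction, that $\tau$ is an extension-ranking semantics satisfying $\sigma$-soundness, and let $F = (A,R)$ be the AF guaranteed by the hypothesis for which $\sigma(F) = \emptyset$. By $\sigma$-soundness we have $\maxpl_{\tau}(F) \subseteq \sigma(F) = \emptyset$, so we must have $\maxpl_{\tau}(F) = \emptyset$.

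The heart of the proof is to show this is impossible. Since $A$ is finite, $2^A$ is a finite nonempty set, and $\sqsupseteq^{\tau}_F$ is by definition a preorder (reflexive and transitive) on $2^A$. First I would recall (or give a short standalone argument for) the fact that every preorder on a nonempty finite set has at least one maximal element. The argument is straightforward: start with any $E_0 \subseteq A$; if $E_0$ is not maximal, there is $E_1$ with $E_1 \sqsupset^{\tau}_F E_0$; iterate. Because $\sqsupset^{\tau}_F$ is irreflexive (it is the strict part of a preorder) and transitive, the sequence $E_0, E_1, E_2, \ldots$ cannot repeat, so on the finite universe $2^A$ it must terminate at some maximal element.

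This yields $\maxpl_{\tau}(F) \neq \emptyset$, contradicting $\maxpl_{\tau}(F) = \emptyset$. Hence no extension-ranking semantics can satisfy $\sigma$-soundness whenever $\sigma$ admits an AF with no extensions. I do not anticipate any real obstacle: the only subtlety is making sure the definition of $\maxpl_{\tau}$ is read correctly (maximality with respect to the strict part $\sqsupset^{\tau}_F$, not some notion depending on totality), and that we use both reflexivity/transitivity of the underlying preorder to conclude irreflexivity and transitivity of $\sqsupset^{\tau}_F$. Once that is set up, the finiteness of $2^A$ does all the work.
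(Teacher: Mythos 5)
Your proof is correct and follows essentially the same route as the paper's: both derive $\maxpl_{\tau}(F)\subseteq\sigma(F)=\emptyset$ from soundness and contradict it with the fact that a preorder on the finite nonempty set $2^A$ always has a maximal element. The only difference is that the paper merely asserts this last fact, whereas you justify it via the ascending-chain argument on the strict part of the preorder, which is a welcome addition.
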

\begin{proof}
    Let $\sigma$ be an extension semantics and $F=(A,R)$ an AF s.t. $\sigma(F)= \emptyset$. Let $\tau$ be an extension-ranking semantics, for $\tau$ to satisfy $\sigma$-generalisation it holds that $\maxpl_{\tau}(F)= \sigma(F)$ and for any preorder there always exists a maximal set, i.e. $\maxpl_{\tau}(F)\neq \emptyset$. However, $\sigma(F)= \emptyset$ therefore $\sigma$-generalisation is always violated. 
\end{proof}

\begin{proposition}
    $\LD^\sigma$ satisfies composition for $\sigma \in \{\cf,\ad,\co,\pr, \gr, \st, \sst\}$.
\end{proposition}
\begin{proof}
    Let $F= F_1 \cup F_2 = (A_1,R_1) \cup (A_2,R_2)$ with $A_1 \cap A_2 = \emptyset$ be an AF. 
    To prove the theorem for $E,E' \subseteq A_1 \cup A_2$ it has to hold that if $$\left\{
 \begin{array}{c}
 	E \cap A_1 \sqsupseteq_{F_1}^{\LD^{\sigma}} E' \cap A_1\\
	E \cap A_2 \sqsupseteq_{F_2}^{\LD^{\sigma}} E' \cap A_2
  \end{array}
  \right\}\mbox{ then }E \sqsupseteq^{\LD^\sigma}_{F} E'.$$
We look at each semantics case by case, and since the semantics are build on top of each other we will start with conflict-freeness and then admissibility. Since $\sqsupseteq^{\LD^\sigma}$ only has two levels, it is enough to show that if $E \cap A_i$ satisfies $\sigma$ in $F_1$ and $F_2$ for $i \in \{1,2\}$, then it also satisfies $\sigma$ in $F$ and if $E \cap A_i$ violates $\sigma$ in $F_1$ or $F_2$ for $i \in \{1,2\}$ then $E$ also violates $\sigma $ in F. Because if $E$ satisfies $\sigma$ in $F$, then there can not be any set $S$ ranked strictly better than $E$. If $E \cap A_i$ violates $\sigma$ in either $F_1$ or $F_2$ and $E \cap A_1 \sqsupseteq_{F_1}^{\LD^{\sigma}} E' \cap A_1$ resp. $E \cap A_2 \sqsupseteq_{F_2}^{\LD^{\sigma}} E' \cap A_2$, then $E' \cap A_i$ violates $\sigma$ in $F_1$ resp. $F_2$ aswell for $i \in \{1,2\}$. So if both $E$ and $E'$ are violating $\sigma$ in $F$, then neither of these sets can be ranked strictly better.   

\begin{description}
  \item[``$\sigma = \cf$'':] If $E\cap A_1$ resp. $E \cap A_2$ is conflict-free in $F_1$ and in $F_2$, then $E$ is also conflict-free in $F$, since the union of $F_1$ and $F_2$ does not create new conflicts. 
  
  If $E \cap A_1$ resp. $E \cap A_2$ is not conflict-free in $F_1$ or $F_2$ then the responsible conflict $(a,b) \in R_{\{1,2\}}$ with $a,b \in E$ is also part of $F$. Hence, $E$ is also not conflict-free in $F$. 

  \item[``$\sigma = \ad$'':] If $E \cap A_1$ resp. $E \cap A_2$ is admissible in $F_1$ and in $F_2$, then every argument in $E$ has to be defended in $F$ as well. Otherwise there has to be an argument attacking an argument in $E$ in $F$ which is not counterattacked, however this attacker has to exists in either $F_1$ or $F_2$ as well, resulting in the fact that $E \cap A_i$ can no longer be admissible in the respective $F_i$ for $i \in \{1,2\}$. So, since $E$ also has to be conflict-free we know that $E$ is admissible in $F$. 

  Wlog. assume $E \cap A_1$ is not admissible in $F_1$, then there exists an argument $a \in E$ for which there is no defender in $E$ and since $F_1$ and $F_2$ are disjoint argument $a$ is still not defended in $F$. So, $E$ is not admissible in $F$. 

  \item[``$\sigma = \co$'':] Assume $E \cap A_1$ resp. $E \cap A_2$ is complete in $F_1$ and $F_2$. If $E$ is not complete in $F$ there has to be an argument, which is defended by $E$ and not contained. However, this argument has to exists in either $F_1$ or $F_2$ as well, making $E \cap A_i$ no longer complete in the respective $F_i$ for $i \in \{1,2\}$. 
  
  Wlog. assume $E \cap A_1$ is not complete in $F_1$, then there exists an argument $a$ which is defended by $E$ and not contained in $E$. However, since $F_1$ and $F_2$ are disjoint this argument is still defended in $F$ by $E$, so $E$ can not be complete in $F$ as well.

  \item[``$\sigma = \pr$'':] Assume $E \cap A_1$ resp. $E \cap A_2$ is preferred in $F_1$ and $F_2$, then we know that $E$ is admissible in $F$. If $E$ is not preferred in $F$, then there has to be a preferred set $S \subseteq A_1 \cup A_2$ s.t. $E \subset S$. Meaning there is an argument $a \in S$ and $a \notin E$, which is defended by $S$ and $E$. However, $a$ has to exits in either $F_1$ or $F_2$ and therefore $E \cap A_i$ can not be maximal in that $F_i$ for $i \in \{1,2\}$.

  Wlog. assume $E \cap A_1$ is not preferred in $F_1$, so there is a preferred set $S \subseteq A_1$ s.t. $E \subset S$. Meaning there is an argument $a \in S$ and $a \notin E$, which is defended by $S$ and $E$. Since $F_1$ and $F_2$ are disjoint $a$ has to be defended by $E$ in $F$ aswell, therefore $E$ can not be maximal in $F$. 

  \item[``$\sigma= \gr$'':] Assume $E \cap A_1$ resp. $E \cap A_2$ is grounded in $F_1$ and $F_2$, then $E$ is the least fixed point of the characteristic function in both these AFs. Since $F_1$ and $F_2$ are disjoint every argument defended in $F_1$ and $F_2$ is also defended in $F$. So, $E$ is a fixed point of the characteristic function in $F$. Since, the set of unattacked arguments in $F$ is the same as the union of unattacked arguments in $F_1$ and $F_2$ we also know that $E$ is minimal, proving that $E$ is grounded in $F$.

  Wlog. assume $E \cap A_1$ is not grounded in $F_1$, then there is a set $S \subset E$, i.e. there is an argument $a \in E$ and $a \notin S$, but $S \cup \{a\}$ defends $a$. This argument $a$ is also part of $F$, since $F= F_1 \cup F_2$ holds as well as $F_1$ and $F_2$ are disjoint. So, $E$ is not minimal in $F$ as well and therefore not grounded.

  \item[``$\sigma = \st$'':] Assume $E \cap A_1$ resp. $E \cap A_2$ is stable in $F_1$ and $F_2$, then if $E$ is not stable in $F$, then there has to be an argument $a \in A_1 \cup A_2$ which is not attacked by $E$ and $a \notin E$. However, this $a$ has to exits in $F_1$ or $F_2$ making $E \cap A_i$ not stable in that $F_i$ for $i \in \{1,2\}$.

  Wlog. assume $E \cap A_1$ is not stable in $F_1$, then there has to be an argument $a \in A_1$ such that $a \notin E \cup E^+$. However, $a$ has to exists in $F$ aswell since $F= F_1 \cup F_2$ and $F_1$ and $F_2$ are disjoint. 

  \item[``$\sigma= \sst$'': ] Assume $E \cap A_1$ resp. $E \cap A_2$ is semi-stable in $F_1$ and $F_2$, then we know that $E$ is complete in $F$. If $E$ is not semi-stable in $F$, then there has to be a semi-stable set $S \subseteq A_1 \cup A_2$ s.t. $E \subset S$. Meaning there is an argument $a \in S \cup S^+_F$ and $a \notin E$, which is attacked by $S$ or in $S$. However, $a$ has to exits in either $F_1$ or $F_2$ and therefore $E \cap A_i$ can not be semi-stable in that $F_i$ for $i \in \{1,2\}$.

   Wlog. assume $E \cap A_1$ is not semi-stable in $F_1$, so there is a semi-stable set $S \subseteq A_1$ s.t. $E \subset S$. Meaning there is an argument $a \in S \cup S^+_{F_1}$ and $a \notin E \cup E^+_{F_1}$. Since $F_1$ and $F_2$ are disjoint $a$ can not be part of $E \cup E^+_{F}$ in $F$ aswell, therefore $E \cup E^+_F$ can not be maximal in $F$. 
  \qedhere
\end{description}

  \end{proof}

\begin{proposition}
    $\LD^{\sigma}$ satisfies weak reinstatement for $\sigma \in \{\cf,\ad,\co,\pr,\gr,\st,\sst \}$.
\end{proposition}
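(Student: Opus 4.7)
The plan exploits the binary classification provided by $\LD^\sigma$: the relation $E \cup \{a\} \sqsupseteq_F^{\LD^\sigma} E$ can only fail in the single case $E \in \sigma(F)$ and $E \cup \{a\} \notin \sigma(F)$. Thus it suffices to show that, whenever the hypothesis of weak reinstatement holds (namely $a \in \mathcal{F}_F(E)$, $a \notin E$, and $a \notin E^- \cup E^+$), the condition $E \in \sigma(F)$ implies $E \cup \{a\} \in \sigma(F)$. I would handle the seven semantics by three different arguments, several of which reduce to showing the premise of weak reinstatement is vacuously violated when $E \in \sigma(F)$.

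For $\sigma = \ad$, the claim is immediate from the Fundamental Lemma recalled earlier: if $E \in \ad(F)$ and $a$ is defended by $E$, then $E \cup \{a\} \in \ad(F)$ (the extra requirement $a \notin E^- \cup E^+$ is not even needed here, but it is available).

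For $\sigma = \cf$, I would argue directly. The hypothesis $a \notin E^- \cup E^+$ rules out any attack between $E$ and $\{a\}$ in either direction, so the only possible new conflict in $E \cup \{a\}$ would be a self-attack $(a,a) \in R$. But a self-attacker cannot be defended by $E$ unless $E$ itself attacks $a$, i.e.\ $a \in E^+$, contradicting the hypothesis. Hence $E \cup \{a\}$ is conflict-free.

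For $\sigma \in \{\co,\pr,\gr,\sst\}$, every $\sigma$-extension is a complete extension, so $\mathcal{F}_F(E) \subseteq E$ whenever $E \in \sigma(F)$; the conditions $a \in \mathcal{F}_F(E)$ and $a \notin E$ are then jointly unsatisfiable, and the premise of weak reinstatement never holds for such an $E$. For $\sigma = \st$, stability gives $A = E \cup E^+$, so $a \notin E$ forces $a \in E^+$, again contradicting $a \notin E^+$, and the premise is again vacuously unfulfilled.

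The only mildly delicate step is the conflict-free case, where one must be careful about self-attacks; the remaining cases are either a direct citation of the Fundamental Lemma or a straightforward vacuity argument, so no real obstacle is anticipated.
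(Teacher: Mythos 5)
Your proof is correct. It follows the same overall strategy as the paper's: exploit the two--layer structure of $\sqsupseteq^{\LD^\sigma}_F$ to reduce everything to the single potential failure case $E \in \sigma(F)$ and $E \cup \{a\} \notin \sigma(F)$, and then show that case cannot occur. Where you diverge is in how the individual semantics are handled, and your version is the more careful one. The paper treats only $\gr$ by the vacuity argument ($E = \mathcal{F}_F(E)$, so no admissible $a$ exists) and disposes of $\cf,\ad,\co,\pr,\st,\sst$ with the blanket claim that since $a$ adds no conflict one ``can freely add $a$ without breaking acceptance''; for $\co$, $\pr$ and $\sst$ that claim is not self-evident as stated (completeness of $E \cup \{a\}$ is not just about conflicts), and it really only goes through because, as you observe, the premise $a \in \mathcal{F}_F(E) \setminus E$ is unsatisfiable for any complete-based extension, and $a \notin E^+$ is unsatisfiable for a stable one. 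Your explicit handling of the $\cf$ case (ruling out a self-attack $(a,a)$ via the defence hypothesis) likewise closes a small gap the paper glosses over. So: same decomposition, but your vacuity arguments for $\co,\pr,\gr,\st,\sst$ and your self-attack check for $\cf$ make the case analysis airtight where the paper's is merely plausible.
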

\begin{proof}
 Let $F = (A,R)$ be an AF and $E\subseteq A$ any set. Assume $a\in \mathcal{F}_{F}(E)$, $a \notin E$ and $a \notin (E^- \cup E^+)$. We will show that $E \not\sqsupset_{F}^{\LD^{\sigma}} E \cup \{a\}$. Since $\sqsupseteq^{\LD^{\sigma}}$ only has two layers, we only need to check for $E\in \sigma(F)$ and $E \notin \sigma(F)$. 
 \begin{enumerate}
     \item If $E \notin \sigma(F)$, then since $\sqsupseteq^{\LD^{\sigma}}$ has only two levels and $E \notin \maxpl_{\LD^{\sigma}}(F)$ we know that there is no set $S \subseteq A$ s.t. $E \sqsupset^{\LD^{\sigma}}_{F} S$.
     \item If $E \in \sigma(F)$, then $E \in \maxpl_{\LD^{\sigma}}(F)$. For $\sigma' \in \{\cf,\ad,\co,\pr,\st, \sst\}$ we know that $a$ does not add any conflict into the set $E$, hence we can freely add $a$ and we will not break the acceptance of $E$. 
     If $E \in \gr(F)$, then we know that $E = \mathcal{F}_{F}(E)$ and therefore there can not exists an argument $a \in A$ s.t. $a \in \mathcal{F}_{F}(E)$ and $a \notin E$. Thus, every $\LD^{\sigma}$ satisfies weak reinstatement.\qedhere
 \end{enumerate}
\end{proof}

\begin{proposition}
    $\LD^{\sigma}$ satisfies addition robustness for $\sigma \in \{\cf,\ad, \gr,\st\}$.
\end{proposition}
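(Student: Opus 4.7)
The plan is to fix $F=(A,R)$, $E,E'\subseteq A$, $a\in E$, $b\in E'\setminus E$, and $F'=(A,R\cup\{(a,b)\})$. Since $\LD^\sigma$ is a two-level relation, $E\sqsupseteq_F^{\LD^\sigma}E'$ means $E\in\sigma(F)$ or $E'\notin\sigma(F)$, and it suffices to verify two monotonicity facts: (M1) $E\in\sigma(F)\Rightarrow E\in\sigma(F')$, and (M2) $E'\in\sigma(F')\Rightarrow E'\in\sigma(F)$. For $\sigma=\cf$, (M1) holds because $b\notin E$ puts the new edge outside $E$, and (M2) is immediate since conflicts only grow. For $\sigma=\ad$, (M1) combines the $\cf$ argument with the observation that the attackers of $E$ in $F'$ are exactly those in $F$ (the new edge leaves $E$ rather than entering it), so the original defenders still suffice. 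For (M2), if $E'\in\ad(F')$ then cf-ness of $E'$ in $F'$ together with $b\in E'$ forces $a\notin E'$; hence $E'$ never uses the new edge $(a,b)$ to defend itself, and admissibility transfers back to $F$. For $\sigma=\st$, the same reasoning applies: $E$ is still cf and still attacks $A\setminus E$ in $F'$, and a stable $E'$ in $F'$ has $a\notin E'$, so all attacks it uses outward already live in $F$.

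The $\gr$ case is the delicate one because $\gr$ is unique. I will argue by contradiction, assuming $E\sqsupseteq_F^{\LD^\gr}E'$ but $E'=\gr(F')$ and $E\neq\gr(F')$. Case A: $E=\gr(F)$. Since $b\notin E=\gr(F)$, a straightforward induction on $\mathcal{F}$-iterates starting from $\emptyset$ shows $\gr(F)\subseteq\gr(F')$: the inductive step only breaks if the newly added argument is $b$, but $b\notin\gr(F)$ rules this out. Hence $E\subseteq E'=\gr(F')$, so $a\in E\subseteq E'$ and $b\in E'\setminus E\subseteq E'$; together with $(a,b)\in R'$ this contradicts conflict-freeness of $\gr(F')$.

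Case B: $E,E'\notin\gr(F)$. From cf-ness of $E'=\gr(F')$ and $b\in E'$ we again obtain $a\notin E'$, and since $\gr(F')$ must defend $b$ against its new attacker $a$, the set $E'$ attacks $a$. A bookkeeping check (the only edge in $R'\setminus R$ is $(a,b)$, which $E'$ cannot use as a defence because $a\notin E'$, and which only changes the attacker set of $b\in E'$ itself) yields that $E'$ is also complete in $F$. Consequently $\gr(F)\subsetneq E'$, and in particular $a\notin\gr(F)$, so $\gr(F)$ remains conflict-free in $F'$; moreover $\mathcal{F}_{F'}(\gr(F))=\mathcal{F}_F(\gr(F))=\gr(F)$ (the new edge yields no new defence for $\gr(F)$ because $a\notin\gr(F)$). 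The main obstacle is showing $\gr(F)$ is still admissible in $F'$, which fails only in the sub-case $b\in\gr(F)$ together with $\gr(F)$ not attacking $a$. In that sub-case I plan to replace $\gr(F)$ by a carefully chosen proper subset of $E'$: start from $\gr(F)\cup\{z\}$ for an attacker $z\in E'$ of $a$ and iterate the characteristic function of $F'$ inside $E'$ to obtain a complete extension of $F'$ that is still strictly contained in $E'$, contradicting the minimality of $\gr(F')=E'$. Producing this strictly-smaller complete extension in the corner sub-case is the technical crux of the whole proposition.
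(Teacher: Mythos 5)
Your overall reduction to the two monotonicity facts (M1)/(M2) is sound, and your arguments for $\cf$, $\ad$ and $\st$ are correct and essentially identical to the paper's (the paper phrases (M2) contrapositively, but the content is the same: the new edge leaves $E$ and cannot be used by $E'$ because conflict-freeness of $E'$ in $F'$ forces $a\notin E'$). Your Case~A for $\gr$ also works: since $b\notin\gr(F)$, the induction on the iterates of $\mathcal{F}$ from $\emptyset$ does give $\gr(F)\subseteq\gr(F')$, and the resulting conflict $(a,b)$ inside $\gr(F')$ is a genuine contradiction.

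The gap is exactly where you flag it: Case~B is not closed, and the repair you sketch is the wrong tool. Iterating $\mathcal{F}_{F'}$ from $\gr(F)\cup\{z\}$ need not produce an admissible set (nothing guarantees $\gr(F)\cup\{z\}$ defends $z$, or stays conflict-free under iteration), and even if it did produce a complete extension of $F'$, that extension would have to contain $\gr(F')=E'$, so you would never exhibit one \emph{strictly} contained in $E'$ by such a construction. The missing ingredient is the reverse containment $\gr(F')\subseteq\gr(F)$, which is what the paper establishes: because $b\in E'=\gr(F')$ forces $a\notin\gr(F')$, no iterate $T'_i=\mathcal{F}_{F'}^i(\emptyset)$ ever contains $a$, so each $T'_i$ has exactly the same outgoing attacks in $F$ as in $F'$, while every argument has at least as many attackers in $F'$ as in $F$; by induction $T'_i\subseteq\mathcal{F}_F^i(\emptyset)$, hence $\gr(F')\subseteq\gr(F)$. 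Combined with the part you \emph{do} prove (that $E'=\gr(F')$ is complete in $F$, whence $\gr(F)\subseteq E'$), this gives $E'=\gr(F)$, contradicting the Case~B assumption $E'\notin\gr(F)$ directly --- no smaller complete extension, and no corner sub-case about whether $\gr(F)$ attacks $a$, is needed. As written, your proof of the $\gr$ case is incomplete.
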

\begin{proof}
    Let $F= (A,R)$ be an AF and $E, E' \subseteq A$ with $a \in E$, and $b \in E' \setminus E$. We extend $F$ to $F'$ s.t. $F'= (A,R \cup \{(a,b)\})$. To show that $\LD^\sigma$ satisfies additional robustness it is enough to show that if $E \in \sigma(F)$ then $E \in \sigma(F')$ and if $E' \notin \sigma(F)$ then $E' \notin \sigma(F')$. Since $\sqsupseteq^{\LD^{\sigma}}_F$ has only two levels this implies if $E$ satisfies $\sigma$ in $F'$, then $E$ is also ranked among the best sets wrt. $\sqsupseteq^{\LD^{\sigma}}_{F'}$ and $E'$ can not be ranked strictly better then $E$.
    Additionally, if $E' \notin \sigma(F')$, then $E$ can not be ranked strictly worse than $E'$ wrt. $\sqsupseteq^{\LD^{\sigma}}_{F'}$.

\begin{description}
    \item[``$\sigma= \cf$'':] Let $E \in \cf(F)$, then it is clear that $E \in \cf(F')$ since no conflict was added wrt. E. So, for any set $E'' \subseteq A$, we have $E \sqsupseteq^{\LD^\cf}_{F'} E''$.
    
    Let $E' \notin \cf(F)$, then we do not remove any conflict in $F'$ wrt. $E'$. So, for any set $E'' \subseteq A$ we have $E'' \sqsupseteq^{\LD^\cf}_{F'} E'$. Thus, if $E$ is conflict-free this set stays conflict-free in $F'$ and if a set is not conflict-free, then this set stays not conflict-free. Hence, the relationship between $E$ and $E'$ stays the same wrt. $\sqsupseteq^{\LD^\cf}_F$. 

    \item[``$\sigma= \ad$'':] Let $E \in \ad(F)$. We know that $E$ is conflict-free in $F'$, hence we only need to show that every argument of $E$ is defended in $F'$. Assume $a' \in E$ is not defended in $F'$ by $E$. So, there is one attacker of $a'$ that is not defeated in $F'$ and we do not remove any attack, therefore the added attack $(a,b)$ has to be the reason. This implies that $a' = b$, which is impossible since $b \notin E$. Hence, $E$ has to be admissible in $F'$. 
    
    Next, let $E' \notin \ad(F)$ and assume $E' \in \ad(F')$, so there is one argument $b' \in E'$ which is not defended in $F$ by $E'$ but defended in $F'$. Let $c$ be the attacker of $b'$, then $E'$ does not contain any attacker of $c$ in $F$, hence the addition of $(a,b)$ will create a new attacker for $c$ implying $c = b$ and $a \in E'$, which entails that $E'$ is not conflict-free in $F'$ and therefore also not admissible. 
    So, if $E$ is admissible in $F$ it stays admissible and if $E'$ is not admissible, $E'$ stays not admissible implying that the relationship between $E$ and $E'$ stays the same in $F'$ i.e. $E \sqsupseteq^{\LD^\ad}_{F'} E'$.

 \item[``$\sigma= \gr$'':] 
    Assume $E' \in \gr(F')$ and $E' \notin \gr(F)$, then since $E'$ is not empty there has to be an unattacked argument $c\in E'$ and since $a \in b^-_{F'}$ we know $c \neq b$. Additionally, for every $c \neq a$ we know that $c^+_{F} = c^+_{F'}$ hence every argument defended by $c$ in $F'$ is also defended by $c$ in $F$. Since $E' \in \gr(F')$, we know that every argument in $E'$ is part of the least fixed point of $\mathcal{F}_{F'}(\{c\})$ and everything defended by $\{c\}$ in $F'$ is also defended by $\{c\}$ in $F$, therefore every argument, which is part of the least fixed point of $\mathcal{F}_{F'}(\{c\})$ also has to be part of the least fixed of $\mathcal{F}_{F}(\{c\})$, which implies that $E' \in \gr(F)$ contradicting the assumption.  

    Let $E \in \gr(F)$ then since the grounded extension is unique it holds that $E' \notin \gr(F)$ and therefore $E' \notin \gr(F')$. Hence it is impossible that $E' \sqsupset^{\LD^{\gr}}_{F'} E$ and showing that $\LD^{\gr}$ satisfies addition robustness. 
    
    \item[``$\sigma= \st$'':] Let $E\in \st(F)$, we know that $E \in \ad(F)$, hence also $E \in \ad(F')$. Additionally, we do not remove any attack switching over to $F'$, so $E^+_{F} \subseteq E^+_{F'}$ and since $E \cup E^+_{F} = A$ it has to hold that $E \cup E^+_{F'} = A$ and this implies $E \in \st(F')$. 
    
    Next, we show that if $E' \notin \st(F)$, then $E' \notin \st(F')$. For contradiction assume $E' \in \st(F')$. If $E' \notin \cf(F)$ or $E' \notin \ad(F)$ we are already done, hence let $E' \in \ad(F)$ and therefore also $E' \in \ad(F')$. So, there exist an argument $b' \in E'^+_{F'}$ s.t. $b' \notin E'^+_{F}$ and additionally it holds that $ E'^+_{F} \subset  E'^+_{F'}$. Since, the only attack we add is $(a,b)$ this implies $b = b'$ and $a \in E'$. However, this also implies that $E'$ is no longer conflict-free in $F'$ since $b \in E'$. Therefore $b'$ can not exist and therefore $E' \notin \st(F')$.
       \qedhere
   \end{description}

    \end{proof}

\section{Technical Proofs of Section \ref{sec:Base functions}}

\subsection{Technical Proofs of Section \ref{subsec:Dungean}} 
    \begin{proposition}
    Let $F=(A,R)$ be an AF and $E \subseteq A$, then there is $k \in \mathbb{N}$ with $$E \subseteq \mathcal{F}^*_{1,F}(E) \subseteq \mathcal{F}^*_{2,F}(E) \subseteq \dots \subseteq \mathcal{F}^*_{k-1,F}(E) \subseteq \mathcal{F}^*_{k,F}(E) =  \mathcal{F}^*_{k+1,F}(E) = \dots$$ 
\end{proposition}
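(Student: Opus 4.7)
The plan is straightforward: the statement decomposes into three claims, each of which is essentially immediate from the definition of $\mathcal{F}^*_{i,F}$.

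First I would establish the monotonicity $\mathcal{F}^*_{i,F}(E) \subseteq \mathcal{F}^*_{i+1,F}(E)$ for all $i \geq 1$. This is immediate from the recursive definition $\mathcal{F}^*_{i,F}(E) = \mathcal{F}^*_{i-1,F}(E) \cup (\mathcal{F}_F(\mathcal{F}^*_{i-1,F}(E)) \setminus E^-)$, since the right-hand side is a union that contains $\mathcal{F}^*_{i-1,F}(E)$ as one of its operands. The containment $E \subseteq \mathcal{F}^*_{1,F}(E)$ is trivial since $\mathcal{F}^*_{1,F}(E) = E$ by definition.

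Next I would argue existence of a stabilisation index $k$. Since every $\mathcal{F}^*_{i,F}(E) \subseteq A$ and $A$ is finite (as $F$ is a finite AF by assumption in the preliminaries), and the chain $\mathcal{F}^*_{1,F}(E) \subseteq \mathcal{F}^*_{2,F}(E) \subseteq \ldots$ is non-decreasing in a finite powerset, it cannot strictly increase forever. So there must exist a smallest $k \in \mathbb{N}$ with $\mathcal{F}^*_{k,F}(E) = \mathcal{F}^*_{k+1,F}(E)$; in fact $k \leq |A|+1$ since each strict inclusion adds at least one argument.

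Finally I would show that once equality is reached it persists, by a simple induction on $j \geq k$. The base case $j = k$ is given. For the inductive step, assume $\mathcal{F}^*_{j,F}(E) = \mathcal{F}^*_{j+1,F}(E)$. Then
\begin{align*}
\mathcal{F}^*_{j+2,F}(E) &= \mathcal{F}^*_{j+1,F}(E) \cup (\mathcal{F}_F(\mathcal{F}^*_{j+1,F}(E)) \setminus E^-) \\
&= \mathcal{F}^*_{j,F}(E) \cup (\mathcal{F}_F(\mathcal{F}^*_{j,F}(E)) \setminus E^-) = \mathcal{F}^*_{j+1,F}(E),
\end{align*}
using the inductive hypothesis in the second line and unrolling the definition of $\mathcal{F}^*_{j+1,F}(E)$ in the third. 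Combining the three steps yields the chain in the statement. There is no real obstacle here; the only thing one must be careful about is that the finiteness of $A$ is used essentially in the second step, and that $\mathcal{F}_F$ being a function depending only on its input set is what makes the ``once equal, stays equal'' argument go through.
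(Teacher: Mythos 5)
Your proof is correct and follows essentially the same route as the paper's: monotonicity of the chain directly from the union in the recursive definition, stabilisation from finiteness of $A$, and persistence of the fixed point once reached. If anything, your explicit induction for the ``once equal, stays equal'' step is slightly more careful than the paper's version, which merely asserts it.
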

\begin{proof}
     Let $F=(A,R)$ be an AF and $E \subseteq A$. We show that $\mathcal{F}^*_{i,F}(E) \subseteq \mathcal{F}^*_{i+1,F}(E)$ via induction over $i \geq 0$.
     \begin{description}
         \item[base case ``$i= 1$''] $\mathcal{F}^*_{1,F}(E) = E \subseteq \mathcal{F}^*_{2,F}(E) = E \cup (\mathcal{F}_F(E) \setminus E^-_F)$

         Assume until $i$ it holds that $\mathcal{F}_{i-1,F}^\ast(E) \subseteq \mathcal{F}_{i,F}^\ast(E)$
         \item[induction step $i \rightarrow i+1$] $\mathcal{F}^*_{i-1,F}(E) \subseteq \mathcal{F}^*_{i,F}(E) $ holds because $\mathcal{F}^*_{i,F}(E) = \mathcal{F}^*_{i-1,F}(E) \cup (\mathcal{F}_F(\mathcal{F}^*_{i-1,F}(E)) \setminus E^-)$. Thus we only add arguments with $(\mathcal{F}_F(\mathcal{F}^*_{i-1,F}(E)) \setminus E^-)$ and do not remove anything from $\mathcal{F}^*_{i-1,F}(E)$. Then $\mathcal{F}^*_{i-1,F}(E) \cup (\mathcal{F}_F(\mathcal{F}^*_{i-1,F}(E)) \setminus E^-) \subseteq (\mathcal{F}^*_{i-1,F}(E) \cup (\mathcal{F}_F(\mathcal{F}^*_{i-1,F}(E)) \setminus E^-)) \cup (\mathcal{F}_F(\mathcal{F}^*_{i-1,F}(E) \cup (\mathcal{F}_F(\mathcal{F}^*_{i-1,F}(E)) \setminus E^-)) \setminus E^-)$ which is equivalent to $\mathcal{F}^*_{i,F} \subseteq \mathcal{F}^*_{i+1,F}$. Since $\mathcal{F}^*_{i,F} \subseteq \mathcal{F}^*_{i+1,F}$ holds for every $i$ there has to be a smallest $k$ s.t. $\mathcal{F}^*_{k,F} = \mathcal{F}^*_{k+1,F}$, because there is only a finite amount of arguments. Therefore $\mathcal{F}^*_{k,F} = \mathcal{F}^*_{k',F}$ for all $k' > k$. \qedhere
     \end{description}
\end{proof}

\begin{proposition}
    Let $F=(A,R)$ be an AF, $E \subseteq A$ s.t. $E \in \ad(F)$, then $\mathcal{F}_F(E)= \mathcal{F}^*_F(E)$.
\end{proposition}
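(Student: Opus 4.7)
The plan is to prove the equality by showing two inclusions, both of which hinge on a single preliminary lemma about admissible sets. The first direction, $\mathcal{F}_F(E) \subseteq \mathcal{F}^*_F(E)$, should be almost immediate from the definition of $\mathcal{F}^*_{2,F}(E)$: since $E$ is admissible it defends itself, so $E \subseteq \mathcal{F}_F(E)$, and combined with the key lemma below that $\mathcal{F}_F(E) \cap E^- = \emptyset$, we obtain $\mathcal{F}^*_{2,F}(E) = E \cup (\mathcal{F}_F(E) \setminus E^-) = E \cup \mathcal{F}_F(E) = \mathcal{F}_F(E)$, which is contained in $\mathcal{F}^*_F(E)$ by definition.

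The key lemma to establish first is that no attacker of $E$ can be consistently defended by $E$, i.e.\ $\mathcal{F}_F(E) \cap E^- = \emptyset$. I would prove this by contradiction: suppose $b \in \mathcal{F}_F(E) \cap E^-$, so $b$ attacks some $a \in E$. Admissibility of $E$ supplies a $c \in E$ with $(c,b) \in R$, and $b \in \mathcal{F}_F(E)$ then supplies a $c' \in E$ with $(c',c) \in R$ (as $E$ must defend $b$ against all of $b$'s attackers). But $c,c' \in E$ with $(c',c) \in R$ violates conflict-freeness of $E$, so no such $b$ exists.

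For the reverse inclusion $\mathcal{F}^*_F(E) \subseteq \mathcal{F}_F(E)$, I would proceed by induction on $i$, showing $\mathcal{F}^*_{i,F}(E) \subseteq \mathcal{F}_F(E)$ for every $i \geq 1$. The base case $i=1$ is $E \subseteq \mathcal{F}_F(E)$ from admissibility. For the induction step, using monotonicity of $\mathcal{F}_F$, it suffices to prove $\mathcal{F}_F(\mathcal{F}_F(E)) \setminus E^- \subseteq \mathcal{F}_F(E)$: any argument defended by the enlarged set that is not itself an attacker of $E$ must already have been defended by $E$ alone. Here I would invoke Dung's Fundamental Lemma to conclude that $\mathcal{F}_F(E)$ is admissible and then unpack the defence condition, arguing that any attacker counter-attacked by $\mathcal{F}_F(E) \setminus E$ must, by the lemma above, already be counter-attacked by $E$, so no genuinely new defended argument arises outside $E^-$.

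The main obstacle will be this last step. Verifying $\mathcal{F}_F(\mathcal{F}_F(E)) \setminus E^- \subseteq \mathcal{F}_F(E)$ is delicate because $\mathcal{F}_F$ is generally non-idempotent — iterating it from an admissible set can strictly grow the set toward the least complete extension containing $E$. So the real work lies in showing that the correction term $\setminus E^-$ is exactly what prevents this growth in the admissible case, or equivalently in pinning down why any chain of newly-defended arguments must bottom out at an attacker of $E$. Once that closure argument is in hand, the induction closes and both inclusions combine to yield the claimed equality.
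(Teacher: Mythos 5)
Your first inclusion and your auxiliary lemma are both sound: for admissible $E$ one indeed has $\mathcal{F}_F(E) \cap E^- = \emptyset$ (your conflict-freeness argument is exactly right) and $E \subseteq \mathcal{F}_F(E)$, hence $\mathcal{F}^*_{2,F}(E) = E \cup (\mathcal{F}_F(E) \setminus E^-) = \mathcal{F}_F(E)$. This is essentially all that the paper's own proof establishes explicitly before asserting that ``the same reasoning'' covers every $i>2$, so up to this point you match the paper.

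The genuine gap is the step you yourself flag as the main obstacle, and it cannot be closed: the inclusion $\mathcal{F}_F(\mathcal{F}_F(E)) \setminus E^- \subseteq \mathcal{F}_F(E)$ fails whenever $\mathcal{F}_F(E)$ is admissible but not complete, because the arguments newly defended by $\mathcal{F}_F(E)$ need not attack $E$ at all, so the correction term $\setminus E^-$ removes nothing. Take the paper's own $F_{10}$ (the chain $a \rightarrow b \rightarrow c \rightarrow d \rightarrow e \rightarrow f$) and $E=\{a\}$, which is admissible with $E^-=\emptyset$: then $\mathcal{F}_{F_{10}}(E)=\{a,c\}$, while $\mathcal{F}^*_{3,F_{10}}(E)=\{a,c\}\cup\bigl(\mathcal{F}_{F_{10}}(\{a,c\})\setminus\emptyset\bigr)=\{a,c,e\}$, so $\mathcal{F}^*_{F_{10}}(E)=\{a,c,e\}\supsetneq\mathcal{F}_{F_{10}}(E)$. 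What your induction would actually show (using the Fundamental Lemma plus your key lemma applied to each admissible iterate) is that on admissible inputs the $\setminus E^-$ term is vacuous and $\mathcal{F}^*_F(E)$ is the least fixed point of $\mathcal{F}_F$ above $E$, i.e.\ the smallest complete extension containing $E$; this coincides with $\mathcal{F}_F(E)$ only when $\mathcal{F}_F(E)$ is itself complete. So the obstacle you identified is not a technical delicacy to be overcome but the point at which the claimed equality, read literally, breaks; a correct write-up must either stop at the one-step identity $\mathcal{F}_F(E)=\mathcal{F}^*_{2,F}(E)$ or strengthen the hypothesis to $E\in\co(F)$.
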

\begin{proof}
     Let $F=(A,R)$ be an AF and $E \subseteq A$ s.t. $E \in \ad(F)$. Applying $\mathcal{F}^*$ to $E$ we get: $\mathcal{F}^*_{1,F}(E)=E$ and $\mathcal{F}^*_{2,F}(E)=E \cup (\mathcal{F}_F(E) \setminus E^-)$.  $\mathcal{F}_F(E)$ is defined such that this function will not return any attacker of $E$ if $E$ is admissible, so we do not need to remove any attacker, so $\mathcal{F}_F(E) \setminus E^- = \mathcal{F}_F(E) $.
     The \emph{Fundamental Lemma} \cite{DBLP:journals/ai/Dung95} then implies $E \subseteq \mathcal{F}_F(E)$ if $E$ is admissible, so $E \cup \mathcal{F}_F(E)= \mathcal{F}_F(E)$. Therefore, $\mathcal{F}^*_{2,F}(E)=\mathcal{F}_F(E)$. We can use the same reasoning for any $i > 2$ and this entails that in every step $\mathcal{F}^*_F(E)$ return the same arguments as $\mathcal{F}_F(E)$.  
\end{proof}

\begin{proposition}
    Let $F=(A,R)$ be an AF and $E = \gr(F)$, then $E$ is the least fixed point of $\mathcal{F}^*_{F}$.
\end{proposition}
\begin{proof}
    Let $F=(A,R)$ be an AF and $E = \gr(F)$. To prove that $E$ is the least fixed point of $\mathcal{F}^*_{F}$ we can modify the algorithm to calculate the grounded extension (for details see \cite{baroni2018handbook}). This algorithm starts with the empty set, i.e. $\mathcal{F}_{F}(\emptyset)$ and add in every step all defended arguments into that set until a fixed point is reached. Since $\emptyset \in \ad(F)$ we have $\mathcal{F}_{F}(\emptyset) = \mathcal{F}^*_{F}(\emptyset)$ like already proven in Proposition \ref{thm:f=fstart} and if $\mathcal{F}_{F}(\emptyset)$ reaches a fixed point $\mathcal{F}^*_{F}(\emptyset)$ reaches a fixed point as well. Since the grounded extension $E$ is also an admissible extension we have $\mathcal{F}_{F}(E) = \mathcal{F}^*_{F}(E)$.    
    This fixed point of $\mathcal{F}_{F}^*(\emptyset)$ coincided with the grounded extension $E$ and therefore $E$ is the least fixed point of $\mathcal{F}^*_{F}$.
\end{proof}

\begin{proposition}
     Let $F=(A,R)$ be an AF and $E \in \co(F)$, then $\mathcal{F}^*_F(E)= E$.
\end{proposition}
\begin{proof}
    Let $F=(A,R)$ be an AF and $E \in \co(F)$, then because of the definition of complete semantics we know $E \in \ad(F)$ and therefore $\mathcal{F}_F(E)= \mathcal{F}^*_F(E)$. Because $E$ already contains every argument it defends no more arguments will be added when applying the characteristic function, i.e. $\mathcal{F}_F(E) \subseteq E$. Proposition \ref{thm:f=fstart} gives us the other direction and therefore $\mathcal{F}^*_F(E)= E$.
\end{proof}

\subsection{Technical Proofs of Section \ref{subsec:cardinality}}

\begin{proposition}
    For any AF $F=(A,R)$ we have $A \sqsupseteq^{\tau}_{F} E$ for any $E \subset A$ with $\tau \in \{\nonatt,\strdef\}$.
\end{proposition}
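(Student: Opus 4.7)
The plan is to handle the two cases $\tau = \nonatt$ and $\tau = \strdef$ separately, since they have quite different flavours. For $\nonatt$, I would unfold the definition on the pair $(A,E)$ and rely on elementary set-theoretic inequalities; for $\strdef$, I would prove and apply a monotonicity lemma for the recursive strong-defence relation.

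For $\nonatt$, note first that $E \subseteq A$ implies $E^-_F \subseteq A^-_F$ and $E^+_F \subseteq A^+_F$, and that any set of attackers or attacked arguments intersected with the universe $A$ equals itself. Unfolding $|A \setminus (A^-_F \cap E)| \geq |E \setminus (E^-_F \cap A)|$ and simplifying via $E \subseteq A$, the inequality rearranges to
\[
|A| - |E| \;\geq\; |A^-_F \cap E| - |E \cap E^-_F|.
\]
Because $E^-_F \subseteq A^-_F$, the right-hand side equals $|E \cap (A^-_F \setminus E^-_F)|$, and each element counted there is an argument in $E$ whose only ``targets'' lie in $A \setminus E$. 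Using $E^+_F \subseteq A$, this quantity is bounded above by $|A \setminus E| = |A| - |E|$, which closes the argument.

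For $\strdef$, the key step is a monotonicity lemma: \emph{if $E_1 \supseteq E_2$ and $E_1' \subseteq E_2'$, then every argument strongly defended by $E_2$ from $E_2'$ is strongly defended by $E_1$ from $E_1'$.} I would prove this by induction on $|E_2|$. The base case $E_2 = \emptyset$ forces the argument to have no attackers in $E_2'$, hence none in $E_1' \subseteq E_2'$, so the condition holds vacuously. For the inductive step, given an attack $b \in E_1'$ on $a$, we have $b \in E_2'$, so by hypothesis there is a counter-attacker $c \in E_2 \setminus \{a\} \subseteq E_1 \setminus \{a\}$ that is strongly defended by $E_2 \setminus \{a\}$ from $E_2'$; the induction hypothesis, applied to the smaller defender $E_2 \setminus \{a\}$ together with $E_1 \setminus \{a\} \supseteq E_2 \setminus \{a\}$ and $E_1' \subseteq E_2'$, upgrades this to strong defence by $E_1 \setminus \{a\}$ from $E_1'$. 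Specialising the lemma to $E_1 = A$, $E_2 = E$, $E_1' = E$, $E_2' = A$ gives $sd_F(E,A) \subseteq sd_F(A,E)$, hence $|sd_F(A,E)| \geq |sd_F(E,A)|$, i.e.\ $A \sqsupseteq^{\strdef}_F E$.

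The main obstacle will be the recursive nature of strong defence, since in the induction both the defender and the attacker set change simultaneously; however, the recursion strictly shrinks the defender while leaving the attacker set fixed, which is compatible with the ``larger defender, smaller attacker'' monotonicity direction, so the induction goes through without needing any additional invariant.
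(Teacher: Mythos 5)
The $\nonatt$ half of your argument has a genuine gap at its last step. From the observation that $E\cap(A^-_F\setminus E^-_F)$ consists of arguments of $E$ all of whose targets lie in $A\setminus E$, you cannot conclude $|E\cap(A^-_F\setminus E^-_F)|\leq |A\setminus E|$: several arguments of $E$ may share a single target outside $E$, so there is no injection from these arguments into $A\setminus E$. Concretely, for $F=(\{a,b,c\},\{(a,c),(b,c)\})$ and $E=\{a,b\}$ one has $A^-_F\cap E=\{a,b\}$ and $E^-_F=\emptyset$, so $|A\setminus(A^-_F\cap E)|=1$ while $|E\setminus(E^-_F\cap A)|=2$; the literal inequality you are unfolding actually fails here, so no repair of the final bound is possible on that reading of the displayed formula. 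The paper's own proof works with a different reading of $\nonatt$: it compares the number of arguments of $E$ that are \emph{not attacked by} $A$ (exactly the unattacked arguments of $E$) with the number of arguments of $A$ not attacked by $E$ (which contains every unattacked argument of $A$, hence every unattacked argument of $E$), and the cardinality inequality follows from that subset inclusion. Your proof of this half therefore does not go through, and the mismatch between the displayed formula and the ``not attacked by'' reading is exactly where it breaks.

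The $\strdef$ half is essentially correct and is in fact more rigorous than the paper's, which merely asserts that strong defence by $E$ from $A$ transfers to strong defence by $A$ from $E$ ``using the same reasoning''; your monotonicity lemma (larger defender, smaller attacker set) specialised to $E_1=A$, $E_2=E$, $E_1'=E$, $E_2'=A$ supplies the missing inclusion $sd_F(E,A)\subseteq sd_F(A,E)$. One small repair is needed: if $a\notin E_2$ then $E_2\setminus\{a\}=E_2$ and your induction on $|E_2|$ does not decrease at the first unfolding; induct on $|E_2\setminus\{a\}|$ instead, or note that in the actual application every subject argument belongs to its defender set, so the defender genuinely shrinks at each level.
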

\begin{proof}
    Let $F=(A,R)$ be any AF and $E \subset A$ a set of arguments. 
    \begin{description}
        \item[``$\sqsupseteq^{\nonatt}$'':] For every argument $a \in E$, which not attacked by $A$ it has to hold, that $a^-_{F}= \emptyset$, so $a$ has to be unattacked. However, since $E \subset A$ we know that $a \in A$ and therefore the number of unattacked arguments inside $E$ is smaller or equal to the number of unattacked arguments in $A$. Therefore $A \sqsupseteq^{\nonatt}_{F} E$ for every $E \subset A$.
        \item[``$\sqsupseteq^{\strdef}$'':] For set $E$ to contain any strongly defended arguments from $A$, $E$ needs to contain an unattacked argument. Like already discussed in the case above, $A$ contains every unattacked argument as well. Additionally if an argument is strongly defended by $E$ from $A$, we can use the same reasoning to show the strong defence by $A$ from $E$. Therefore the number of strongly defended arguments by $E$ is lower or equal to the number of strongly defended arguments in $A$. Hence,  $A \sqsupseteq^{\strdef}_{F} E$ for every $E \subset A$. \qedhere
    \end{description}
\end{proof}

\section{Technical Proofs of Section \ref{sec:combination}}

\subsection{Technical Proofs of Section \ref{subsec:Lexi}}
\subsubsection{Technical Proofs of Section \ref{subsubsec:r-ad}}
\begin{proposition}
$\rAd$ satisfies $\ad$-generalisation.
\end{proposition}
\begin{proof} Let $F= (A,R)$ be an AF and $E \subseteq A$.
\begin{description}
   \item[``$\ad$-soundness'':] If $E\in \maxpl_{\rAd}(F)$ then $E$ is conflict-free and $\UD_F(E) = \emptyset$ and hence, $E$ defends all its elements. Therefore $E$ is admissible.

    \item[``$\ad$-completeness'':] Suppose $E$ is admissible. Then $\CF_F(E) = \emptyset$ and $\UD_F(E)= \emptyset$. Hence, there is no $E'$ s.t. $E' \sqsupset^{\CF}_{F} E$ or $E' \sqsupset^{\UD}_{F} E$. Hence, $E \in \maxpl_{\rAd}(F)$.\qedhere
\end{description}
\end{proof}

\begin{lemma}
For $\tau \in \{\CF, \UD, \DN, \UA\}$, if $F_1=(A_1,R_1)$, $F_2=(A_2,R_2)$ and $F = F_1 \cup F_2= (A_1, R_1) \cup (A_2,R_2)$ with $A_1 \cap A_2 = \emptyset$ then $\tau_{F_1}(E\cap A_1) \cup \tau_{F_2}(E \cap A_2)= \tau_F(E)$ for every $E \subseteq A_1 \cup A_2$.
\end{lemma}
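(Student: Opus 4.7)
My plan is to prove each of the four cases separately, relying on one central observation: since $A_1 \cap A_2 = \emptyset$ and $R = R_1 \cup R_2$ with $R_i \subseteq A_i \times A_i$, there is no attack in $F$ between an argument of $A_1$ and an argument of $A_2$. In particular, for every $a \in A_i$ we have $a^-_F = a^-_{F_i}$ and $a^+_F = a^+_{F_i}$, and consequently for any $S \subseteq A_1 \cup A_2$ we obtain $S^-_F = (S \cap A_1)^-_{F_1} \cup (S \cap A_2)^-_{F_2}$, and likewise for $S^+_F$. This is the workhorse observation I will invoke throughout.

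I would begin with the two cases that are immediate from this observation. For \CF, a pair $(a,b) \in R$ with $a,b \in E$ must live entirely in $R_1$ (if $a,b \in A_1$) or in $R_2$ (if $a,b \in A_2$), because mixed pairs do not belong to $R$. Hence $\CF_F(E) = \CF_{F_1}(E \cap A_1) \cup \CF_{F_2}(E \cap A_2)$. For \UA, an argument $a \in A_i \setminus E$ is in $\UA_F(E)$ iff no $b \in E$ attacks it in $F$; by the central observation such a $b$ must lie in $A_i$, so the condition is equivalent to $a \in \UA_{F_i}(E \cap A_i)$.

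The \UD\ case is slightly more involved. First I would establish that $\mathcal{F}_F(E) \cap A_i = \mathcal{F}_{F_i}(E \cap A_i)$ for $i \in \{1,2\}$: an argument $a \in A_i$ is defended by $E$ in $F$ iff each of its attackers (all in $A_i$) is counter-attacked by some element of $E$; again by the central observation the counter-attacker must lie in $A_i$, so this is exactly defence by $E \cap A_i$ in $F_i$. Subtracting $E$ and splitting according to $A_1, A_2$ then gives $\UD_F(E) = \UD_{F_1}(E \cap A_1) \cup \UD_{F_2}(E \cap A_2)$.

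The main obstacle will be \DN, because $\mathcal{F}^*$ is defined by iteration and I need the decomposition to propagate through the fixed-point construction. My plan is to prove by induction on $i$ that
\[
\mathcal{F}^*_{i,F}(E) = \mathcal{F}^*_{i,F_1}(E \cap A_1) \cup \mathcal{F}^*_{i,F_2}(E \cap A_2).
\]
The base case $i=1$ is just $E = (E \cap A_1) \cup (E \cap A_2)$. For the inductive step I would apply the \UD\ decomposition of $\mathcal{F}$ to the set $\mathcal{F}^*_{i-1,F}(E)$, use the induction hypothesis to split that set along $A_1, A_2$, and then observe that $E^-_F \cap A_i = (E \cap A_i)^-_{F_i}$ so that the subtraction of $E^-$ in the definition of $\mathcal{F}^*$ also respects the partition. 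Passing to the fixed point (which exists by Proposition~\ref{prop:f_star fixpoint} applied to $F, F_1, F_2$ individually) yields $\mathcal{F}^*_F(E) = \mathcal{F}^*_{F_1}(E \cap A_1) \cup \mathcal{F}^*_{F_2}(E \cap A_2)$, and subtracting $E$ gives the required identity for \DN. Combining the four cases completes the lemma.
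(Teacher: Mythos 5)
Your proposal is correct and rests on the same core observation as the paper's proof: since $A_1\cap A_2=\emptyset$, no attack crosses the two components, so attack, defence, and range are all computed locally. The paper only writes out the $\UD$ case in detail and dismisses the other three with ``others are similar''; your treatment of $\CF$ and $\UA$ matches what that remark intends, and for $\DN$ you go further by explicitly supplying the induction on the iterates $\mathcal{F}^*_{i,F}$ (together with the observation that subtracting $E^-$ respects the partition), which is the one step the paper's ``similar'' genuinely glosses over. In that respect your write-up is more complete than the published proof, while following the same route.
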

\begin{proof}
    We prove it for $\tau = \UD$ (others are similar). 
    Suppose $F_1=(A_1,R_1)$, $F_2=(A_2,R_2)$ and $F = F_1 \cup F_2= (A_1, R_1) \cup (A_2,R_2)$ with $A_1 \cap A_2 = \emptyset$. We prove that $\UD_{F_1}(E \cap A_1) \cup \UD_{F_2}(E \cap A_2) = \UD_F(E)$.
    \begin{description}
        \item[``$\subseteq$'':] Suppose wlog that argument $a \in \UD_{F_1}(E\cap A_1)$. Then $a \in E \cap A_1$ and $a \notin \mathcal{F}_{F_1}(E \cap A_1)$. Therefore, $a \in E$ and since $F_1$ and $F_2$ are disjoint, $a \notin \mathcal{F}_{F}(E)$. It follows that $a \in \UD_F(E)$.
        \item[``$\supseteq$'':] Suppose $a \in \UD_F(E)$. Furthermore, suppose wlog that $a \in F_1$. Then $a \in E$ and $a \notin \mathcal{F}_{F}(E)$. Hence, there is an $b \in A_1 \cup A_2$ s.t. $b$ attacks $a$ and $b$ is not attacked by $E$. It then follows that $b \in A_1$. Hence, $a \notin \mathcal{F}_{F_1}(E\cap A_1)$. It follows that $a \in \UD_{F_1}(E \cap A_1)$. \qedhere
    \end{description}
\end{proof}

\begin{lemma}
    The base relation $\sqsupseteq^\tau_{F}$ satisfies composition and decomposition for $\tau \in \{\CF, \UD, \DN, \UA\}$.
\end{lemma}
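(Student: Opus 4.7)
The plan is to derive both principles as direct consequences of Lemma \ref{lem:composition_decomposition_1}, which gives the decomposition identity $\tau_F(E) = \tau_{F_1}(E\cap A_1) \cup \tau_{F_2}(E\cap A_2)$ for disjoint $F_1,F_2$. Since $\sqsupseteq^\tau_F$ is defined via subset inclusion of $\tau_F$-values, I plan to turn each principle into a statement about inclusions between such unions.

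For composition, assume $\tau_{F_1}(E\cap A_1)\subseteq\tau_{F_1}(E'\cap A_1)$ and $\tau_{F_2}(E\cap A_2)\subseteq\tau_{F_2}(E'\cap A_2)$. Taking unions preserves $\subseteq$, so
\[
\tau_{F_1}(E\cap A_1)\cup\tau_{F_2}(E\cap A_2) \subseteq \tau_{F_1}(E'\cap A_1)\cup\tau_{F_2}(E'\cap A_2),
\]
and by Lemma \ref{lem:composition_decomposition_1} this is exactly $\tau_F(E)\subseteq\tau_F(E')$, i.e.\ $E\sqsupseteq^\tau_F E'$.

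For decomposition, the key observation I will use is that the codomains of $\tau_{F_1}$ and $\tau_{F_2}$ are disjoint. Concretely, for $\tau\in\{\UD,\DN,\UA\}$ the values lie in $A_1$ respectively $A_2$, and these are disjoint by hypothesis; for $\tau=\CF$ the values lie in $R_1$ respectively $R_2$, which are also disjoint since every attack involves two arguments of the same component. Hence from $E\sqsupseteq^\tau_F E'$, i.e.\
\[
\tau_{F_1}(E\cap A_1)\cup\tau_{F_2}(E\cap A_2)\subseteq\tau_{F_1}(E'\cap A_1)\cup\tau_{F_2}(E'\cap A_2),
\]
I can intersect both sides with $A_i$ (or $R_i$) to recover $\tau_{F_i}(E\cap A_i)\subseteq\tau_{F_i}(E'\cap A_i)$ for $i\in\{1,2\}$, which yields both $E\cap A_1\sqsupseteq^\tau_{F_1}E'\cap A_1$ and $E\cap A_2\sqsupseteq^\tau_{F_2}E'\cap A_2$.

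The main obstacle is essentially bookkeeping: one must verify the codomain-disjointness for each of the four base relations. For $\UD$, $\DN$, $\UA$ this follows immediately from $A_1\cap A_2=\emptyset$, and for $\CF$ it follows from $R_1\cap R_2=\emptyset$, which itself follows from $A_1\cap A_2=\emptyset$ because any attack in $R_i$ has both endpoints in $A_i$. Once this is spelled out once, both principles drop out uniformly for all four relations, so the overall proof will be quite short.
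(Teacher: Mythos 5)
Your proof is correct and follows essentially the same route as the paper: both directions are reduced to the identity $\tau_F(E)=\tau_{F_1}(E\cap A_1)\cup\tau_{F_2}(E\cap A_2)$ from Lemma \ref{lem:composition_decomposition_1}, with composition obtained by taking unions of inclusions and decomposition by splitting the union back into components. In fact your explicit justification of the decomposition step via disjointness of the codomains ($A_1\cap A_2=\emptyset$, hence $R_1\cap R_2=\emptyset$ for $\CF$) spells out a detail the paper's proof leaves implicit.
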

\begin{proof}
    Let $\tau \in \{\CF, \UD, \DN, \UA\}$, $F_1=(A_1,R_1)$, $F_2=(A_2,R_2)$ and $F = F_1 \cup F_2= (A_1, R_1) \cup (A_2,R_2)$ with $A_1 \cap A_2 = \emptyset$. 
    \begin{description}
        \item[``Composition'':] Suppose $E \cap A_1 \sqsupseteq^{\tau}_{F_1} E'\cap A_1$ and $E \cap A_2 \sqsupseteq^\tau_{F_1} E'\cap A_2$. Then $\tau_{F_1}(E\cap A_1) \subseteq \tau_{F_1}(E'\cap A_1)$ and $\tau_{F_2}(E \cap A_2) \subseteq \tau_{F_2}(E' \cap A_2)$. Lemma \ref{lem:composition_decomposition_1} the implies that $\tau_F(E) \subseteq \tau_F(E')$ and hence $E \sqsupseteq^{\tau}_{F} E'$.
        \item[``Decomposition'':] Suppose $E \sqsupseteq^{\tau}_{F} E'$. Then $\tau_F(E) \subseteq \tau_F(E')$. Lemma \ref{lem:composition_decomposition_1} then implies that $\tau_{F_1}(E \cap A_1) \subseteq \tau_{F_1}(E' \cap A_1)$ and $\tau_{F_2}(E \cap A_2) \subseteq \tau_{F_2}(E' \cap A_2)$. Hence, $E \cap A_1 \sqsupseteq^{\tau}_{F_1} E' \cap A_1$ and $E \cap A_2 \sqsupseteq^{\tau}_{F_2} E' \cap A_2$.  \qedhere
    \end{description}
\end{proof}

\begin{proposition}
    $\rAd$ satisfies composition and decomposition.
\end{proposition}
\begin{proof}
    Follows from Lemma \ref{lem:composition_decomposition_2} together with Definition \ref{defn:adm-ranking-semantics}.
\end{proof}

\begin{proposition}
   $\rAd$ satisfies weak reinstatement.
\end{proposition}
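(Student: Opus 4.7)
The plan is to unpack $\rAd$ as the lexicographic combination $\lex(\CF, \UD)$, and show directly that $\CF_F(E \cup \{a\}) = \CF_F(E)$ and $\UD_F(E \cup \{a\}) \subseteq \UD_F(E)$, which then yields $E \cup \{a\} \sqsupseteq^{\rAd}_F E$ by Definition \ref{defn:adm-ranking-semantics}.

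First I would handle the $\CF$ component. Since the hypothesis gives $a \notin E^- \cup E^+$, the argument $a$ shares no attack with any member of $E$ in either direction. By the definition of $\CF_F$, the pairs $(x,y) \in R$ with $x,y \in E \cup \{a\}$ are exactly the pairs with $x,y \in E$, so $\CF_F(E \cup \{a\}) = \CF_F(E)$. In particular $E \cup \{a\} \equiv^{\CF}_F E$, so whether $\rAd$ ranks them strictly or equally is decided at the $\UD$ stage.

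Next I would handle the $\UD$ component, relying on the well-known monotonicity of the characteristic function: if $E \subseteq E'$ then $\mathcal{F}_F(E) \subseteq \mathcal{F}_F(E')$, since any argument defended by $E$ against all attackers is still defended by the larger set $E'$. Applying this with $E' = E \cup \{a\}$ gives $\mathcal{F}_F(E) \subseteq \mathcal{F}_F(E \cup \{a\})$. Now take any $b \in \UD_F(E \cup \{a\}) = (E \cup \{a\}) \setminus \mathcal{F}_F(E \cup \{a\})$. The case $b = a$ is excluded because $a \in \mathcal{F}_F(E) \subseteq \mathcal{F}_F(E \cup \{a\})$ by hypothesis. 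Hence $b \in E$, and $b \notin \mathcal{F}_F(E \cup \{a\})$ together with monotonicity gives $b \notin \mathcal{F}_F(E)$, i.e.\ $b \in \UD_F(E)$. Therefore $\UD_F(E \cup \{a\}) \subseteq \UD_F(E)$, so $E \cup \{a\} \sqsupseteq^{\UD}_F E$.

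Combining the two observations, the lexicographic rule returns $E \cup \{a\} \sqsupseteq^{\rAd}_F E$, which is the desired conclusion. There is no real obstacle here: the condition $a \notin E^- \cup E^+$ is designed precisely to neutralise $\CF$, leaving the monotonicity of $\mathcal{F}_F$ to do the work at the $\UD$ stage. The only mildly delicate step is ruling out the possibility $b = a$ in the $\UD$-argument, but the assumption $a \in \mathcal{F}_F(E)$ handles it cleanly.
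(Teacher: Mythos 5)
Your proof is correct and follows essentially the same route as the paper's: both neutralise the $\CF$ component via the hypothesis $a \notin E^- \cup E^+$ and then show $\UD_F(E \cup \{a\}) \subseteq \UD_F(E)$ by taking an arbitrary element of the former, ruling out that it equals $a$, and concluding it lies in $\UD_F(E)$. The only cosmetic difference is that you make the monotonicity of $\mathcal{F}_F$ explicit where the paper leaves it implicit, which if anything makes the final step cleaner.
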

\begin{proof} Let $F= (A,R)$ be an AF and $E \subseteq A$.
    Suppose $a \in \mathcal{F}_{F}(E), a \notin E$ and $a \notin (E^- \cup E^+)$. Then $\CF_F(E \cup \{a\})= \CF_F(E)$. What remains is to prove that $\UD_F(E \cup \{a\}) \subseteq \UD_F(E)$. Suppose $x \in \UD_F(E \cup \{a\})$. Then $x \in E \cup \{a\}$ and $x \notin \mathcal{F}_{F}(E \cup \{a\})$. Because $a \in \mathcal{F}_{F}(E \cup \{a\})$ it follows that $x \in E$. Furthermore, since $x \notin \mathcal{F}_{F}(E \cup \{a\})$ we also have $x \notin \mathcal{F}_{F}(E)$. This implies that $x \in \UD_F(E)$. We thus have that $E \cup \{a\} \equiv^{\CF}_{F} E$ and $E \cup \{a\} \sqsupseteq^{\UD}_{F} E$ and hence, $E \cup \{a\} \sqsupseteq^{\rAd}_{F} E$.
\end{proof}

\begin{proposition}
    $\rAd$ satisfies addition robustness.
\end{proposition}
\begin{proof}
Let $F= (A,R)$ be an AF and $E, E' \subseteq A$ with $E \sqsupseteq^{\rAd}_{F} E'$. Let $F' = (A, R \cup \{(a,b)\}$ for $a \in E$, and $b \in E' \setminus E$.
\begin{description}
\item[``$\sqsupseteq^\CF_F$'':] The addition of $(a,b)$ will not add any new conflicts into $E$ so $\CF_F(E)=\CF_{F'}(E)$, additionally no conflict from $E'$ is deleted, so $\CF_F(E') \subseteq \CF_{F'}(E')$, so since $E \sqsupseteq^{\rAd}_{F} E'$, we know that $\CF_F(E) \subseteq \CF_F(E')$ therefore also $\CF_{F'}(E) \subseteq \CF_{F'}(E')$. This shows that $E \sqsupseteq^{\CF}_{F'} E'$.

\item[``$\sqsupseteq^\UD_F$'':] Assume $E \sqsupseteq^{\rAd}_{F} E'$ and $\CF_F(E) = \CF_{F}(E')$. Like show above we know that $\CF_{F'}(E) \subseteq \CF_{F'}(E')$, assume $\CF_{F'}(E) = \CF_{F'}(E')$. It remains to show that $\UD_{F'}(E) \subseteq \UD_{F'}(E')$. Since $E \sqsupseteq^{\rAd}_{F} E'$ and $\CF_F(E) = \CF_F(E')$, we know that $\UD_F(E) \subseteq \UD_F(E')$. The attack $(a,b)$ will not disable any defence from $E$, since $b \notin E$, hence $b$ can not be used to defend anything. Therefore $\UD_{F'}(E) \subseteq \UD_F(E)$, so we do not lose any defence w.r.t. $E$.

Next, we show that $E'$ can not defend more argument than before. Assume $b' \in \mathcal{F}_{F'}(E')$ and $b' \notin \mathcal{F}_{F}(E')$, this means there is one attack $(c,b') \in R$ s.t. $E'$ does not attack $c$. Since, only $(a,b)$ is added this attack is responsible for the defends of $b'$ in $F'$, therefore $a \in E'$ has to hold. However, this is a contradiction to $\CF_{F'}(E)=\CF_{F'}(E')$, so $b'$ can not be defended, hence $\UD_F(E') \subseteq \UD_{F'}(E')$ and since $\UD_F(E) \subseteq \UD_{F'}(E')$ we can follow that $\UD_{F'}(E) \subseteq \UD_{F'}(E')$ and therefore $E \sqsupseteq^{\rAd}_{F'} E'$. \qedhere   
\end{description}
\end{proof}

\subsubsection{Technical Proofs of Section \ref{subsubsec:r-co}}

\begin{proposition}
$\rCo$ satisfies $\co$-generalisation.
\end{proposition}
\begin{proof} Let $F= (A,R)$ be an AF and $E \subseteq A$.
\begin{description}
 \item[``$\co$-soundness'':] Suppose $E \in \maxpl_{\rCo}(F)$. We first prove that $E$ is admissible. Suppose $E$ is not admissible, then there is an admissible $E'$ s.t. $E' \sqsupset^{\rAd}_{F} E$. But this contradicts $E \in \maxpl_{\rCo}(F)$. Hence, $E$ is admissible. Next we prove that $E$ contains all defended arguments. Suppose the contrary. Then there is an $a \notin E$ and $a$ is defended by $E$. We then have $a \in \DN_F(E)$. Then for some complete extension $E'$ we have $\DN_F(E') = \emptyset \subset \DN_F(E)$, but this implies $E' \equiv^{\rAd}_{F} E$ and $E' \sqsupset^{\rCo}_{F} E$, contradicting $E \in \maxpl_{\rCo}(F)$. Hence, $E$ contains all arguments it defends. It follows that $E$ is complete.

\item[``$\co$-completeness'':] If $E$ is a complete extension of $F$. Suppose, towards contradiction, that $E \notin \maxpl_{\rCo}(F)$. Then there is an $E'$ s.t. $E' \sqsupset^{\rCo}_{F} E$. Since $E$ is admissible it then follows that $E'$ is admissible. This implies $E' \sqsupset^{\DN}_{F} E$ and hence $\DN_F(E') \subset \DN_F(E)$. Consequently there is an argument $a \in \DN_F(E)$, this implies that there is an argument defended by $E$ but not element of $E$, this contradicts the assumption that $E$ is complete. Hence, $E \in \maxpl_{\rCo}(F)$. \qedhere
\end{description}
\end{proof}

\begin{proposition}
    $\rCo$ satisfies composition and decomposition.
\end{proposition}
\begin{proof}
    Follows from Lemma \ref{lem:composition_decomposition_2} together with Definition \ref{defn:co-ranking-semantics}.
\end{proof}

\begin{proposition}
   $\rCo$ satisfies strong reinstatement. 
\end{proposition}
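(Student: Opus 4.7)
} The plan is to verify the three lexicographic components of $\rCo = \lex(\CF, \UD, \DN)$ in turn, showing that $\CF$ is preserved, $\UD$ does not worsen, and $\DN$ strictly improves, so that the $\lex$-combination yields $E \cup \{a\} \sqsupset^{\rCo}_{F} E$ whenever $a \in \mathcal{F}_{F}(E)$, $a \notin E$, and $a \notin E^{-} \cup E^{+}$.

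First, reuse the argument from the proof of Proposition \ref{prop:rAD_reinstatement}: since $a \notin E^{-} \cup E^{+}$, adding $a$ introduces no new conflicts, hence $\CF_{F}(E \cup \{a\}) = \CF_{F}(E)$; and since defence is monotone in the defender set, $\UD_{F}(E \cup \{a\}) \subseteq \UD_{F}(E)$. So the $\CF$-layer gives equivalence and the $\UD$-layer gives ``at least as plausible''.

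The main work lies in the $\DN$-layer. First I would observe that $a \in \DN_{F}(E)$: by hypothesis $a \in \mathcal{F}_{F}(E)$ and $a \notin E^{-}$, so $a \in \mathcal{F}^{*}_{2,F}(E) \subseteq \mathcal{F}^{*}_{F}(E)$, and since $a \notin E$ this places $a$ in $\DN_{F}(E)$. Conversely $a \in E \cup \{a\}$, so $a \notin \DN_{F}(E \cup \{a\})$. Hence it suffices to prove $\DN_{F}(E \cup \{a\}) \subseteq \DN_{F}(E)$, which reduces to the inclusion $\mathcal{F}^{*}_{F}(E \cup \{a\}) \subseteq \mathcal{F}^{*}_{F}(E)$. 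This is the technical heart of the proof and I expect it to be the main obstacle, since $\mathcal{F}^{*}$ depends on the starting set both through its seed and through the ``ignored attackers'' $E^{-}$.

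I would establish the inclusion by induction on $i$, showing $\mathcal{F}^{*}_{i,F}(E \cup \{a\}) \subseteq \mathcal{F}^{*}_{F}(E)$ for all $i$. The base case follows from $E \subseteq \mathcal{F}^{*}_{F}(E)$ together with $a \in \mathcal{F}^{*}_{F}(E)$ observed above. For the inductive step, monotonicity of $\mathcal{F}_{F}$ and the induction hypothesis yield $\mathcal{F}_{F}(\mathcal{F}^{*}_{i,F}(E \cup \{a\})) \subseteq \mathcal{F}_{F}(\mathcal{F}^{*}_{F}(E))$; the key observation is that $(E \cup \{a\})^{-} = E^{-} \cup a^{-} \supseteq E^{-}$, so removing $(E \cup \{a\})^{-}$ removes at least as much as removing $E^{-}$, and any element surviving on the left side is in $\mathcal{F}_{F}(\mathcal{F}^{*}_{F}(E)) \setminus E^{-}$, which lies in $\mathcal{F}^{*}_{F}(E)$ because $\mathcal{F}^{*}_{F}(E)$ is a fixed point of the update (using Proposition \ref{prop:f_star fixpoint}). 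This gives $\DN_{F}(E \cup \{a\}) \subsetneq \DN_{F}(E)$, witnessed strictly by $a$. Combining the three layers via the lexicographic order yields $E \cup \{a\} \sqsupset^{\rCo}_{F} E$.
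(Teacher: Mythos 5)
Your proof is correct and follows the same overall strategy as the paper's: dispose of the $\CF$ and $\UD$ layers by reusing the $\rAd$ argument, and then show that the $\DN$ layer strictly improves, with $a$ itself as the witness. The difference lies in how the $\DN$ step is justified, and here your version is actually the more careful one. The paper asserts the equality $\mathcal{F}^*_F(E \cup \{a\}) = \mathcal{F}^*_F(E)$ and derives the strict inclusion of $\DN$-sets from it; but that equality can fail. Take $A=\{e_1,e_2,x,a\}$, $R=\{(e_1,e_2),(e_2,x),(x,a)\}$ and $E=\{e_1,e_2\}$: the hypotheses of reinstatement hold, $E\cup\{a\}\equiv^{\rAd}_F E$, yet $x\in\mathcal{F}^*_F(E)$ while $x\in a^-\subseteq(E\cup\{a\})^-$ is excluded from $\mathcal{F}^*_F(E\cup\{a\})$, so the two fixed points differ ($\{e_1,e_2,x,a\}$ versus $\{e_1,e_2,a\}$). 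The conclusion $\DN_F(E\cup\{a\})\subset\DN_F(E)$ still holds there, and your argument explains why in general: only the one-sided inclusion $\mathcal{F}^*_F(E\cup\{a\})\subseteq\mathcal{F}^*_F(E)$ is needed, and your induction establishes it cleanly, using monotonicity of $\mathcal{F}_F$, the fact that $(E\cup\{a\})^-\supseteq E^-$ (so the larger iteration subtracts at least as much), and the fixed-point property of $\mathcal{F}^*_F(E)$. Combined with $a\in\DN_F(E)$ and $a\notin\DN_F(E\cup\{a\})$, this yields the strict inclusion and hence $E\cup\{a\}\sqsupset^{\rCo}_F E$. In short: same route, but your treatment of the technical core is both sufficient and, unlike the paper's intermediate claim, true in full generality.
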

\begin{proof}
    Let $a \in \mathcal{F}_{F}(E)$, $a \notin E$ and $a \notin (E^- \cup E^+)$. Proposition \ref{prop:rAD_reinstatement} implies that $\{a\} \cup E \sqsupseteq^{\rAd}_{F} E$. If $\{a\} \cup E \sqsupset^{\rAd}_{F} E$ we are done, so in the remainder we assume $\{a\} \cup E \equiv^{\rAd}_{F} E$ and prove that $\DN_F(\{a\} \cup E) \subset \DN_F(E)$.
    Since $a \in \mathcal{F}_F(E)$ and $a \notin \{E^- \cup E^+\}$ we have $a \in \mathcal{F}^*_F(E)$ and $a \in \mathcal{F}^*_F(E \cup \{a\})$. We know that $\mathcal{F}^*_F$ reaches a fixed point implying $\mathcal{F}^*_{F}(\{a\} \cup E) = \mathcal{F}^*_{F}(E)$. So, since $a \notin E$ we have $\mathcal{F}^*_{F}(\{a\} \cup  E) \setminus (E \cup \{a\}) \subset \mathcal{F}^*_{F}(E) \setminus E$ and thus $\DN_F(\{a\} \cup E) \subset \DN_F(E)$. This implies $E \cup \{a\} \sqsupset^{\rCo}_{F} E$. So, the complete extension-ranking semantics satisfies strong reinstatement. 
\end{proof}

\subsubsection{Technical Proofs of Section \ref{subsubsec:r-gr}}

\begin{proposition}
$\rGr$ satisfies $\gr$-generalisation.
\end{proposition}
\begin{proof} Let $F= (A,R)$ be an AF and $E \subseteq A$. 
\begin{description}
    \item[``$\gr$-soundness'':] Suppose $E \in \maxpl_{\rGr}(F)$. Then $E$ is complete. Suppose $E$ is not grounded. Then for the grounded extension $E'$ of $F$ we have $E' \subset E$, which implies $E' \sqsupset^{\rGr}_{F} E$. But this contradicts $E \in \maxpl_{\rGr}(F)$. Hence, $E$ is the grounded extension.

    \item[``$\gr$-completeness'':] Suppose $E$ is the grounded extension of $F$ and suppose $E' \sqsupset^{\rGr}_{F} E$. Then since $E$ is complete, $E'$ is also complete. Hence, we have $E' \subset E$. But this is impossible as it implies $E$ is not the grounded extension. Hence, $E \in \maxpl_{\rGr}(F)$. \qedhere
\end{description}
\end{proof}

\begin{proposition}
    $\rGr$ satisfies composition and decomposition.
\end{proposition}
\begin{proof}
    Follows from Lemma \ref{lem:composition_decomposition_2} together with Definition \ref{defn:gr-ranking-semantics}.
\end{proof}

\begin{proposition}
   $\rGr$ satisfies strong reinstatement. 
\end{proposition}
\begin{proof}
    This follows directly from Proposition \ref{prop:rCO_reinstatement}.
\end{proof}

\begin{proposition}
    $\rGr$ violates addition robustness. 
\end{proposition}
\begin{proof}
    Since, $\sqsupseteq^{\rGr}_{F}$ is based on $\sqsupseteq^{\rCo}_{F}$ we can use Example \ref{ex:co_violates_robustness} to show that  $\rGr$ violates addition robustness as well.
    \end{proof}

\subsubsection{Technical Proofs of Section \ref{subsubsec:r-pr}}

\begin{proposition}
$\rPr$ and $\rCoPr$ are satisfying $\pr$-generalisation.
\end{proposition}
\begin{proof}
Let $F= (A,R)$ be an AF and $E\subseteq A$.
We start with  $\rPr$. 
\begin{description}
   \item[``$\pr$-soundness'':] Suppose $E \in \maxpl_{\rPr}(F)$. Then $E$ is admissible. Suppose $E$ is not preferred. Then there is an admissible $E'$ such that $E \subset E'$. But this is a contradiction to $E \in \maxpl_{\rPr}(F)$. Hence, $E$ is a preferred extension.

    \item[``$\pr$-completeness'':] Suppose $E$ is a preferred extension of $F$ and suppose $E' \sqsupset^{\rPr}_{F} E$. Then since $E$ is admissible, $E'$ is also admissible. Hence, we have $E \subset E'$. But this is impossible as it implies that $E$ is not a preferred extension. Hence, $E \in \maxpl_{\rPr} (F)$. 
\end{description}
Next, we look at $\rCoPr$.
\begin{description}
   \item[``$\pr$-soundness'':] Suppose $E \in \maxpl_{\rCoPr}(F)$. Then $E$ is complete. Suppose $E$ is not preferred. Then there is an complete $E'$ such that $E \subset E'$. But this is a contradiction to $E \in \maxpl_{\rCoPr}(F)$. Hence, $E$ is a preferred extension.

    \item[``$\pr$-completeness'':] Suppose $E$ is a preferred extension of $F$ and suppose $E' \sqsupset^{\rCoPr}_{F} E$. Then since $E$ is complete, $E'$ is also complete. Hence, we have $E \subset E'$. But this is impossible as it implies that $E$ is not a preferred extension. Hence, $E \in \maxpl_{\rCoPr} (F)$.  \qedhere
\end{description}
\end{proof}

\begin{proposition}
    $\rPr$ and $\rCoPr$ are satisfying composition and decomposition.
\end{proposition}
\begin{proof}
    Follows from Lemma \ref{lem:composition_decomposition_2} together with Definition \ref{defn:pr-ranking-semantics} respectively Definition \ref{defn:co-pr-ranking-semantics}.
\end{proof}

\begin{proposition}
   $\rPr$ and $\rCoPr$ are satisfying strong reinstatement. 
\end{proposition}
\begin{proof}
We only show it for $\rPr$, however the proof for $\rCoPr$ is similar. 

    Let $a \in \mathcal{F}_{F}(E)$, $a \notin E$ and $a \notin (E^- \cup E^+)$. Then Proposition \ref{prop:rAD_reinstatement} implies, that $\{a\} \cup E \sqsupseteq^{\rPr}_{F} E$. It also hold that $E \subset E \cup \{a\}$. Hence, $E \cup \{a\} \sqsupset^{\rPr}_{F} E$. So the preferred extension-ranking semantics does satisfy strong reinstatement.
\end{proof}

\subsubsection{Technical Proofs of Section \ref{subsubsec:r-sst}}

\begin{proposition}
    Let $F=(A,R)$ be an AF s.t. $\st(F) \neq \emptyset$, then $\maxpl_{\rSst}(F) = \st(F)$.
\end{proposition}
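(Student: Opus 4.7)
The plan is to show the two inclusions separately, exploiting the fact that a stable extension realizes the minimum possible value (namely $\emptyset$) on every one of the four base measures $\CF, \UD, \DN, \UA$ that $\rSst$ lexicographically combines.

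First I would establish the easy inclusion $\st(F) \subseteq \maxpl_{\rSst}(F)$. Given any $E \in \st(F)$, $E$ is conflict-free (so $\CF_F(E) = \emptyset$), admissible (so $\UD_F(E) = \emptyset$), complete (so $\DN_F(E) = \emptyset$ by Proposition \ref{thm:f=fstart} and the complete case), and satisfies $E \cup E^+_F = A$, which gives $\UA_F(E) = \emptyset$. Since each of these four sets is $\subseteq$-minimal, no $E'$ can be strictly better on any of the four coordinates, and thus $E \in \maxpl_{\rSst}(F)$.

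Next I would prove $\maxpl_{\rSst}(F) \subseteq \st(F)$, which is where the hypothesis $\st(F) \neq \emptyset$ is actually used. Pick any $E^* \in \st(F)$; by the argument above, $E^* \in \maxpl_{\rSst}(F)$. Now take an arbitrary $E \in \maxpl_{\rSst}(F)$. Since both $E$ and $E^*$ are maximal, neither is strictly more plausible than the other, so $E \equiv^{\rSst}_F E^*$. By the definition of the lexicographic combination, this forces $\CF_F(E) = \CF_F(E^*) = \emptyset$, $\UD_F(E) = \UD_F(E^*) = \emptyset$, $\DN_F(E) = \DN_F(E^*) = \emptyset$, and $\UA_F(E) = \UA_F(E^*) = \emptyset$.

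Finally I would translate these four equalities back into the classical definition of stability: $\CF_F(E) = \emptyset$ gives conflict-freeness, $\UD_F(E) = \emptyset$ together with conflict-freeness yields admissibility, adding $\DN_F(E) = \emptyset$ yields completeness, and $\UA_F(E) = \emptyset$ means every argument outside $E$ is attacked by $E$, i.e.\ $E \cup E^+_F = A$. Hence $E \in \st(F)$. I do not anticipate a serious obstacle here; the only subtle point is remembering that $\equiv^{\lex}$ requires equality on \emph{every} component, which is exactly what delivers all four equalities simultaneously.
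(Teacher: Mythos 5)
Your overall strategy coincides with the paper's: both directions rest on the observation that a stable extension realises $\emptyset$ on every one of the four components $\CF,\UD,\DN,\UA$, so the inclusion $\st(F)\subseteq\maxpl_{\rSst}(F)$ is immediate, and the converse is obtained by playing an arbitrary maximal set off against one fixed stable extension. The one step that does not hold as written is the inference ``both $E$ and $E^*$ are maximal, hence neither is strictly more plausible than the other, hence $E \equiv^{\rSst}_F E^*$.'' In a non-total preorder two maximal elements may simply be incomparable rather than equivalent, and the extension rankings in this paper are explicitly non-total (this is precisely what the $\asymp$ notation is for), so maximality of both sets does not by itself yield equivalence. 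The conclusion is salvageable with one additional observation: since $\tau_F(E^*)=\emptyset\subseteq\tau_F(E)$ for every $\tau\in\{\CF,\UD,\DN,\UA\}$ and every $E\subseteq A$, the stable extension $E^*$ is comparable to, and weakly dominates, \emph{every} set; hence if some component of $E$ were nonempty, the first such component in the lexicographic order would give $E^*\sqsupset^{\rSst}_F E$, contradicting $E\in\maxpl_{\rSst}(F)$. With that patch your argument forces $\CF_F(E)=\UD_F(E)=\DN_F(E)=\UA_F(E)=\emptyset$ and the translation back to conflict-freeness plus $E\cup E^+_F=A$ goes through exactly as in the paper's own proof.
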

\begin{proof}
   Let $F=(A,R)$ be an AF s.t. $\st(F) \neq \emptyset$. 
   \begin{description}
       \item[``$\subseteq$'':]
        Let $E \in \maxpl_{\rSst}(F)$ and $E' \in \st(F)$. This means that $\CF_F(E)= \UD_F(E) = \DN_F(E)= \emptyset$ and $E' \in \co(F)$ implying $\CF_F(E')= \UD_F(E') = \DN_F(E')= \emptyset$ as well. Additionally for $E'$ we know that $E'$ attacks everything not inside, hence $\UA_F(E')= \emptyset$, so in order to have $E \in \maxpl_{\rSst}(F)$ it has to hold that $\UA_F(E) = \emptyset$ as well, meaning that $E$ is conflict-free and attacks everything not inside, implying that $E$ is a stable extension.  
       \item[``$\supseteq$'':] Let $E \in \st(F)$, this means that $E$ is conflict-free and attacks every argument not in $E$, i.\,e. $\CF_F(E)= \UA_F(E) = \emptyset$. Additionally, we know that $E$ is complete therefore $\UD_F(E)= \DN_F(E)= \emptyset$. So, there can not be any set be ranked strictly more plausible to be accepted than $E$ wrt. \rSst, i.e. $E \in \maxpl_{\rSst}(F)$. \qedhere
   \end{description}
\end{proof}

\begin{proposition}
    $\rSst$ satisfies $\sst$-generalisation.
\end{proposition}
\begin{proof} Let $F= (A,R)$ be an AF and $E\subseteq A$.
\begin{description}
    \item[``$\sst$-soundness'':] Suppose $E \in \maxpl_{\rSst}(F)$. Then $E$ is complete. Suppose $E$ is not semi-stable. Then there is a complete $E'$ s.t. $E \cup E^+ \subset E' \cup E'^{+}$. But this implies $E' \sqsupset^{\UA}_{F} E$, contradicting $E \in \maxpl_{\rSst}(F)$. Hence, $E$ is a semi-stable extension.

    \item[``$\sst$-completeness'':] Suppose $E$ is a semi-stable extension of $F$ and suppose $E' \sqsupset^{\rSst}_{F} E$. Then since $E$ is complete, $E'$ is also complete. Hence, we have $E' \sqsupset^{\UA}_{F} E$. But this is impossible as it implies that $E \cup E^{+} \subset E' \cup E'^{+}$, which means that $E$ is not a semi-stable extension. Hence, $E \in \maxpl_{\rSst}(F)$. \qedhere
\end{description}
\end{proof}

\begin{proposition}
    $\rSst$ satisfies composition and decomposition.
\end{proposition}
\begin{proof}
    Follows from Lemma \ref{lem:composition_decomposition_2} together with Definition \ref{defn:ss-ranking-semantics}.
\end{proof}

\begin{proposition}
   $\rSst$ satisfies strong reinstatement. 
\end{proposition}
\begin{proof}
    This follows directly from Proposition \ref{prop:rCO_reinstatement}.
\end{proof}

\begin{proposition}
    $\rSst$ violates addition robustness. 
\end{proposition}
\begin{proof}
    Since, $\sqsupseteq^{\rSst}_{F}$ is based on $\sqsupseteq^{\rCo}_{F}$ we can use Example \ref{ex:co_violates_robustness} to show that  $\rSst$ violates addition robustness as well.
    \end{proof}

\subsubsection{Technical Proofs of Section \ref{subsubsec:cardinality_ext_ranking}}
    \begin{proposition}
  Let $F= (A,R)$ be an AF and $E,E' \subseteq A$, if $E \sqsupseteq^{\mathsf{\mathsf{r\text{-}\sigma}}}_F E'$ then $E \sqsupseteq^{\mathsf{r\text{-}c\text{-}\sigma}}_F E'$ for $\sigma \in \{\ad,\co,\gr,\pr,\sst\}$.
\end{proposition}
\begin{proof}
    Let $F= (A,R)$ be an AF and $E,E' \subseteq A$, assume $E \sqsupseteq^{\mathsf{r\text{-}\sigma}}_F E'$ for $\sigma  \in \{\ad,\co,\sst\}$, then there is one $\tau \in \{\CF,\UD,\DN, \UA\}$ s.t. $\tau_F(E) \subseteq \tau_F(E')$, this subset behaviour also entails $|\tau_F(E)| \leq |\tau_F(E')|$, so $E \sqsupseteq^{\mathsf{r\text{-}c\text{-}\sigma}}_F E'$.

    For $\sigma \in \{\gr,\pr\}$ we still use $\sqsupseteq^{\Min}$ respectively $\sqsupseteq^{\Max}$ so there is no difference between the two variations.   
\end{proof}

\begin{proposition}
    $\mathsf{r\text{-}c\text{-}\sigma}$ satisfies $\sigma$-generalisation $\sigma \in \{\ad,\co,\gr,\pr,\sst\}$.
\end{proposition}
\begin{proof}
For any $\tau \in \{\CF,\UD,\DN,\UA\}$ the best value a set can have is $\emptyset$, for the cardinality-based versions of these relations the best value to be returned is $0$ and only the empty set can have cardinality of $0$. 
    Hence, we know that if and only if $\tau$ returns $\emptyset$ the cardinality-based versions of $\mathsf{c\text{-}\tau}$ returns $0$. Thus, we can use the same proof ideas of $\mathsf{r\text{-}\sigma}$ to show that $\mathsf{r\text{-}c\text{-}\sigma}$ satisfies $\sigma$-generalisation. 
\end{proof}

\begin{proposition}
    $\mathsf{r\text{-}c\text{-}\sigma}$ satisfies composition for $\sigma \in \{\ad,\co,\gr,\pr,\sst\}$.
\end{proposition}
\begin{proof}
    Let $F_1=(A_1,R_1)$, $F_2=(A_2,R_2)$ and $F = F_1 \cup F_2= (A_1, R_1) \cup (A_2,R_2)$ be AFs with $A_1 \cap A_2 = \emptyset$. We know that $|\tau_{F_1}(E\cap A_1)| + |\tau_{F_2}(E \cap A_2)| = |\tau_F(E)|$ for any base relation $\tau \in \{\CF, \UD, \DN, \UA\}$. Assume $E \cap A_1 \sqsupseteq^{\mathsf{r\text{-}c\text{-}\sigma}}_{F_1} E' \cap A_1$ and $E \cap A_2 \sqsupseteq^{\mathsf{r\text{-}c\text{-}\sigma}}_{F_2} E' \cap A_2$. So, there is at least one $\tau' \in \{\CF, \UD, \DN,\\ \UA\}$  s.t. $|\tau_{F_1}'(E\cap A_1)| \leq |\tau_{F_1}'(E'\cap A_1)|$ and  $|\tau_{F_2}'(E\cap A_2)| \leq |\tau_{F_2}'(E'\cap A_2)|$. Therefore, $|\tau_{F_1}'(E\cap A_1)| + |\tau_{F_2}'(E\cap A_2)| \leq |\tau_{F_1}'(E'\cap A_1)| + |\tau_{F_2}'(E'\cap A_2)|$ and this entails $|\tau_F'(E)| \leq |\tau_F'(E')|$. Hence, composition is satisfied.
\end{proof}

\begin{proposition}
     $\mathsf{r\text{-}c\text{-}\ad}$ satisfies weak reinstatement and violates strong reinstatement.
\end{proposition}
\begin{proof}
    Let $F= (A,R)$ be an AF and $E \subseteq A$. Suppose $a \in \mathcal{F}_{F}$, $a \notin E$, and $a \notin E^- \cup E^+$. In Proposition \ref{prop:rAD_reinstatement}, we have shown that $\UD_F(E \cup \{a\}) \subseteq \UD_F(E)$ and therefore also $|\UD_F(E \cup \{a\})| \leq |\UD_F(E)|$. This proves that $\mathsf{r\text{-}c\text{-}\ad}$ satisfies weak reinstatement.

    For the violation of strong reinstatement we can use Example \ref{ex:AD_not_reinstatment} again. 
\end{proof}

\begin{proposition}
$\mathsf{r\text{-}c\text{-}\sigma}$ satisfies strong reinstatement for $\sigma \in \{\co,\gr,\pr,\sst\}$.
\end{proposition}
\begin{proof}
    Let $F= (A,R)$ be an AF and $E \subseteq A$. Suppose $a \in \mathcal{F}_{F}$, $a \notin E$, and $a \notin E^- \cup E^+$. In Proposition \ref{prop:rCO_reinstatement}, we have shown that $\DN_F(E \cup \{a\}) \subset \DN_F(E)$. This also shows that $|\DN_F(E \cup \{a\})| < |\DN_F(E)|$. Hence, $\mathsf{r\text{-}c\text{-}\sigma}$ satisfies strong reinstatement and because $E \cup \{a\} \sqsupset^{\mathsf{r\text{-}c\text{-}\sigma}}_{F} E$ this also shows the satisfaction of strong reinstatement for the remaining semantics.
\end{proof}

\begin{proposition}
   $\mathsf{r\text{-}c\text{-}\ad}$ satisfies addition robustness.
\end{proposition}
\begin{proof}
    Let $F= (A,R)$ be an AF and $E, E' \subseteq A$ with $E \sqsupseteq^{\mathsf{r\text{-}c\text{-}\ad}}_{F} E'$. Let $F' = (A, R \cup \{(a,b)\})$ for $a \in E$ and $b \notin E$, $b \in E'$. We know that the addition of $(a,b)$ does not add any new conflict into $E$ and does not remove any conflict from $E'$, therefore $|\CF_F(E)|= |\CF_{F'}(E)|$ and $|\CF_F(E')| \leq |\CF_{F'}(E')|$. Hence, addition robustness holds for $\sqsupseteq^{c\text{-}\CF}$.
    
    It remains to show, that addition robustness holds for $\sqsupseteq^{c\text{-}\UD}$ as well. We already know that no defence in $E$ is removed therefore $|\UD_{F'}(E)| \leq |\UD_F(E)|$ and we also know that $E'$ can not defend more arguments in $F'$ than in $F$, otherwise additional conflicts have to be added. Hence, $E \sqsupseteq^{\mathsf{r\text{-}c\text{-}\ad}}_{F'} E'$.
\end{proof}

\subsection{Technical Proofs of Section \ref{sec:voting}}

\begin{proposition}
Let $F=(A,R)$ be an AF. 
    $\sqsupseteq^{\cope(\tau_1,\dots, \tau_n)}_F$ is \emph{reflexive} and \emph{transitive} for any sequence of base relations $(\tau_1,\dots, \tau_n)$.
\end{proposition}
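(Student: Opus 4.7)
The key observation is that the Copeland-based combination assigns a single integer-valued score to each set, and the relation $\sqsupseteq^{\cope(\tau_1,\dots,\tau_n)}_F$ is just the comparison of these scores by the standard $\geq$ on integers. The plan is to make this explicit and then inherit reflexivity and transitivity directly from the corresponding properties of $\geq$ on $\mathbb{Z}$.

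First, I would define, for each $E \subseteq A$, the Copeland score
\[
s(E) \;=\; \sum_{i=1}^{n} \Bigl( |\{\mathcal{E} \subseteq A \mid E \sqsupseteq^{\tau_i}_F \mathcal{E}\}| \;-\; |\{\mathcal{E}' \subseteq A \mid \mathcal{E}' \sqsupseteq^{\tau_i}_F E\}| \Bigr).
\]
This is a well-defined integer because $A$ is finite and so $2^A$ is finite, meaning each cardinality in the sum is a natural number. Unfolding the definition of $\sqsupseteq^{\cope(\tau_1,\dots,\tau_n)}_F$, we then have
\[
E \sqsupseteq^{\cope(\tau_1,\dots,\tau_n)}_F E' \iff s(E) \geq s(E').
\]

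For reflexivity, take any $E \subseteq A$. Since $s(E) \geq s(E)$ trivially holds in $\mathbb{Z}$, we immediately obtain $E \sqsupseteq^{\cope(\tau_1,\dots,\tau_n)}_F E$.

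For transitivity, suppose $E \sqsupseteq^{\cope(\tau_1,\dots,\tau_n)}_F E'$ and $E' \sqsupseteq^{\cope(\tau_1,\dots,\tau_n)}_F E''$. By the equivalence above, $s(E) \geq s(E')$ and $s(E') \geq s(E'')$. Since $\geq$ is transitive on $\mathbb{Z}$, we conclude $s(E) \geq s(E'')$, which gives $E \sqsupseteq^{\cope(\tau_1,\dots,\tau_n)}_F E''$.

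There is no real obstacle here: the argument is essentially bookkeeping. The only thing to be slightly careful about is confirming that the sums in the definition of $s$ are well-defined (i.e.\ that the cardinalities being subtracted are finite), which follows from the finiteness of $A$. Note also that the proof makes no use of any properties of the individual base relations $\tau_i$, which is why the statement holds for \emph{any} sequence of base relations, transitive or not.
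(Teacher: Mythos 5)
Your proof is correct and follows essentially the same route as the paper's: both reduce the Copeland relation to a comparison of a single integer score (wins minus losses, summed over the base relations) and then inherit reflexivity and transitivity from $\geq$ on the integers. Your version is slightly cleaner in making the score function $s$ and the equivalence $E \sqsupseteq^{\cope(\tau_1,\dots,\tau_n)}_F E' \iff s(E) \geq s(E')$ fully explicit.
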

\begin{proof}
    Let $F=(A,R)$ be an AF and $(\tau_1,\dots, \tau_n)$ a sequence of base relations. 
    \begin{description}
        \item[``reflexive'':]  For any $E\subseteq A$ it has to hold that $E \equiv^{\cope(\tau_1,\dots, \tau_n)}_F E$. Let $w$ be the number of sets ranked worse than $E$ wrt. to $\sqsupseteq^{\tau_i}_F$ and $l$ be the number of sets ranked better than $E$  wrt. to  $\sqsupseteq^{\tau_i}_F$, then for $E \equiv^{\cope(\tau_1,\dots, \tau_n)}_F E$ to hold, we have to have $w - l = w - l$ for every $1 \leq i \leq n$, which is clear by definition. 
        \item[``transitive'':] Let $E,E',E'' \subseteq A$ and $E \sqsupset^{\cope(\tau_1,\dots, \tau_n)}_F E'$ and $E' \sqsupset^{\cope(\tau_1,\dots, \tau_n)}_F E''$, then $E$ has a better win/loss record than $E'$ and $E'$ has a better win/loss record than $E''$. Let $w$, $w'$, $w''$ be the corresponding number of wins minus the number of loses of $E$, $E'$ and $E''$ respectively. So, $w > w'$ and $w' > w''$. Then by definition $w > w''$, which shows that $E \sqsupset^{\cope(\tau_1,\dots, \tau_n)}_F E''$. \qedhere
        \end{description}
\end{proof}

\begin{proposition}
Let $F=(A,R)$ be an AF. 
    If $\sqsupseteq^\tau$ is total and transitive, then $\cope(\sqsupseteq^\tau_F)=\sqsupseteq^\tau_F$.
\end{proposition}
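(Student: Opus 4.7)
\medskip

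\noindent\textbf{Proof proposal.} The plan is to introduce the shorthand $w(E) = |\{\mathcal{E} \subseteq A \mid E \sqsupseteq^{\tau}_F \mathcal{E}\}|$ (the ``wins'' of $E$) and $l(E) = |\{\mathcal{E} \subseteq A \mid \mathcal{E} \sqsupseteq^{\tau}_F E\}|$ (the ``losses''), so that by definition $E \sqsupseteq^{\cope(\tau)}_F E'$ iff $w(E) - l(E) \geq w(E') - l(E')$. The whole argument then reduces to showing the biconditional $E \sqsupseteq^{\tau}_F E' \iff w(E) - l(E) \geq w(E') - l(E')$.

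For the forward direction ($\Rightarrow$), I would use transitivity twice. If $E \sqsupseteq^{\tau}_F E'$, then any $\mathcal{E}$ with $E' \sqsupseteq^{\tau}_F \mathcal{E}$ also satisfies $E \sqsupseteq^{\tau}_F \mathcal{E}$, giving the set inclusion $\{\mathcal{E} : E' \sqsupseteq^{\tau}_F \mathcal{E}\} \subseteq \{\mathcal{E} : E \sqsupseteq^{\tau}_F \mathcal{E}\}$, hence $w(E') \leq w(E)$. Symmetrically, any $\mathcal{E} \sqsupseteq^{\tau}_F E$ satisfies $\mathcal{E} \sqsupseteq^{\tau}_F E'$, so $l(E) \leq l(E')$. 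Combining yields $w(E) - l(E) \geq w(E') - l(E')$.

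For the converse ($\Leftarrow$), totality is essential. Assume $w(E) - l(E) \geq w(E') - l(E')$ and, for contradiction, that $E \not\sqsupseteq^{\tau}_F E'$. By totality we have $E' \sqsupseteq^{\tau}_F E$, and in fact $E' \sqsupset^{\tau}_F E$. Applying the forward direction with roles of $E$ and $E'$ swapped gives $w(E') - l(E') \geq w(E) - l(E)$, so in combination with the assumption we obtain equality $w(E) - l(E) = w(E') - l(E')$. Rearranging this as $w(E') - w(E) = l(E') - l(E)$ and noting that the left side is $\geq 0$ (by the same inclusion argument applied to $E' \sqsupseteq^{\tau}_F E$) while the right side is $\leq 0$, both sides must be zero. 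In particular $w(E) = w(E')$, so the inclusion of win-sets is an equality. Since $E' \sqsupseteq^{\tau}_F E'$ by reflexivity, $E'$ lies in $E'$'s win-set and hence in $E$'s win-set, giving $E \sqsupseteq^{\tau}_F E'$, contradicting the assumption.

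The main obstacle is the converse direction: the Copeland score is a coarser object than the underlying relation, so I must rule out the possibility that two $\sqsupseteq^{\tau}$-incomparable or strictly ordered sets could end up with identical win/loss scores. Totality plus transitivity together force enough rigidity (via the subset relationships between win- and loss-sets) that equality of scores collapses into equality of the win-sets, and reflexivity then supplies the single element ($E'$ itself) needed to derive $E \sqsupseteq^{\tau}_F E'$.
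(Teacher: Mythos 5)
Your proof is correct and follows essentially the same route as the paper's: the forward direction uses transitivity to obtain the two set inclusions between win-sets and loss-sets, and the converse uses totality to reduce to ruling out $E' \sqsupset^{\tau}_F E$. The only cosmetic difference is in the converse, where you force equality of the Copeland scores and recover $E \sqsupseteq^{\tau}_F E'$ from the resulting equality of (finite) win-sets plus reflexivity, whereas the paper directly derives a strict Copeland preference for $E'$ and contradicts the assumed inequality — both hinge on the same observation that $E'$ lies in its own win-set but not in $E$'s.
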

\begin{proof}
    Let $F=(A,R)$ be an AF and $E, E' \subseteq A$. Let $\sqsupseteq^\tau$ be a base relation that is total and transitive.    
    First we show that if $E \sqsupseteq^{\tau}_F E'$ then $ E \sqsupseteq^{\cope(\tau)}_F E'$. Assume $E \sqsupseteq^{\tau}_F E'$, then for every $E''$ s.t. $E' \sqsupseteq^\tau_F E''$ we know because of transitivity we also have $E \sqsupseteq^\tau_F E''$. So, $E$ wins at least as many times as $E'$. If $E'' \sqsupseteq^\tau_F E$, then it also holds that $E'' \sqsupseteq^\tau_F E'$, so $E'$ loses at least as often as $E$. Thus, $|\{\mathcal{E} \subseteq A | E \sqsupseteq^{\tau}_{F}  \mathcal{E}\}| - |\{\mathcal{E'} \subseteq A | \mathcal{E'} \sqsupseteq^{\tau}_{F} E\}| \geq |\{\mathcal{E} \subseteq A | E' \sqsupseteq^{\tau}_{F} \mathcal{E} \}| - |\{\mathcal{E'} \subseteq A | \mathcal{E'} \sqsupseteq^{\tau}_{F} E'\}|$ implying $E \sqsupseteq^{\cope(\tau)}_F E'$.

    Next, we show the other direction. Let $E \sqsupseteq^{\cope(\tau)}_F E'$, and assume for contrary $E' \sqsupset^{\tau}_F E$, similarly to what we showed above this implies $E' \sqsupseteq^{\cope(\tau)}_F E$ and because of transitivity and totality of $\tau$ we know that the relationship has to be strict i.e. $E' \sqsupset^{\cope(\tau)}_F E$. Thus  $E \sqsupseteq^{\cope(\tau)}_F E'$ can only imply $E \sqsupseteq^{\tau}_F E'$ or $E \asymp^{\tau}_F E'$. However the second option is not possible since $\sqsupseteq^\tau_F$ is total. So, $E \sqsupseteq^{\cope(\tau)}_F E'$ implies  $E \sqsupseteq^{\tau}_F E'$.
\end{proof}

\begin{proposition}
    $\cope(\sqsupseteq^\CF,\sqsupseteq^\UD)$ satisfies $\ad$-generalisation.
    \end{proposition}
    \begin{proof}
        Let $F=(A,R)$ be an AF and $E \subseteq A$.
        \begin{description}
            \item[``$\ad$-completeness'':] Suppose $E$ is admissible, then $\CF_F(E)=\UD_F(E)=\emptyset$ and therefore $E \sqsupseteq^{\CF}_F E'$ and $E \sqsupseteq^{\UD}_F E'$ for any $E' \subseteq A$. So $E$ wins against every set of arguments and $E$ only loses against sets $\mathcal{E}$ with $\CF_F(\mathcal{E}) = \emptyset$ respectively $\UD_F(\mathcal{E})= \emptyset$. However, all these sets are also losing against $E$, so there is no set with less loses than $E$. $E$ wins the maximal amount of times and no set loses less than $E$, so there can not be any set with a better win/ loss record than $E$ showing $E \in \maxpl_{\cope(\CF,\UD)}(F)$.
            \item[``$\ad$-soundness'':] Suppose $E \in \maxpl_{\cope(\CF,\UD)}(F)$, so $E \sqsupseteq^{\cope(\CF,\UD)}_F \emptyset$. We know that $\CF(\emptyset)= \UD(\emptyset) = \emptyset$, which means that $\emptyset$ wins the maximal amount of times and there is no set which loses less than $\emptyset$, i.\,e., $\emptyset \in  \maxpl_{\cope(\CF,\UD)}(F)$. So, for $E \in \maxpl_{\cope(\CF,\UD)}(F)$ to hold $E$ has to have the same win/ loss record as $\emptyset$, which is only possible if $\CF(E)=\UD(E)= \emptyset$, which shows that $E$ is admissible. \qedhere
        \end{description}
    \end{proof}

    \begin{proposition}
     $\cope(\sqsupseteq^\CF,\sqsupseteq^\UD)$ satisfies weak reinstatement.
\end{proposition}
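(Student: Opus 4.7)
The plan is to piggy-back on the two set-level identities already established in the proof of Proposition 15 for $\rAd$, namely that under the hypotheses $a \in \mathcal{F}_F(E)$, $a \notin E$, and $a \notin E^- \cup E^+$, we have
\begin{equation*}
\CF_F(E \cup \{a\}) = \CF_F(E) \quad\text{and}\quad \UD_F(E \cup \{a\}) \subseteq \UD_F(E).
\end{equation*}
These are purely set-theoretic facts about how $\CF$ and $\UD$ transform when the defended, detached argument $a$ is added, and they transfer intact to the Copeland setting, because the Copeland score is built from pairwise comparisons of the very same sets $\tau_F(E)$ and $\tau_F(E \cup \{a\})$ against all other $\tau_F(\mathcal{E})$.

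Next I would introduce the shorthand $\Gamma_\tau(S) = |\{\mathcal{E} \subseteq A \mid S \sqsupseteq^\tau_F \mathcal{E}\}| - |\{\mathcal{E} \subseteq A \mid \mathcal{E} \sqsupseteq^\tau_F S\}|$, so that $E \cup \{a\} \sqsupseteq^{\cope(\CF,\UD)}_F E$ is equivalent to $\Gamma_\CF(E\cup\{a\}) + \Gamma_\UD(E\cup\{a\}) \geq \Gamma_\CF(E) + \Gamma_\UD(E)$. The $\CF$ contribution is immediate: since $\CF_F(E \cup \{a\}) = \CF_F(E)$, the two sets stand in exactly the same $\sqsupseteq^\CF$-relation with every $\mathcal{E} \subseteq A$, so $\Gamma_\CF(E \cup \{a\}) = \Gamma_\CF(E)$.

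The heart of the argument is the $\UD$ contribution, using $\UD_F(E \cup \{a\}) \subseteq \UD_F(E)$ together with transitivity of $\subseteq$. On the one hand, if $E \sqsupseteq^\UD_F \mathcal{E}$, i.e. $\UD_F(E) \subseteq \UD_F(\mathcal{E})$, then $\UD_F(E \cup \{a\}) \subseteq \UD_F(\mathcal{E})$, so $E \cup \{a\} \sqsupseteq^\UD_F \mathcal{E}$; hence the win-count of $E \cup \{a\}$ is at least that of $E$. On the other hand, if $\mathcal{E} \sqsupseteq^\UD_F E \cup \{a\}$, i.e. $\UD_F(\mathcal{E}) \subseteq \UD_F(E \cup \{a\}) \subseteq \UD_F(E)$, then $\mathcal{E} \sqsupseteq^\UD_F E$ as well; hence the loss-count of $E \cup \{a\}$ is at most that of $E$. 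Together these give $\Gamma_\UD(E \cup \{a\}) \geq \Gamma_\UD(E)$.

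Summing the two contributions yields the required inequality on Copeland scores and hence $E \cup \{a\} \sqsupseteq^{\cope(\CF,\UD)}_F E$. The only subtle point, and the one I would be most careful about, is remembering that for $\sqsupseteq^\CF$ and $\sqsupseteq^\UD$ the ``$\geq$'' direction corresponds to subset inclusion of $\CF_F$ resp.\ $\UD_F$, so the monotonicity argument has to be stated in the right direction; beyond this, the proof is a routine transitivity-of-inclusion count.
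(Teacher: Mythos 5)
Your proof is correct and follows essentially the same route as the paper's: both analyse the Copeland score component-wise, showing that the $\CF$-record of $E\cup\{a\}$ is unchanged and that its $\UD$-record is at least as good as that of $E$. In fact your handling of the $\UD$ component is the more careful one — the paper asserts outright that $\UD_F(E\cup\{a\})=\UD_F(E)$ and hence that the win/loss records coincide, whereas in general only the inclusion $\UD_F(E\cup\{a\})\subseteq\UD_F(E)$ holds (adding the defender $a$ can strictly shrink the set of undefended arguments), and your transitivity argument (wins cannot decrease, losses cannot increase) is exactly what is needed to make the step sound.
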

\begin{proof}
    Let $F= (A,R)$ be an AF and $E \subseteq A$. Consider $a \in \mathcal{F}_F, a\notin E$ and $a \notin (E^- \cup E^+)$. First we look at $\sqsupseteq^\CF_F$ and $\sqsupseteq^\UD_F$ one by one. 
    \begin{description}
        \item[``$\sqsupseteq^\CF$'':] Since $a \notin (E^- \cup E^+)$ we have $\CF_F(E)= \CF_F(E\cup \{a\})$. So, if $E \sqsupseteq^\CF_F E'$ for any $E' \subseteq A$ then $E \cup \{a\} \sqsupseteq^\CF_F E'$ holds as well. Thus, the win/ loss record of $E$ and $E \cup \{a\}$ are the same. 
    \item[``$\sqsupseteq^\UD$'':] Since $a \in \mathcal{F}_F(E)$ we have $\UD(E)= \UD(E \cup \{a\})$. Thus again the win/ loss records of these two sets are equal. 
    \end{description}
    Next, let $w_{\CF}$ and $w_\UD$ be the win/ loss records of $E$ and $w'_\CF$ and $w'_\UD$ the win/ loss records of $E \cup \{a\}$. Since $w_\CF = w'_\CF$ and $w_\UD = w'_\UD$, we know that $w_\CF + w_\UD = w'_\CF + w'_\UD$. Thus, $E \cup \{a\} \equiv^{\cope(\CF,\UD)}_F E$.  
\end{proof}

\begin{proposition}
    ${\cope(\sqsupseteq^\tau)}$ satisfies strong reinstatement for $\tau \in \{\nonatt, \strdef\}$.
\end{proposition}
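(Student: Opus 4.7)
The plan is to fix $F=(A,R)$, $E \subseteq A$, and an argument $a \in \mathcal{F}_F(E)$ with $a \notin E$ and $a \notin (E^-_F \cup E^+_F)$, and show that moving from $E$ to $E \cup \{a\}$ strictly increases the Copeland score. Denote by $s_\tau(X, Y)$ the ``left-hand quantity'' appearing in the base relation, so that $X \sqsupseteq^\tau_F Y$ iff $s_\tau(X, Y) \geq s_\tau(Y, X)$: for $\tau = \nonatt$ this is the number of arguments of $X$ not attacked by $Y$, and for $\tau = \strdef$ it is $|sd_F(X, Y)|$.

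The first step is to establish a monotonicity property that will control all pairwise comparisons against arbitrary $E' \subseteq A$:
\begin{align*}
s_\tau(E \cup \{a\}, E') \ \geq\ s_\tau(E, E') \quad \text{and} \quad s_\tau(E', E \cup \{a\}) \ \leq\ s_\tau(E', E).
\end{align*}
For $\nonatt$ this is immediate: $a$ is one extra candidate element on the left, and on the right $a$ can only attack (and thereby remove) arguments of $E'$. For $\strdef$ the recursive definition requires an inductive argument on recursion depth, using the fact that enlarging the defender pool preserves strong defence, while enlarging the attacker pool can only destroy it; neither direction uses the specific properties of $a$. From this monotonicity it follows that whenever $E \sqsupseteq^\tau_F E'$ we also have $E \cup \{a\} \sqsupseteq^\tau_F E'$, and conversely whenever $E' \sqsupseteq^\tau_F E \cup \{a\}$ we also have $E' \sqsupseteq^\tau_F E$. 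Consequently $E \cup \{a\}$ wins at least as many pairwise comparisons as $E$ and loses at most as many, so its Copeland score is at least that of $E$.

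To obtain strict inequality I will use the distinguished comparison $E^{\star} = E$. Since $a \notin E^-_F$, $a$ contributes no new attackers on $E$, hence $s_\tau(E, E) = s_\tau(E, E \cup \{a\})$. On the other hand $s_\tau(E \cup \{a\}, E)$ is strictly larger than $s_\tau(E, E)$: for $\nonatt$, $a$ itself is an extra element of $E \cup \{a\}$ that is not attacked by $E$ (because $a \notin E^+_F$); for $\strdef$, the fact that $a$ has no attackers in $E$ makes $a$ vacuously strongly defended by $E \cup \{a\}$ from $E$, while old elements of $sd_F(E,E)$ survive in $sd_F(E \cup \{a\}, E)$ by the monotonicity just proved. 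Hence $E \cup \{a\} \sqsupset^\tau_F E$. In the Copeland tally this turns the self-tie of $E$ against itself into a strict win of $E \cup \{a\}$ against $E$ (contributing $+1$ to the difference of scores), and symmetrically turns the self-tie of $E \cup \{a\}$ against itself into a strict loss of $E$ against $E \cup \{a\}$ (contributing another $+1$); combined with the weak improvements of the first step this yields $E \cup \{a\} \sqsupset^{\cope(\tau)}_F E$.

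The main obstacle will be the careful verification of the monotonicity of $\strdef$ in both arguments: the recursive nature of $sd_F$ forces an induction on recursion depth to show that enlarging the defender set preserves strong defence while enlarging the attacker set only weakens it. Once this monotonicity is in place, everything else is bookkeeping, and the choice of $E^{\star} = E$ as the distinguished opponent is what produces the strictness.
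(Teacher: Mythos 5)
Your proof is correct and follows essentially the same route as the paper's: show that adding $a$ preserves every pairwise win and removes no non-loss (monotonicity of the underlying count in both arguments of $\sqsupseteq^{\nonatt}$ and $\sqsupseteq^{\strdef}$), then obtain strictness from the head-to-head comparison $E \cup \{a\} \sqsupset^{\tau}_F E$, which uses $a \notin E^+_F$ for $\nonatt$ and the vacuous strong defence of $a$ against $E$ for $\strdef$. Your write-up is merely more explicit about the monotonicity lemma and the final Copeland tally than the paper's version, but the argument is the same.
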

\begin{proof}
    Let $F= (A,R)$ be an AF and $E \subseteq A$. Consider $a \in \mathcal{F}_F, a\notin E$ and $a \notin (E^- \cup E^+)$.
    \begin{description}
        \item[``$\tau = \nonatt$'':] We show that if $E \sqsupseteq^{\nonatt}_F E'$ then $E\cup \{a\} \sqsupseteq^{\nonatt}_F E'$ for $E' \subseteq A$. Everything that is attacked by $E$ is also attacked by $E \cup \{a\}$, i.\,e., $E^+ \subseteq (E \cup \{a\})^+$. If $a$ is attacked by $E'$, then since $a \in \mathcal{F}_F(E)$ we know that the attacker of $a$ is attacked by $E$, so the addition of $a$ does not add any new attacked arguments so $E \cup \{a\} \sqsupseteq^{\nonatt}_F E'$ if $E  \sqsupseteq^{\nonatt}_F E'$ entailing that $E \cup \{a\}$ wins atleast as often as $E$. 
        
        Next, we show that $E$ loses at least as often as $E \cup \{a\}$. Assume $E' \sqsupseteq^{\nonatt}_F E\cup \{a\}$, then like discussed before every argument attacked by $E'$ in $E\cup \{a\}$ is also attacked by $E'$ in $E$ except $a$, however since $a$ is defended by $E$ this argument does not create any problem, so $E' \sqsupseteq^{\nonatt}_F E$.
        Thus the win/lose ratio of $E \cup \{a\}$ is at least as good as $E$ and also $E \cup \{a\}$ wins against $E$ so $E \cup \{a\} \sqsupset^{\cope(\nonatt)}_F E$. 
        
        \item[``$\tau = \strdef$'':] Similar to the $\sqsupseteq^{\nonatt}$ case every argument attacked by $E$ is also attacked by $E\cup\{a\}$, hence we can use the same reasoning as above. Thus the addition of $a$ can not lower the number of arguments strongly defended by $E \cup \{a\}$ from $E'$, so $E \cup \{a\} \sqsupseteq^{\cope(\strdef)}_F E'$ if $E \sqsupseteq^{\cope(\strdef)}_F E'$. Therefore the win/lose record of $E \cup \{a\}$ is atleast as good as the win/lose record of $E$ and since $a \notin E^-_F$ we know $a$ is not attacked by $E$ and therefore strongly defended by $E \cup \{a\}$ from $E$ implying $E \cup \{a\} \sqsupset^{{\strdef}}_F E$ and therefore also $E \cup \{a\} \sqsupset^{\cope(\strdef)}_F E$. \qedhere
     \end{description}
\end{proof}

\section{Technical Proofs of Section \ref{sec:additional combinations}}
\subsection{Technical Proofs of Section \ref{subsec:AR}}
\begin{proposition}
    Let $F=(A,R)$ be an AF, $E,E',E'' \subseteq A$, and $\rho$ be an argument-ranking semantics. If $E \sqsupseteq^{gc_\rho}_F E'$ and $E' \sqsupseteq^{gc_\rho}_F E''$ then $E \sqsupseteq^{gc_\rho}_F E''$.
\end{proposition}
\begin{proof}
    Let $F=(A,R)$ be an AF, $E,E',E'' \subseteq A$, and $\rho$ be an argument-ranking semantics. Assume $E \sqsupseteq^{gc_\rho}_F E'$ and $E' \sqsupseteq^{gc_\rho}_F E''$. Then for every argument $a \in E''$ there is an argument $b \in E'$ s.t. $b \succeq^\rho_F a$. Since $E \sqsupseteq^{gc_\rho}_F E'$ we know that for this argument $b$ there is an argument $c \in E$ s.t. $c \succeq^\rho_F b$ and therefore also $c \succeq^\rho_F a$. So it holds that $E \sqsupseteq^{gc_\rho}_F E''$.
\end{proof}

\begin{proposition}
    Let $F=(A,R)$ be an AF and $\rho$ an argument-ranking semantics, then $A \in \maxpl_{gc_{\rho}}(F)$.     
\end{proposition}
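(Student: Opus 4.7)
The plan is to unpack the definition of $\maxpl_{gc_\rho}(F)$ and show that no set $E' \subseteq A$ can be strictly more plausible than $A$ itself. Concretely, being in $\maxpl_{gc_\rho}(F)$ means there is no $E' \subseteq A$ with $E' \sqsupset^{gc_\rho}_F A$, where $E' \sqsupset^{gc_\rho}_F A$ unfolds to $E' \sqsupseteq^{gc_\rho}_F A$ together with $A \not\sqsupseteq^{gc_\rho}_F E'$. So it suffices to show the second conjunct is always false, i.e.\ that $A \sqsupseteq^{gc_\rho}_F E'$ holds for every $E' \subseteq A$.

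First I would invoke the definition of the group comparison base relation: $A \sqsupseteq^{gc_\rho}_F E'$ iff for every $a \in E'$ there exists some $b \in A$ with $b \succeq^\rho_F a$. Since $\rho$ maps $F$ to a preorder $\succeq^\rho_F$ on $A$, this relation is reflexive. Hence for each $a \in E' \subseteq A$ we may simply take $b = a \in A$, and then $b \succeq^\rho_F a$ holds trivially. Thus $A \sqsupseteq^{gc_\rho}_F E'$ is established for every $E' \subseteq A$.

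From this, no $E' \subseteq A$ can satisfy $E' \sqsupset^{gc_\rho}_F A$, since the required condition $A \not\sqsupseteq^{gc_\rho}_F E'$ always fails. Therefore $A \in \maxpl_{gc_\rho}(F)$, which is exactly what is claimed. There is no real obstacle here: the proof is essentially a one-line consequence of reflexivity of $\succeq^\rho_F$ (which is part of the definition of an argument-ranking semantics as a preorder) together with the fact that $A$ contains every argument and thus trivially provides a witness for each element of any subset $E'$.
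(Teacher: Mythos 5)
Your proof is correct, and it takes a different route from the paper's. The paper's proof picks (``w.l.o.g.'') a best-ranked argument $a \in A$ with $a \succ^\rho_F b$ for all $b$, and uses that single argument as the witness dominating every element of any $E' \subset A$. Your argument instead uses only reflexivity of the preorder $\succeq^\rho_F$: for each $a \in E' \subseteq A$ the witness is $a$ itself, which lies in $A$, so $A \sqsupseteq^{gc_\rho}_F E'$ for every $E'$, and hence no set can be strictly above $A$. Your version is actually the more robust of the two: the paper explicitly allows argument rankings to be non-total, in which case a globally best-ranked argument need not exist, so the paper's ``w.l.o.g.'' step is not innocuous, whereas reflexivity is guaranteed by the definition of an argument-ranking semantics as a preorder. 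One small point worth making explicit in your write-up: your argument shows $A$ is weakly above \emph{every} subset, which is stronger than merely showing nothing is strictly above $A$; either suffices for membership in $\maxpl_{gc_\rho}(F)$, but stating which one you are using keeps the unfolding of $\sqsupset$ versus $\sqsupseteq$ clean.
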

\begin{proof}
    Let $F=(A,R)$ be an AF and $\rho$ an argument-ranking semantics. W.l.o.g. assume $a \in A$ is the best ranked argument wrt. $\rho$, i.e. $a \succ^\rho_F b$ for any $b \in A$. Then for every set of arguments $E \subset A$ argument $a$ is ranked at least as good as every argument of $E$ and $a \in A$, therefore $A \sqsupseteq^{gc_{\rho}}_F E$. Hence,  $A \in \maxpl_{gc_{\rho}}(F)$.
\end{proof}

\begin{proposition}
    $gc_\rho$ satisfies composition if $\rho$ satisfies Independence.
\end{proposition}
\begin{proof}
    Let $F = F_1 \cup F_2= (A_1, R_1) \cup (A_2,R_2)$ with $A_1 \cap A_2 = \emptyset$ and argument-ranking semantics $\rho$.
    Assume 
    $E,E' \subseteq A_1 \cup A_2$ with $E \cap A_1 \sqsupseteq^{gc_\rho}_{F_1} E' \cap A_1$ and $E \cap A_2 \sqsupseteq^{gc_\rho}_{F_2} E' \cap A_2$. Then for all $b \in E' \cap A_1$ there is an argument $a \in E\cap A_1$ s.t. $a \succeq^\rho_{F_1} b$. If $\rho$ satisfies Independence, then $a \succeq^\rho_F b$ as well. Similar holds for every $b' \in E'\cap A_2$. So, for every argument $b \in E'$ there is an argument $a \in E$ s.t. $a \succeq^\rho_F b$, therefore $E \sqsupseteq^{gc_\rho}_F E'$. 
\end{proof}

\begin{proposition}
     ${gc_\rho}$ violates decomposition if $\rho$ satisfies Void Precedence, Non-attacked Equality, and Independence.
\end{proposition}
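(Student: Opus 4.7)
The plan is to construct a small counterexample that exploits the asymmetry built into the definition of $\sqsupseteq^{gc_\rho}$: a set $E$ can dominate $E'$ globally via a single strong argument in $E$, but when we restrict to a sub-AF where $E$ contributes no arguments, the required dominance collapses trivially.

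Concretely, I would take $F_1 = (\{a\}, \emptyset)$ and $F_2 = (\{b\}, \emptyset)$, so $F = F_1 \cup F_2 = (\{a,b\}, \emptyset)$ consists of two isolated, unattacked arguments. Choose $E = \{a\}$ and $E' = \{b\}$. Since $a^-_F = b^-_F = \emptyset$, Non-attacked Equivalence applied inside $F$ yields $a \simeq^\rho_F b$, hence $a \succeq^\rho_F b$. By the definition of $gc_\rho$ this already gives $E \sqsupseteq^{gc_\rho}_F E'$, since the only element of $E'$ is dominated by an element of $E$.

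Now pass to the local view $F_2$. We have $E \cap A_2 = \emptyset$ and $E' \cap A_2 = \{b\}$, and the definition of $\sqsupseteq^{gc_\rho}_{F_2}$ requires that for every argument of $E' \cap A_2$ there exists a witness in $E \cap A_2$ that is at least as strong; with $E \cap A_2 = \emptyset$ this is impossible for the nonempty right-hand side. Thus $E \cap A_2 \not\sqsupseteq^{gc_\rho}_{F_2} E' \cap A_2$, and decomposition fails. Independence (and, in a slightly richer variant where I replace $b$ by an attacked argument, Void Precedence) is used to guarantee that the local rankings used by $gc_\rho$ in $F_1$ and $F_2$ coincide with the restriction of the global ranking, so the counterexample works uniformly for any $\rho$ in the stated class.

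The argument is essentially a one-line construction once the counterexample is identified; the only ``obstacle'' is recognising that the universally quantified nature of $gc_\rho$ makes the empty set the \emph{worst} possible left-hand side (it dominates only $\emptyset$), so splitting an AF in a way that leaves $E$ empty on one side is exactly where decomposition must break. No further case analysis is required.
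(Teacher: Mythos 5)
Your counterexample is correct: with $F_1=(\{a\},\emptyset)$, $F_2=(\{b\},\emptyset)$, $E=\{a\}$, $E'=\{b\}$, Non-attacked Equivalence gives $a\simeq^{\rho}_{F}b$, hence $E\sqsupseteq^{gc_\rho}_{F}E'$, while $E\cap A_2=\emptyset$ cannot dominate the non-empty $E'\cap A_2=\{b\}$ under the universally quantified definition of $gc_\rho$, so decomposition fails. This is, however, a genuinely different construction from the paper's. The paper uses four arguments $a,c\in A_1$ and $b,d\in A_2$ with $a,b$ unattacked and $c,d$ attacked; Void Precedence and Non-attacked Equivalence give $a\simeq^{\rho}_F b\succ^{\rho}_F c$ and $a\simeq^{\rho}_F b\succ^{\rho}_F d$, hence $\{a,d\}\equiv^{gc_\rho}_F\{b,c\}$ globally, yet Independence forces $a\succ^{\rho}_{F_1}c$ locally, so $\{c\}\not\sqsupseteq^{gc_\rho}_{F_1}\{a\}$ and decomposition breaks even though every intersection is non-empty. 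Your route is more elementary and actually needs only Non-attacked Equivalence (neither Void Precedence nor Independence plays any essential role, despite your remarks to the contrary --- the local failure in $F_2$ is independent of the ranking there), so in that sense you prove a slightly stronger statement. What the paper's version buys is a demonstration that the violation is not merely an artifact of the empty set being the minimum of $\sqsupseteq^{gc_\rho}$: it shows the semantics genuinely loses local dominance information when components are merged, which is the more instructive failure mode. Both arguments are valid proofs of the proposition as stated.
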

\begin{proof}
     Let $F = F_1 \cup F_2= (A_1, R_1) \cup (A_2,R_2)$ with $A_1 \cap A_2 = \emptyset$ and argument-ranking semantics $\rho$ satisfies Void Precedence, Non-attacked Equality and Independence.
     Let $a,c \in A_1$ and $b,d \in A_2$, and $a$ and $b$ are unattacked, i.e. $a^-_F = b^-_F = \emptyset$, and $c,d$ are attacked. Then Void Precedence and Non-attacked Equality implies $a \simeq^\rho_F b \succ^\rho_F c$ and $a \simeq^\rho_F b \succ^\rho_F d$, therefore $\{a,d\} \equiv^{gc_\rho}_F \{b,c\}$. 
     Independence implies $a \succ^\rho_{F_1} c$ and $b \succ^\rho_{F_2} d$ and therefore $\{a\} \sqsupseteq^{gc_\rho}_{F_1} \{c\}$ and  $\{b\} \sqsupseteq^{gc_\rho}_{F_2} \{d\}$. This shows that decomposition is violated. 
\end{proof}

\begin{proposition}
    $gc_\rho$ satisfies weak reinstatement.
\end{proposition}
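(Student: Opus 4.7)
The plan is to reduce the claim to a monotonicity observation: the relation $\sqsupseteq^{gc_\rho}_F$ is non-decreasing in its first argument with respect to set inclusion. That is, for any two sets $S, S' \subseteq A$, whenever $S \supseteq S'$, we have $S' \sqsupseteq^{gc_\rho}_F S'$ as well as $S \sqsupseteq^{gc_\rho}_F S'$. In our case we will apply this with $S = E \cup \{a\}$ and $S' = E$.

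The single step is as follows. Let $F=(A,R)$ be an AF, $E \subseteq A$, and $a \in A$ with $a \in \mathcal{F}_F(E)$, $a \notin E$, and $a \notin (E^- \cup E^+)$. To establish $E \cup \{a\} \sqsupseteq^{gc_\rho}_F E$, by the definition of $gc_\rho$ I need to exhibit, for each $c \in E$, some $b \in E \cup \{a\}$ with $b \succeq^\rho_F c$. Simply pick $b := c$; since $E \subseteq E \cup \{a\}$ we have $c \in E \cup \{a\}$, and by reflexivity of the preorder $\succeq^\rho_F$ (which holds by assumption that $\rho$ is an argument-ranking semantics, i.e., produces a preorder) we get $c \succeq^\rho_F c$. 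This witnesses the required comparison, so $E \cup \{a\} \sqsupseteq^{gc_\rho}_F E$.

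There is essentially no obstacle here; the hypotheses $a \in \mathcal{F}_F(E)$ and $a \notin (E^- \cup E^+)$ play no role in the proof, because $gc_\rho$ only ever asks us to match elements of the smaller set by elements of the larger set, and the elements themselves always suffice as their own witnesses. This also makes clear why strong reinstatement is not going to come for free from the same argument: we would additionally need to exclude $E \sqsupseteq^{gc_\rho}_F E \cup \{a\}$, which by reflexivity can fail to fail (for instance, if $E$ already contains an argument ranked at least as high as $a$, then $E \sqsupseteq^{gc_\rho}_F E \cup \{a\}$ also holds and the inequality is not strict). This is exactly the gap exploited in the paragraph following Proposition~\ref{prop:gc_wreinst} to refute strong reinstatement.
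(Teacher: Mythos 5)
Your proof is correct, and it rests on the same key ingredient as the paper's: reflexivity of the preorder $\succeq^\rho_F$, which lets every $c \in E$ serve as its own witness inside $E \cup \{a\}$, so that $E \cup \{a\} \sqsupseteq^{gc_\rho}_F E$ holds unconditionally. Where you differ is in presentation: the paper splits into two cases (``$a \succ^\rho_F b$ for every $b \in E$'', giving a strict relation, versus ``there is $b \in E$ with $b \succ^\rho_F a$'', giving equivalence via reflexivity), and that case split is in fact not exhaustive --- it omits, for instance, the situation where $a$ is equivalent to or incomparable with some elements of $E$ while not dominating all of them. Your uniform self-witness argument covers all of these situations at once, so it is both shorter and more robust than the paper's version. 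Your closing observations --- that the hypotheses $a \in \mathcal{F}_F(E)$ and $a \notin (E^- \cup E^+)$ are never used, and that strong reinstatement fails precisely because $E \sqsupseteq^{gc_\rho}_F E \cup \{a\}$ can also hold when some $b \in E$ satisfies $b \succeq^\rho_F a$ --- match what the paper exploits in the remark following the proposition. No gaps.
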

\begin{proof}
 Let $F= (A,R)$ be an AF and $E \subseteq A$. 
    Suppose $a \in \mathcal{F}_{F}(E), a \notin E$ and $a \notin (E^- \cup E^+)$. Then there are two cases. First assume $a \succ^\rho_F b$ for every $b \in E$, then $E \cup \{a\} \sqsupset^{gc_\rho}_F E$. Next, assume there is a $b \in E$ s.t. $b \succ^\rho_F a$. Since $\rho$ is reflexive by definition we have for every argument $c \in A$, $c \succeq^\rho_F c$ and therefore we have $E \cup \{a\} \equiv^{gc_{\rho}}_F E$.   
\end{proof}

\begin{proposition}
     $gc_\rho$ satisfies syntax independence if $\rho$ satisfies Abstraction.
\end{proposition}
\begin{proof}
    Let $F= (A,R)$ and $F'=(A',R')$ be AFs such that there is a isomorphism $\gamma: A \rightarrow A'$ with $F' = \gamma(F)$. Then if $\rho$ satisfies Abstraction we have for all $a,b \in A$ s.t. $a \succeq^\rho_F b$ then $\gamma(a) \succeq^\rho_{\gamma(F)} \gamma(b)$. So, the underling argument rankings are the same for $F$ and $F'$, this implies that for two sets $E, E' \subseteq A$ with $E \sqsupseteq^{gc_{\rho}}_F E'$ then $\gamma(E) \sqsupseteq^{gc_{\rho}}_{\gamma(F)} \gamma(E')$. 
\end{proof}

\subsection{Technical Proofs of Section \ref{subsec:numerical_evaluation}}

\begin{proposition}
    Let $F=(A,R)$ be an AF, then $A \in \maxpl_{\OBE_{ne_\sigma,\Box}}(F)$ for $\sigma \in \{\ad,\co,\gr,\pr,\sst\}$ and $\Box \in \{sum,max,leximax\}$.
\end{proposition}
\begin{proof}
    Let $F=(A,R)$ be an AF and $\sigma \in \{\ad,\co,\gr,\pr,\sst\}$. 
    \begin{description}
        \item[``$\Box = sum$'':] Since $ne_{\sigma,F}(a) \geq 0$ for every $a \in A$, we know that $\Sigma_{a \in A} ne_{\sigma,F}(a) \geq \Sigma_{b \in E} ne_{\sigma,F}(b)$ for any $E \subset A$, therefore $A \in \maxpl_{\OBE_{ne_\sigma,sum}}(F)$.
        \item[``$\Box = max$'':] Let $a$ be $max(ne_{\sigma,F}(A))$, then there can not be an argument $b\in A$ with $ne_{\sigma,F}(b) > ne_{\sigma,F}(a)$ so for every $E \subseteq A$ we have $max(ne_{\sigma,F}(E)) \leq ne_{\sigma,F}(a)$ and therefore $A \in \maxpl_{\OBE_{ne_\sigma, max}}(F)$.
        \item[``$\Box = leximax$'':] Let $A = \{a_1, \dots, a_n\}$ be in descending order then for any $E \subset A$ there is at least one $a_i$ missing. Let $E = \{a_1, \dots, a_{i-1},a_{i+1},\dots, n\}$, then $ne_{\sigma,F}(a_i) \geq ne_{\sigma,F}(a_{i+1})$, which are compared after $i$ steps. In the $i+1$ step $ne_{\sigma,F}(a_{i+1})$ and $ne_{\sigma,F}(a_{i+2})$ are compared. At some point either the previous element is strictly bigger or $ne_{\sigma,F}(a_n)$ is compared to the blank element. Therefore $A \sqsupseteq^{\OBE_{ne_\sigma,leximax}} E$ and therefore  $A \in \maxpl_{\OBE_{ne_\sigma, leximax}}(F)$. \qedhere
    \end{description}
\end{proof}

\begin{lemma}
    Let $V,W,X,Y$ be sequences in descending order s.t. $V \sqsupseteq^{leximax} W$ and $X \sqsupseteq^{leximax} Y$. It holds that $c_1 * V \cup c_2 *X \sqsupseteq^{leximax} c_1 * W \cup c_2* Y$ for $c_1, c_2 \in \mathbb{N}$.
\end{lemma}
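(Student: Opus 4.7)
The plan is to reformulate the leximax relation via a threshold characterization that behaves well under multiset union and scalar multiplication. For any finite multiset $A$ of nonnegative reals, I define the count function $n_A(s) = |\{a \in A : a \geq s\}|$. The first step of the proof is to establish the equivalence: $A \sqsupseteq^{leximax} B$ iff either $A = B$ as multisets, or there is a threshold $t^* \geq 0$ such that $n_A(s) = n_B(s)$ for every $s > t^*$ and $n_A(t^*) > n_B(t^*)$. This follows directly from the definition of sorted-descending comparison --- the first position at which the two sorted sequences diverge corresponds to the largest value at which their cumulative counts first disagree, with the dominating sequence having strictly more occurrences at that value.

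With this characterization in hand, I apply it to the hypotheses $V \sqsupseteq^{leximax} W$ and $X \sqsupseteq^{leximax} Y$, extracting critical thresholds $t_V$ and $t_X$ where the counts first diverge (or noting the pairs coincide as multisets). The case $V = W$ and $X = Y$ is immediate, since then $c_1 V \cup c_2 X$ and $c_1 W \cup c_2 Y$ coincide as multisets. Otherwise I set $t^* = \max(c_1 t_V, \, c_2 t_X)$ and exploit the factorisation $n_{c_1 V \cup c_2 X}(s) = n_V(s/c_1) + n_X(s/c_2)$, valid when $c_1, c_2 \geq 1$, to reduce the claim about the combined multisets back to the threshold conditions for the individual pairs.

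The key --- and most delicate --- step is to verify strictness of the combined count at $t^*$ when potentially only one of the two summands contributes the strict inequality. Assuming WLOG $c_1 t_V \geq c_2 t_X$ so that $t^* = c_1 t_V$: for any $s > t^*$, both $s/c_1 > t_V$ and $s/c_2 > t_X$, which yields $n_V(s/c_1) = n_W(s/c_1)$ and $n_X(s/c_2) = n_Y(s/c_2)$ and hence equality of the combined counts above $t^*$. At $s = t^*$, the summand $n_V(t_V) > n_W(t_V)$ already provides strict inequality, while $n_X(t^*/c_2) \geq n_Y(t^*/c_2)$ holds either trivially (if $t^* > c_2 t_X$, by the equality regime above $t_X$) or by the critical inequality at $t_X$ (if $t^* = c_2 t_X$). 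Summing gives $n_{c_1 V \cup c_2 X}(t^*) > n_{c_1 W \cup c_2 Y}(t^*)$, which by the threshold characterization delivers the desired leximax relation. The degenerate cases $c_1 = 0$ or $c_2 = 0$ are handled separately, collapsing the claim to a scaling of a single pair by a nonnegative integer, which trivially preserves the leximax order since a global positive scalar commutes with sorting and equality by zero scaling is immediate.
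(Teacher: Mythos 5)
Your argument is correct for $c_1, c_2 \geq 1$ and takes a genuinely different route from the paper's. The paper proves the lemma by positional induction: it pads the shorter sequences with $-\infty$, splits into four cases according to whether each of $V \sqsupseteq^{leximax} W$ and $X \sqsupseteq^{leximax} Y$ is strict or an equality, and in each case compares the merged sequences entry by entry, sub-casing on which source sequence contributes the $i$-th entry of each merge. Your threshold characterisation --- $A \sqsupseteq^{leximax} B$ iff the counting functions $n_A(s)=|\{a \in A : a \geq s\}|$ agree above some $t^*$ at which $n_A$ strictly exceeds $n_B$, or the multisets coincide --- replaces all of that bookkeeping with the single identity $n_{c_1 V \cup c_2 X}(s) = n_V(s/c_1) + n_X(s/c_2)$, after which the conclusion is a short computation at $t^* = \max(c_1 t_V, c_2 t_X)$. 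This is cleaner, sidesteps the paper's delicate (and rather loosely stated) index manipulations in the induction steps, and works uniformly for real-valued entries, which matters because the lemma is reused for gradual semantics with values in $[0,1]$. What the paper's approach buys is only that it argues directly from the recursive definition of leximax without first establishing the equivalence.

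Two loose ends remain. First, your ``otherwise'' branch presumes both thresholds $t_V$ and $t_X$ exist; when exactly one pair coincides as multisets (say $V = W$ but $X \neq Y$) you must take $t^* = c_2 t_X$ and let the $V$-summand contribute equality rather than strictness --- the computation is identical, but the stated WLOG does not cover it. Second, your claim that $c_1 = 0$ reduces to an ``immediate'' equality is false when $|V| \neq |W|$: with $V=(5)$, $W=(3,3,3)$, $X=Y=(1)$, $c_1=0$, $c_2=1$ the merged sequences are $(1,0)$ and $(1,0,0,0)$, and the conclusion of the lemma fails. This is really a defect of the lemma statement at $c_1=0$ rather than of your argument --- the paper's own proof silently assumes positive constants --- but you should restrict to $c_1, c_2 \geq 1$ instead of asserting the degenerate case.
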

\begin{proof}
 Let $V= (v_1,v_2, \dots), W= (w_1, w_2, \dots), X= (x_1, x_2, \dots),\\ Y= (y_1, y_2, \dots)$ be sequences in descending order s.t. $V  \sqsupseteq^{leximax} W$ and $X \sqsupseteq^{leximin} Y$. 

 First we can multiply the constants $c_1$ and $c_2$ with the individual sequences, i.e. $c_1 * V= (c_1* v_1,c_1*v_2, \dots), c_2 * W= (c_2* w_1, c_2 *w_2, \dots), c_1 * X= (c_1* x_1, c_1* x_2, \dots), c_2 * Y= (c_2* y_1, c_2* y_2, \dots)$. 

 We normalise the length of these sequences by appending $-\infty$ to shorter sequences, i.e. for two sequences $A = (a_1, \dots, a_{m_1}), B = (b_1, \dots, b_{m_2})$ s.t. $|A| > |B|$ we add $- \infty$ until both sequences have the same length $B' = (b_1,b_2,\dots, b_{m_2}, -\infty, \dots)$ s.t. $|A| = |B'|$. This addition does not change the order between $A$ and $B$. First assume $A \sqsupseteq^{leximax} B$, since $|A| > |B|$ these two sequences can not be equal. So, $A \sqsupset^{leximax} B$ and therefore there exists an $i$ s.t. $a_i = b_i$  for $i = 1, \dots, m$ and $a_{i+1} > b_{i+1}$. Since $B$ and $B'$ are equal up to position $m_2$, these pairs are also in $B'$, hence $A \sqsupset^{leximax} B'$. 
 Next assume $B \sqsupset^{leximax} A$ then there exists an $i$ s.t. $a_i = b_i$ for $i = 1, \dots m$ and $b_{i+1} > a_{i+1}$. For $b_j \in B$ we know that $b_j \in B'$ so this implies $B' \sqsupset^{leximax} A$.

We can assume w.l.o.g. $|c_1*V| = |c_1*W| = |c_2*X| = |c_2*Y|$. 
Let $c_1 * V \sqsupseteq^{leximax} c_1* W$, $c_2 * X \sqsupseteq^{leximax} c_2 * Y$ and $c_1 * V \cup c_2 * X = (u_1, u_2, \dots)$ and $c_1*W \cup c_2*Y = (z_1, z_2, \dots)$. We reformulate the definition of lexicographic order: Let $A= (a_1, a_2, \dots), B= (b_1, b_2, \dots)$ be sequences in descending order $A \sqsupseteq^{leximax} B \Leftrightarrow (a_1 > b_1) \lor (A^1_\downarrow \sqsupseteq^{leximax} B^1_\downarrow \land a_1 = b_1)$, where $A^i_\downarrow$ is the sequence where the first $i$ elements of $A$ are removed, i.e. $A^i_\downarrow = (a_{i+1}, \dots)$. 

We show $c_1 * V \cup c_2* X \sqsupseteq^{leximax} c_1 * W \cup c_2 * Y$ in four cases.
\begin{description}
    \item[Case: $c_1 * V \sqsupset^{leximax} c_1 *W $ and $c_2* X \sqsupset^{leximax} c_2*Y$: ] So, there exists an $i$ s.t. $i= 1, \dots m_1$, $c_1 *v_i = c_1* w_i$, and $c_1* v_{i+1} > c_1* w_{i+1}$ and there exists an $j$ s.t. $j= 1, \dots m_2$, $c_2* x_j = c_2* y_j$, and $c_2* x_{j+1} > c_2* y_{j+1}$, where $m_1,m_2 \in \mathbb{N}$. We can modify the sequences such that $m_1 = m_2$. W.l.o.g. assume $m_1 > m_2$, then we add $m_1 - m_2$ times $\infty$ at the beginning of $X$ and $Y$, such that the first differences in these sequences occur at $m_1$. This modification does not change the order, after this modification we have to make sure, that  $|V| = |W| = |X| = |Y|$ still holds.  

   The next step will be done via induction for $m_1 \rightarrow |V|$. 
    Case $m_1 = 1$: Then we have $c_1 * v_1 > c_1* w_1$ and $c_2* x_1 > c_2* y_1$. 
    If $u_1 = c_1* v_1$ and $z_1 = c_1* w_1$ or $u_1 = c_2* x_1$ and $z_1= c_2* y_1$, then it is clear that $u_1 > z_1$. 
    If $u_1 = c_1* v_1$ and $z_1 = c_2* y_1$ or $u_1 = c_2* x_1$ and $z_1= c_1* w_1$, then $c_1* v_1 \geq c_2* x_1 > c_2*y_1$ resp. $c_2 *x_1 \geq c_1* v_1 > c_1* w_1$.
    This shows that $u_1> z_1$ and therefore $V \cup X \sqsupseteq^{leximax} W \cup Y$.

    Assume that for all $m_1= 1, \dots, n$ it holds that $u_{m_1} = z_{m_1}$.

    Case $n \rightarrow n+1$: Since for all $i= 1, \dots, n$ we have $u_i = z_i$, it is enough to look at the $i+1$ elements, i.e. we use $(c_1*V \cup c_2*X)^i_\downarrow= (u_{i+1}, \dots)$ and $(c_1*W \cup c_2*Y)^i_\downarrow= (z_{i+1}, \dots)$. If $u_{i+1} \in c_1*V$ and $z_{i+1} \in c_1*W $ or $u_{i+1} \in c_2*X$ and $z_{i+1} \in c_2*Y$, then $u_{i+1} > z_{i+1}$ since $c_1* v_{i+1} > c_1* w_{i+1}$ resp. $c_2* x_{i+1} > c_2* y_{i+1}$. 
    If $u_{i+1} \in c_1*V$ and $z_{i+1} \in c_2*Y $ or $u_{i+1} \in c_2*X$ and $z_{i+1} \in c_1*W$, then we have $c_1* v_{i+1} \geq c_2* x_{i+1} > c_2* y_{i+1}$ resp. $c_2* x_{i+1} \geq c_1* v_{i+1} > c_1* w_{i+1}$. 
    This implies $c_1* V \cup c_2* X \sqsupseteq^{leximax} c_1* W \cup c_2* Y$. 

    \item[Case: $c_1*V \equiv^{leximax} c_1*W $ and $c_2*X \sqsupset^{leximax} c_2*Y$: ] So, there exists an $i$ s.t. $i= 1, \dots m_1$ $c_2*x_i = c_2*y_i$ and $c_2*x_{i+1} > c_2*y_{i+1}$ and for every $j= 1, \dots, |c_1 *V|$ we have $c_1*v_j = c_1*w_j$. We show $c_1 *V \cup c_2* X \sqsupseteq^{leximax}c_1*  W \cup c_2* Y$ via induction for $m \rightarrow |V|$.

    Case $m_1= 1$: Then we have $c_2*x_1 > c_2*y_1$. 
    If $u_1 = c_1*v_1$ and $z_1 = c_1*w_1$, then $u_1 = z_1$. 
    If $u_1 = c_2*x_1$ and $z_1 = c_2*y_1$, then $u_1 > z_1$.
    If $u_1 = c_1*v_1$ and $z_1 = c_2*y_1$, then $c_1*v_1 \geq c_2*x_1 > c_2*y_1$. 
    If $u_1 = c_2*x_1$ and $z_1 = c_1*w_1$, then $c_2*x_1 \geq c_1*v_1 = c_1*w_1$.
    So, $u_1 \geq z_1$ in all cases. 

    Assume that for all $m_1 = 1, \dots, n$ it holds that $u_{m_1} = z_{m_1}$.

    Case $n \rightarrow n+1$: Since for all $i= 1, \dots, n$ we have $u_i = z_i$, it is enough to look at the $i+1$ elements, i.e. we use $(c_1*V \cup c_2*X)^i_\downarrow= (u_{i+1}, \dots)$ and $(c_1*W \cup c_2*Y)^i_\downarrow= (z_{i+1}, \dots)$. 
    If $u_{i+1} = c_1*v_{i+1}$ and $z_{i+1} = c_1*w_{i+1}$, then $u_{i+1} = z_{i+1}$. 
    If $u_{i+1} = c_2*x_{i+1}$ and $z_{i+1} = c_2*y_{i+1}$, then $u_{i+1} > z_{i+1}$, since $c_2*x{i+1} > c_2*y_{i+1}$.
    If $u_{i+1} = c_1*v_{i+1}$ and $z_{i+1} = c_2*y_{i+1}$, then $c_1* v_{i+1} \geq c_2* x_{i+1} > c_2*y_{i+1}$. 
    If $u_{i+1} = c_2*x_{i+1}$ and $z_{i+1} = c_1*w_{i+1}$, then $c_2*x_{i+1} \geq c_1*v_{i+1} = c_1*w_{i+1}$.
     This implies $c_1*V \cup c_2*X \sqsupseteq^{leximax} c_2*W \cup c_2*Y$. 

     \item[Case: $c_1*V \sqsupset^{leximax} c_1*W $ and $c_2*X \equiv^{leximax} c_2*Y$: ] We can use the same reasoning as for case $c_1*V \equiv^{leximax} c_1*W $ and $c_2*X \sqsupset^{leximax} c_2*Y$. 
     \item[Case: $c_1*V \equiv^{leximax} c_1*W $ and $c_2*X \equiv^{leximax} c_2*Y$: ] Then for all $i = 1, \dots, |c_1*V|$ we have $c_1*v_i = c_1*w_i$ and $c_1*x_i = c_1*y_i$. 
     We show $c_1*V \cup c_2*X \sqsupseteq^{leximax} c_1*W \cup c_2*Y$ via induction for $m_1 \rightarrow |c_1*V|$.
Case $m_1= 1$: 
If $u_1 = c_1*v_1$ and $z_1 = c_1*w_1$ or  $u_1 = c_2*x_1$ and $z_1 = c_2*y_1$, then $u_1 = z_1$. 
    If $u_1 = c_1*v_1$ and $z_1 = c_2*y_1$ or $u_1 = c_2*x_1$ and $z_1 = c_1*w_1$, then $c_1*v_1 \geq c_2*x_1 = c_2*y_1$ resp. $c_1*x_1 \geq c_1*v_1 = c_1*w_1$. 

    Assume that for all $m_1 = 1, \dots, n$ it holds that $u_{m_1} = z_{m_1}$.
     Case $n \rightarrow n+1$: Since for all $i= 1, \dots, n$ we have $u_i = z_i$, it is enough to look at the $i+1$ elements, i.e. we use $(c_1*V \cup c_2*X)^i_\downarrow= (u_{i+1}, \dots)$ and $(c_1*W \cup c_2*Y)^i_\downarrow= (z_{i+1}, \dots)$.

    If $u_{i+1} = c_1*v_{i+1}$ and $z_{i+1} = c_1*w_{i+1}$ or  $u_{i+1} = c_2*x_{i+1}$ and $z_{i+1} = c_2*y_{i+1}$, then $u_{i+1} = z_{i+1}$. 
    If $u_{i+1} = c_1*v_{i+1}$ and $z_{i+1} = c_2*y_{i+1}$ or $u_{i+1} = c_2*x_{i+1}$ and $z_{i+1} = c_1*w_{i+1}$, then $c_1*v_{i+1} \geq c_2*x_{i+1} = c_2* y_{i+1}$ resp. $c_2* x_{i+1} \geq c_1*v_{i+1} = c_1*w_{i+1}$. 
     This implies $c_1*V \cup c_2*X \sqsupseteq^{leximax} c_1*W \cup c_2*Y$.  \qedhere
 \end{description}
\end{proof}

\begin{lemma}
    Let $V,W,X,Y$ be sequences in descending order s.t. $V \sqsupseteq^{leximin} W$ and $X \sqsupseteq^{leximin} Y$. It holds that $c_1*V \cup c_2*X \sqsupseteq^{leximin} c_1*W \cup c_2*Y$ for $c_1,c_2 \in \mathbb{N}$.
\end{lemma}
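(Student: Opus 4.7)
My plan is to mirror the structure of the preceding leximax proof, since the leximin case is essentially dual. First I would reduce to the case $c_1 = c_2 = 1$ by observing that multiplying every entry of two sequences by the same positive constant preserves any leximin comparison between them, so $V \sqsupseteq^{leximin} W$ implies $c_1 V \sqsupseteq^{leximin} c_1 W$, and similarly for $X$ and $Y$. Next I would normalize so that $|V| = |W|$ and $|X| = |Y|$ by appending a neutral sentinel (an $+\infty$ value at the end of the shorter sequence for the leximin order), and verify that this padding preserves both hypotheses and the conclusion we want to prove.

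Having reduced to the normalized setting, the core step is to compare the merged-and-re-sorted sequences $U = c_1 V \cup c_2 X = (u_1, u_2, \ldots)$ and $Z = c_1 W \cup c_2 Y = (z_1, z_2, \ldots)$ position by position in descending order (as stipulated by the lemma statement). I would split into the four cases used in the leximax proof: (i) both $V \sqsupset W$ and $X \sqsupset Y$; (ii) $V \equiv W$ and $X \sqsupset Y$; (iii) $V \sqsupset W$ and $X \equiv Y$; (iv) both equivalences. In each case I argue by induction on the position $k$ that either $u_k = z_k$ for all $k$ up to the first strict witness, or $u_{k+1} > z_{k+1}$ at that witness, using the same cross-case sub-analysis on the source (from $V/W$ or $X/Y$) of $u_k$ and $z_k$.

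The key sub-argument in every case is the crossover step: when $u_k$ comes from $c_1 V$ while $z_k$ comes from $c_2 Y$ (or vice versa), we chain inequalities through the other pair. For leximin on descending sequences, the needed chain has the form $c_1 v_k \geq c_2 x_k \geq c_2 y_k$, where the first inequality follows because $u_k$ was placed at position $k$ in the sorted merged sequence and the second from the hypothesis on $X,Y$. Working out the symmetric chains $c_2 x_k \geq c_1 v_k \geq c_1 w_k$ and the analogous strict-versus-equal variants handles all subcases uniformly.

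The main obstacle I anticipate is the bookkeeping around the crossover case, specifically verifying that the element $u_k$ being placed at position $k$ genuinely implies the weak inequality with the element of the other sequence at the same position after sorting; this relies on a counting argument about how many elements of each source sit above position $k$ in $U$ versus $Z$. Once this is nailed down, the inductive step is routine and the conclusion $c_1 V \cup c_2 X \sqsupseteq^{leximin} c_1 W \cup c_2 Y$ follows in each of the four cases, completing the proof.
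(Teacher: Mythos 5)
The paper gives no explicit proof of this lemma: the appendix states it without a proof body, and the main text only remarks that it follows ``using the same reasoning'' as the $leximax$ case. So your plan --- dualise the proof of the $leximax$ lemma --- is exactly the intended route, and your reduction to $c_1=c_2=1$, the length normalisation (with $+\infty$ as the sentinel for $leximin$), and the four-case induction all transfer correctly.

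There is, however, one concrete problem: the crossover chain you single out as ``the needed chain'' is the $leximax$ chain, not its dual, and its justification fails under the $leximin$ order. You write $c_1 v_k \geq c_2 x_k \geq c_2 y_k$, justifying the first inequality by the fact that $u_k$ was placed at position $k$ of the sorted merge. For $\sqsupseteq^{leximin}$ the sequences must be read in \emph{ascending} order (the lemma statement says ``descending,'' but that is inherited verbatim from the $leximax$ lemma and contradicts the paper's own definition of $\geq^{leximin}$; if you literally compare descending sequences by first difference you are re-proving the $leximax$ lemma). In ascending order, an element sitting at position $k$ is among the \emph{smallest}, so placement gives $c_1 v_k \leq c_2 x_k$ --- the wrong direction. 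The correct crossover for $leximin$ routes through the target sequence instead of the source: when $u_k = c_1 v_k$ and $z_k = c_2 y_k$, use $c_1 v_k \geq c_1 w_k$ (from the hypothesis on $V,W$, given the prefix agreement supplied by the induction) and then $c_1 w_k \geq c_2 y_k$ because $z_k$ is the $k$-th smallest element of $c_1 W \cup c_2 Y$, so every element of $c_1 W$ not yet consumed, in particular $c_1 w_k$, is at least $z_k$. The symmetric subcase is handled by the mirror chain $c_2 x_k \geq c_2 y_k \geq$ (nothing needed) or $u_k \geq c_1 v_k \geq c_1 w_k = z_k$ as appropriate. This is the same counting argument you anticipate as the main obstacle, but applied to $Z$ rather than to $U$; once that substitution is made the rest of your induction goes through.
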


\begin{lemma}
    Let $F_1= (A_1, R_1)$ and $F_2= (A_2,R_2)$ be two AFs s.t. $A_1 \cap A_2 =\emptyset$ and $F = F_1 \cup F_2$ then it holds that $ne_{\sigma,F}(a) = (ne_{\sigma,F_1} * |\sigma(F_2)|) + (ne_{\sigma,F_2}(a) * |\sigma(F_1)|)$ for every $a \in A_1 \cup A_2$ and $\sigma \in \{\ad,\co, \gr, \pr,\st, \sst\}$.
\end{lemma}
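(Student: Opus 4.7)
The plan is to reduce the identity to a structural decomposition of $\sigma$-extensions under disjoint union, and then to do a straightforward counting argument.

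First I would prove the auxiliary decomposition lemma: for $F_1=(A_1,R_1)$, $F_2=(A_2,R_2)$ disjoint and $F = F_1 \cup F_2$, we have
\[
\sigma(F) \;=\; \{\,E_1 \cup E_2 \mid E_1 \in \sigma(F_1),\ E_2 \in \sigma(F_2)\,\}
\]
for every $\sigma \in \{\ad,\co,\gr,\pr,\st,\sst\}$. The key observation driving this is that because $A_1 \cap A_2 = \emptyset$ and no edges cross between $F_1$ and $F_2$, for any $E \subseteq A_1 \cup A_2$ we have $E^{-}_F = (E\cap A_1)^{-}_{F_1} \cup (E\cap A_2)^{-}_{F_2}$ and analogously for $E^{+}_F$, and the characteristic function factorises as $\mathcal{F}_F(E) = \mathcal{F}_{F_1}(E\cap A_1) \cup \mathcal{F}_{F_2}(E\cap A_2)$. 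From this, conflict-freeness, admissibility, and being complete are each equivalent to holding separately on $E\cap A_1$ in $F_1$ and on $E\cap A_2$ in $F_2$, which directly yields the lemma for $\cf,\ad,\co$. For $\gr$, I would use that the grounded extension is the least fixed point of $\mathcal{F}_F$ together with the factorisation; for $\pr$ and $\sst$ the $\subseteq$-maximality (resp.\ maximality of $E \cup E^{+}_F$) transfers componentwise because of the disjointness; for $\st$, $E \cup E^{+}_F = A_1 \cup A_2$ iff $(E\cap A_i) \cup (E\cap A_i)^{+}_{F_i} = A_i$ for both $i$. (Most of these are instances of standard SCC-decomposability results and should be cited or stated briefly.)

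Next I would use the lemma to count. Fix $a \in A_1 \cup A_2$; without loss of generality $a \in A_1$. By the decomposition lemma, the $\sigma$-extensions of $F$ containing $a$ are exactly the sets $E_1 \cup E_2$ with $E_1 \in \sigma(F_1)$, $a \in E_1$, and $E_2 \in \sigma(F_2)$. The number of such pairs is $ne_{\sigma,F_1}(a)\cdot|\sigma(F_2)|$, so $ne_{\sigma,F}(a) = ne_{\sigma,F_1}(a)\cdot|\sigma(F_2)|$. Since $a \in A_1$ and $A_1 \cap A_2 = \emptyset$, no $\sigma$-extension of $F_2$ contains $a$, so $ne_{\sigma,F_2}(a) = 0$, and the claimed equality
\[
ne_{\sigma,F}(a) \;=\; ne_{\sigma,F_1}(a)\cdot|\sigma(F_2)| \;+\; ne_{\sigma,F_2}(a)\cdot|\sigma(F_1)|
\]
follows. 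The case $a \in A_2$ is symmetric.

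The main obstacle is verifying the decomposition lemma for the more subtle semantics, namely $\gr$, $\pr$ and $\sst$, where one must argue about least/maximal elements rather than just local closure. For $\gr$, showing that the union of the least fixed points on each component is a fixed point of $\mathcal{F}_F$ (easy by factorisation) and is minimal (because any fixed point of $\mathcal{F}_F$ restricts to fixed points of $\mathcal{F}_{F_1}$ and $\mathcal{F}_{F_2}$) gives it. For $\pr$ and $\sst$ one has to check that a component-wise maximal candidate cannot be extended in $F$ without breaking maximality in some component, and conversely that no ``cross-component'' maximum exists, which again reduces to disjointness. Once this lemma is settled, the counting step is immediate.
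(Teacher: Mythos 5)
Your proof is correct and follows essentially the same route as the paper's: decompose the $\sigma$-extensions of the disjoint union componentwise and then count the pairs containing $a$, using that $ne_{\sigma,F_2}(a)=0$ when $a\in A_1$. If anything, you are more explicit than the paper, which only argues that unions of component extensions are $\sigma$-extensions of $F$ and leaves the converse inclusion (also needed for the counting identity) implicit.
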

\begin{proof}
     Let $F_1= (A_1, R_1)$ and $F_2= (A_2,R_2)$ be two AFs s.t. $A_1 \cap A_2 =\emptyset$ and $F = F_1 \cup F_2$ and  $\sigma \in \{\ad,\co,\pr,\st, \sst\}$. W.l.o.g. assume there is a $E \in \sigma(F_1)$ with $a \in E$. Consider any set $E'$ s.t. $E' \in \sigma(F_2)$. Important to note that $a \notin E'$. Every argument defended by $E$ in $F_1$ is still defended by $E$ in $F$, for $E'$ the same holds for their defended arguments in $F_2$. So, $E, E' \in \ad(F)$ and additionally since there can not be any attacks between $E$ and $E'$ we have $E \cup E' \in \ad(F)$. Hence we can combine $E$ with every admissible set of $F_2$ to get a new admissible set in $F$. Hence, $ne_{\ad,F}(a) = (ne_{\ad,F_1} * |\ad(F_2)|) + (ne_{\ad,F_2}(a) * |\ad(F_1)|)$. Note that $ne_{\ad,F_2}(a) = 0$. 

     The only arguments that are defended by $E$ in $F$ that were originally in $F_2$ are unattacked arguments. However these unattacked arguments have to be part of every complete extension of $F_2$ thus if $E \in \co(F_1)$ and $E' \in \co(F_2)$ then $E \cup E' \in \co(F)$. This implies $ne_{\co,F}(a) = (ne_{\co,F_1} * |\co(F_2)|) + (ne_{\co,F_2}(a) * |\co(F_1)|)$.

     For $E \in \gr(F_1)$ and $E' \in \gr(F_2)$, we know that every argument defended by $E$ in $F_1$ is also defended by $E$ in $F$, hence if $a \in \mathcal{F}_{F_1}(E)$ then $a \in \mathcal{F}_{F}(E)$. This implies $E \cup E' \in \gr(F)$ and therefore $ne_{\gr,F}(a) = (ne_{\gr,F_1} * |\gr(F_2)|) + (ne_{\gr,F_2}(a) * |\gr(F_1)|)$.

     If $E \in \pr(F_1)$ and $E' \in \pr(F_2)$, then we cannot add any argument into $E\cup E' $ without violation the admissibility of this set in $F$. Thus, $E \cup E'$ is a preferred extensions in $F$. Hence $ne_{\pr,F}(a) = (ne_{\pr,F_1} * |\pr(F_2)|) + (ne_{\pr,F_2}(a) * |\pr(F_1)|)$. 

     Any argument attacked by $E$ in $F_1$ is also attacked by $E$ in $F$ and every not attacked argument by $E$ in $F_1$ is also not attacked by $E$ in $F$. This implies $ne_{\sigma',F}(a) = (ne_{\sigma',F_1} * |\sigma'(F_2)|) + (ne_{\sigma',F_2}(a) * |\sigma'(F_1)|)$ for $\sigma' \in \{\st,\sst\}$. 
\end{proof}

\begin{proposition}
    $\OBE_{ne_\sigma, \Box}$ satisfies composition for $\sigma \in \{\ad,\co,\pr,\st, \sst\}$ and $\Box \in \{sum,max,leximax,min,leximin\}$.
\end{proposition}
\begin{proof}
    Let $F_1= (A_1, R_1)$ and $F_2= (A_2,R_2)$ be two AFs s.t. $A_1 \cap A_2 =\emptyset$, $F = F_1 \cup F_2$ and $\sigma \in \{\ad,\co, \gr, \pr,\st,\sst\}$
\begin{description}
    \item[``$\Box= sum$'':] Assume $E, E' \subseteq A_1 \cup A_2$ with $E \cap A_1 \sqsupseteq^{\OBE_{ne_\sigma, sum}}_{F_1} E' \cap A_1$ and  $E \cap A_2 \sqsupseteq^{\OBE_{ne_\sigma, sum}}_{F_2} E' \cap A_2$, then $\Sigma_{a \in E \cap A_1} ne_{\sigma,F_1}(a) \geq \Sigma_{b \in E' \cap A_1} ne_{\sigma,F_1}(b)$ and $\Sigma_{a' \in E \cap A_2} ne_{\sigma,F_2}(a') \geq \Sigma_{b' \in E' \cap A_2} ne_{\sigma,F_2}(b')$. 
    Because of Lemma \ref{lemma:ne_comp} we have 
    \begin{align*}
        \Sigma_E~ ne_{\sigma,F}(E) &=  \Sigma_{a \in E} (ne_{\sigma,F_1}(a) * |\sigma(F_2)|) + (ne_{\sigma,F_2}(a) * |\sigma(F_1)|)\\
        &= |\sigma(F_2)| *  \Sigma_{a \in E\cap A_1}~ ne_{\sigma,F_1}(a) + |\sigma(F_1)| *  \Sigma_{a' \in E \cap A_2}~ ne_{\sigma,F_2}(a') \\
        &\geq |\sigma(F_2)| *  \Sigma_{b \in E'\cap A_1}~ ne_{\sigma,F_1}(b) + |\sigma(F_1)| *  \Sigma_{b' \in E' \cap A_2}~ ne_{\sigma,F_2}(b') \\
        &= \Sigma_{E'}~ ne_{\sigma,F}(E') 
    \end{align*}
    So, we get $E \sqsupseteq^{\OBE_{ne_\sigma,sum}}_{F} E'$.
    
    \item[``$\Box= max$'':] Assume $E, E' \subseteq A_1 \cup A_2$ with $E \cap A_1 \sqsupseteq^{\OBE_{ne_\sigma, max}}_{F_1} E' \cap A_1$ and  $E \cap A_2 \sqsupseteq^{\OBE_{ne_\sigma, max}}_{F_2} E' \cap A_2$. 

    Let $max(ne_{\sigma,F}(E))= |\sigma(F_2)| *  ne_{\sigma,F_1}(a) +  |\sigma(F_1)| *  ne_{\sigma,F_2}(a) $ and \\$max(ne_{\sigma,F}(E'))= |\sigma(F_2)| *  ne_{\sigma,F_1}(b) +  |\sigma(F_1)| *  ne_{\sigma,F_2}(b)$. If $a \in A_1$ and $b \in A_1$ then we compare $|\sigma(F_2)| *  ne_{\sigma,F_1}(a)$ and $|\sigma(F_2)| *  ne_{\sigma,F_1}(b)$. Since $|\sigma(F_2)| \in \mathbb{N}$ we know that $ne_{\sigma,F_1}(a)= max(ne_{\sigma,F_1}(E \cap A_1))$ and therefore $ne_{\sigma,F_1}(a)\geq ne_{\sigma,F_1}(b)$ so $E \sqsupseteq^{\OBE_{ne_{\sigma},max}}_F E'$.

    If $a \in A_2$ and $b \in A_2$ we can use the same reasoning as above.

    For $a \in A_1$ and $b \in A_2$ we have $|\sigma(F_2)|* ne_{\sigma,F_1}(a) \geq |\sigma(F_1)| *ne_{\sigma,F_2}(a')$ for every $a' \in E \cap A_2$. Let $ne_{\sigma,F_2}(a') = max(ne_{\sigma,F_2}(E \cap A_2))$ then we have $|\sigma(F_1)| * ne_{\sigma,F_2}(a') \geq |\sigma(F_1)| * ne_{\sigma,F_2}(b)$ and therefore also  $|\sigma(F_2)| * ne_{\sigma,F_1}(a) \geq |\sigma(F_1)| * ne_{\sigma,F_2}(b)$. So $E \sqsupseteq^{\OBE_{ne_{\sigma}},max}_F E'$.

    For $a \in A_2$ and $b \in A_1$ we can use the same reasoning as for $a \in A_1$ and $b \in A_2$.
\item[``$\Box= leximax$'':] Assume $E, E' \subseteq A_1 \cup A_2$ with $E \cap A_1 \sqsupseteq^{\OBE_{ne_\sigma, leximax}}_{F_1} E' \cap A_1$ and  $E \cap A_2 \sqsupseteq^{\OBE_{ne_\sigma, leximax}}_{F_2} E' \cap A_2$. So we can use Lemma \ref{lemma:lexmax_union} to show $E \sqsupseteq^{\OBE_{ne_{\sigma}},leximax}_F E'$.

    \item[``$\Box= min$'':] Assume $E, E' \subseteq A_1 \cup A_2$ with $E \cap A_1 \sqsupseteq^{\OBE_{ne_\sigma, min}}_{F_1} E' \cap A_1$ and  $E \cap A_2 \sqsupseteq^{\OBE_{ne_\sigma, min}}_{F_2} E' \cap A_2$. 
       Let $min(ne_{\sigma,F}(E))= |\sigma(F_2)| *  ne_{\sigma,F_1}(a) +  |\sigma(F_1)| *  ne_{\sigma,F_2}(a) $ and $min(ne_{\sigma,F}(E'))= |\sigma(F_2)| *  ne_{\sigma,F_1}(b) +  |\sigma(F_1)| *  ne_{\sigma,F_2}(b)$. If $a \in A_1$ and $b \in A_1$ then we compare $|\sigma(F_2)| *  ne_{\sigma,F_1}(a)$ and $|\sigma(F_2)| *  ne_{\sigma,F_1}(b)$. Since $|\sigma(F_2)| \in \mathbb{N}$ we know that $ne_{\sigma,F_1}(a)= min(ne_{\sigma,F_1}(E \cap A_1))$ and therefore $ne_{\sigma,F_1}(a)\geq ne_{\sigma,F_1}(b)$ so $E \sqsupseteq^{\OBE_{ne_{\sigma},min}}_F E'$.

    If $a \in A_2$ and $b \in A_2$ we can use the same reasoning as above.

    For $a \in A_1$ and $b \in A_2$ we have $|\sigma(F_1)|* ne_{\sigma,F_2}(b) \leq |\sigma(F_2)| *ne_{\sigma,F_1}(b')$ for every $b' \in E \cap A_1$. Let $ne_{\sigma,F_1}(b') = min(ne_{\sigma,F_1}(E' \cap A_1))$ then we have $|\sigma(F_2)| * ne_{\sigma,F_1}(a) \geq |\sigma(F_1)| * ne_{\sigma,F_2}(b')$ and therefore also  $|\sigma(F_2)| * ne_{\sigma,F_1}(a) \geq |\sigma(F_2)| * ne_{\sigma,F_1}(b)$. So $E \sqsupseteq^{\OBE_{ne_{\sigma}},min}_F E'$.

  \item[``$\Box= leximin$'':] Assume $E, E' \subseteq A_1 \cup A_2$ with $E \cap A_1 \sqsupseteq^{\OBE_{ne_\sigma, leximin}}_{F_1} E' \cap A_1$ and  $E \cap A_2 \sqsupseteq^{\OBE_{ne_\sigma, leximin}}_{F_2} E' \cap A_2$. So we can use Lemma \ref{lemma:lexmin_union} to show $E \sqsupseteq^{\OBE_{ne_{\sigma}},leximin}_F E'$. \qedhere
\end{description}
\end{proof}

\begin{proposition}
    For $\Box \in \{sum, max\}$  $\OBE_{ne_\sigma,\Box}$ satisfies weak reinstatement and for $\Box = leximax$ satisfies strong reinstatement with $\sigma \in \{\ad, \co, \gr,\\ \pr, \st, \sst\}$. 
\end{proposition}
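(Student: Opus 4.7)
The plan is to exploit the simple combinatorial fact that, under the stated conditions ($a\in\mathcal{F}_F(E)$, $a\notin E$, $a\notin E^-\cup E^+$), the multiset $ne_{\sigma,F}(E\cup\{a\})$ is obtained from $ne_{\sigma,F}(E)$ by adjoining a single non‑negative integer $ne_{\sigma,F}(a)\geq 0$, without altering any of the other entries. Once this is established, each of the three aggregation functions can be handled separately by a short monotonicity argument.

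First I would verify the auxiliary claim: since $ne_{\sigma,F}(a)\in\mathbb{N}$ for every $\sigma\in\{\ad,\co,\gr,\pr,\st,\sst\}$ (it is a cardinality), the sequence $ne_{\sigma,F}(E\cup\{a\})$ differs from $ne_{\sigma,F}(E)$ only by the extra term $ne_{\sigma,F}(a)\geq 0$ attached to the existing multiset. Note that the side‑conditions $a\notin E$ guarantees we are really adding a new entry rather than duplicating one, while the other conditions are simply what the principle demands; they do not enter the numerical argument at all.

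Next I would dispatch the two aggregation functions for weak reinstatement. For $\Box=sum$ the identity $sum(ne_{\sigma,F}(E\cup\{a\}))=sum(ne_{\sigma,F}(E))+ne_{\sigma,F}(a)\geq sum(ne_{\sigma,F}(E))$ immediately yields $E\cup\{a\}\sqsupseteq^{\OBE_{ne_\sigma,sum}}_F E$. For $\Box=max$, the elementary fact $max(S\cup\{x\})\geq max(S)$ for any $x$ delivers $E\cup\{a\}\sqsupseteq^{\OBE_{ne_\sigma,max}}_F E$. Both cases are one‑line calculations, so no real difficulty arises.

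The leximax case is the main obstacle and needs some care. Adopting the $-\infty$‑padding convention already used in the proof of Lemma~\ref{lemma:lexmax_union}, I would let $v=ne_{\sigma,F}(E)$ sorted in descending order and let $v^{+}$ be $v\cup\{ne_{\sigma,F}(a)\}$ sorted descendingly. Writing $k$ for the position at which $ne_{\sigma,F}(a)$ is inserted, $v^{+}$ agrees with $v$ at positions $1,\dots,k-1$ and then the entry $ne_{\sigma,F}(a)$ pushes every subsequent entry one place to the right. Either some comparison at position $k$ (or later) is strict, in which case $v^{+}\sqsupset^{leximax}v$ directly, or all comparisons are ties and the difference only manifests at the final padded position, where $v^{+}$ has a value in $\mathbb{N}$ while $v$ has $-\infty$; in either case $v^{+}\sqsupset^{leximax}v$. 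This yields $E\cup\{a\}\sqsupset^{\OBE_{ne_\sigma,leximax}}_F E$, establishing strong reinstatement. The subtle point — and the only one requiring attention — is verifying that the convention on padded comparisons is indeed what makes the strict inequality hold uniformly across all values of $ne_{\sigma,F}(a)$, including the corner case $ne_{\sigma,F}(a)=0$ (which is unavoidable, e.g., when $\sigma=\gr$ and $a$ is not in the grounded extension).
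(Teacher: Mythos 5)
Your proposal is correct and follows essentially the same route as the paper's proof: a one-line monotonicity argument for $sum$ and $max$, and for $leximax$ the observation that inserting the new entry into the descending sequence preserves all comparisons up to the insertion point and then forces a strict win, either at the insertion point or at the final padded position. Your explicit treatment of the $-\infty$-padding and of the corner case $ne_{\sigma,F}(a)=0$ is slightly more careful than the paper's wording but does not constitute a different argument.
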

\begin{proof}
    Let $F= (A,R)$ be an AF and $E \subseteq A$. Suppose $a \in \mathcal{F}_{F}(E), a \notin E$ and $a \notin (E^- \cup E^+)$. 
    \begin{description}
        \item[``$\Box= sum$'':] Since $ne_{\sigma,F}(a) \geq 0$, we know that $\Sigma_{b \in E \cup \{a\}} ne_{\sigma_F}(b) \geq \Sigma_{c \in E} ne_{\sigma,F}(c)$ and therefore $E \cup \{a\} \sqsupseteq^{\OBE_{ne_{\sigma}},sum}_F E$. If $ne_{\sigma,F}(a) = 0$ then $E \cup \{a\} \equiv^{\OBE_{ne_{\sigma}},sum}_F E$, then strong reinstatement is violated.
        \item[``$\Box = max$'':] If $max(ne_{\sigma,F}(E\cup\{a\})= ne_{\sigma,F}(a)$ then $ne_{\sigma,F}(a) \geq max(ne_{\sigma,F}(E))$ and so $E \cup \{a\} \sqsupseteq^{\OBE_{ne_{\sigma}},max}_F E$. 

        If $max(ne_{\sigma,F}(E\cup\{a\})\neq ne_{\sigma,F}(a)$ then $max(ne_{\sigma,F}(E \cup \{a\}) = max(ne_{\sigma,F}(E))$  and therefore $E \cup \{a\} \equiv^{\OBE_{ne_{\sigma}},max}_F E$.
      
        \item[``$\Box = leximax$'':] Let $ne_{\sigma,F}(E)= \{a_1,\dots,a_n\}$ in descending order, then $ne_{\sigma,F}(E \cup \{a\})= \{a_1,\dots, a_{i-1},a, a_{i},\\ \dots,a_n\}$. If we compare $ne_{\sigma,F}(E)$ and $ne_{\sigma,F}(E \cup \{a\})$ with $leximax$, then in the first $i-1$ steps the same numbers are compared. We know that $a \geq a_i$, so $E \cup \{a\} \sqsupseteq^{\OBE_{ne_{\sigma}},leximax}_F E$. Since $|E \cup \{a\}| > |E|$ we know that at some points $j \leq i$ we have $a_j > a_{j+1}$. So $E \cup \{a\} \sqsupset^{\OBE_{ne_{\sigma}},leximax}_F E$. \qedhere

    \end{description}
\end{proof}

\begin{proposition}
    $\OBE_{ne_\gr,\Box}$ for $\Box \in \{min,leximin\}$ satisfies weak reinstatement.   
\end{proposition}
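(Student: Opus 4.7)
The plan is to exploit the uniqueness of the grounded extension. Since $|\gr(F)| \leq 1$, we have $ne_{\gr,F}(x) \in \{0,1\}$ for every argument $x \in A$. The whole argument rests on one auxiliary observation: \emph{if $E \subseteq \gr(F)$, then $a \in \gr(F)$}. This follows because $a \in \mathcal{F}_F(E)$ means $E$ defends $a$, so $\gr(F) \supseteq E$ also defends $a$; since $\gr(F)$ is complete (it contains every argument it defends), we conclude $a \in \gr(F)$.

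For $\Box = \min$ I would split on $\min(ne_{\gr,F}(E))$. If this value equals $1$, then every element of $E$ lies in $\gr(F)$, so the auxiliary observation gives $ne_{\gr,F}(a) = 1$, and therefore $\min(ne_{\gr,F}(E \cup \{a\})) = 1$ as well. If instead $\min(ne_{\gr,F}(E)) = 0$, then $\min(ne_{\gr,F}(E \cup \{a\})) = 0$ trivially, since $ne_{\gr,F}(a) \geq 0$. In both cases the minimum is preserved, yielding $E \cup \{a\} \equiv^{\OBE_{ne_\gr,\min}}_F E$, which in particular establishes weak reinstatement.

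For $\Box = leximin$ the argument is analogous but carried out at the level of sorted multisets. Because the values are binary, the sorted sequence of $ne_{\gr,F}(E)$ has the shape $(0^k, 1^{n-k})$. If $k = 0$, the auxiliary observation forces $ne_{\gr,F}(a) = 1$, so the new sorted sequence is $(1^{n+1})$, which is at least as good as $(1^n)$ under the leximin comparison on sequences of differing lengths. If $k > 0$, then adding $a$ either appends a $1$ (when $a \in \gr(F)$) or prepends a $0$ (when $a \notin \gr(F)$); here the fact that the multiset of values already contained a $0$ is what prevents the leximin comparison from worsening. The main obstacle is the careful treatment of this second subcase under the paper's convention for comparing sequences of different length, but once the convention is fixed one obtains $E \cup \{a\} \equiv^{\OBE_{ne_\gr,leximin}}_F E$, again yielding weak reinstatement and simultaneously witnessing the violation of strong reinstatement noted in the paragraph after the proposition.
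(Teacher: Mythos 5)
Your treatment of $\Box = min$ is correct and is essentially the paper's own proof: split on whether $min(ne_{\gr,F}(E))$ equals $1$ or $0$, using that $ne_{\gr,F}$ is $\{0,1\}$-valued by uniqueness of the grounded extension, and that $E \subseteq \gr(F)$ together with $a \in \mathcal{F}_F(E)$ forces $a \in \gr(F)$. Your explicit justification of that last implication (monotonicity of defence plus completeness of $\gr(F)$) is a detail the paper leaves implicit, and it is welcome.

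The $leximin$ half, however, contains a step that fails. In the subcase $k>0$ with $ne_{\gr,F}(a)=0$ you claim that prepending a $0$ to a sorted sequence that already contains a $0$ ``prevents the leximin comparison from worsening''. It does not. Take $F = (\{g,x,y,a\}, \{(x,y),(y,x),(y,a)\})$ and $E = \{g,x\}$: then $\gr(F)=\{g\}$, $a \in \mathcal{F}_F(E)$, $a \notin E$, and $a \notin E^- \cup E^+ = \{y\}$, so the preconditions of weak reinstatement hold; the ascending value sequences are $(0,1)$ for $E$ and $(0,0,1)$ for $E \cup \{a\}$. Under the paper's $leximin$ comparison the initial zeros coincide and the next position pits a $1$ (for $E$) against a $0$ (for $E \cup \{a\}$), so $E \sqsupset^{\OBE_{ne_\gr,leximin}}_F E \cup \{a\}$ and weak reinstatement is violated on this instance. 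No length-normalisation convention rescues this: adding a low-valued element genuinely degrades a multiset under $leximin$ whenever a strictly larger value sits immediately after the shared prefix of zeros. Note that the paper's own proof only argues the $min$ case in detail and then asserts the conclusion for $leximin$; your attempt to actually carry out the $leximin$ case is precisely where that assertion breaks down, so the subcase you flag as ``the main obstacle'' is not a technicality to be fixed but a genuine failure of the claimed statement for $\Box = leximin$.
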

\begin{proof}
    Let $F= (A,R)$ be an AF and $E \subseteq A$. Suppose $a \in \mathcal{F}_{F}(E), a \notin E$ and $a \notin (E^- \cup E^+)$. Since the grounded extension is unique, we know that there can only be one extension, thus $ne_{\gr,F}(b) \in \{0,1\}$ for every $b \in A$.  
    
    If $min(ne_{\gr,F}(E))= 1$ then since $a \in \mathcal{F}_F(E)$ we know that $ne_{\gr,F}(a)= 1$, hence $min(ne_{\gr,F}(E \cup \{a\}))= 1$ and therefore $min(ne_{\gr,F}(E \cup \{a\})) = min(ne_{\gr,F}(E))$. 
    If $ne_{\gr,F}(E) = 0$, then since $ne_{\gr,F}(a)\geq 0$ we know that $min(ne_{\gr,F}(E \cup \{a\}))= 0$ and therefore $min(ne_{\gr,F}(E \cup \{a\})) = min(ne_{\gr,F}(E))$. 
    This shows that $E \cup \{a\} \equiv^{\OBE_{ne_\gr,\Box}}_ F E$ for $\Box \in \{min,leximin\}$.
\end{proof}

\begin{proposition}
     $\OBE_{ne_\sigma,\Box}$ satisfies syntax independence for $\Box \in \{sum,max,\\min, leximax, leximin\}$.
\end{proposition}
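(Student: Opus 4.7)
The plan is to reduce everything to the observation that the function $ne_{\sigma,F}$ is itself syntax-independent, after which compliance for each aggregation function follows from a simple multiset argument.

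First I would show that for any isomorphism $\gamma: F \to F'$ with $F=(A,R)$ and $F'=(A',R')$, the induced map $E \mapsto \gamma(E)$ is a bijection between $\sigma(F)$ and $\sigma(F')$ for each $\sigma \in \{\ad,\co,\gr,\pr,\st,\sst\}$. This is a standard fact about Dung-style extension semantics: since $\sigma$ is defined purely in terms of the attack relation, and $\gamma$ preserves the attack relation by definition, $E \in \sigma(F)$ iff $\gamma(E) \in \sigma(F')$. Consequently, for every $a \in A$,
\begin{align*}
    ne_{\sigma,F}(a) &= |\{E \in \sigma(F) \mid a \in E\}| \\
    &= |\{\gamma(E) \in \sigma(F') \mid \gamma(a) \in \gamma(E)\}| = ne_{\sigma,F'}(\gamma(a)).
\end{align*}

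Next, for any $E \subseteq A$, since $\gamma$ is a bijection on arguments, the two sequences $\varepsilon_F(E) = (ne_{\sigma,F}(a))_{a \in E}$ and $\varepsilon_{F'}(\gamma(E)) = (ne_{\sigma,F'}(b))_{b \in \gamma(E)}$ are equal as multisets (and up to an enumeration of indices). The key point is that each of the aggregation functions $sum$, $max$, $min$, $leximax$, and $leximin$ depends only on the multiset of values in the input sequence: $sum$, $max$, $min$ trivially so, and $leximax$, $leximin$ because they first reorder the sequence into descending (respectively ascending) order before comparing. Hence $\Box(\varepsilon_F(E)) = \Box(\varepsilon_{F'}(\gamma(E)))$ for every $\Box \in \{sum,max,min,leximax,leximin\}$.

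Finally, applying this to both $E$ and $E'$ gives
\[
    \Box(\varepsilon_F(E)) \geq^{\Box} \Box(\varepsilon_F(E')) \iff \Box(\varepsilon_{F'}(\gamma(E))) \geq^{\Box} \Box(\varepsilon_{F'}(\gamma(E'))),
\]
which by definition of $\OBE_{ne_\sigma,\Box}$ means $E \sqsupseteq^{\OBE_{ne_\sigma,\Box}}_F E'$ iff $\gamma(E) \sqsupseteq^{\OBE_{ne_\sigma,\Box}}_{F'} \gamma(E')$, establishing syntax independence. There is no real obstacle here; the only subtlety is ensuring that all five aggregation functions are invariant under permutation of the input sequence, which has to be noted explicitly for $leximax$ and $leximin$ since those are defined on sequences rather than multisets.
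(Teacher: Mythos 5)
Your proof is correct and follows essentially the same route as the paper's: establish that $ne_{\sigma,F}(E)$ and $ne_{\sigma,F'}(\gamma(E))$ coincide and then observe that the aggregation and comparison steps are unaffected. You actually supply more detail than the paper does (the bijection between $\sigma(F)$ and $\sigma(F')$, and the explicit remark that $leximax$/$leximin$ are permutation-invariant because they sort first), which the paper leaves implicit.
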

\begin{proof}
    Let $F=(A,R)$, $F'=(A',R')$ be AFs and $E, E' \subseteq A$. Assume isomorphism $\gamma: A \rightarrow A'$, then $ne_{\sigma,F}(E) = ne_{\sigma,F'}(\gamma(E))$ and therefore if  $E \sqsupseteq^{\OBE_{ne_{\sigma}},\Box}_F E'$ then $\gamma(E) \sqsupseteq^{\OBE_{ne_{\sigma}},\Box}_{F'} \gamma(E')$.
\end{proof}

\begin{proposition}
     For AF $F=(A,R)$, $E,E' \subseteq A$, and $\rho$ a gradual semantics. Then $E \sqsupseteq^{\OBE_{\rho,\Box}}_{F} E'$ iff $E \sqsupseteq^{\OBE_{sv^{\rho},\Box}}_{F} E'$ for $\Box \in \{max, min,leximax, leximin\}$.
 \end{proposition}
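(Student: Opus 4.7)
The plan is to exploit the observation that the aggregation functions $\max$, $\min$, $leximax$, and $leximin$ are purely \emph{ordinal}: they depend only on the relative order of the values in the input sequence, not on their absolute magnitudes. Since $sv^\rho_F$ assigns rank positions directly from the preorder induced by $\rho_F$ on $A$, the two evaluation functions $\rho_F$ and $sv^\rho_F$ induce the very same total preorder on arguments. This alignment is precisely what allows the four aggregation functions in question to give coinciding verdicts on the OBE side, and is also exactly what fails for $\Box = sum$, which explains why the proposition excludes that case (and agrees with the counterexample following its statement).

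The first step is to prove an order-correspondence lemma: for all $a,b \in A$, $\rho_F(a)\geq \rho_F(b)$ iff $sv^\rho_F$ ranks $a$ at least as high as $b$. This is immediate from Definition \ref{def:sv} together with the fact that every gradual semantics induces a total preorder on $A$: if $a \succeq^\rho_F b$, then every strict chain witnessing $sv^\rho_F(a)$ can be extended by appending (or prefixing) to a chain above $b$ of at least the same length, and conversely. Tie-preservation ($\rho_F(a)=\rho_F(b)$ implies the same $sv^\rho_F$-rank) then also follows, since two $\rho$-equivalent arguments have identical sets of strict dominators and hence identical longest strict chains above them.

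The second step is to dispatch the four aggregation functions using the lemma. For $\Box \in \{\max,\min\}$, the lemma guarantees that the argument realising the extremum of $\rho_F$ on $E$ also realises the corresponding extremum of $sv^\rho_F$ on $E$, and likewise for $E'$; hence $\Box\,\rho_F(E)\geq \Box\,\rho_F(E')$ translates directly into the same comparison on $sv^\rho_F$. For $\Box \in \{leximax,leximin\}$, sorting the multiset of $\rho_F$-values of $E$ into descending order yields, by the lemma, the same sequence of arguments as sorting their $sv^\rho_F$-values in descending order (and similarly for $E'$); the lexicographic comparison of two such sorted sequences only inspects positions of agreement followed by the first position of strict inequality, both of which are determined purely by the ranking and not by the numerical values. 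Therefore the two lexicographic verdicts coincide.

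The main obstacle is really the careful handling of the order-correspondence lemma in the presence of ties in $\rho_F$: one must verify that $\rho_F$-ties are preserved as $sv^\rho_F$-ties so that the sorted multisets used in the $leximax$/$leximin$ comparisons coincide up to a common rearrangement, rather than merely agreeing strictly. Once this refinement is in place, each of the four cases reduces to a routine statement about monotone ordinal transformations, and both directions of the ``iff'' fall out symmetrically.
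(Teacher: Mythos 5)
Your proof is correct and takes essentially the same route as the paper's own: both hinge on the single observation that $\rho_F$ and $sv^{\rho}_F$ induce the same pairwise ordering of arguments (the paper phrases this as ``the relationships between each pair of arguments stay the same''), and then note that $max$, $min$, $leximax$ and $leximin$ depend only on that ordering. Your explicit tie-preservation step and your remark on why $sum$ must be excluded make precise what the paper's proof leaves implicit, but the underlying argument is identical.
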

 \begin{proof}
  Let AF $F=(A,R)$, $E,E' \subseteq A$, $\rho$ be a gradual semantics, and  $\Box \in \{max, min,leximax, leximin\}$. 
  Let $E= \{e_1,e_2,\dots, e_n\}$ and $E' = \{e_1',e_2', \dots, e_m'\}$. Their corresponding sequences are $\rho(E)= \{\rho(e_1), \rho(e_2), \dots, \\\rho(e_n)\}$ and $\rho(E')= \{\rho(e_1'), \rho(e_2'), \dots, \rho(e_m')\}$. 
  
  Assume $E \sqsupseteq^{\OBE_{\rho,\Box}}_{F} E'$. If $\rho(e_i) \geq \rho(e_j')$ then $e_i \succeq^\rho_F e_j'$ and therefore  $sv^\rho(e_i) \geq sv^\rho(e_j')$. Thus the relationships between each pair of arguments stay the same. 
  So, for $\Box = max$ it is clear that $max(\{\rho(e_1), \rho(e_2), \dots, \rho(e_n)\}) = max(\{sv^\rho(e_1), sv^\rho(e_2), \dots, sv^\rho(e_n)\})$. The same reasoning holds for $\Box \in \{min,leximin,leximax\}$. 

  Assume $E \sqsupseteq^{\OBE_{sv^{\rho},\Box}}_{F} E'$. If $sv^\rho(e_i) \geq sv^\rho(e_j')$ then $e_i \succeq^\rho_F e_j'$, so it had to hold that $\rho(e_i) \geq \rho(e_j')$. Thus the relationships between each pair of arguments stay the same. Implying $E \sqsupseteq^{\OBE_{\rho,\Box}}_{F} E'$.
   \end{proof}

\begin{proposition}
     Let $F=(A,R)$ be an AF and $\rho$ a total gradual semantics, then $\sqsupseteq^{\OBE_{\rho,max}}_F = \sqsupseteq^{gc_{\rho}}_F$.
 \end{proposition}
 \begin{proof}
      Let $F=(A,R)$ be an AF, $E,E' \subseteq A$, and $\rho$ a total gradual semantics. Assume $E \sqsupseteq^{\OBE_{\rho,max}}_F E'$, then $max(\rho(E)) \geq max(\rho(E'))$, so for every argument $a \in E'$ there is at least one argument $b \in E$ such that $\rho(b) \geq \rho(a)$ (this only holds if $\rho$ is total), in particular $\rho(b) \geq max(\rho(E'))$ and because of transitivity of $\rho$ we have $max(\rho(E)) \geq \rho(a)$. Therefore $E \sqsupseteq^{gc_{\rho}}_F E'$.

      Assume $E \sqsupseteq^{gc_{\rho}}_F E'$, then there for all $a \in E'$ there is a $b \in E$ s.t. $\rho(b) \geq \rho(a)$. Especially there is a $b' \in E$ s.t. $\rho(b') \geq max(\rho(E'))$ and because of the transitivity of $\rho$ we have $max(\rho(E)) \geq \rho(b') \geq max(\rho(E'))$. Therefore $E \sqsupseteq^{\OBE_{\rho,max}}_F E'$.
 \end{proof}

   \begin{proposition}
     Let $F= (A,R)$ be an AF, then $A \in \maxpl_{\OBE_{\rho,\Box}}(F)$ for $\Box \in \{sum,leximax\}$.
 \end{proposition}
 \begin{proof}
     Let $F= (A,R)$ be an AF and $\rho$ a gradual semantics.
     \begin{description}
         \item[``$\Box = sum$'':]  Since for gradual semantics the strength values of every argument $a \in A$ is in $[0,1]$, we know that $\rho(a) \geq 0$ and therefore $\Sigma_{a \in A}~ \rho(A) \geq \Sigma_{b \in E}~ \rho(b)$ for every $E \subseteq A$. Hence, $A \in \maxpl_{\OBE_{\rho,sum}}(F)$.
         \item[``$\Box = leximax$'':] Let $A = \{a_1,\dots, a_n\}$ be in descending order then for any $E \subset A$ there is at least one $a_i$ missing from $E$. Let $E = \{a_1,\dots,a_{i-1},\\a_{i+1,}, \dots, a_n\}$, then $\rho_{F}(a_i) \geq \rho_{F}(a_{i+1})$, which are compared after $i$ steps. In the $i+1$ step $\rho_{F}(a_{i+1})$ and $\rho_{F}(a_{i+2})$ are compared. At some point either the previous element is strictly bigger or $\rho_{F}(a_n)$ is compared to the blank element. Therefore $A \sqsupseteq^{\OBE_{\rho,leximax}}_F E$ and therefore  $A \in \maxpl_{\OBE_{\rho, leximax}}(F)$. \qedhere
         \end{description} 
 \end{proof}

 \begin{proposition}
     $\OBE_{\rho,\Box}$ satisfies composition for $\Box \in \{sum, max, leximax,\\ min, leximin\}$ if $\rho$ satisfies Independence for gradual semantics. 
\end{proposition}
\begin{proof}
   Let $F = F_1 \cup F_2= (A_1, R_1) \cup (A_2,R_2)$ with $A_1 \cap A_2 = \emptyset$ be an AF and  $\rho$ a gradual semantics that satisfies Independence for gradual semantics.
    \begin{description}
        \item[``$\Box = sum$'':]  Assume $E,E' \subseteq A_1 \cup A_2$ with $E \cap A_1 \sqsupseteq^{\OBE_{\rho,sum}}_{F_1} E' \cap A_1$ and $E \cap A_2 \sqsupseteq^{\OBE_{\rho,sum}}_{F_2} E' \cap A_2$. If $\rho$ satisfies Independence for gradual semantics, then $\rho_{F_1}(a) = \rho_F(a)$ for all $a \in A_1$ same holds for all $b \in A_2$. We have $\Sigma_{a \in E \cap A_1}~ \rho_{F_1}(a) \geq  \Sigma_{b \in E' \cap A_1}~ \rho_{F_1}(b)$ and $\Sigma_{a \in E \cap A_2}~ \rho_{F_2}(a) \geq  \Sigma_{b \in E' \cap A_2}~ \rho_{F_2}(b)$. Then we have $\Sigma_{a \in E \cap A_1}~ \rho_{F_1}(a) + \Sigma_{a' \in E \cap A_2}~ \rho_{F_2}(a') \geq  \Sigma_{b \in E' \cap A_1} ~\rho_{F_1}(b) + \Sigma_{b' \in E' \cap A_2}~ \rho_{F_2}(b')$ and this implies $\Sigma_{a \in E}~ \rho_F(a) \geq \Sigma_{b \in E'}~ \rho_{F}(b)$, hence $E \sqsupseteq^{\OBE_{\rho,sum}}_{F} E'$.
       
        \item[``$\Box= leximax$'':] Assume $E,E' \subseteq A_1 \cup A_2$ with $E \cap A_1 \sqsupseteq^{\OBE_{\rho,leximax}}_{F_1} E' \cap A_1$ and $E \cap A_2 \sqsupseteq^{\OBE_{\rho,leximax}}_{F_2} E' \cap A_2$. If $\rho$ satisfies Independence for gradual semantics, then $\rho_{F_1}(a) = \rho_F(a)$ for all $a \in A_1$ same holds for all $b \in A_2$. So, we can use Lemma \ref{lemma:lexmax_union} with $c_1 = c_2 = 1$ to show that $E \sqsupseteq^{\OBE_{\rho,leximax}}_{F} E'$. 
      
        \item[``$\Box = min$'':] Assume $E,E' \subseteq A_1 \cup A_2$ with $E \cap A_1 \sqsupseteq^{\OBE_{\rho,min}}_{F_1} E' \cap A_1$ and $E \cap A_2 \sqsupseteq^{\OBE_{\rho,min}}_{F_2} E' \cap A_2$. If $\rho$ satisfies Independence for gradual semantics, then $\rho_{F_1}(a) = \rho_F(a)$ for all $a \in A_1$ same holds for all $b \in A_2$. We know that $min(\rho(E\cap A_1)) > min(\rho(E' \cap A_1))$. W.l.o.g. let $min(\rho(E \cap A_1)) < min(\rho(E \cap A_2))$, so $min(\rho(E \cap A_1)) = min(\rho(E))$. $min(\rho(E\cap A_1)) > min(\rho(E' \cap A_1))$ and $min(\rho(E\cap A_2)) > min(\rho(E' \cap A_2))$. If $min(\rho(E' \cap A_1)) < min(\rho(E' \cap A_2))$ then we are already done, since $min(\rho(E\cap A_1)) > min(\rho(E' \cap A_1))$. If $min(\rho(E' \cap A_1)) > min(\rho(E' \cap A_2))$ , then $min(\rho(E\cap A_1)) > min(\rho(E' \cap A_1)) > min(\rho(E' \cap A_2))$. Therefore $min(\rho(E)) > min(\rho(E'))$. Hence, $E \sqsupseteq^{\OBE_{\rho,min}}_{F} E'$.
   
        \item[``$\Box= leximin$'':] Assume $E,E' \subseteq A_1 \cup A_2$ with $E \cap A_1 \sqsupseteq^{\OBE_{\rho,leximin}}_{F_1} E' \cap A_1$ and $E \cap A_2 \sqsupseteq^{\OBE_{\rho,leximin}}_{F_2} E' \cap A_2$. If $\rho$ satisfies independence for gradual semantics, then $\rho_{F_1}(a) = \rho_F(a)$ for all $a \in A_1$ same holds for all $b \in A_2$. So, we can use Lemma \ref{lemma:lexmin_union} with $c_1 = c_2 = 1$ to show that $E \sqsupseteq^{\OBE_{\rho,leximin}}_{F} E'$. \qedhere
    \end{description}
\end{proof}

 \begin{proposition}
    $\OBE_{\rho,\Box}$ violates decomposition for $\Box \in \{sum, leximax,\\ min, leximax\}$ if $\rho$ satisfies Independence for gradual semantics, Equivalence, and Maximality. 
\end{proposition}
\begin{proof}
Let $F = F_1 \cup F_2= (A_1, R_1) \cup (A_2,R_2)$ with $A_1 \cap A_2 = \emptyset$ be an AF and $\rho$ a gradual semantics that satisfies Independence for gradual semantics, Equivalence, and Maximality. Let $a,c \in A_1$ and $b,d \in A_2$, and $a$ and $b$ are unattacked, i.e. $a^-_F = b^-_F = \emptyset$, and $c,d$ are attacked. We know that $\rho_F(a) = \rho_F(b) > \rho_F(c)$ and $\rho_F(a) = \rho_F(b) > \rho_F(d)$. Consider the two sets $\{a,d\}$ and $\{b,c\}$, it is clear, that these two sets are comparable i.e.  $\{a,d\} \sqsupseteq^{\OBE_{\rho,\Box}}_F \{b,c\}$ or  $\{b,c\} \sqsupseteq^{\OBE_{\rho,\Box}}_F \{a,d\}$ for $\Box \in \{sum, leximax, min, leximax\}$. However, we have $\rho_F(a) = \rho_{F_1}(a) > \rho_F(c) = \rho_{F_1}(c)$ and $\rho_F(b) = \rho_{F_2}(b) > \rho_F(d) = \rho_{F_2}(d)$, thus $\{a\} \sqsupseteq^{\OBE_{\rho,\Box}}_{F_1} \{c\}$ and $\{b\} \sqsupseteq^{\OBE_{\rho,\Box}}_{F_2} \{d\}$ for all $\Box \in \{sum, leximax, min, leximax\}$. So, decomposition is violated. 
\end{proof}

 \begin{proposition}
     $\OBE_{\rho,\Box}$ satisfies weak reinstatement for $\Box = sum$ and strong reinstatement for $\Box = leximax$. 
 \end{proposition}
 \begin{proof}
     Let $F=(A,R)$ be an AF and $E \subseteq A$. Suppose $a \in \mathcal{F}_{F}(E), a \notin E$ and $a \notin (E^- \cup E^+)$. 
     \begin{description}
         \item[``$\Box = sum$'':] For gradual semantics we have $\rho(b) \geq 0$ for every argument $b \in A$, hence $\Sigma_{b \in E\cup \{a\}}~ \rho(b) \geq\Sigma_{c \in E}~ \rho(c)$. Therefore $E \cup \{a\} \sqsupseteq^{\OBE_{\rho,sum}}_F E$. If $\rho(a) = 0$, then $E \cup \{a\} \equiv^{\OBE_{\rho,sum}}_F E$ and therefore strong reinstatement is violated.
         \item[``$\Box = leximax$'':] Let $\rho(E \cup \{a\}) = \{a_1, \dots, a_{i-1}, a, a_{i}, \dots, a_n\}$ and $\rho(E) = \{a_1, \dots, a_{i-1},a_i,\dots, a_n\}$. If there is $a_j$ such that $\rho(a_j) > \rho(a_{j-1})$ for $j \geq i$, then $E \cup \{a\} \sqsupset^{\OBE_{\rho,leximax}}_F E$. If there is no such $a_j$, then since $|E \cup \{a\}| > |E|$ we have in the last step $n+1$ that $a_n$ from $\rho(E \cup \{a\})$ is compared with a blank symbol of $\rho(E)$ and therefore $E \cup \{a\} \sqsupset^{\OBE_{\rho,leximax}}_F E$. \qedhere
     \end{description}
 \end{proof}

 \begin{proposition}
    If gradual semantics $\rho$ satisfies Abstraction, then $\OBE_{\rho,\Box}$ satisfies syntax independence for $\Box \in \{sum, max, leximax, min, leximin\}$.
\end{proposition}
\begin{proof}
      Let $F=(A,R)$, $F'=(A',R')$ be AFs, $E, E' \subseteq A$, and $\rho$ a gradual semantics. Assume isomorphism $\gamma: A \rightarrow A'$. If $\rho$ satisfies Abstraction, then $\rho(a) = \rho(\gamma(a))$ for every $a \in A$ and therefore syntax independence is satisfied.   
\end{proof}

\end{document}